\definecolor{jcgreen}{rgb}{0.09, 0.65, 0.27}
\DeclareMathOperator*{\argmin}{arg\,min}
\newtheorem{theorem}{Theorem}
\newtheorem{proposition}[theorem]{Proposition}
\newtheorem{remark}{Remark}%
\newtheorem{lemma}[theorem]{Lemma} 
\newtheorem{corollary}[theorem]{Corollary}
\newtheorem{definition}{Definition}%
\newcommand{\DHN}{\operatorname{DHN}}
\newcommand{\skipp}{\operatorname{skip}}
\newcommand{\lin}{\operatorname{lin}}
\newcommand{\PC}{\operatorname{PC}}
\newcommand{\bv}{\mathbf{v}}
\newcommand{\bw}{\mathbf{w}}
\newcommand{\bx}{\mathbf{x}}
\newcommand{\bX}{\mathbf{X}}
\newcommand{\by}{\mathbf{y}}
\newcommand{\bz}{\mathbf{z}}
\newcommand{\sgnn}{\mathop{\rm sgn}}
\begin{document}

\title[Article Title]{On the expressivity of deep Heaviside networks}

\author[1]{\fnm{Insung} \sur{Kong}}\email{insung.kong@utwente.nl}

\author[1]{\fnm{Juntong} \sur{Chen}}\email{juntong.chen@utwente.nl}

\author[2]{\fnm{Sophie} \sur{Langer}}\email{s.langer@rub.de}

\author*[1]{\fnm{Johannes} \sur{Schmidt-Hieber}}\email{a.j.schmidt-hieber@utwente.nl}

\affil[1]{\orgdiv{Department of Applied Mathematics}, \orgname{University of Twente}, \orgaddress{\street{Drienerlolaan 5}, \city{Enschede}, \postcode{7522 NB}, \state{Overijssel}, \country{Netherlands}}}

\affil[2]{\orgdiv{Faculty of Mathematics}, \orgname{Ruhr University Bochum}, \orgaddress{\street{Universitätsstraße 150}, \city{Bochum}, \postcode{44801}, \state{Nordrhein-Westfalen},  \country{Germany}}}

\abstract{
We show that deep Heaviside networks (DHNs) have limited expressiveness but that this can be overcome by including either skip connections or neurons with linear activation. We provide lower and upper bounds for the Vapnik-Chervonenkis (VC) dimensions and approximation rates of these network classes. As an application, we derive statistical convergence rates for DHN fits in the nonparametric regression model.}
\keywords{approximation theory, Heaviside activation function, deep neural networks, VC dimension}

\maketitle

\section{Introduction}\label{sec1}

Currently, the Heaviside activation function
\[ \sigma_0(x)=\mathbb{I}(x\geq 0)\] is witnessing a renewed interest. The Heaviside activation function is
for instance used in Hopfield networks \cite{hopfield1982neural} that have recently seen a resurge due to their connections to attention layers \cite{vaswani2017attention, ramsauer2020hopfield} and the 2024 Nobel Prize in Physics that was partially awarded for their development. 
Moreover, the Heaviside activation function is closely related to quantized neural networks \cite{hubara2018quantized, qin2020binary}, playing a key role in enabling energy efficient deployment of large language models (LLMs) \cite{wang2023bitnet, ma2024era}.

We refer to neural networks with several hidden layers and the Heaviside activation function as deep Heaviside (neural) networks (DHNs). 
These networks are also known as (linear) threshold networks.
 
The Heaviside activation function can be traced back to the first attempts to build an artificial counterpart of a biological neuron. 
In the brain, the inputs of a neuron contribute to its membrane potential and the neuron discharges/fires if the membrane potential exceeds a certain threshold. McCulloch and Pitts \cite{McCulloch1943} propose a simple mathematical model for neural activity and Rosenblatt \cite{rosenblatt58} considered functions of the form 
\begin{align*}
	f: \mathbb{R}^d \rightarrow \{-1,1\}, \quad f(\bx) = \rho\big(\bw^\top \bx +v\big), \quad \bw \in \mathbb{R}^d, \ v \in \mathbb{R},
\end{align*}
with
\begin{align*}
\rho(u)= 2\sigma_0(u)-1 =
\begin{cases}
1, \quad &u\geq 0, \\
-1, \quad &u<0,
\end{cases}
\end{align*}
an affine transformation of the Heaviside activation function $\sigma_0$. For any $\bx$ in the halfspace $\bw^\top \bx +v \geq 0,$ we have $f(\bx)=1$ and for any $\bx$ with $\bw^\top \bx +v < 0,$ $f(\bx)=-1.$ The parameter vector $\bw$ is called the {\it weight vector} and the parameter $v$ is called {\it bias} or {\it shift}. This imitates a biological neuron in the sense that if the incoming signal $\bw^\top \bx$ exceeds the threshold value $-v,$ then the neuron `fires', that is, the response is $1.$ In contrast to a biological neural network, the threshold value $-v$ is itself a learnable parameter.

The parameters $(\bw,v)$ can be learned from data via the perceptron algorithm and the perceptron convergence theorem provides an upper bound on the number of required updates in the case that the two classes are separable by a hyperplane \cite{noviko1963convergence}. The perceptron algorithm can, however, not be generalized to networks of several neurons. Instead, the gradient descent based backpropagation algorithm became state-of-the-art. Computing the gradient requires, however, that the activation function is differentiable and excludes the Heaviside activation function. The sigmoidal activation functions used in the 1980s and 1990s can be viewed as smoothed Heaviside functions. Along with deep neural networks, (variations of) the rectified linear unit (ReLU) activation function $\sigma(x)=\max(x,0)$ became dominant. While biological meaning can be attributed to rectifier activation functions via firing rate models \cite{MR1985615, pmlr-v15-glorot11a}, the Heaviside activation is a static model that determines whether or not a neuron fires for a given input. The difference lies in the timescales: The Heaviside activation models a single firing per neuron, while firing rates model the aggregated firing pattern over longer time scales.

From a biological perspective, it seems therefore interesting to compare the expressiveness of Heaviside networks to ReLU networks. The Heaviside activation function is the derivative of the ReLU activation function, and every unit with the Heaviside activation function can be approximated by taking two ReLU units. 
This suggests that any approximation rate of Heaviside networks carries over to ReLU networks with at most twice as many units. On the other hand, DHNs are more constrained than ReLU networks as a unit can only output either zero or one and thus they transmit single bits to subsequent nodes in the network.

Since large parameters lead to network instability during training and are consequently hard to learn, it makes sense to avoid network parameters that tend to infinity. Then DHNs are not a subset of ReLU networks anymore and have distinct properties. One of them is that the indicator function on a hyperrectangle $[a_1,b_1]\times \ldots \times [a_d,b_d]\subset \mathbb{R}^d$ can be perfectly represented as a two hidden layer DHN using the identity 
\begin{align}
	\mathbb{I}\big(\mathbf{x} \in [a_1,b_1]\times \ldots \times [a_d,b_d]\big)=\sigma_0 \bigg( \sum_{i=1}^d \sigma_0(x_i-a_i)+\sigma_0(-x_i+b_i) - 2d +\frac 12 \bigg), \label{represen_block}
\end{align}
see Section 7 in \cite{pinkus1999approximation}. The identity can be verified by observing that the outer neuron only gets activated if and only if all the inner neurons return one and this is the case if and only if $a_i \leq x_i \leq b_i$ for all $i=1,\ldots,d.$

For ANNs, there are already numerous results concerning their approximation capabilities \cite{YAROTSKY2017103, petersen2018optimal, ohn2019smooth, schmidt2019deep, 10.1214/19-AOS1875, 10.1214/20-AOS2034, lu2021deep, langer2021approximating, jiao2023deep, yang2024optimal}. In particular, papers such as \cite{ohn2019smooth} have demonstrated that ANNs with general smooth activation functions possess most of the desirable approximation properties.
For DHNs, recent studies have explored training \cite{ergen2023globally}, model complexity \cite{wang2022vc}, and representation ability \cite{khalife2024neural}.
However, their approximation ability remains largely unexplored.
Since every DHN is piecewise constant of the input, most existing approximation techniques developed for ANNs are not directly applicable to DHNs.
In this work, we aim to address this gap.

We first demonstrate the limitations of the approximation ability of DHNs (Section~\ref{sec_noskip}). This arises from the constraint that each neuron can transmit only one bit. To overcome these limitations, we suggest two possible ways to improve the original plain DHNs: adding skip connections (Section~\ref{sec_skip}) and incorporating neurons with linear activation functions (Section~\ref{sec_linear}).
Our main contribution is to derive upper and lower bounds for VC dimensions and approximation rates of both types of structure augmented DHNs. The findings illustrate the complex interplay between network topology and approximation properties. In particular, our results reveal the dependence on depth, width, and the level of structure augmentation. To illustrate how the approximation and VC dimension results complement each other, we present an application to the nonparametric regression problem (Section~\ref{sec_nonpara}). A summary of the main results is provided in Table \ref{table_summary_2}. A central building block of the approximation theory for ReLU networks are deep network constructions that approximate the square function $x\mapsto x^2$ on $[0,1]$ up to sup-norm error $\epsilon$ with $\lesssim \log(1/\epsilon)$ network parameters. This construction is unique for ReLU networks. Using bit-encoding, we establish a similar result for augmented DHNs, proving that sup-norm error $\epsilon$ can be achieved with $\lesssim \log^3(1/\epsilon)$ network parameters (Corollary \ref{cor_sq} and the subsequent discussion).

\begin{table}[t] 
\caption{\textbf{Summary of main results.}
$\mathcal{H}_{d}^{\beta}(1)$ denotes the class of $\beta$-H\"older smooth functions on $[0,1]^d$, see Definition \ref{hölder}. Here we write $a_n \asymp b_n$ if $b_n/|\log b_n|^{\gamma} \lesssim a_n \lesssim b_n |\log b_n|^{\gamma}$ for some $\gamma>0.$
\label{table_summary_2}}
\renewcommand{\arraystretch}{1.5}
\begin{tabular}{|c||c|c|}
\hline
DHN architecture     & VC dimension & Approximation error for $\mathcal{H}_{d}^{\beta}(1)$  \\ \hline \hline
\multirow{2}{*}{depth $L$, width $p$}  
    & $\operatorname{VC} \lesssim Lp^2 \log(Lp)  \land p^d$,  & $p^{-1} \lesssim \epsilon \lesssim p^{-(\beta \land \frac{d+1}{2})/d}$, \\  
    & by \cite{anthony1999neural, wang2022vc} & by Section \ref{sec_2.2} and \cite{yang2024optimal}\\ \hline
depth $L$, width $p$,  
    & $\operatorname{VC} \asymp Lp^2 $ & $\epsilon \asymp (Lp^2 )^{-\beta/d}$ \\  
    $s$ skip connections per layer & for $1 \leq s \leq p$, by Section \ref{sec_skip_VC} & for $d \leq s \leq p$, by Section \ref{sec_skip_smooth}  \\ \hline
depth $L$, width $p$,  
    & $\operatorname{VC} \asymp Lp^2 \vee L^2 ps$ & 
 $\epsilon \asymp (Lp^2 \vee L^2 ps)^{-\beta/d}$ \\  
    $s$ linear neurons per layer
    & for $1 \leq s \leq p$, by Section \ref{sec_4.2}  & for $d \leq s \leq p$, by Section \ref{sec_4.3} \\ \hline
\end{tabular}
\end{table}

\textit{Notation}: We use $\mathbb{N} := \{0,1,\ldots\}$, $[n] := \{1,2,\ldots,n\}$ and $\mathbb{I}(x) := \mathbb{I}(x \geq 0)$.
Vectors and vector valued functions are displayed with bold letters. 
We adopt the notation $\bm{0}_d := (0,\dots,0)$ and $\bm{1}_d := (1,\dots,1).$ Moreover, $\log(\cdot)$ denotes the binary logarithm $\log_2(\cdot)$, and $\ln(\cdot)$ the natural logarithm. 
We use $x \vee y  := \max(x,y)$ and $x \wedge y  := \min(x,y)$.
For a real-valued function  $f : \mathcal{X} \to \mathbb{R}$, we write $\|f\|_{2} :=(\int_{\mathcal{X}} f(\bx)^2 d \mu (\bx))^{1/2}$ and
$\|f\|_{\infty} := \operatorname{sup}_{\bx \in \mathcal{X}}|f(\bx)|$, where $\mu$ is the Lebesgue measure.
For the multivariate Taylor approximations, we use multi-index notation. In particular for $\bx := (x_1, \ldots, x_d)^{\top} \in \mathbb{R}^d$ and $\bm{\alpha} := (\alpha_1, \ldots, \alpha_d)^{\top} \in \mathbb{N}^d$,
    $\bx^{\bm{\alpha}} := x_1^{\alpha_1} \ldots x_d^{\alpha_d}$ and $\bm{\alpha}! := \alpha_1 ! \ldots \alpha_d !$.
We define $\sum_{\emptyset} := 0$ and $\prod_{\emptyset} := 1$. For positive sequences $(a_n)_n$ and $(b_n)_n$, 
we write $a_n \lesssim b_n$ if there exist a constant $c>0$ such that $f(n) \leq c g(n)$ for all sufficiently large $n$.
If $a_n/b_n \to \infty$, we write $a_n \gg b_n$.
For arguments $\bx_1 \in \mathbb{R}^{d_1}, \ldots, \bx_q \in \mathbb{R}^{d_q}$ and a function $f : \mathbb{R}^{d_1+\ldots+d_q} \to \mathbb{R},$ we set $f(\bx_1, \bx_2, \ldots, \bx_q):=f((\bx_1^{\top}, \bx_2^{\top}, \ldots, \bx_q^{\top})^{\top}).$ If there is no ambiguity, we write the same vector as column vector or as a row vector. An example is the width vector defining the network architecture.
\section{Deep Heaviside networks (DHNs)} \label{sec_noskip}
\subsection{Mathematical definition}

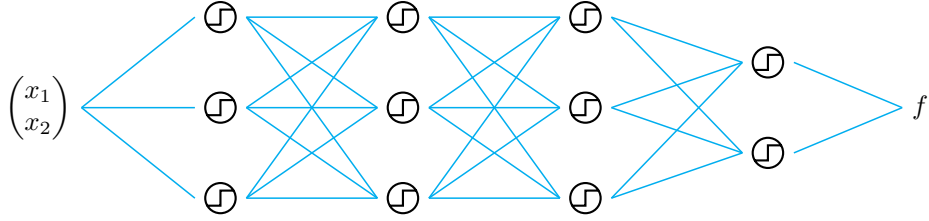
\begin{figure}[htbp]
    \centering
    \newcommand{\stepfuncicon}{
    \tikz[baseline=-0.5ex] 
    \draw[thick] (0,0) circle (0.2cm) 
        (-0.18,-0.1) -- (0,-0.1) -- (0,0.1) -- (0.18,0.1);
}
    
    \begin{tikzpicture}[node distance=2cm, auto, thick, >=Stealth, scale=0.8]

    \node[draw=white] (x) at (0, 1.5) {$	
\begin{pmatrix}
x_1 \\
x_2
\end{pmatrix}$};
        
        \node[draw=white] (f11) at (3, 3) {\stepfuncicon};
        \node[draw=white] (f12) at (3, 1.5) {\stepfuncicon};
        \node[draw=white] (f13) at (3, 0) {\stepfuncicon};
        
        \node[draw=white] (f21) at (6, 3) {\stepfuncicon};
        \node[draw=white] (f22) at (6, 1.5) {\stepfuncicon};
        \node[draw=white] (f23) at (6, 0) {\stepfuncicon};
        
        \node[draw=white] (f31) at (9, 3) {\stepfuncicon};
        \node[draw=white] (f32) at (9, 1.5) {\stepfuncicon};
        \node[draw=white] (f33) at (9, 0) {\stepfuncicon};
        
        \node[draw=white] (f41) at (12, 2.25) {\stepfuncicon};
        \node[draw=white] (f42) at (12, 0.75) {\stepfuncicon};
        
        \node[draw=white] (output) at (14.5, 1.5) {$f$};

        \foreach \j in {1,2,3} {
            \draw[-, cyan, line width=0.2mm] (x.east) -- (f1\j.west);
        }

        \foreach \i in {1,2,3} {
            \foreach \j in {1,2,3} {
                \draw[-, cyan, line width=0.2mm] (f1\i.east) -- (f2\j.west);
            }
        }

        \foreach \i in {1,2,3} {
            \foreach \j in {1,2,3} {
                \draw[-, cyan, line width=0.2mm] (f2\i.east) -- (f3\j.west);
            }
        }
        
        \foreach \i in {1,2,3} {
            \foreach \j in {1,2} {
                \draw[-, cyan, line width=0.2mm] (f3\i.east) -- (f4\j.west);
            }
        }
        
        \draw[-, cyan, line width=0.2mm] (f41.east) -- (output.west);
        \draw[-, cyan, line width=0.2mm] (f42.east) -- (output.west);
    \end{tikzpicture} \vspace{15pt}
    \caption{The network architecture  representing the network class $\DHN(L,\bm{p})$ with $L=4$ and $\bm{p}=(2,3,3,3,2,1).$} 
    \label{DHN}
\end{figure}
The network architecture $(L,\bm{p})$ is given by a number of hidden layers $L$ and an $(L+2)$-dimensional width vector $\bm{p}=(p_0,\ldots,p_{L+1}),$ where $p_0$ and $p_{L+1}$ are the respective input and output dimensions and for $\ell \in [L]$, $p_{\ell}$ denotes the number of neurons in the
$\ell$-th hidden layer. 
For a given architecture $(L,\bm{p})$,  
DHNs are parameterized by the weight matrices $\{W_{\ell} \in \mathbb{R}^{p_{\ell+1} \times p_{\ell}}\}_{\ell \in \{0,1,\ldots,L\}}$ and the shift vectors $\{\bm{b}_{\ell} \in \mathbb{R}^{p_{\ell+1}}\}_{\ell \in \{0,1,\ldots,L\}}$. 
For input $\bx  \in \mathbb{R}^{p_0},$ the DHN output $\bm{f}(\bx)$ is given by
\begin{equation} \label{DNN}
\bm{f}(\bx) := 
W_{L} \bm{f}^{(L)}(\bx) - \bm{b}_{L},
\end{equation}
where $\bm{f}^{(\ell)}$ for $\ell \in [L]$ is recursively defined via
\begin{align} \label{DNN_hidden}
\bm{f}^{(\ell)}(\bx) :=
\sigma_0\left(W_{\ell-1} \bm{f}^{(\ell-1)}(\bx) - \bm{b}_{\ell-1}\right), 
\end{align}
and $\bm{f}^{(0)}(\bx) := \bx.$ Here $\sigma_0$ denotes the componentwise application of the Heaviside activation function in the sense that for any positive integer $p,$
\begin{align} \label{def_sigma_0}
    \sigma_0
\left( (x_1, \dots, x_p)^{\top }\right)=
\big(\mathbb{I}(x_{1}), \dots, \mathbb{I}(x_p) \big)^{\top} .
\end{align}
The function class of DHNs with architecture $(L,\bm{p})$ is defined as
\begin{align*}
    \DHN\big(L, \bm{p}\big) := \big\{ \bm{f} \text{ as defined in \eqref{DNN} and \eqref{DNN_hidden}} \big\}.
\end{align*}
The parameters of this class are the $L+1$ weight matrices $W_0,\ldots,W_L$ and the corresponding shift vectors $\bm{b}_0,\ldots, \bm{b}_L.$ The total number of network parameters is $\sum_{\ell=0}^L (p_\ell+1) p_{\ell+1}.$

We briefly review existing results for shallow Heaviside networks ($L=1$). Theorem 1 of \cite{leshno1993multilayer} shows that shallow Heaviside networks have the universal approximation property. This means that for any continuous function and any error $\epsilon>0,$ one can find a (sufficiently wide) shallow neural network that approximates this function on a bounded domain in sup-norm up to error $\epsilon.$ Regarding approximation rates for shallow Heaviside networks, 
Theorem 1 of \cite{barron1993universal} proves 
$$\inf_{f\in\DHN(1,(d,p,1))} \|f - f_0 \|_{2} \lesssim p^{-\frac{1}{2}}$$
for any function $f_0: \mathbb{R}^d \to \mathbb{R}$ in the Barron class.
Recently, Corollary 2.4 of \cite{yang2024optimal} shows that for any function $f_0 : [0,1]^d \to \mathbb{R}$ that is $\beta$-H\"older smooth, 
$$\inf_{f\in\DHN(1,(d,p,1))} \|f - f_0 \|_{\infty} \lesssim p^{-\frac{\beta \land \frac{d+1}{2}}{d}}.$$
The class $\DHN(1,(d,p,1))$ has $dp+2p+1 \lesssim p$ network parameters for fixed input dimension $d$. This means that the approximation rate is $\lesssim (\text{number of parameter})^{-\beta/d}$ as long as $\beta \leq (d+1)/2.$ This is known to be the optimal approximation rate for approximating a $\beta$-H\"older smooth function defined on $[0,1]^d$ for various parametrized approximation classes. It is very interesting that this works for shallow DHNs up to $\beta=(d+1)/2$ while other schemes based on piecewise constant approximations can only achieve the optimal rate $(\text{number of parameter})^{-\beta/d}$ in the regime $\beta\leq 1,$ which is much smaller for large $d.$ \cite{yang2024optimal} shows that a similar phenomenon also holds for the powers of the ReLU activation function.

Heaviside neural networks with two or more hidden layers can exactly represent a number of well-known functions. As seen in \eqref{represen_block}, for any hypercube $A \subset \mathbb{R}^d$,
$f_0(\bx) := \mathbb{I}(\bx \in A)$ can be represented by a two hidden layer Heaviside network in the class $\DHN(2,(d,2d,1,1))$.  
Extending this result, Theorem 1 of \cite{khalife2024neural} shows that any piecewise constant function with polyhedron pieces can be represented by a two-hidden-layer DHN, 
where the total number of neurons is bounded by the number of pieces plus the number of boundary hyperplanes.
Another interesting example is the parity function  $$f_0(\bx) := \prod_{i=1}^d \big(2\mathbb{I}(x_i)-1\big) \in\{-1,1\},$$
which is piecewise constant on $\mathbb{R}^d$ with $2^d$ pieces. This function has been considered in Example 2 of \cite{khalife2024neural}, providing a construction with three hidden layers. It can also be represented by a two-hidden-layer DHN in the class $\DHN(2,(d,d,d,1))$ using that
\begin{align*}
    f_0(\bx)     
    =(-1)^d + 2 \sum_{k=1}^{d} 
     (-1)^{k+d} \cdot \mathbb{I}\left(\left(\sum_{i=1}^d \mathbb{I}(x_i)\right) - k \right).
\end{align*}

\subsection{Limited expressivity of DHNs}
\label{sec_2.2}

Among all network architectures with the same width in the hidden layers, deeper ReLU networks lead to improved approximation errors \cite{lu2021deep}. Interestingly, we show that for DHNs, the width of the first hidden layer drives the approximation error and depth cannot improve the rate.

We begin by introducing the restriction of a multivariate function to a segment and the function class of piecewise constant functions.

\medskip

\begin{definition} \label{def_restrict} For $f : [0,1]^d \to \mathbb{R}$ and $\bx_1, \bx_2 \in [0,1]^d$, we define $f|_{[\bx_1, \bx_2]} : [0,1] \to \mathbb{R}$ as
    \begin{align*}
        f\big|_{[\bx_1, \bx_2]}(t) = f\big((1-t)\bx_1 + t\bx_2\big).
    \end{align*}
\end{definition}

\begin{definition}[Piecewise constant
function]
    We say that $f : [0,1] \to \mathbb{R}$ is a piecewise constant function with $m$ pieces, if
    there exists an interval partition $\{A_1,A_2,\ldots,A_m\}$ of $[0,1]$ and real numbers $c_1,c_2,\ldots,c_m$ such that $$f(x) = \sum_{i=1}^m c_i \mathbb{I}(x \in A_i)$$
        for every $x \in [0,1]$.
    For $m=1,2,\ldots$, we define $\PC(m)$ as the set of piecewise constant functions with at most $m$ pieces.
\end{definition}

\medskip

\begin{theorem} \label{thm_plain_include}
For any $L$, any $\bm{p} = (d,p_1,\ldots,p_L,1)$, any $f \in \DHN(L,\bm{p})$, and any $\bx_1, \bx_2 \in [0,1]^d$, the function $f|_{[\bx_1, \bx_2]}$ is piecewise constant with at most $p_1+1$ pieces. Hence, for any function $f_0 : [0,1]^d \to \mathbb{R}$,
$$\inf_{f \in \DHN(L,\bm{p})} \|f-f_0 \|_{\infty} \geq \sup_{\bx_1, \bx_2 \in [0,1]^d} \, \inf_{f \in \PC(p_1 + 1)} \left\|f-f_0 |_{[\bx_1, \bx_2]} \right\|_{\infty}.$$
\end{theorem}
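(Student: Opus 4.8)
The plan is to reduce everything to the first hidden layer. Write $\bm{f}^{(1)}(\bx) = \sigma_0(W_0\bx - \bm{b}_0) \in \{0,1\}^{p_1}$ and observe that all remaining hidden layers together with the output layer compose into a single map $G : \mathbb{R}^{p_1} \to \mathbb{R}$, depending on the architecture and parameters but not on the input, such that $f(\bx) = G\big(\bm{f}^{(1)}(\bx)\big)$ for every $\bx \in [0,1]^d$. Hence $f|_{[\bx_1,\bx_2]}(t) = G\big(\bm{f}^{(1)}|_{[\bx_1,\bx_2]}(t)\big)$, and since applying a fixed function to a piecewise constant function cannot increase the number of pieces, it suffices to prove that $t \mapsto \bm{f}^{(1)}|_{[\bx_1,\bx_2]}(t)$ is piecewise constant on $[0,1]$ with at most $p_1+1$ pieces (with an interval partition).

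For the first layer, substituting $\bx = (1-t)\bx_1 + t\bx_2$ gives, for the $i$-th coordinate, $\big(\bm{f}^{(1)}|_{[\bx_1,\bx_2]}(t)\big)_i = \mathbb{I}(a_i t + c_i \geq 0)$ with $a_i := \big(W_0(\bx_2 - \bx_1)\big)_i$ and $c_i := \big(W_0\bx_1 - \bm{b}_0\big)_i$. The set $J_i := \{t \in [0,1] : a_i t + c_i \geq 0\}$ is a half-line intersected with $[0,1]$, hence an interval that is either $\emptyset$, all of $[0,1]$, or of the form $[0,\gamma_i]$ or $[\gamma_i,1]$ for some $\gamma_i \in (0,1)$; in particular at most one endpoint of $J_i$ lies in the open interval $(0,1)$. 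Thus $\bm{f}^{(1)}|_{[\bx_1,\bx_2]} = (\mathbb{I}_{J_1},\ldots,\mathbb{I}_{J_{p_1}})$, which is constant precisely on the cells of the arrangement of $J_1,\ldots,J_{p_1}$ inside $[0,1]$.

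It then remains to bound the number of cells by $p_1+1$, which I would do by induction on the number of sets added to the arrangement. Adding one further one-sided interval $J$ (of type $\emptyset$, $[0,1]$, $[0,\gamma]$ or $[\gamma,1]$) to an arrangement with $k$ cells refines only the single cell containing the unique interior endpoint $\gamma$ of $J$, splitting it into at most two subintervals; on every other cell, being a subinterval of $[0,1]$ that does not contain $\gamma$, the indicator $\mathbb{I}_J$ is constant. So the cell count increases by at most one per interval, giving at most $p_1+1$ cells, each an interval, on each of which $\bm{f}^{(1)}|_{[\bx_1,\bx_2]}$ and therefore $f|_{[\bx_1,\bx_2]}$ is constant; that is, $f|_{[\bx_1,\bx_2]} \in \PC(p_1+1)$. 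The displayed inequality follows at once: for fixed $\bx_1,\bx_2 \in [0,1]^d$ and any $f \in \DHN(L,\bm{p})$, convexity of $[0,1]^d$ gives $\|f - f_0\|_\infty \geq \|f|_{[\bx_1,\bx_2]} - f_0|_{[\bx_1,\bx_2]}\|_\infty \geq \inf_{g \in \PC(p_1+1)} \|g - f_0|_{[\bx_1,\bx_2]}\|_\infty$, and taking the infimum over $f$ and then the supremum over $\bx_1,\bx_2$ completes the argument.

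The step needing the most care is the cell count. The Heaviside convention $\mathbb{I}(\cdot \geq 0)$ makes "upward" jumps of the coordinate functions right-continuous and "downward" jumps left-continuous, so a naive argument would only yield the weaker bound $2p_1+1$: a coordinate jumping up and a coordinate jumping down at the same location would isolate that point as its own piece. The one-sidedness of each $J_i$ — that each threshold indicator along the segment is the indicator of an interval with a single free endpoint — is exactly what rescues the sharp bound $p_1+1$, and the induction above is the cleanest way I see to exploit it.
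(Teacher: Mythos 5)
Your proposal is correct and follows essentially the same route as the paper: reduce to the first hidden layer via $f=G\circ\bm{f}^{(1)}$, observe that along the segment each first-layer neuron is the indicator of a one-sided interval in $t$, count at most $p_1+1$ cells, and then restrict to segments for the displayed inequality. Your explicit induction on the interval arrangement is a slightly more careful justification of the paper's ``at most $p_1$ breakpoints, hence $p_1+1$ pieces'' step (it correctly handles coincident endpoints and singleton cells), but the argument is the same in substance.
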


The proof is deferred to Appendix \ref{app_proof_thm_plain_include}.
The first statement of the theorem shows that 
the number of pieces of a DHN along any line is always bounded above by $p_1+1$ with $p_1$ the number of units in the first hidden layer. The deeper layers widths $p_2, \ldots, p_L$ cannot increase the number of pieces beyond the upper bound $p_1+1$.  
The proof is based on the fact that 
for any $f \in \DHN(L,\bm{p})$, each hidden layer function $\bm{f}^{(\ell)} : \mathbb{R}^{d} \to \{0,1\}^{p_{\ell}}$ for $\ell \in [L]$ is a piecewise constant vector-valued function, whose number of pieces is at most that of the previous hidden layer.
This has been observed in
Corollary 2 of \cite{khalife2024neural},
and was employed by \cite{wang2022vc} to derive the VC dimension bound $\lesssim p_1^d$, which depends only on the input dimension $d$ and $p_1$.

The lower bound on the approximation in Theorem~\ref{thm_plain_include} depends only on the number of neurons in the first hidden layer.
For a fixed $p_1$, arbitrary width in deeper layers cannot improve the approximation beyond this lower bound. A more concrete lower bound, proven in Appendix \ref{app_proof_thm_plain_include}, states that for any positive integer $m,$ and any continuous function $f_0 : [0,1]^d \to \mathbb{R},$
\begin{align} \label{inequ_delta_supinf}
   \sup_{\bx_1, \bx_2 \in [0,1]^d} \, \inf_{f \in \PC(m)} \left\|f-f_0 |_{[\bx_1, \bx_2]} \right\|_{\infty}
\geq \frac{\sup_{\bx\in [0,1]^d} f_0(\bx) - \inf_{\bx\in [0,1]^d} f_0(\bx)}{2m}.
\end{align}
The inequality combined with the previous theorem yields
\begin{align}
    \inf_{f \in \DHN(L,\bm{p})} \|f-f_0 \|_{\infty} \geq \frac{\sup_{\bx\in [0,1]^d} f_0(\bx) - \inf_{\bx\in [0,1]^d} f_0(\bx)}{2(p_1+1)}. \label{plain_lower_supinf}
\end{align}
Regarding the tightness of the lower bound in Theorem $\ref{thm_plain_include}$, the following proposition provides a family of target functions for which the opposite inequality holds.

\medskip

\begin{proposition} {\label{low_1_tight}}
    Consider $f_0 : [0,1]^d \to \mathbb{R}$.
    \begin{enumerate}
        \item[(i)] Let $f_0$ be a ridge function, that is, there exist an affine transformation $A_0 : \mathbb{R}^d \to \mathbb{R}$ and a function $g_0 : \mathbb{R} \to \mathbb{R}$ such that $f_0(\bx) = g_0 \circ A_0(\bx)$. Then, for any $p_1 = 1,2,\ldots,$
        \begin{align*}
 \inf_{f \in \DHN(1, (d,p_1,1))} \|f-f_0\|_{\infty}
 \leq \sup_{\bx_1, \bx_2 \in [0,1]^d} \, \inf_{f \in \PC(p_1 + 1)} \left\|f-f_0 |_{[\bx_1, \bx_2]} \right\|_{\infty}.
        \end{align*}
        \item[(ii)] If $f_0$ is $k$-Lipschitz continuous with respect to the $L_{\infty}$-norm, then, for any $p_1 = 1,2,\ldots,$
        \begin{align*}
         &\inf_{f \in \DHN(2,(d,p_1,\lceil p_1/d \rceil^d, 1))} \|f-f_0\|_{\infty} \\
         &\leq \frac{k d}{\sup_{\bx \in [0,1]^d} f_0(\bx) - \inf_{\bx \in [0,1]^d} f_0(\bx)} \sup_{\bx_1, \bx_2 \in [0,1]^d} \, \inf_{f \in \PC(p_1 + 1)} \left\|f-f_0 |_{[\bx_1, \bx_2]} \right\|_{\infty}.
        \end{align*}        
    \end{enumerate}
\end{proposition}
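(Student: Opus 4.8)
\emph{Part (i).} The plan is to reduce the ridge case to one-dimensional piecewise constant approximation. Write $A_0(\bx)=\bw^\top\bx+v$; we may assume $\bw\neq\bm{0}_d$, since otherwise $f_0$ is constant and the claim is trivial. As $[0,1]^d$ is compact and connected, $A_0([0,1]^d)=[\underline a,\overline a]$ with $\underline a<\overline a$ attained at two vertices $\bx_1^\star,\bx_2^\star$ of the cube, and $t\mapsto A_0\big((1-t)\bx_1^\star+t\bx_2^\star\big)=\underline a+t(\overline a-\underline a)$ is an affine bijection of $[0,1]$ onto $[\underline a,\overline a]$. An affine change of variables maps interval partitions to interval partitions and preserves the sup-norm, whence
\[ \inf_{f\in\PC(p_1+1)}\big\|f-f_0|_{[\bx_1^\star,\bx_2^\star]}\big\|_\infty=\inf_{h\in\PC(p_1+1)}\ \sup_{u\in[\underline a,\overline a]}|h(u)-g_0(u)|. \]
Next I would verify that every $h\circ A_0$ with $h\in\PC(p_1+1)$ lies in $\DHN(1,(d,p_1,1))$, at least up to the finitely many breakpoint values: writing $h=c_0+\sum_{j=1}^{p_1}(c_j-c_{j-1})\,\mathbb{I}(\,\cdot\,-t_j)$ for breakpoints $t_1<\cdots<t_{p_1}$, take all $p_1$ first-layer weight vectors equal to $\bw$, first-layer shifts $t_j-v$, output weights $c_j-c_{j-1}$ and output shift $-c_0$, so that the network computes $h\circ A_0$. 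Since $A_0$ is onto $[\underline a,\overline a]$ and $f_0=g_0\circ A_0$, we have $\|h\circ A_0-f_0\|_\infty=\sup_{u\in[\underline a,\overline a]}|h(u)-g_0(u)|$, and combining the last two facts with the display gives
\[ \inf_{f\in\DHN(1,(d,p_1,1))}\|f-f_0\|_\infty\le\inf_{h\in\PC(p_1+1)}\sup_{u\in[\underline a,\overline a]}|h(u)-g_0(u)|\le\sup_{\bx_1,\bx_2}\inf_{f\in\PC(p_1+1)}\big\|f-f_0|_{[\bx_1,\bx_2]}\big\|_\infty, \]
which is (i). The one nuisance is that a Heaviside unit is right-continuous in its argument, so the construction realises precisely those $h$ whose breakpoint values equal their right limits (or, on negating one weight vector, their left limits); this is harmless because in the infimum one may always choose a near-optimal $h$ of that form.

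\emph{Part (ii).} The plan is a grid construction based on the block-indicator identity \eqref{represen_block}, matched against the lower bound \eqref{inequ_delta_supinf}. Put $n:=\lceil p_1/d\rceil$ and partition $[0,1]^d$ into the $n^d=\lceil p_1/d\rceil^d$ congruent sub-cubes of side $1/n$. Using the one-sided form $\mathbb{I}(a\le x_i<b)=\sigma_0(x_i-a)-\sigma_0(x_i-b)$ of \eqref{represen_block}, I would put the $d(n-1)\le p_1$ threshold neurons $\sigma_0(x_i-\ell/n)$, $i\in[d]$, $\ell\in[n-1]$, in the first hidden layer (the remaining thresholds are constant on $[0,1]^d$), devote one of the $\lceil p_1/d\rceil^d$ second-layer neurons to the indicator of each sub-cube, and let the output layer return the value of $f_0$ at the midpoint of the sub-cube containing $\bx$. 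Writing $\Delta:=\sup_{\bx}f_0(\bx)-\inf_{\bx}f_0(\bx)$, for $k$-Lipschitz (in $\|\cdot\|_\infty$) $f_0$ this yields $\|f-f_0\|_\infty\le k/(2n)=k/(2\lceil p_1/d\rceil)$, while \eqref{inequ_delta_supinf} with $m=p_1+1$ gives $\sup_{\bx_1,\bx_2}\inf_{f\in\PC(p_1+1)}\|f-f_0|_{[\bx_1,\bx_2]}\|_\infty\ge\Delta/(2(p_1+1))$, so the right-hand side of the proposition is at least $kd/(2(p_1+1))$.

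When $d\nmid p_1$ one has $d\lceil p_1/d\rceil\ge p_1+1$, hence $k/(2\lceil p_1/d\rceil)\le kd/(2(p_1+1))$ and the two estimates chain to the claim. The case $d\mid p_1$ is the main obstacle: there the uniform $\lceil p_1/d\rceil$-grid is marginally too coarse, and the few spare first-layer neurons cannot refine it without overrunning the second-layer budget $\lceil p_1/d\rceil^d$. I would resolve it by choosing the network $f_0$-adaptively rather than as a fixed grid. If $f_0$ is nearly monotone along its extremal segment -- equivalently, the right-hand side is, up to a constant, at its floor $\Delta/(2(p_1+1))$ -- then a construction with all first-layer thresholds parallel to that segment, the second layer collapsing the resulting staircase through gadgets $\sigma_0\big(\sum_j u_j-\ell+\tfrac12\big)$, reproduces the optimal one-dimensional fit along the segment exactly as in Part (i) and attains error $\le(kd/\Delta)\cdot(\text{right-hand side})$; otherwise $f_0$ oscillates or is genuinely multivariate along lines, and the right-hand side exceeds its floor by a margin that the grid -- or a refinement whose second-layer neurons encode the level sets of a rounded $f_0$ rather than individual cells -- absorbs. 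Making this adaptivity quantitative, and checking the second-layer width always suffices for the gadgets, is the delicate point.
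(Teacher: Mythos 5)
Your argument is correct and is essentially the paper's: reduce to one dimension along the segment joining the two points where $A_0$ attains its minimum and maximum over $[0,1]^d$, and realize any $h\in\PC(p_1+1)$ precomposed with $A_0$ as a one-hidden-layer network with all weight vectors proportional to $\bw$. The paper isolates the representability step as Lemma~\ref{lemma_onedim_equ} ($\DHN(1,(1,p,1))=\PC(p+1)$), handling the left/right-continuity at breakpoints by flipping the sign of individual weight vectors exactly as you indicate, so your ``nuisance'' is genuinely harmless.

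\textbf{Part (ii).} Here your proposal is incomplete. Your grid construction is exactly the paper's ($M=\lceil p_1/d\rceil$, thresholds $\mathbb{I}(x_i-\ell/M)$ in the first layer, one cube indicator per second-layer neuron via the mechanism of \eqref{represen_block}, output $f_0$ at the cube centre, error $\le k/(2M)$). The paper then finishes with the direct chain $k/(2M)\le kd/(2p_1)\le kd/(p_1+1)\le (kd/\Delta)\,S$, where $S$ denotes the right-hand side, invoking \eqref{inequ_delta_supinf} for the last step and making no case distinction on whether $d\mid p_1$. Your bookkeeping is in fact more careful than the paper's: since \eqref{inequ_delta_supinf} only yields $S\ge\Delta/(2(p_1+1))$, the chain requires $d\lceil p_1/d\rceil\ge p_1+1$, which fails exactly when $d\mid p_1$ (and, read literally, the paper's final inequality uses $S\ge\Delta/(p_1+1)$, a factor $2$ more than \eqref{inequ_delta_supinf} supplies). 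So the obstacle you flag is real and not an artifact of your approach. However, having found it you do not close it: the ``adaptive'' construction in your final paragraph — thresholds parallel to the extremal segment when $f_0$ is nearly monotone, level-set encodings otherwise — is described only qualitatively, and you yourself leave the quantitative step and the second-layer width count open. As written, part (ii) is therefore not a proof. The pragmatic resolution is the paper's: accept the direct chain through \eqref{inequ_delta_supinf} (which costs at most the factor $(p_1+1)/p_1\le 2$ in the constant) rather than trying to make the construction adaptive.
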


A bottleneck hidden layer in the sense that $\min_{\ell=2,\ldots,L} p_\ell \ll p_1,$ can further reduce the expressiveness of the network class. For instance, consider width vector $\bm{p}=(d,p_1,p_2,\ldots,p_{L-1},1,1).$ Then, there is only one neuron in the last hidden layer. Theorem \ref{thm_plain_include} guarantees that $f|_{[\bx_1, \bx_2]}$ is piecewise constant with at most $p_1+1$ pieces. However, $p_L=1$ constraints the neural network to at most two output values. For any continuous $f_0,$ the lower bound in this case is
$$\inf_{f \in \DHN(L,\bm{p})} \|f-f_0 \|_{\infty} \geq \frac{\sup_{\bx\in [0,1]^d} f_0(\bx) - \inf_{\bx\in [0,1]^d} f_0(\bx)}{4},$$
which is much worse than \eqref{plain_lower_supinf} if $p_1$ is large. Since the lower bound in Theorem \ref{thm_plain_include} already shows that this class has poor approximation theoretic properties, we do not think it is particularly valuable to work out these refinements in more detail.

Despite its limitations, the expressivity of DHNs is still superior to that of shallow Heaviside networks. 
For instance, the function 
\begin{align*}
f_0(x_1, x_2) :&= \mathbb{I}(x_1 \geq 0, x_2 < 0)+\mathbb{I}(x_1 < 0, x_2 \geq 0) \\
&= \mathbb{I}\left( \mathbb{I}(x_1) - \mathbb{I}(x_2) -\frac{1}{2}\right)+\mathbb{I}\left( -\mathbb{I}(x_1) + \mathbb{I}(x_2) -\frac{1}{2}\right) 
\end{align*}
can be easily represented using a two-hidden-layer DHN, whereas approximating it with shallow DHNs is impossible as this extends the XOR function to $[0,1]^2$ and it is well-known that the XOR function cannot be represented by a shallow network \cite{minsky69perceptrons}. 
Theorem 1 of \cite{eldan2016power} also proposes a target function that can be represented by a two-hidden-layer DHN with finite width, but requires exponentially large width in the dimension to approximate it with shallow neural networks for more than constant accuracy.
Whether the expressiveness of Heaviside networks can benefit from large depth remains unknown.

\section{Skip connections augmented DHNs} \label{sec_skip}

Plain DHNs can only transmit few bits across layers limiting their representational power, as also shown in Theorem \ref{thm_plain_include}. To improve the network's expressivity, one idea is to incorporate skip connections within the network. 
These connections allow outputs of neurons to be reused by directly linking them to subsequent layers.

Skip connections are an essential element in modern deep learning. Examples include DenseNets, introduced by \cite{huang2017densely}, where feature maps generated by earlier layers are reused in later layers. 
Other common forms of skip connections include additive shortcuts used in ResNets \cite{he2016deep} and encoder-decoder skip connection found in U-Nets \cite{ronneberger2015u}.
Skip connections in these models mitigate gradient vanishing by improving gradient flow and information propagation.

In this section, we explore DHNs with skip connections that connect the input to every hidden layer. These skip connections allow for the reuse of input data throughout the network, thereby significantly increasing
the expressive power of DHNs. 

\subsection{Mathematical definition} \label{sec_3.1}

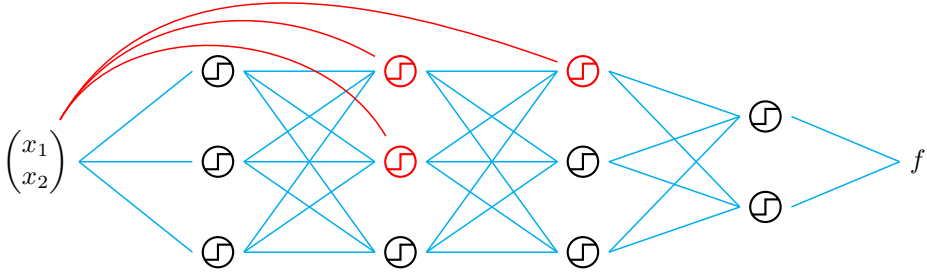
\begin{figure}[htbp]
    \centering
\newcommand{\stepfuncicon}{
    \tikz[baseline=-0.5ex] 
    \draw[thick] (0,0) circle (0.2cm) 
        (-0.18,-0.1) -- (0,-0.1) -- (0,0.1) -- (0.18,0.1);
}

\newcommand{\stepfunciconskip}{
    \tikz[baseline=-0.5ex] 
    \draw[thick, red] (0,0) circle (0.2cm) 
        (-0.18,-0.1) -- (0,-0.1) -- (0,0.1) -- (0.18,0.1);
}

    \begin{tikzpicture}[node distance=2cm, auto, thick, >=Stealth, scale=0.8]
        \node[draw=white] (x) at (0, 1.5) {$	
\begin{pmatrix}
x_1 \\
x_2
\end{pmatrix}$};
        
        \node[draw=white] (f11) at (3, 3) {\stepfuncicon};
        \node[draw=white] (f12) at (3, 1.5) {\stepfuncicon};
        \node[draw=white] (f13) at (3, 0) {\stepfuncicon};
        
        \node[draw=white] (f21) at (6, 3) {\stepfunciconskip};
        \node[draw=white] (f22) at (6, 1.5) {\stepfunciconskip};
        \node[draw=white] (f23) at (6, 0) {\stepfuncicon};
        
        \node[draw=white] (f31) at (9, 3) {\stepfunciconskip};
        \node[draw=white] (f32) at (9, 1.5) {\stepfuncicon};
        \node[draw=white] (f33) at (9, 0) {\stepfuncicon};
        
        \node[draw=white] (f41) at (12, 2.25) {\stepfuncicon};
        \node[draw=white] (f42) at (12, 0.75) {\stepfuncicon};
        
        \node[draw=white] (output) at (14.5, 1.5) {$f$};

        \foreach \i in {1,2,3} {
            \draw[-, cyan, line width=0.2mm] (x.east) -- (f1\i.west);
        }
        
        \foreach \i in {1,2,3} {
            \foreach \j in {1,2,3} {
                \draw[-, cyan, line width=0.2mm] (f1\i.east) -- (f2\j.west);
            }
        }

        \foreach \i in {1,2,3} {
            \foreach \j in {1,2,3} {
                \draw[-, cyan, line width=0.2mm] (f2\i.east) -- (f3\j.west);
            }
        }
        
        \foreach \i in {1,2,3} {
            \foreach \j in {1,2} {
                \draw[-, cyan, line width=0.2mm] (f3\i.east) -- (f4\j.west);
            }
        }
        
        \draw[-, cyan, line width=0.2mm] (f41.east) -- (output.west);
        \draw[-, cyan, line width=0.2mm] (f42.east) -- (output.west);
        
        \draw[-, thick, red, line width=0.2mm] (x) to[out=60, in=150] (f21);
        \draw[-, thick, red, line width=0.2mm] (x) to[out=60, in=120] (f22);
        \draw[-, thick, red, line width=0.2mm] (x) to[out=60, in=160] (f31);

    \end{tikzpicture}\vspace{15pt}
    \caption{\textbf{Example of skip connections augmented DHN (skip-DHN).} 
    Hidden neurons in red are directly connected to the input. In particular, the second hidden layer contains two skip connected neurons ($s_2=2$), the third hidden layer has one skip connected neuron ($s_3=1$) and the last hidden layer has none ($s_4=0$). 
    The graph represents the network class $\DHN_{\skipp}(L,\bm{p}, \bm{s})$ with $L=4$, $\bm{p}=(2,3,3,3,2,1)$ and $\bm{s} = (2,1,0)$.
    }
    \label{DHN_skip}
\end{figure}

The network architecture $(L,\bm{p}, \bm{s})$ is given by a number of hidden layers $L$, an $(L+2)$-dimensional width vector $\bm{p}=(p_0,\ldots,p_{L+1})$ and an $(L-1)$-dimensional skip connection vector $\bm{s}=(s_2,\ldots,s_{L}).$ Here $p_0$ and $p_{L+1}$ are the respective input and output dimensions, and for $\ell \in [L]$, $p_{\ell}$ denotes the number of neurons in the
$\ell$-th hidden layer.
For $\ell \in \{2,\ldots,L\}$,
we augment the network by adding $s_{\ell}$ skip connections from the input to $s_{\ell}$ neurons in the $\ell$-th hidden layer. We refer to such networks as skip connections augmented DHNs (skip-DHNs).

For a given architecture $(L,\bm{p}, \bm{s})$,  
skip-DHNs are parameterized by the weight matrices $\{W_{\ell} \in \mathbb{R}^{p_{\ell+1} \times p_{\ell}}\}_{\ell \in \{0,1,\ldots,L\}}$, the shift vectors $\{\bm{b}_{\ell} \in \mathbb{R}^{p_{\ell+1}}\}_{\ell \in \{0,1,\ldots,L\}}$
and the skip connection matrices $\{V_{\ell} \in \mathbb{R}^{p_{\ell+1} \times p_{0}}\}_{\ell \in [L-1]}$, where at most $s_{\ell}$ row-vectors of $V_{\ell}$ are non-zero. 
For input $\bx  \in \mathbb{R}^{p_0},$ the output $\bm{f}(\bx)$ of the skip-DHN is given by
\begin{align} \label{DHN_skip_multi}
\bm{f}(\bx) := 
W_{L} \bm{f}^{(L)}(\bx) - \bm{b}_L,
\end{align}
where $\bm{f}^{(\ell)}$ for $\ell \in \{2,3,\ldots,L\}$ is recursively defined via 
\begin{align}
\bm{f}^{(\ell)}(\bx) :=
\sigma_0 \left(W_{\ell-1} \bm{f}^{(\ell-1)}(\bx) + V_{\ell-1}  \bx - \bm{b}_{\ell-1}\right) \label{DHN_skip_multi_hidden}
\end{align}
and 
\begin{align}
    \bm{f}^{(1)}(\bx) := \sigma_0 \big(W_{0}  \bx - \bm{b}_{0}\big). \label{DHN_skip_multi_first}
\end{align}
As in \eqref{def_sigma_0}, $\sigma_0$ denotes the componentwise application of the Heaviside activation function. Via the matrices $V_{\ell-1}$, we allow the later layers to learn features based on the output of the previous hidden layer and the input. The class of skip-DHNs is defined as
\begin{align*}
    \DHN_{\skipp}\big(L, \bm{p}, \bm{s}\big) := \Big\{ & \bm{f} \text{ as defined in (\ref{DHN_skip_multi})-(\ref{DHN_skip_multi_first}), } \# \left(\text{non-zero row vectors of } V_{\ell-1}\right) \leq s_{\ell}\Big\}.
\end{align*}
The total number of network parameters is $\sum_{\ell=0}^L (p_\ell+1) p_{\ell+1} + p_0 \sum_{\ell=2}^L s_{\ell}.$
Since $\DHN(L, \bm{p}) = \DHN_{\skipp}(L, \bm{p}, \bm{0}_{L-1})$, plain DHNs are a special case of skip-DHNs.

\medskip

\begin{remark}
\label{rem.1}
For networks defined on binary input spaces, skip connections can be replaced by few additional neurons in the sense that for any positive integers $d, D$ and any network architecture, a function $\bm{h}:\{0,1\}^d \to \mathbb{R}^D$ with $\bm{h} \in \DHN_{\skipp}(L,(d,p_1,\ldots,p_L, D), \bm{s})$ is also in $\DHN(L,(d, p_1+d,\ldots, p_L+d, D)).$

To verify this, one can forward the $d$ binary inputs $\in \{0,1\}$ to all hidden layers via the identity $b = \mathbb{I}(b - 1/2)$ for $b \in \{0,1\}$ at the cost of adding $d$ neurons to each hidden layer. The input is then available in all hidden layers. This is sufficient to replace the skip connections.
\end{remark}

\medskip

Skip connections, however, can make networks considerably more expressive for real-valued inputs. A first result in this direction is the following lower bound. 

\medskip

\begin{theorem} \label{thm_skip_include}
For any $L$, any $\bm{p} = (d,p_1,\ldots,p_L,1)$, any $\bm{s} = (s_2, \ldots, s_L)$, any $\bx_1, \bx_2 \in [0,1]^d$, and any $f \in \DHN_{\skipp}(L,\bm{p},\bm{s})$, the function $f|_{[\bx_1, \bx_2]}$ is piecewise constant with at most $(p_1+1)\prod_{\ell=2}^L (s_{\ell}+1)$ pieces.  
Hence, for any function $f_0 : [0,1]^d \to \mathbb{R}$,
    $$\inf_{f \in \DHN_{\skipp}(L,\bm{p},\bm{s})} \|f-f_0 \|_{\infty} \geq \sup_{\bx_1, \bx_2 \in [0,1]^d} \, \inf_{f \in \PC((p_1+1){\scalebox{0.7}{\(\prod\)}}_{\ell=2}^L (s_{\ell}+1))} \left\|f-f_0 |_{[\bx_1, \bx_2]} \right\|_{\infty}.$$
\end{theorem}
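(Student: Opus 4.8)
The plan is to prove the first assertion; the ``hence'' part then follows by a routine argument identical in spirit to the one behind Theorem~\ref{thm_plain_include}. Indeed, for fixed $\bx_1,\bx_2$ one has $\inf_{f}\|f-f_0\|_\infty \ge \inf_{f}\|f|_{[\bx_1,\bx_2]}-f_0|_{[\bx_1,\bx_2]}\|_\infty$, where the infima run over $f\in\DHN_{\skipp}(L,\bm{p},\bm{s})$; the first assertion places every $f|_{[\bx_1,\bx_2]}$ inside $\PC((p_1+1)\prod_{\ell=2}^L(s_\ell+1))$, so the right-hand infimum is at least $\inf_{g\in\PC((p_1+1)\prod_{\ell=2}^L(s_\ell+1))}\|g-f_0|_{[\bx_1,\bx_2]}\|_\infty$, and taking the supremum over $\bx_1,\bx_2\in[0,1]^d$ finishes it.

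For the first assertion, fix $\bx_1,\bx_2\in[0,1]^d$, set $\gamma(t):=(1-t)\bx_1+t\bx_2$ for $t\in[0,1]$, and track, by induction on $\ell$, a bound $N_\ell$ on the number of pieces of the restricted hidden-layer map $t\mapsto\bm{f}^{(\ell)}(\gamma(t))$, which is $\{0,1\}^{p_\ell}$-valued and hence piecewise constant on $[0,1]$. For $\ell=1$, \eqref{DHN_skip_multi_first} gives $\bm{f}^{(1)}(\gamma(t))=\sigma_0(W_0\gamma(t)-\bm{b}_0)$ with an affine argument, so each of the $p_1$ coordinates changes sign at most once and $N_1\le p_1+1$. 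For the inductive step, suppose $t\mapsto\bm{f}^{(\ell-1)}(\gamma(t))$ is constant on each part of an interval partition $A_1,\dots,A_{N_{\ell-1}}$ of $[0,1]$. On a part $A_i$ where this map equals $\bm{c}_i$, \eqref{DHN_skip_multi_hidden} gives $\bm{f}^{(\ell)}(\gamma(t))=\sigma_0(W_{\ell-1}\bm{c}_i+V_{\ell-1}\gamma(t)-\bm{b}_{\ell-1})$, i.e.\ an affine function of $t$ with slope vector $V_{\ell-1}(\bx_2-\bx_1)$ composed with the componentwise Heaviside. Since at most $s_\ell$ rows of $V_{\ell-1}$ are non-zero, this slope vector has at most $s_\ell$ non-zero entries, so at most $s_\ell$ coordinates are non-constant on $A_i$, each contributing at most one breakpoint; hence $A_i$ splits into at most $s_\ell+1$ sub-pieces and $N_\ell\le N_{\ell-1}(s_\ell+1)$. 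Unrolling gives $N_L\le(p_1+1)\prod_{\ell=2}^L(s_\ell+1)$, and since $f(\gamma(t))=W_L\bm{f}^{(L)}(\gamma(t))-\bm{b}_L$ by \eqref{DHN_skip_multi} is a fixed affine image of $\bm{f}^{(L)}(\gamma(t))$, it is constant on each of these pieces, so $f|_{[\bx_1,\bx_2]}\in\PC((p_1+1)\prod_{\ell=2}^L(s_\ell+1))$.

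The proof is largely bookkeeping, so there is no single difficult step; the place that most needs care is the inductive step, where it is essential that the sparsity budget constrains the \emph{rows} of $V_{\ell-1}$ — only then do at most $s_\ell$ of the layer-$\ell$ pre-activations actually depend on the (affine, and generically non-constant) input direction within a frozen constant piece of the previous layer, which is what caps the subdivision of each such piece at $s_\ell+1$. One should also double-check the index alignment: the skip matrix feeding hidden layer $\ell$ is $V_{\ell-1}$ and it carries the budget $s_\ell$, which is precisely what produces the factor $\prod_{\ell=2}^L(s_\ell+1)$; and coinciding breakpoints across coordinates only decrease the count, so the stated bound is an upper bound as required. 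Specializing to $\bm{s}=\bm{0}_{L-1}$ recovers Theorem~\ref{thm_plain_include}.
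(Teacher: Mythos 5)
Your proposal is correct and follows essentially the same argument as the paper: induction over layers, with the first hidden layer contributing at most $p_1+1$ pieces and each subsequent layer $\ell$ subdividing every existing piece into at most $s_\ell+1$ sub-pieces because only the (at most $s_\ell$) non-zero rows of $V_{\ell-1}$ produce pre-activations that vary with $t$ on a piece where the previous layer is frozen. The index alignment and the reduction of the ``hence'' part to the piecewise-constant class both match the paper's proof.
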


The proof is deferred to Appendix \ref{app_proof_skip_1}.
In fact, one can construct an example of $f \in \DHN_{\skipp}(L,\bm{p},\bm{s})$ and $\bx_1, \bx_2 \in [0,1]^d$ such that $f|_{[\bx_1, \bx_2]}$ is a piecewise constant function with $(p_1+1)\prod_{\ell=2}^L (s_{\ell}+1)$ pieces; see the proof of Theorem \ref{thm_sq}. 
Compared to the lower bound in Theorem~\ref{thm_plain_include}, this proves that skip connections can significantly increase the number of pieces and enhance the expressive power of the network.

In particular, for continuous $f_0$, the inequality combined with
(\ref{inequ_delta_supinf}) yields
\begin{align}
    \inf_{f \in \DHN_{\skipp}(L,\bm{p}, \bm{s})} \|f-f_0 \|_{\infty} \geq \frac{\sup_{\bx\in [0,1]^d} f_0(\bx) - \inf_{\bx\in [0,1]^d} f_0(\bx)}{2(p_1+1) \prod_{\ell=2}^L (s_{\ell}+1)}.
    \label{skip_lower_supinf}
\end{align}
Compared to the lower bound of (\ref{plain_lower_supinf}), the bound in (\ref{skip_lower_supinf}) includes an additional factor $\prod_{\ell=2}^L (s_{\ell} + 1).$ 
If each layer has at least one skip-connected neuron, this lower bound decreases exponentially with the depth.

In Theorem \ref{thm_sq}, we show that the lower bound can be achieved up to multiplicative constants.

\subsection{Approximation of the square function}  \label{sec_3.2}
Approximating the square function $x \mapsto x^2$ is a fundamental step in deriving approximation rates for ReLU networks when approximating H\"older smooth target functions; see, e.g., \cite{telgarsky2015representation,10.1214/19-AOS1875, 10.1214/20-AOS2034, YAROTSKY2017103}. Combination with the polarization identity then leads to the approximation of the multiplication operation $xy$ for inputs $(x,y)$. 
This can be used to approximate monomials and Taylor polynomials, ultimately enabling the approximation of H\"older smooth functions.
In the following theorem, we derive an approximation result for the square function using skip-DHNs. 

\medskip

\begin{theorem} \label{thm_sq}
    For any $L, p_1$ and any $\bm{s} = (s_2,\ldots,s_L) \in \mathbb{N}^{L-1}$ with $s_L=0$,
there exists a network
$$f \in \DHN_{\skipp}(L, \bm{p}, \bm{s})$$
with width vector
    \begin{align*}
        \bm{p} &:= \left(1,p_1,p_1 + s_2, \ldots, p_1 + \sum_{\ell = 2}^{L-1} s_{\ell}, 
        \frac 12 
        \Big(p_1 + \sum_{\ell = 2}^{L-1} s_{\ell}\Big)\Big(p_1 + \sum_{\ell = 2}^{L-1} s_{\ell} +1\Big),1 \right)
    \end{align*}
such that
    $$\sup_{x \in [0, 1]} \, \left| f(x) - x^2 \right| \leq 
    \frac{1}{(p_1+1) \prod_{\ell=2}^L (s_{\ell}+1)}.$$
\end{theorem}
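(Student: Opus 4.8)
The plan is to construct the network explicitly. Write $N := (p_1+1)\prod_{\ell=2}^{L}(s_\ell+1)$; since $s_L=0$ this equals $(p_1+1)\prod_{\ell=2}^{L-1}(s_\ell+1)$. The first step is a reduction: it suffices to build $f \in \DHN_{\skipp}(L,\bm p,\bm s)$ that is constant on each interval $I_k := [k/N,(k+1)/N)$, $k=0,\dots,N-1$ (with $I_{N-1}$ closed at $1$), taking there the value $m_k^2$ where $m_k := (2k+1)/(2N)$ is the midpoint. Indeed, on $I_k$ we then have $|f(x)-x^2| = |m_k-x|\cdot|m_k+x| \le \tfrac1{2N}\cdot 2 = \tfrac1N$, using only $m_k,x\in[0,1]$ and $|x-m_k|\le 1/(2N)$; this is exactly the claimed bound. (For $L\ge 2$ I describe the construction below; for $L=1$ the statement already follows from a direct shallow piecewise-constant approximation of $x^2$ with $N=p_1+1$ pieces.)

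The network realizes a \emph{refine, then square} scheme. Hidden layer $1$ has $p_1$ neurons computing $\sigma_0\!\big(x - j/(p_1+1)\big)$, $j=1,\dots,p_1$; this left-packed thermometer vector identifies which of the $M_1 := p_1+1$ intervals of the uniform partition of $[0,1]$ contains $x$, the number of ones being the interval index minus one. Set $r_1 := p_1+1$, $r_m := s_m+1$ for $m\ge 2$, $M_\ell := \prod_{m=1}^{\ell} r_m$, $h_\ell := 1/M_\ell$. For $2\le\ell\le L-1$, hidden layer $\ell$ has $P_{\ell-1}+s_\ell$ neurons with $P_{\ell-1} := p_1+\sum_{m=2}^{\ell-1}s_m$: of these, $P_{\ell-1}$ neurons forward $\bm f^{(\ell-1)}$ unchanged via $b=\sigma_0(b-\tfrac12)$ (zero skip weights, so they do not count toward the skip budget), and $s_\ell$ neurons use the skip connection. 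I maintain the invariant that $\bm f^{(\ell)}$ is a concatenation of left-packed thermometer blocks of lengths $p_1,s_2,\dots,s_\ell$, where the number of ones in the $m$-th block equals the $m$-th mixed-radix digit $d_m$ of the current interval index minus one, i.e.\ $j'-1 = \sum_{m\le\ell} d_m\prod_{m<m'\le\ell} r_{m'}$ among the $M_\ell$ current intervals. The crux is the skip neuron: its $k$-th copy in layer $\ell$ computes $\sigma_0\!\big(x + \langle\bm w,\bm f^{(\ell-1)}\rangle - b_k\big)$, where $\bm w$ is block-constant with value $-h_{\ell-1}\prod_{m<m'\le\ell-1} r_{m'}$ on the coordinates of block $m$. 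By the invariant, $\langle\bm w,\bm f^{(\ell-1)}\rangle = -h_{\ell-1}(j'-1)$, so with $b_k := \tfrac{k}{s_\ell+1}\,h_{\ell-1}$ the neuron outputs $\mathbb{I}\!\big(x\ge (j'-1)h_{\ell-1}+\tfrac{k}{s_\ell+1}h_{\ell-1}\big)$ — a threshold at fraction $k/(s_\ell+1)$ inside \emph{every} current interval at once. Hence layer $\ell$ splits each current interval into $s_\ell+1$ equal parts, producing a left-packed thermometer block of length $s_\ell$, and the invariant propagates to level $M_\ell$. After layer $L-1$ we have $M_{L-1}=N$ equal pieces encoded by $\bm b := \bm f^{(L-1)}\in\{0,1\}^{P}$, $P := P_{L-1} = p_1+\sum_{\ell=2}^{L-1}s_\ell$.

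The last hidden layer, of width $\tfrac12 P(P+1)$ and with no skip ($s_L=0$), reads $\bm b$ and computes the $P$ bits $b_i=\sigma_0(b_i-\tfrac12)$ together with the $\binom P2$ products $b_ib_j=\sigma_0(b_i+b_j-\tfrac32)$, $i<j$ — the bit-encoded multiplication — accounting for exactly $P+\binom P2=\tfrac12 P(P+1)$ neurons. Let $\bm c\in\mathbb{R}^P$ be block-constant with $c_i := \prod_{m(i)<m'\le L-1} r_{m'}$, $m(i)$ the block of coordinate $i$; the invariant gives $k := \langle\bm c,\bm b\rangle = (\text{interval index})-1$, and since $b_i^2=b_i$, $k^2+k = \sum_i (c_i^2+c_i)b_i + 2\sum_{i<j} c_ic_j\, b_ib_j$. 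The linear output layer then sets $f(x) := \tfrac1{4N^2} + \tfrac1{N^2}\big(\sum_i (c_i^2+c_i)b_i + 2\sum_{i<j} c_ic_j b_ib_j\big)$, which on $I_k$ equals $(4k^2+4k+1)/(4N^2) = m_k^2$. Together with the first-paragraph reduction, this gives the stated error. I expect the main obstacle to be the refinement step: showing that a single skip neuron can reproduce the same fractional threshold inside every current interval simultaneously, which is precisely what the block-constant / mixed-radix choice of $\bm w$ achieves; the supporting bookkeeping — the thermometer-block invariant, the exact matching of layer widths and skip budgets, and the final identity $k^2+k=\sum_i(c_i^2+c_i)b_i+2\sum_{i<j}c_ic_jb_ib_j$ — is routine but must be tracked carefully so that $\langle\bm w,\cdot\rangle$ and $\langle\bm c,\cdot\rangle$ come out exactly affine in the interval index.
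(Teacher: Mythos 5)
Your construction is correct and is essentially the paper's own proof: the "refine, then square" scheme with thermometer-encoded mixed-radix digits maintained by block-constant skip weights is exactly the bit-extraction of Lemma \ref{lemma_bit_extract}, and your final layer of $P+\binom{P}{2}$ pairwise bit products followed by the affine readout of $m_k^2=(\widetilde{x}+\tfrac1{2N})^2$ matches the paper's use of $bc=\mathbb{I}(b+c-3/2)$ and the expansion of $(\widetilde{x}+\tfrac{1}{2S_{L-1}})^2$. The error analysis $|m_k-x|\,|m_k+x|\le \tfrac{1}{2N}\cdot 2$ is also identical.
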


\medskip

The upper bound matches the lower bound established in \eqref{skip_lower_supinf} up to a factor of $2$.
Up to this factor, the approximation error in Theorem \ref{thm_sq} is therefore optimal for any depth and any skip connection vector.

The proof is deferred to Appendix \ref{app_proof_sq}. 
The proof begins by extracting bits corresponding to the digits of $x$ in a mixed radix numerical system, using the structure provided in Lemma \ref{lemma_bit_extract}. Next, each pair of bits is multiplied using the Heaviside activation function, which requires a large number of neurons in the final hidden layer.

We argue that for networks with the Heaviside activation function, existing techniques for approximating the square function fail. 
ReLU networks approximate the square function by efficiently representing compositions of the triangular function $x \mapsto 2x \mathbb{I}(x\leq 1/2) + (2-2x) \mathbb{I}(x > 1/2),$ see, e.g., \cite{YAROTSKY2017103}.  However, networks in $\DHN_{\skipp}$ are piecewise constant, making it impossible to represent such triangular functions. Another approach to approximate $x\mapsto x^2$ rewrites the second-order finite difference
\[\frac{\sigma(t+2xh)-2\sigma(t+xh)+\sigma(t)}{\sigma''(t) h^2} \approx x^2, \quad \text{as} \ h \downarrow 0\]
at a point $t$ with $\sigma''(t)\neq 0$ as a shallow network with $3$ hidden units. However, this requires the activation function $\sigma$ to be twice differentiable and does not apply to the Heaviside activation function.

To obtain a more intuitive understanding of the role of depth and the number of skip connected neurons in each layer, we restate the previous result for the case where all hidden layers have $s$ skip connected neurons. 

\medskip

\begin{corollary} \label{cor_sq}
    For any positive integers $L, s,$ and any width vector
    \begin{align*}
        \bm{p} &:= \left(1,s, 2s, \ldots, (L-1)s, 
        \frac{(Ls-s)(Ls-s+1)}{2}, 1 \right),
    \end{align*}
    we have
    $$ \inf_{f \in \DHN_{\skipp}(L, \bm{p}, s\cdot \bm{1}_{L-1})} \, \sup_{x \in [0, 1]} \, \left| f(x) - x^2 \right| \leq (s+1)^{-L+1}.$$
\end{corollary}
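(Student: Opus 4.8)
The plan is to read off Corollary~\ref{cor_sq} as an immediate specialization of Theorem~\ref{thm_sq}, combined with a one-line monotonicity observation about the skip connection vector. Throughout I assume $L\ge 2$ (when $L=1$ the target bound is $(s+1)^{-L+1}=1$, which is achieved trivially, e.g.\ by a constant network, since $\sup_{x\in[0,1]}x^2=1$).

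First I would apply Theorem~\ref{thm_sq} with $p_1:=s$ and with the skip connection vector $\bm{s}'=(s'_2,\ldots,s'_L)$ defined by $s'_\ell=s$ for $\ell\in\{2,\ldots,L-1\}$ and $s'_L=0$, so that the hypothesis $s_L=0$ of Theorem~\ref{thm_sq} is satisfied. I would then check that the width vector produced by the theorem for this input coincides with the $\bm{p}$ appearing in the corollary: since $p_1+\sum_{\ell=2}^{L-1}s'_\ell = s+(L-2)s=(L-1)s=Ls-s$, the intermediate hidden widths are $p_1+\sum_{\ell=2}^{m}s'_\ell = ms$ for $m=1,\ldots,L-1$, and the penultimate hidden width is $\frac{(Ls-s)(Ls-s+1)}{2}$, which matches $\bm{p}=(1,s,2s,\ldots,(L-1)s,\frac{(Ls-s)(Ls-s+1)}{2},1)$. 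This step is pure index bookkeeping.

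Next I would evaluate the error bound of Theorem~\ref{thm_sq} at this input. Since $s'_\ell+1=s+1$ for $\ell\le L-1$ and $s'_L+1=1$,
\[
\frac{1}{(p_1+1)\prod_{\ell=2}^{L}(s'_\ell+1)} \;=\; \frac{1}{(s+1)\cdot(s+1)^{L-2}\cdot 1} \;=\; (s+1)^{-L+1},
\]
so Theorem~\ref{thm_sq} provides a network $f\in\DHN_{\skipp}(L,\bm{p},\bm{s}')$ with $\sup_{x\in[0,1]}|f(x)-x^2|\le (s+1)^{-L+1}$.

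Finally, since $\bm{s}'\le s\cdot\bm{1}_{L-1}$ componentwise and increasing any $s_\ell$ only relaxes the constraint "$\#(\text{non-zero row vectors of }V_{\ell-1})\le s_\ell$", we obtain $\DHN_{\skipp}(L,\bm{p},\bm{s}')\subseteq\DHN_{\skipp}(L,\bm{p},s\cdot\bm{1}_{L-1})$, the same kind of inclusion that makes plain DHNs a special case of skip-DHNs. Hence $f$ also belongs to $\DHN_{\skipp}(L,\bm{p},s\cdot\bm{1}_{L-1})$, and passing to the infimum yields the claim. There is no substantial obstacle here; the only point requiring care is that Theorem~\ref{thm_sq} forces $s_L=0$ whereas the corollary is phrased with the uniform vector $s\cdot\bm{1}_{L-1}$ (whose last entry is $s$), and this mismatch is precisely what the monotonicity of $\DHN_{\skipp}$ in $\bm{s}$ resolves.
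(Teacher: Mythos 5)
Your proposal is correct and is essentially the paper's (implicit) derivation: Corollary \ref{cor_sq} is obtained by specializing Theorem \ref{thm_sq} to $p_1=s$ and $s_\ell=s$ for $\ell\le L-1$, $s_L=0$, checking that the resulting width vector and error bound match, and then using that $\DHN_{\skipp}(L,\bm{p},\bm{s}')\subseteq\DHN_{\skipp}(L,\bm{p},s\cdot\bm{1}_{L-1})$ since enlarging the skip connection budget only relaxes the sparsity constraint on the $V_\ell$. Your explicit handling of the $s_L=0$ versus $s_L=s$ mismatch and of the degenerate $L=1$ case is a welcome bit of care that the paper leaves implicit.
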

Here $\bm{1}_{L-1}=(1,\ldots,1)$ is the $(L-1)$-dimensional row vector with all entries equal to one. 
Specifically, for $s=1$, the width vector becomes $\bm{p}=(1,1,2,\ldots,L-1,(L-1)L/2,1)$, and the approximation error is $\leq \epsilon:=2^{-L+1},$ using $\lesssim \log^2(1/\epsilon)$ neurons and $\lesssim \log^3(1/\epsilon)$ network parameters. 
This is a significant improvement compared to deep Heaviside networks {\it without} skip connections, where the lower bound in (\ref{plain_lower_supinf}) shows that to achieve an approximation error $\leq \epsilon$,  at least $\gtrsim 1/\epsilon$ neurons and network parameters are needed. A peculiar aspect of the construction is that the width increases for deeper layers and that the last hidden layer has $(Ls-s)(Ls-s+1)/2$ many neurons. 

For deep ReLU networks, it is known that width $9$ in all hidden layers can achieve an approximation error $\leq \epsilon$ using $\lesssim \log(1/\epsilon)$ neurons and network parameters, see for instance, Lemma 20 of \cite{kohler2021supplementB}. From this point of view, DHNs seem to be inferior. However, DHNs only transmit individual bits, making the evaluation of DHNs much faster and requiring less memory if compared to deep ReLU networks of the same size.

\subsection{Bounds on the VC dimension}
\label{sec_skip_VC}
In this section, we analyze the capacity of skip-DHN function classes in terms of depth and width using the notion of VC dimension. Since skip-DHNs represent a class of real-valued functions, we adopt the extended concept of VC dimension from classification rules, as defined in \cite{peter2}. 

\medskip

\begin{definition}[Growth function, Shattering, VC dimension]\label{vc-def}
Let $\mathcal{G}$ be a class of functions from $\mathcal{X}$ to $\{0,1\}$. For any non-negative integer $m$, the growth function of $\mathcal{G}$ is defined as 
$$\Pi_{\mathcal{G}}(m):=\max_{{\bx}_{1},\ldots,{\bx}_{m}\in\mathcal{X}}\Big|\Big\{\big(g({\bx}_{1}),\ldots,g({\bx}_{m})\big),\;g\in\mathcal{G}\Big\}\Big|.$$ 
A set of $m$ points ${\bx}_{1},\ldots,{\bx}_{m}\in\mathcal{X}$ is said to be shattered by $\mathcal{G}$ if
$$\Big|\Big\{\big(g({\bx}_{1}),\ldots,g({\bx}_{m})\big),\;g\in\mathcal{G}\Big\}\Big|=2^m.$$ 
The Vapnik--Chervonenkis (VC) dimension of $\mathcal{G}$, denoted by $\operatorname{VC}(\mathcal{G})$, is defined as the largest integer $m$ for which $\mathcal{G}$ can shatter some set of $m$ points. If no such finite $m$ exists, $\operatorname{VC}(\mathcal{G}):=\infty$. For a class $\mathcal{F}$ of real-valued functions, define $\operatorname{VC}(\mathcal{F}):=\operatorname{VC}(\mathbb{I}(\mathcal{F}))$,
where $\mathbb{I}(\mathcal{F}):=\{\mathbb{I} \circ f,\;f\in\mathcal{F}\}$ and $\mathbb{I}(x) := \mathbb{I}(x \geq 0).$
\end{definition}
The VC dimension measures the complexity or richness of a function class and is thus deeply intertwined with approximation theory. We will further illustrate this connection in Proposition~\ref{vc-connect-appro}.

We analyze the VC dimension of skip-DHNs with a rectangular architecture, meaning that each hidden layer contains the same number of neurons, denoted by $p$, and the same number of skip-connected neurons, denoted by $s$. Formally, we define the class
$$\DHN_{\skipp}\big(L, p_0:p:p_{L+1}, s \big) :=  \DHN_{\skipp}\big(L, (p_0,\underbrace{p,p,\ldots,p}_{L},p_{L+1}), (\underbrace{s,s,\ldots,s}_{L-1})\big).$$
For any $L$ and $p\geq d\vee s$, the total number of parameters in $\DHN_{\skipp}(L,d\!:p\!:1,s)$ is of the order $Lp^2$. Hence, under the given condition, the increase in the number of parameters due to skip connections is of negligible order. The upper and lower bounds on the VC dimension of $\DHN_{\skipp}(L,d\!:p\!:1,s)$ are stated as follows.

\medskip

\begin{theorem}\label{thm_skip_vc_upper}
For any $d, L, s,$ and any $p \geq d \vee s\vee 2$, 
$$\operatorname{VC}\big(\DHN_{\skipp}(L,d:p:1,s)\big)\leq 30 \cdot 
Lp^2 \log\left(Lp\right).$$
Also, there exist universal constants $c, C>0$ such that 
for any $d, L, p, s$ satisfying $L \land p \geq c \vee 8 \log (Lp)$ and $1 \leq s \leq p$, 
\begin{align*}
\operatorname{VC}\big(\DHN_{\skipp}(L,d:p:1,s)\big) \geq C \cdot Lp^2.
\end{align*}
\end{theorem}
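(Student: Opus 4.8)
The upper bound $\operatorname{VC}\le 30\, Lp^2\log(Lp)$ should follow from the standard parameter-counting bound for neural network VC dimension (e.g.\ the Goldberg--Jerrum / Bartlett--Maass type result in \cite{anthony1999neural}): for a class of functions computed by a circuit with $W$ real parameters in which the output is obtained by applying finitely many affine maps and threshold (Heaviside) gates, the VC dimension is $O(W\log W)$. Here $W = \sum_{\ell=0}^L(p_\ell+1)p_{\ell+1} + p_0\sum_{\ell=2}^L s_\ell \asymp Lp^2$ under $p\ge d\vee s$, so $W\log W \asymp Lp^2\log(Lp)$, and one only has to track the absolute constant to get $30$. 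I would state this as a direct citation plus a one-line bookkeeping of the constant, since no new idea is needed; the skip connections only change $W$ by a lower-order term and are handled automatically by viewing a skip-DHN as an ordinary threshold circuit with a few extra wires from the input layer.

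The substance of the theorem is the lower bound $\operatorname{VC}\gtrsim Lp^2$. The plan is an explicit shattering construction. I would build a skip-DHN that, by suitable choice of its $\asymp Lp^2$ parameters, realizes $2^{cLp^2}$ distinct labelings on a cleverly chosen point set of size $m\asymp Lp^2$. The natural route is to exploit the "bit-routing" philosophy already used for Theorem~\ref{thm_sq} / Lemma~\ref{lemma_bit_extract}: use the skip connection (present since $s\ge1$) together with the $p$ neurons per layer to implement, in each layer, an independent free choice of $\asymp p$ bits, then show these choices can be read off at $\asymp Lp$ test points so that all $2^{\asymp Lp^2}$ combinations of layer-wise choices produce distinct output-sign vectors. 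Concretely: in layer $\ell$ the skip-routed input coordinate lets one compare $x$ against $p$ freely chosen thresholds, creating $p$ independently adjustable "steps"; stacking $L$ such layers and arranging that earlier layers' bit patterns are preserved and passed upward (via the identity $b=\mathbb{I}(b-1/2)$, the $p\ge s$ headroom, and the extra "$+s$" width that skip-DHNs carry) multiplies the number of realizable patterns. The condition $L\wedge p\gtrsim \log(Lp)$ is exactly what is needed so that the address/selector gadgetry (which needs $\log(\text{width})$-ish auxiliary neurons to demultiplex which of the many bits a given test point should expose) fits inside the available width and depth without eroding the $\Theta(Lp^2)$ count.

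I would organize the lower-bound proof as: (1) a one-layer gadget lemma — with one skip-routed real input and $q$ Heaviside neurons, and using $O(\log q)$ further neurons to select, the map $x\mapsto$ (sign of output) can realize all $2^{q}$ threshold-indicator patterns on a set of $q$ points, with the remaining width used to copy forward previously computed bits unchanged; (2) a composition lemma stacking $L$ copies so the bit-vectors of different layers are independent, giving $2^{\Theta(Lp)}$ patterns per "column" and, using $\Theta(Lp)$ columns of test points (the input being multi-dimensional, $d\ge1$, so one can place columns along different coordinate directions or along parallel segments), $2^{\Theta(Lp^2)}$ patterns overall on $\Theta(Lp^2)$ points; (3) checking that all constraints ($p\ge d\vee s\vee 2$, $1\le s\le p$, $L\wedge p\ge c\vee 8\log(Lp)$) are consumed by the gadget's overhead, and extracting the constant $C$.

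The main obstacle I anticipate is step (2)–(3): ensuring \emph{independence} of the layer-wise bit choices and their \emph{simultaneous readability}. It is easy to get $2^{\Theta(p)}$ patterns from one layer; the difficulty is that a Heaviside network collapses information (Theorem~\ref{thm_plain_include} shows a plain DHN along a line has only $p_1+1$ pieces), so naively the deep layers cannot "see" enough of $x$ to add new bits. The skip connections are what break this barrier, but one must design the forwarding so that the $p$ bits set in layer $\ell$ survive to the output uncorrupted while layer $\ell+1$'s $p$ fresh bits are being installed — this is the bit-encoding bookkeeping that also underlies Lemma~\ref{lemma_bit_extract}, and getting the widths to close (why $p_\ell$ need not grow, unlike in Theorem~\ref{thm_sq} where the last layer blows up — here we do not multiply bits, we merely concatenate them) is the delicate part. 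I would handle it by an explicit inductive invariant: after layer $\ell$, the hidden vector contains a designated block of $\ell\cdot$(a constant fraction of $p$) "committed" bits, a block of routing/selector bits, and the skip copy of $x$; the induction step shows one more layer adds one more committed block, which is a finite case-check on a fixed-size gadget.
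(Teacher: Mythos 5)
Your upper-bound argument is the same as the paper's: cite the threshold-circuit bound of \cite{anthony1999neural} (Theorem 6.1 there gives $2W\log(2k/\ln 2)$), note that the skip connections only add a lower-order number of parameters under $p\ge d\vee s$, and track the constant. That part is fine.

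The lower bound has a genuine gap, and it is exactly in the place you flagged as delicate. Your construction budgets $\asymp p$ freely choosable bits per layer (one bit per Heaviside neuron, set through its threshold against the skip-routed input), which after stacking $L$ layers yields only $2^{\Theta(Lp)}$ patterns on $\Theta(Lp)$ points, i.e.\ $\operatorname{VC}\gtrsim Lp$, short of the claimed $Lp^2$ by a factor of $p$. You try to recover that factor by running $\Theta(Lp)$ ``columns'' of test points along different coordinate directions or parallel segments, but this fails on two counts. First, the theorem must hold for $d=1$ (the paper in fact reduces to $d=1$), where there are no independent directions or parallel segments to exploit, and even for general $d$ the constraint $p\ge d$ caps the number of directions at $p$. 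Second, the arithmetic does not close: independent columns each carrying $\Theta(Lp)$ bits would require $\Theta(Lp)\cdot\Theta(Lp)=\Theta(L^2p^2)$ free parameters, exceeding the $\Theta(Lp^2)$ available. The missing idea is that the $p^2$ per block of layers does not come from thresholds or from geometric multiplicity of test points; it comes from the \emph{weight matrices}. The paper splits the first $2m+n$ binary digits of a scalar $x$ into an address $(j,k,r)\in[J]\times[K]\times[R]$ with $J=K\asymp p$ and $R\asymp L$; for each fixed $r$ the $J\times K$ table $\{\eta_{j,k,r}\}$ is stored in the single weight matrix mapping the one-hot vector $(\mathbb{I}(j(x)=j))_{j\in[J]}$ to $(\eta_{j(x),k,r})_{k\in[K]}$ (equation \eqref{tmp_2} and Lemma \ref{lemma_piece_bin_skip_ver3}), so each constant-depth block of width $\asymp p$ contributes $\asymp p^2$ independent bits, and stacking $R\asymp L$ such blocks in depth — with the running partial sum $\sum_{r}\eta_{j(x),k(x),r}\mathbb{I}(r(x)=r)$ carried forward — shatters $JKR\asymp Lp^2$ points on a one-dimensional segment. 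Your inductive invariant of ``concatenating $\ell\cdot\Theta(p)$ committed bits after layer $\ell$'' cannot reach this count; you would need to redesign the gadget so that the free choices live in the inter-layer weight matrices addressed by a demultiplexed portion of the input's bits.
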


The proof is deferred to Appendix \ref{app_proof_vc}. Existing results on the VC dimension bound DHNs without skip connections. Corollary 2 of \cite{baum1988size} establishes the upper bound $\lesssim W \log W$ with $W$ the number of network parameters. Lower bound results state that under specific architectures, such as networks with very few hidden layers \cite{baum1988size, bartlett1993lower}, or networks with an extremely wide input layer and first hidden layer \cite{maass1994neural}, the bound $\gtrsim W \log W$ can be achieved. However, we cannot conclude that the VC dimension is of order $W \log W$ for other architectures of plain DHNs. In fact, for networks $\DHN(L,(d,p,\ldots,p,1))$ with a rectangular architecture, the number of parameters $W$ is of order $Lp^2$ provided that $d \leq p$. Theorem 3.6 of \cite{wang2022vc} implies that $\DHN(L,(d,p,\ldots,p,1))\lesssim p^d,$ which can be much smaller than $Lp^2$, if $L$ is large.

In this regime, Theorem~\ref{thm_skip_vc_upper} demonstrates that adding skip connections indeed increases the VC dimension, thereby significantly enriching the class of networks. Although the additional parameters introduced by the skip connections are of negligible order, they lead to skip-DHNs attaining matching upper and lower bounds on the VC dimension—up to a logarithmic factor—for a wide range of rectangular architectures.

The VC dimension of deep ReLU networks with depth $L$ and width $p$ is of order $L^2p^2$, up to a logarithmic factor; see Theorem 7 of \cite{peter2} for the upper bound, and combine Theorems 1.1 and 2.4 of \cite{lu2021deep} to obtain a matching lower bound. Thus, compared to deep ReLU networks with the same architecture, the skip-DHN function class exhibits lower complexity when $L \gg 1$.  

\subsection{Approximation of $\beta$-H\"older smooth functions}
\label{sec_skip_smooth} 

\medskip

\begin{definition}[Hölder smooth functions]
\label{hölder}
    Let $\beta = q+s$ for some $q \in \mathbb{N}$ and $s \in (0,1]$.
    The $\beta$-H\"older norm of a function $f : \mathcal{X} \to \mathbb{R}$ is defined as
    $$\left\| f \right\|_{\mathcal{C}^{\beta}} := \sum_{\substack{\bm{\alpha} \in \mathbb{N}^d \\ \left\|\bm{\alpha}\right\|_1 \leq q}}    
    \left\|\partial^{\bm{\alpha}} f\right\|_{\infty}
    +\sum_{\substack{\bm{\alpha} \in \mathbb{N}^d \\ \left\|\bm{\alpha}\right\|_1 = q}}
\sup_{\substack{\bx_1, \bx_2 \in \mathcal{X} \\ \bx_1 \neq \bx_2}} \, \frac{\left|\partial^{\bm{\alpha}} f(\bx_1)-\partial^{\bm{\alpha}} f(\bx_2)\right|}{\left\|\bx_1 - \bx_2\right\|_{\infty}^{s}}.$$
We say that a function $f$ is $\beta$-H\"older smooth if $\left\| f \right\|_{\mathcal{C}^{\beta}}$ exists and is finite. 
For $M>0$, we define
$$\mathcal{H}_{d}^{\beta}(M) := \Big\{ f : [0,1]^d \to \mathbb{R}, \left\| f \right\|_{\mathcal{C}^{\beta}} \leq M \Big\}$$
as the H\"older ball of $\beta$-smooth functions with radius $M$ on $[0,1]^d$.
\end{definition}

\medskip

Given a function class, its ability to approximate functions in the H\"older ball $\mathcal{H}_{d}^{\beta}(M)$ is closely related to its VC dimension.

\medskip

\begin{proposition}\label{vc-connect-appro}
For any function class $\mathcal{F}$ consisting of functions defined on the domain $[0,1]^d$,
$$\sup_{f_0\in\mathcal{H}_{d}^{\beta}(M)} \,   \inf_{f\in\mathcal{F}}\|f-f_0\|_{\infty}\geq cM \big( \operatorname{VC}(\mathcal{F})\big)^{-\frac{\beta}{d}},
$$
with $c>0$ a constant only depending on $\beta$ and $d$.
\end{proposition}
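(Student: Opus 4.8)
The plan is to establish the lower bound by exhibiting a single hard target function in $\mathcal{H}_d^\beta(M)$ whose approximation error is controlled from below by the shattering capacity of $\mathcal{F}$. The strategy is a standard ``VC dimension obstructs approximation'' argument: if $\mathcal{F}$ shatters a set of $N := \operatorname{VC}(\mathcal{F})$ points, then one can localize $N$ small disjoint bumps in $[0,1]^d$, each centered on a shattered point, and argue that approximating all $2^N$ sign patterns of these bumps within $\mathcal{F}$ forces an error of at least a fixed fraction of the bump height. Since the bumps can be taken $\beta$-H\"older smooth with $\mathcal{C}^\beta$-norm $\leq M$ and height $\asymp M N^{-\beta/d}$, the claim follows.

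More concretely, first I would place $N$ points $\bz_1,\dots,\bz_N$ that are shattered by $\mathbb{I}(\mathcal{F})$ — but I need them to be \emph{geometrically} spread out, not just shattered. The fix is to note that if $\mathcal{F}$ shatters \emph{some} $N$-point set, then by a pigeonhole/grid argument one can find a shattered set in which the points lie on a regular grid of spacing $\asymp N^{-1/d}$ inside $[0,1]^d$; more carefully, partition $[0,1]^d$ into $\asymp N$ congruent subcubes of side $h \asymp N^{-1/d}$ and keep only one shattered point per subcube, retaining $\gtrsim N$ of them (this is where a constant is lost). Second, fix a fixed smooth bump $\psi : \mathbb{R}^d \to [0,1]$ supported in the unit ball with $\psi(\bm 0) = 1$, and for each retained point $\bz_j$ set $\psi_j(\bx) := \lambda h^\beta \psi((\bx - \bz_j)/h)$ with $\lambda$ a small constant depending only on $\|\psi\|_{\mathcal{C}^\beta}$, $\beta$, $d$ chosen so that $\sum_j \epsilon_j \psi_j$ has $\mathcal{C}^\beta$-norm $\leq M$ for every sign vector $(\epsilon_j) \in \{0,1\}^N$ — here disjointness of supports is what makes the H\"older norm of the sum equal to the max over $j$, times a scaling factor, so $\lambda \asymp M$. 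Third, for each $\bm\epsilon \in \{0,1\}^N$ consider $f_{\bm\epsilon} := \tfrac12 \sum_j \epsilon_j \psi_j \in \mathcal{H}_d^\beta(M)$ (with a harmless factor $\tfrac12$ and a baseline) and observe: if some $f \in \mathcal{F}$ had $\|f - f_{\bm\epsilon}\|_\infty < \tfrac14 \lambda h^\beta$, then $\mathbb{I}(f(\bz_j))$ would recover $\epsilon_j$ for all $j$ (choosing the baseline so that $f_{\bm\epsilon}(\bz_j) = \tfrac14\lambda h^\beta$ when $\epsilon_j=1$ and $-\tfrac14\lambda h^\beta$ when $\epsilon_j = 0$). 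Since $\mathbb{I}(\mathcal{F})$ cannot realize all $2^N$ patterns on more than $N = \operatorname{VC}(\mathcal{F})$ points without actually shattering them — which it does by construction — wait, this needs care: we need that $\mathbb{I}(\mathcal{F})$ shatters $\{\bz_1,\dots,\bz_N\}$, which is exactly our assumption, so for \emph{every} $\bm\epsilon$ there IS such an $f$; so instead the argument must go the other way.

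Let me restructure the third step: since $\{\bz_j\}$ is shattered, there is no contradiction from realizability, so the lower bound cannot come from counting. Instead I would use a \emph{cardinality-of-a-packing} argument directly. Suppose, for contradiction, that $\sup_{f_0} \inf_{f \in \mathcal{F}} \|f - f_0\|_\infty < \tfrac14 \lambda h^\beta$. Then for each $\bm\epsilon$ pick $f_{\bm\epsilon}^\star \in \mathcal{F}$ within that distance of $f_{\bm\epsilon}$; the map $\bm\epsilon \mapsto (\mathbb{I}(f_{\bm\epsilon}^\star(\bz_1)), \dots, \mathbb{I}(f_{\bm\epsilon}^\star(\bz_N)))$ equals $\bm\epsilon$, so it is injective — but this is consistent with shattering and gives nothing. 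The genuine route, which is the one I expect to be needed, is the reverse: choose $N' := \operatorname{VC}(\mathcal{F}) + 1$; then \emph{no} $N'$-point set is shattered, so there is a sign pattern $\bm\epsilon^*$ on our chosen $N'$ grid points not realized by $\mathbb{I}(\mathcal{F})$; the corresponding $f_{\bm\epsilon^*} \in \mathcal{H}_d^\beta(M)$ then satisfies $\inf_{f \in \mathcal{F}} \|f - f_{\bm\epsilon^*}\|_\infty \geq \tfrac14 \lambda h^\beta \asymp M (N')^{-\beta/d} \asymp M \operatorname{VC}(\mathcal{F})^{-\beta/d}$, because any $f$ closer than that would realize $\bm\epsilon^*$ at the grid points. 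So the \textbf{main obstacle}, and the step to get right, is the grid-extraction: turning ``$\mathcal{F}$ fails to shatter every $(\operatorname{VC}+1)$-point set'' into ``$\mathcal{F}$ fails to shatter a specific well-separated $\asymp \operatorname{VC}$-point set,'' while simultaneously arranging that the smooth bumps built on those points stay inside the H\"older ball of radius $M$ — both of which only cost constants depending on $\beta,d$. Everything else (construction of $\psi$, the H\"older-norm bookkeeping for disjointly supported bumps, the translation between sup-norm proximity and recovery of the sign pattern) is routine.
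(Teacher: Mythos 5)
Your final argument is correct and is, up to contraposition, the same as the paper's: both place disjointly supported, $\mathcal{C}^\beta$-rescaled bumps at the $\asymp \operatorname{VC}(\mathcal{F})$ points of a regular grid in $[0,1]^d$ and translate sup-norm proximity into recovery of the sign pattern at the bump centers (the paper runs this forwards — a uniformly good approximant family must shatter the $K^d$ grid centers, forcing $\operatorname{VC}(\mathcal{F}) \geq K^d$ — while you run it backwards from a single unrealized pattern). The ``grid-extraction'' step you flag as the main remaining obstacle is actually vacuous in your final formulation: since no set of $\operatorname{VC}(\mathcal{F})+1$ points is shattered, you are free to choose the well-separated grid yourself and are automatically guaranteed an unrealized sign pattern on it.
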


\medskip

The proof of Proposition~\ref{vc-connect-appro} is deferred to Appendix~\ref{app_proof_skip_smooth}. Specifically, Proposition~\ref{vc-connect-appro} indicates that an upper bound on the VC dimension of $\mathcal{F}$ implies a lower bound on the approximation error over the $\beta$-H\"older class. For the function class $\DHN_{\skipp}(L,d\!:p\!:1,s)$, Theorem~\ref{thm_skip_vc_upper} shows that $\operatorname{VC}(\DHN_{\skipp}(L,d\!:p\!:1,s))\lesssim Lp^2\log(Lp).$ Combining this with Proposition~\ref{vc-connect-appro}, it follows that for any $p \geq d \vee s\vee 2$,
\begin{align*} 
    \sup_{f_0 \in \mathcal{H}_{d}^{\beta}(M)} \, \inf_{f \in \DHN_{\skipp}(L,d:p:1,s)} \left\| f - f_0 \right\|_{\infty} \geq cM\left(\frac{1}{Lp^2 \log(Lp) }\right)^{\frac{\beta}{d}}.
\end{align*}
This lower bound demonstrates that for a rectangular architecture with depth $L$ and width $p\geq d \vee s\vee 2$, the approximation error cannot decay faster than the $(Lp^2)^{-\beta/d}$, up to a logarithmic term. 

Next, we show that the rate $(Lp^2)^{-\beta/d}$ is indeed achievable, at least under some mild conditions on $L$ and $p$.

\medskip

\begin{theorem} \label{upper_smooth_skip} 
For any positive integer $d$ and any $\beta>0$, there exist a constant $c>0$ depending on $d$ and $\beta$ such that for any $L,p,s$ satisfying $L \land p \geq c$, $L \geq \log^2 (Lp)$, $p \geq \log^{\beta+2} (Lp)$, and $d \leq s \leq p$,
    $$\sup_{f_0 \in \mathcal{H}_{d}^{\beta}(M)} \, \inf_{f \in \DHN_{\skipp}(L,d:p:1,s)} \left\| f - f_0 \right\|_{\infty} \leq C M \left(\frac{\log^3 (Lp)}{Lp^2 }\right)^{\frac{\beta}{d}},$$
    with $C>0$ a constant only depending on $\beta$.
\end{theorem}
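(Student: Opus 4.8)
The plan is to follow the classical template for neural network approximation of H\"older functions --- local Taylor expansion on a grid, then a network that (i) locates the grid cell, (ii) retrieves the local Taylor coefficients, and (iii) evaluates the corresponding polynomial --- but to realize each step using only the piecewise-constant, bit-transmitting primitives available to skip-DHNs. First I would rescale to $M=1$, fix $q:=\lceil\beta\rceil-1$, and choose an integer $N$ with $N^d$ of order $Lp^2/\log^3(Lp)$. Partition $[0,1]^d$ into $N^d$ congruent subcubes $Q$ with corner $\bm x_Q$ and let $P_Q$ be the degree-$q$ Taylor polynomial of $f_0$ at $\bm x_Q$; since $f_0\in\mathcal{H}_{d}^{\beta}(1)$, the remainder satisfies $\sup_{\bx\in \overline Q}|f_0(\bx)-P_Q(\bx)|\lesssim N^{-\beta}$, so it suffices to approximate the piecewise polynomial $\widetilde f(\bx):=\sum_Q \mathbb{I}(\bx\in Q)\,P_Q(\bx)$ to sup-norm error $O(N^{-\beta})=O((\log^3(Lp)/Lp^2)^{\beta/d})$ by a network in $\DHN_{\skipp}(L,d:p:1,s)$, with the constant $C$ absorbing everything below.

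The network is assembled from three blocks. \emph{Localization}: using the mixed-radix bit-extraction of Lemma \ref{lemma_bit_extract} and re-feeding the $d$ input coordinates through skip connections (possible since $s\geq d$), the first $O(\log(Lp))$ hidden layers output the leading $\approx\beta\log N$ binary digits of each $x_i$; these encode the index of the cell $Q(\bx)\ni\bx$ and a dyadic $\bx'$ with $\|\bx'-\bx\|_\infty\lesssim N^{-\beta}$, so $P_{Q(\bx)}(\bx')$ differs from $P_{Q(\bx)}(\bx)$ by $O(N^{-\beta})$; crucially the bits of $\bx'-\bm x_{Q(\bx)}$ (the position within the cell) are exactly the low-order extracted bits, hence available uniformly, independently of the cell. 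From here the cell index and these bits are carried forward as ordinary bit-valued neuron outputs, so no further skip connections are needed. \emph{Coefficient retrieval}: for each $\bm\alpha$ with $\|\bm\alpha\|_1\leq q$ the network must output, to precision $O(N^{-\beta})$, the binary expansion of $\partial^{\bm\alpha}f_0(\bm x_{Q(\bx)})/\bm\alpha!$ as a function of the cell index --- an arbitrary map $\{0,1\}^{O(\log N)}\to\{0,1\}^{O(\log N)}$, realized by a bit-lookup (memorization) construction that encodes the coefficient table of $\asymp N^d\log N\asymp Lp^2/\log^2(Lp)$ bits into the weights of $O(L)$ threshold layers of width at most $p$. \emph{Polynomial evaluation}: using $O(1)$ copies of the square-function network of Theorem \ref{thm_sq} (equivalently Corollary \ref{cor_sq}) run at resolution $\asymp\log(Lp)$, together with the polarization identity $uv=\tfrac12\big((u+v)^2-u^2-v^2\big)$, form the monomials $(\bx'-\bm x_{Q(\bx)})^{\bm\alpha}$ and multiply each by its retrieved coefficient, each product up to $O(N^{-\beta})$; the final linear map $W_L$ sums the $\binom{d+q}{d}$ signed terms (no assembly of indicators as in \eqref{represen_block} is needed, since $\bx$ always lies in its own cell). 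Finally pad every hidden layer to exactly $p$ neurons and $s$ skip connections and pad the depth to exactly $L$ by inserting identity layers $b\mapsto\mathbb{I}(b-1/2)$ on the bit-valued channels; the hypotheses $L\geq\log^2(Lp)$, $p\geq\log^{\beta+2}(Lp)$ and $L\wedge p\geq c$ are precisely what is needed so that the depths and widths of the three blocks --- in particular the $\asymp\log^2(Lp)$-wide final layers of the square-function networks and the $O(\log N)$ bits of precision for the top-order coefficients --- fit inside this rectangular architecture.

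Error accounting then adds the Taylor remainder, the dyadic rounding $\bx\to\bx'$, the finite-precision coefficients, and the square-function approximation errors, each of order $N^{-\beta}$, giving $\|f-f_0\|_\infty\lesssim N^{-\beta}\asymp(\log^3(Lp)/Lp^2)^{\beta/d}$ uniformly over $\mathcal{H}_{d}^{\beta}(1)$ and hence, by homogeneity, the claimed bound over $\mathcal{H}_{d}^{\beta}(M)$. The parameter count is dominated by the memorization block, $\asymp Lp^2/\log^2(Lp)\ll Lp^2$, the localization and square-function blocks contributing only polylogarithmically many parameters, so the network lies in $\DHN_{\skipp}(L,d:p:1,s)$.

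The main obstacle is the coefficient-retrieval block. Because DHNs are piecewise constant and every neuron transmits a single bit, the ReLU-style constructions --- iterating the sawtooth map, or the second-order finite difference of a $C^2$ activation, both explicitly ruled out in the discussion following Theorem \ref{thm_sq} --- are unavailable, so one must instead prove a stand-alone memorization lemma: any bit-valued table of size $\asymp Lp^2/\log^2(Lp)$ can be implemented by a threshold circuit of depth $L$ and width $p$ with $O(Lp^2)$ parameters, each layer's $\approx p^2$ weights decoding an $\approx p^2/\log^2(Lp)$-bit slice while the cell-index bits are routed correctly. The mixed-radix bit-extraction of Lemma \ref{lemma_bit_extract} is the device that makes this slicing and routing possible. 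Together with Proposition \ref{vc-connect-appro} and the VC upper bound $\lesssim Lp^2\log(Lp)$ of Theorem \ref{thm_skip_vc_upper}, this shows the achieved rate is optimal up to logarithmic factors.
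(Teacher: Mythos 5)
Your proposal follows essentially the same route as the paper's proof: dyadic grid plus local Taylor expansion, bit extraction of the input via Lemma \ref{lemma_bit_extract} with skip connections, a threshold-circuit lookup of the binary digits of the Taylor coefficients as a function of the cell index, and a final layer that assembles the polynomial from bits. The one substantive step you leave unproved --- the memorization lemma storing $\asymp Lp^2/\log^2(Lp)$ table bits in a depth-$L$, width-$p$ threshold circuit with the index bits routed alongside --- is exactly the paper's Lemma \ref{lemma_piece_bin_skip_ver3}, and your sketch of slicing the table into $\asymp L$ depth-blocks of $\asymp p^2/\log^3(Lp)$ bits each (encoded in a single layer's weight matrix between the $J$ cell-indicator neurons and the $K$ output neurons) is the correct mechanism. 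One caveat on your polynomial-evaluation block: you cannot literally reuse the square-function network of Theorem \ref{thm_sq} with the polarization identity on the intermediate real quantities $(\bx'-\bm x_{Q(\bx)})$ and the retrieved coefficients, because that construction relies on skip connections from the \emph{network input}, which are unavailable for intermediate values; but since you carry all relevant quantities forward as $O(\log(Lp))$-bit expansions, the products are obtained by the single-gate identity $\eta\prod_{\kappa}b_\kappa=\mathbb{I}(\eta+\sum_{\kappa}b_\kappa-\|\bm\alpha\|_1-\tfrac12)$ applied to all tuples of bits followed by an affine combination --- which is precisely what the paper does --- so this is a presentational slip rather than a gap.
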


\medskip

The proof is deferred to Appendix \ref{app_proof_skip_smooth}; here we only sketch the main idea of the proof. We begin by dividing the domain $[0,1]^d$ into a grid of prespecified points. Given a target function $f_0\in\mathcal{H}_{d}^{\beta}(M)$, to approximate its value at a point $\bx\in[0,1]^d$, we construct a small network to identify the grid cell containing $\bx$. This step can be accomplished using the bit extraction technique. Subsequently, we approximate the partial derivatives of $f_0$ at the corresponding grid point, and finally use them to evaluate the Taylor polynomial expansion of $f_0$, up to a certain order, centered at  the grid point.

We first compare Theorem~\ref{upper_smooth_skip} with the approximation results for DHNs without skip connections, that is, $\DHN_{\skipp}(L,d\!:\!p\!:\!1,s)$ with $s=0$. With $p\geq d\vee s$, the total number of parameters $W$ in $\DHN_{\skipp}(L,d\!:\!p\!:\!1,s)$ is of order $Lp^2$. Now fix the number of parameters $W$. As shown in \eqref{plain_lower_supinf} of Section~\ref{sec_noskip}, in the absence of skip connections (i.e., $s = 0$), no non-constant continuous function can be approximated with faster error rate than order $p^{-1}$. Consequently, for any non-constant $f_0\in\mathcal{H}_{d}^{\beta}(M)$, approximation based on $\DHN_{\skipp}(L,d\!:\!p\!:\!1,0)$ with $W$ parameters yields the error lower bound $\gtrsim W^{-1/2}$. In contrast, Theorem \ref{upper_smooth_skip} shows that if each layer includes $d$ skip-connected neurons, the approximation error improves to $\lesssim (W/\log^3(W))^{-\beta/d}$, which is a faster rate whenever $\beta > d/2$. 

We next compare Theorem~\ref{upper_smooth_skip} with existing approximation results for deep ReLU networks. Under a fully connected architecture, Corollary 3.1 of \cite{jiao2023deep} establishes that a ReLU network with depth $L$ and width $p$ achieves an approximation error of order $\lesssim (Lp/(\log L \log p))^{-2\beta/d}$ for the class of $\beta$-H\"older smooth functions. Therefore, for networks with the same depth and the width, and particularly when $L \gg 1$, deep ReLU networks exhibit superior approximation capabilities compared to skip-DHNs. However, as mentioned already, the comparison is not fair, as Heaviside activated neurons only transmit individual bits and are therefore much easier to evaluate. 

Both of the above comparisons, regarding the approximation error for plain DHNs and deep ReLU networks, align with the discussion of the model's VC dimension in Section \ref{sec_skip_VC}.

We emphasize that, from a technical perspective, Theorem~\ref{upper_smooth_skip} allows for a wide range of choices for the depth $L$ and the width $p.$ As discussed in \cite{lu2021deep}, this is more challenging than deriving an upper bound only depending on the total number of parameters $W$.

\section{Linear neurons augmented DHNs} \label{sec_linear}

Neurons with linear (or identity) activation function $\sigma(x)=x$ can transmit information from previous layers to later layers without distortion. In particular, this allows to represent skip connections. An alternative approach to improve the expressivity of DHNs is therefore to add to each layer neurons with linear activation. 

\subsection{Mathematical definition} \label{sec_4.1}

\begin{figure}[h]
    \centering
    \newcommand{\stepfuncicon}{
    \tikz[baseline=-0.5ex] 
    \draw[thick] (0,0) circle (0.2cm) 
        (-0.18,-0.1) -- (0,-0.1) -- (0,0.1) -- (0.18,0.1);
}

\newcommand{\linearfuncicon}{
    \tikz[baseline=-0.5ex] 
    \draw[thick, red] (0,0) circle (0.2cm) 
        (-0.14,-0.14) -- (0.14,0.14);
}
    
    \begin{tikzpicture}[node distance=2cm, auto, thick, >=Stealth, scale=0.8]

    \node[draw=white] (x) at (0, 1.5) {$	
\begin{pmatrix}
x_1 \\
x_2
\end{pmatrix}$};
        
        \node[draw=white] (f11) at (3, 3) {\stepfuncicon};
        \node[draw=white] (f12) at (3, 1.5) {\stepfuncicon};
        \node[draw=white] (f13) at (3, 0) {\stepfuncicon};
        \node[draw=white] (f14) at (3, -1.5) {\linearfuncicon};
        
        \node[draw=white] (f21) at (6, 3) {\stepfuncicon};
        \node[draw=white] (f22) at (6, 1.5) {\stepfuncicon};
        \node[draw=white] (f23) at (6, 0) {\stepfuncicon};
        \node[draw=white] (f24) at (6, -1.5) {\linearfuncicon};
        
        \node[draw=white] (f31) at (9, 3) {\stepfuncicon};
        \node[draw=white] (f32) at (9, 1.5) {\stepfuncicon};
        \node[draw=white] (f33) at (9, 0) {\stepfuncicon};
        \node[draw=white] (f34) at (9, -1.5) {\linearfuncicon};
        
        \node[draw=white] (f41) at (12, 2.25) {\stepfuncicon};
        \node[draw=white] (f42) at (12, 0.75) {\stepfuncicon};
        
        \node[draw=white] (output) at (14.5, 1.5) {$f$};

        \foreach \j in {1,2,3,4} {
            \draw[-, cyan, line width=0.2mm] (x.east) -- (f1\j.west);
        }

        \foreach \i in {1,2,3,4} {
            \foreach \j in {1,2,3,4} {
                \draw[-, cyan, line width=0.2mm] (f1\i.east) -- (f2\j.west);
            }
        }

        \foreach \i in {1,2,3,4} {
            \foreach \j in {1,2,3,4} {
                \draw[-, cyan, line width=0.2mm] (f2\i.east) -- (f3\j.west);
            }
        }
        
        \foreach \i in {1,2,3,4} {
            \foreach \j in {1,2} {
                \draw[-, cyan, line width=0.2mm] (f3\i.east) -- (f4\j.west);
            }
        }
        
        \draw[-, cyan, line width=0.2mm] (f41.east) -- (output.west);
        \draw[-, cyan, line width=0.2mm] (f42.east) -- (output.west);
    \end{tikzpicture} \vspace{15pt}
    \caption{\textbf{Example of linear neurons augmented DHN (lin-DHN).} 
    All (except the last) hidden layers are augmented by red linear neurons. 
    The graph represents the network class $\DHN_{\lin}(L,\bm{p}, s)$ with $L=4$, $\bm{p}=(2,3,3,3,2,1)$ and $s=1$. 
    }
    \label{lin_def}
\end{figure}
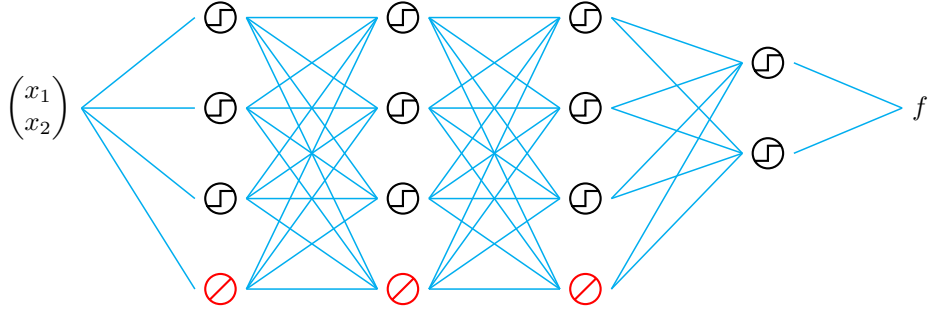

We augment a deep Heaviside network with depth $L$ and width vector $\bm{p} = (p_0,\ldots,p_{L+1})$ by adding 
$s$ neurons with a linear activation function to all except for the last hidden layer. This means that the total number of neurons is changed to $p'_{\ell}:=p_{\ell}+s$ for $\ell=1,\ldots, L-1$ and is kept the same in all other layers. We do not augment the last hidden layer as we want to restrict the network function class to piecewise constant functions. We refer to the resulting networks as linear neurons augmented DHNs (lin-DHNs).

Permuting the ordering of the neurons in a hidden layer does not change the expressiveness. We can therefore assume that the $s$ linearly activated neurons correspond to the ``last" $s$ neurons in each layer. For a vector $\bz=(z_1,\ldots,z_p,z_{p+1},\ldots, z_{p+r})$ let 
\begin{align*} 
    \sigma_r
\left( (z_1, \dots, z_p, z_{p+1}, \dots, z_{p+r})^{\top }\right)=
\big( \mathbb{I}(z_{1}), \dots, \mathbb{I}(z_p ), z_{p+1}, \dots, z_{p+r}\big)^{\top}.
\end{align*}
In particular, $\sigma_0$ applies the Heaviside activation function to all components and $\sigma_s$ applies the Heaviside function to the first $p$ components and the linear activation to the remaining $s$ components.

Setting also $p'_{0} := p_{0}$, $p'_{L} := p_{L}$ and $p'_{L+1} := p_{L+1},$ these networks are parameterized by the weight matrices $\{W_{\ell} \in \mathbb{R}^{p'_{\ell+1} \times p'_{\ell}}\}_{\ell \in \{0,1,\ldots,L\}}$ and the shift vectors $\{\bm{b}_{\ell} \in \mathbb{R}^{p'_{\ell+1}}\}_{\ell \in \{0,1,\ldots,L\}}$.
For input $\bx  \in \mathbb{R}^{p_0},$ the output $\bm{f}(\bx)$ of the lin-DHN is given by
\begin{align} \label{DHN_linear}
\bm{f}(\bx) := 
W_{L} \bm{f}^{(L)}(\bx) - \bm{b}_L,
\end{align}
where 
\begin{align} \label{DHN_linear_L}
    \bm{f}^{(L)}(\bx) :=
\sigma_0 \left(W_{L-1} \bm{f}^{(L-1)}(\bx) - \bm{b}_{L-1}\right),
\end{align}
\begin{align}  \label{DHN_linear_others}
\bm{f}^{(\ell)}(\bx) :=
\sigma_{s} \left(W_{\ell-1} \bm{f}^{(\ell-1)}(\bx) - \bm{b}_{\ell-1}\right), \quad \text{for all} \ \ell=1,\ldots, L-1,
\end{align}
and $\bm{f}^{(0)}(\bx) := \bx.$ 
The class of lin-DHNs is defined as
$$\DHN_{\lin}(L, \bm{p}, s) := \big\{ \bm{f} \text{ as defined in (\ref{DHN_linear})-(\ref{DHN_linear_others})}\big\}.$$
The total number of network parameters is 
$\sum_{\ell=0}^L (p'_\ell+1) p'_{\ell+1}$. 
Since $\DHN(L, \bm{p}) = \DHN_{\lin}(L, \bm{p}, 0)$, plain DHNs occur as a special case.

Skip-DHNs can also be modeled as lin-DHNs by introducing $p_0$ neurons with linear activation function per layer, which passes the input values across all layers.
For any $L$, $\bm{p} = (p_0,p_1,\ldots,p_L,p_{L+1})$ and $\bm{s} \in \mathbb{N}^{L-1}$, we have
\begin{align}
    \DHN(L, \bm{p}) \subseteq \DHN_{\skipp}(L, \bm{p}, \bm{s}) \subseteq \DHN_{\lin}(L, \bm{p} , p_0). \label{skip_lin_include}
\end{align}
Hence, with suitable modifications, every approximation result for the class $\DHN_{\skipp}$ also holds for $\DHN_{\lin}$.

Since the last hidden layer is not augmented, every function in $\DHN_{\lin}(L, \bm{p}, s)$ remains piecewise constant.
Consequently, we obtain a statement similar to Theorem \ref{thm_plain_include} and Theorem \ref{thm_skip_include}.

\medskip

\begin{proposition} \label{thm_lin_include}
For any $L$, any $\bm{p} = (d,p_1,\ldots,p_L,1)$, any $s \in \mathbb{N}$, any $\bx_1, \bx_2 \in [0,1]^d$, and any $f \in \DHN_{\lin}(L,\bm{p},s)$, the function 
$f|_{[\bx_1, \bx_2]}$ is piecewise constant with at most $\prod_{\ell=1}^L (p_{\ell}+1)$ pieces.  
\end{proposition}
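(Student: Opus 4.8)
The plan is to track, layer by layer, the number of constant pieces of the vector-valued hidden-layer functions $t\mapsto \bm{f}^{(\ell)}\big((1-t)\bx_1+t\bx_2\big)$ on the interval $[0,1]$, and show that passing through one layer of a lin-DHN can increase this count by at most a factor $(p_\ell+1)$. Fix $\bx_1,\bx_2\in[0,1]^d$ and write $\bx(t):=(1-t)\bx_1+t\bx_2$, so that $\bm{f}^{(0)}(\bx(t))=\bx(t)$ is affine in $t$; in particular each of its $d$ coordinates is a (trivially piecewise constant in the sense of) affine function of $t$, i.e.\ continuous and monotone on $[0,1]$. The key structural observation I would record first is an invariant: for every $\ell\in\{0,1,\ldots,L-1\}$, the restricted function $t\mapsto \bm{f}^{(\ell)}(\bx(t))$ is, on each of at most $N_\ell$ subintervals of a partition of $[0,1]$, equal to a vector whose Heaviside-activated coordinates are constant and whose $s$ linearly-activated coordinates are affine in $t$; moreover the linearly-activated block, being an affine image of $\bm{f}^{(\ell-1)}$ plus $V$-free affine terms, is actually \emph{continuous and piecewise affine} on all of $[0,1]$ with the same breakpoints. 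I would set $N_0:=1$ (with the convention that $\bm f^{(0)}$ has an empty Heaviside block and a length-$d$ affine block that is globally affine).

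Next comes the inductive step. Suppose $t\mapsto \bm{f}^{(\ell-1)}(\bx(t))$ has the above form with $N_{\ell-1}$ pieces. The preactivation $\bm{z}(t):=W_{\ell-1}\bm{f}^{(\ell-1)}(\bx(t))-\bm{b}_{\ell-1}\in\mathbb{R}^{p'_{\ell}}$ is, on each of these $N_{\ell-1}$ pieces, an affine function of $t$ (each coordinate is a linear combination of constants and affine functions). Consider the $p_\ell$ coordinates that are about to be hit by the Heaviside activation. On a single piece, each such coordinate is affine in $t$, hence changes sign at most once; the set of sign-change times of all $p_\ell$ of these coordinates partitions that one piece into at most $p_\ell+1$ subintervals on which the entire Heaviside block $\sigma_0$ of the output is constant. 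On each such subinterval the $s$ linearly-activated output coordinates remain affine in $t$ (they are affine images of $\bm{f}^{(\ell-1)}$, which is affine on the piece). Therefore one layer turns $N_{\ell-1}$ pieces into at most $(p_\ell+1)N_{\ell-1}$ pieces, preserving the invariant; so $N_\ell\le (p_\ell+1)N_{\ell-1}$, and by induction $N_{L-1}\le\prod_{\ell=1}^{L-1}(p_\ell+1)$. Finally the last hidden layer applies $\sigma_0$ to \emph{all} $p_L$ coordinates of $W_{L-1}\bm f^{(L-1)}(\bx(t))-\bm b_{L-1}$, which on each of the $N_{L-1}$ pieces is again affine in $t$; the same sign-change argument gives at most $(p_L+1)N_{L-1}$ pieces for $t\mapsto\bm f^{(L)}(\bx(t))$, and $f|_{[\bx_1,\bx_2]}=W_L\bm f^{(L)}(\bx(t))-\bm b_L$ is constant on each of them. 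Hence $f|_{[\bx_1,\bx_2]}\in\PC\big(\prod_{\ell=1}^{L}(p_\ell+1)\big)$.

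I expect the only subtle point — the ``main obstacle'' — to be the bookkeeping of the invariant at a breakpoint of the Heaviside block, namely checking that the linearly-activated coordinates really do stay globally continuous and piecewise affine (with breakpoints nested in the Heaviside breakpoints) rather than merely piecewise affine with possible jumps. This holds because a linear neuron's output is an affine function of the \emph{entire} previous layer, and on each subinterval where the previous Heaviside block is frozen the previous linear block is affine, so the composition is affine there; continuity across a Heaviside breakpoint is not actually needed for the counting argument, so if tracking it becomes cumbersome one can drop it and simply carry ``affine on each of $N_\ell$ pieces'' as the invariant, which already suffices for the bound. A minor additional care is the degenerate case where an affine preactivation coordinate is identically zero on a piece (then $\sigma_0$ of it is the constant $1$ there, contributing no new breakpoint), and the edge case $p_\ell=0$, both of which only help. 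Everything else is the routine observation that finite unions of partitions multiply the piece counts.
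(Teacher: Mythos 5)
Your proposal is correct and follows essentially the same argument as the paper: track, on each piece of the current partition, that the Heaviside-activated coordinates are constant and the linearly-activated coordinates are affine in $t$, so each affine preactivation changes sign at most once and each layer multiplies the piece count by at most $(p_\ell+1)$. The paper's proof (splitting $\bm{f}^{(\ell)}$ into a constant block $\bm{g}^{(\ell)}$ and an affine block $\bm{h}^{(\ell)}$) carries exactly this invariant, and like you it does not need continuity of the linear block across breakpoints.
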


\medskip

Compared to Theorem \ref{thm_skip_include},
the numbers of skip-connected neurons in the hidden layers of skip-DHNs are replaced with the numbers of Heaviside neurons in the hidden layers of lin-DHNs. 

\subsection{Bounds on the VC dimension} \label{sec_4.2}
To simplify notation, we focus on rectangular architectures in which each layer, except the last, contains $p$ Heaviside neurons and $s$ linear neurons. More precisely, define
\begin{align*}
    \DHN_{\lin}\big(L, p_0:p:p_{L+1}, s \big) :=  \DHN_{\lin}\big(L, (p_0,\underbrace{p,p,\ldots,p}_{L},p_{L+1}), s\big).
\end{align*}
In what follows, we always assume that $p \geq d \vee s$. Consequently, the total number of parameters in $\DHN_{\lin}(L,d\!:p\!:1,s)$ is of order $Lp^2$, and the increase in parameters due to augmentation remains negligible in this order.

\medskip

\begin{theorem}\label{VCdim-linear}
For any $d, L, s,$ and any $p \geq d \vee s \vee 2$, 
\begin{align*}
\operatorname{VC}\big(\DHN_{\lin}(L,d:p:1,s)\big)\leq 30 \cdot \big(L^2 ps \vee Lp^2 \big)\log(Lp).
\end{align*}
Also, there exist universal constants $c, C>0$ such that 
for any $d, L, p, s$ satisfying $L \land p \geq c \vee 8 \log (Lp)$ and $1 \leq s \leq p$, 
\begin{align*}
\operatorname{VC}\big(\DHN_{\lin}(L,d:p:1, s)\big)\geq C \cdot \big(L^2 ps \vee Lp^2\big).
\end{align*}    
\end{theorem}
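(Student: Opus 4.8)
The plan is to bound the growth function $\Pi(m)$ of $\mathbb{I}(\DHN_{\lin}(L,d:p:1,s))$ on an arbitrary set of $m$ points $\bx_1,\dots,\bx_m\in[0,1]^d$ by partitioning the parameter space $\mathbb{R}^W$, $W\asymp Lp^2$, into regions on which each neuron's output at each $\bx_i$ is a fixed polynomial in the parameters, following the layerwise scheme for piecewise‑polynomial networks in \cite{anthony1999neural, peter2}. The feature special to DHNs to exploit is that, on each region, a Heaviside neuron is \emph{constant} and hence does not raise the polynomial degree, whereas a linear neuron composes with one new weight matrix and raises the degree by one. Processing layers $\ell=1,\dots,L$ in turn, within each region already fixed by the earlier layers the pre‑activation of every layer‑$\ell$ Heaviside neuron at every $\bx_i$ is then a polynomial of degree $\le\ell$ that involves only the $\asymp\ell ps$ weights and biases feeding linear neurons in layers $<\ell$ together with the $\asymp p^2$ weights feeding the layer‑$\ell$ Heaviside neurons; the partition is refined by the signs of these $\le mp$ polynomials. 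The standard bound on the number of sign patterns of polynomials (e.g.\ \cite{anthony1999neural}) multiplies the number of regions by at most $(Cmp\ell/(\ell ps+p^2))^{O(\ell ps+p^2)}$, while the final output contributes only a negligible factor since layer $L$ carries only Heaviside neurons and $f_\theta(\bx_i)$ is therefore affine in the $\asymp p$ parameters of $W_L,\bm b_L$ on each region. Summing logarithms over $\ell$ gives $\log_2\Pi(m)\lesssim(L^2ps\vee Lp^2)\log(mpL)$, and the routine inversion of $\operatorname{VC}\le\log_2\Pi(\operatorname{VC})$ yields $\operatorname{VC}\lesssim(L^2ps\vee Lp^2)\log(Lp)$; tracking the numerical constants throughout gives the factor $30$.

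\textbf{Lower bound, the $Lp^2$ term.} For this term I would simply reuse the skip‑DHN construction. Placing the shattered points on a line reduces to one‑dimensional input, and by \eqref{skip_lin_include} together with the trivial embeddings of smaller architectures into larger ones, $\DHN_{\skipp}(L,1:p:1,1)\subseteq\DHN_{\lin}(L,1:p:1,1)\subseteq\DHN_{\lin}(L,d:p:1,s)$ for $d,s\ge 1$. Theorem~\ref{thm_skip_vc_upper} applied with input dimension $1$ then gives $\operatorname{VC}(\DHN_{\lin}(L,d:p:1,s))\ge\operatorname{VC}(\DHN_{\skipp}(L,1:p:1,1))\ge C\,Lp^2$ under the stated size conditions on $L,p$.

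\textbf{Lower bound, the $L^2ps$ term, and the main obstacle.} This is the genuinely new ingredient, and it is where the linear neurons matter: unlike a skip connection, which always re‑supplies the untouched input, a linear neuron can carry a \emph{modified} real number forward, so the iterated bit‑extraction routine can compound over all $L$ layers. The plan is to adapt the bit‑storage/bit‑extraction lower bound for deep networks (as in \cite{peter2} and in the construction behind Theorem~\ref{thm_skip_vc_upper}): store the target labeling in the $\asymp Lps$ weights on edges into linear neurons, with an edge at layer $\ell$ holding on the order of $L-\ell$ bits; for an input encoding ``which stored weight, which bit'', route through the Heaviside neurons so that the selected weight appears isolated on a linear channel at its layer, and then have each subsequent layer release one further bit through a Heaviside neuron while the linear neuron carries the remainder $v\mapsto 2v-\mathbb{I}(v\ge\tfrac12)$ forward — exactly the mechanism of Lemma~\ref{lemma_bit_extract} and of the square‑function construction in Theorem~\ref{thm_sq}. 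Since $\asymp\sum_{\ell}(L-\ell)\,ps=\Theta(L^2ps)$ bits are retrievable in total, $\Theta(L^2ps)$ points can be shattered, and combining with the previous paragraph gives $\operatorname{VC}\gtrsim L^2ps\vee Lp^2$. The main obstacle will be the bookkeeping of this construction: arranging the $\Theta(L^2ps)$ shattered points together with a fixed routing‑and‑extraction subnetwork that fits within $p$ Heaviside and $s$ linear neurons per layer across $L$ layers — including a delay line that steers the one‑hot selection to the correct layer, the fact that weights stored at deeper layers leave fewer layers for extraction (which only costs a constant factor after summing), and the size conditions on $L,p,s$ needed to absorb the $\log(Lp)$ losses.
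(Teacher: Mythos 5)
Your proposal follows essentially the same route as the paper on all three fronts: the upper bound via the layerwise parameter-space partition of \cite{peter2} with the key observation that Heaviside neurons freeze the polynomial degree while linear neurons raise it (the paper's quantity $z_{\ell}\asymp p^2+\ell ps$ is exactly your variable count), the $Lp^2$ lower bound via the inclusion $\DHN_{\skipp}(L,1\!:\!p\!:\!1,1)\subseteq\DHN_{\lin}(L,d\!:\!p\!:\!1,s)$ and Theorem~\ref{thm_skip_vc_upper}, and the $L^2ps$ lower bound by storing $\asymp Lps$ reals of $\asymp L$ bits each on edges into linear channels and releasing one bit per layer via $v\mapsto 2v-\mathbb{I}(v\ge\tfrac12)$. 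The "bookkeeping" you defer is precisely what the paper resolves in Lemmas~\ref{lemma_bit_extract_linear_ver2} and \ref{lemma_piece_bin_lin_ver3} (batched indicator computation over $R\asymp L$ layers feeding $K\asymp s$ linear accumulators, followed by $R$ layers of parallel bit extraction and a final selection), so your plan is correct and matches the paper's proof in substance.
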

The proof is deferred to Appendix \ref{app_proof_lin_VC}. Theorem~\ref{VCdim-linear} shows that, up to a logarithmic factor, the VC dimension of $\DHN_{\lin}(L,\!d:\!p:\!1,s)$ is of order $L^2 ps \vee Lp^2$. 
It explicitly reveals the dependence of the VC dimension on the number of linear neurons $s$, which, to the best of our knowledge, is novel in the literature. We further illustrate this with the following three scenarios.

\begin{enumerate}
    \item [(i)] If $0< s \lesssim 1$, then
    $\operatorname{VC}(\DHN_{\lin}(L,d\!:p\!:1,s))$ is of order $L^2 p \vee L p^2$, up to a logarithmic factor.  
    \item [(ii)] If $p \gg L$ and $1 \leq s \lesssim p/L$, then, up to a logarithmic factor,
    $\operatorname{VC}(\DHN_{\lin}(L,d:p:1,s))$ is of order $L p^2$. The rate is the same as for skip-DHNs.
    \item [(iii)] If $s$ is proportional to $p$, then, up to a logarithmic factor, $\operatorname{VC}(\DHN_{\lin}(L,d\!:p\!:1,s))$ is of order $L^2 p^2$. The rate is the same as for deep ReLU networks.
\end{enumerate}
To conclude, when either $s$ or $L$ is sufficiently large, augmenting with linear neurons significantly enhances model capacity compared to skip-DHNs of the same depth and width. In particular, (iii) highlights that although $\DHN_{\lin}(L,d:\!p:\!1,p)$ represents a class of piecewise constant functions, its model complexity matches that of deep ReLU networks with depth $L$ and width $p$. This observation is consistent with previous findings \cite{peter2, SCHMIDTHIEBER2021119}, which show that ReLU networks are closely related to architectures employing specific combinations of the Heaviside and linear activation functions.

Due to the above connection between lin-DHNs and ReLU networks, several results in the literature have established VC dimension bounds for lin-DHNs. Theorem 5 of \cite{sontag1998vc} provides an upper bound of $\lesssim W^2$ for any network architecture that combines the Heaviside and linear activation functions with $W$ parameters. Meanwhile, Theorem 1 of \cite{koiran1995neural} shows that, $L$ proportional to $W$ implies VC dimension $\gtrsim W^2$. An improved upper bound of $\lesssim WL\log(W) + WL^2$ was shown in \cite{peter1}, and further refined to $\lesssim WL\log(W)$ in \cite{peter2}. Although the lower bound results in \cite{peter1,peter2} nearly match the order of their upper bounds, these results rely on carefully designed architectures with enforced sparsity and a prescribed number of linear activation functions per hidden layer. In contrast, Theorem~\ref{VCdim-linear} provides a quantitative analysis of how the VC dimension grows with $s$, offering a lower bound that holds for a large range of rectangular architectures. 

\subsection{Approximation of smooth functions} \label{sec_4.3}
We now examine the approximation of H\"older smooth target functions by linear neurons augmented DHNs. As discussed in Section~\ref{sec_skip_smooth}, the approximation problem is closely tied to the VC dimension of the class $\DHN_{\lin}(L,d:p:1,s)$. By combining Proposition \ref{vc-connect-appro} with Theorem \ref{VCdim-linear}, we obtain the following lower bound
\begin{align*}
    \sup_{f_0 \in \mathcal{H}_{d}^{\beta}(M)} \, \inf_{f \in \DHN_{\lin}(L,d:p:1,s)} \left\| f - f_0 \right\|_{\infty} \geq cM \left(\frac{1}{(L^2 p s \vee Lp^2) \log(Lp) }\right)^{\frac{\beta}{d}}.
\end{align*}
The next result establishes an upper bound on the approximation error, confirming that the optimal rate $(L^2ps \vee Lp^2)^{-\beta/d}$ is achievable, up to logarithmic factors.

\medskip

\begin{theorem} \label{upper_smooth_lin} 
For any positive integer $d$ and any $\beta>0$, there exist a constant $c>0$ depending on $d$ and $\beta$ such that
for any $L, p, s$ satisfying $L \land p \geq c$,
$L \geq \log^2(Lp)$, $p \geq \log^{\beta+2}(Lp)$, and $d \leq s \leq p$,
    $$\sup_{f_0 \in \mathcal{H}_{d}^{\beta}(M)} \, \inf_{f \in \DHN_{\lin}(L,d:p:1,s)} \, \left\| f - f_0 \right\|_{\infty} \leq C M  \left(\frac{\log^3 (Lp)}{L^2 ps \vee Lp^2 }\right)^{\frac{\beta}{d}},$$
with $C>0$ a constant only depending on $\beta$.
\end{theorem}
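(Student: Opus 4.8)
The plan is to prove the upper bound by establishing two estimates and keeping the sharper one, mirroring the two regimes hidden in the target rate $(L^2ps\vee Lp^2)^{-\beta/d}$. In the regime $Ls\leq p$ (equivalently $Lp^2\geq L^2ps$) the skip-DHN result transfers directly: since $s\geq d$, routing the $d$ input coordinates through $d$ of the $s$ linear neurons in each hidden layer gives, by \eqref{skip_lin_include} together with the fact that unused linear neurons can be switched off,
\[\DHN_{\skipp}(L,d:p:1,d)\ \subseteq\ \DHN_{\lin}(L,d:p:1,d)\ \subseteq\ \DHN_{\lin}(L,d:p:1,s).\]
The hypotheses of Theorem~\ref{upper_smooth_skip} are met with skip parameter $d$, its skip-count condition holding since $p\geq s\geq d$, so $\sup_{f_0\in\mathcal{H}_{d}^{\beta}(M)}\inf_{f\in\DHN_{\lin}(L,d:p:1,s)}\|f-f_0\|_\infty\leq CM(\log^3(Lp)/(Lp^2))^{\beta/d}$, which is the claimed bound because $Lp^2=L^2ps\vee Lp^2$ in this regime.

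The substantive case is $Ls>p$, where we must use the linear neurons to reach the faster rate $(L^2ps)^{-\beta/d}$, and the construction parallels the proof of Theorem~\ref{upper_smooth_skip}. Writing $\beta=q+s_\beta$ with $q\in\mathbb N$ and $s_\beta\in(0,1]$ as in Definition~\ref{hölder}, I would partition $[0,1]^d$ into $N^d$ congruent cells with $N\asymp (L^2ps/\log^3(Lp))^{1/d}$ and, for a fixed input $\bx\in[0,1]^d$, chain three sub-networks: one that identifies the cell containing $\bx$ by means of the bit-extraction primitive of Lemma~\ref{lemma_bit_extract}; one that reads off $\asymp\log N$-bit approximations of $\partial^{\bm\alpha}f_0$ at the corresponding grid point for all $\|\bm\alpha\|_1\leq q$; and one that evaluates the order-$q$ Taylor polynomial of $f_0$ at that point to error $\asymp N^{-\beta}$. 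The linear neurons enter in two ways that together yield the extra factor $L$ relative to skip-DHNs. First, because a linear neuron's value is an affine function of the entire history of earlier layers, the Heaviside thresholds in later layers can branch on this accumulated information rather than only on the previous layer; this multiplies the number of resolvable grid cells by an extra factor of $L$, from the skip-DHN's $\asymp Lp^2$ up to $\asymp L^2ps$, and hence supports the resolution $N\asymp (L^2ps)^{1/d}$. Second, a real-by-real multiplication to precision $2^{-m}$ can be realised with $\asymp m$ extra layers and $\asymp m$ extra Heaviside neurons by summing the bitwise conjunctions $u_i\wedge v_j=\mathbb{I}(u_i+v_j-3/2)$ into a linear register — a depth-graded variant of the mixed-radix gadget behind Theorem~\ref{thm_sq} and Corollary~\ref{cor_sq} — so that the $O(1)$ monomials of the Taylor polynomial and their linear combination are formed with precision $N^{-\beta}$ without the width of the unaugmented last hidden layer growing polynomially in $Ls$.

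The main obstacle I anticipate is the uniform bookkeeping over the full stated range $L\wedge p\geq c$, $L\geq\log^2(Lp)$, $p\geq\log^{\beta+2}(Lp)$, $d\leq s\leq p$: one must split the $p$ Heaviside and $s$ linear neurons of each layer among the cell-identification, derivative-readout and Taylor-evaluation modules, compose these modules in depth while respecting that the last hidden layer carries no linear neuron — so that the final output remains a linear functional of bits, as required by Proposition~\ref{thm_lin_include} — and keep the accumulated logarithmic overhead from bit encoding, Taylor truncation and the multiplication gadget inside the single $\log^3(Lp)$ factor, all without the exact real arithmetic inside a neuron that makes the analogous deep ReLU constructions comparatively painless. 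No separate argument is needed for optimality: the matching lower bound $\gtrsim M(\log(Lp))^{-\beta/d}(L^2ps\vee Lp^2)^{-\beta/d}$ already follows from Proposition~\ref{vc-connect-appro} combined with Theorem~\ref{VCdim-linear}.
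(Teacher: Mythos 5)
Your two-regime split is exactly the paper's: for $Lp^2\geq L^2ps$ both proofs invoke the inclusion $\DHN_{\skipp}(L,d\!:\!p\!:\!1,d)\subseteq\DHN_{\lin}(L,d\!:\!p\!:\!1,s)$ from \eqref{skip_lin_include} and quote Theorem~\ref{upper_smooth_skip}, and for $L^2ps>Lp^2$ both build the same three-module pipeline (cell identification by bit extraction, binary readout of $\partial^{\bm\alpha}f_0$ at the grid point, evaluation of the order-$\lceil\beta\rceil$ Taylor polynomial) at resolution $N^d\asymp L^2ps/\log^3(Lp)$. So the approach is essentially the paper's.

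The one place your sketch is thin is precisely the crux: you assert that because a linear neuron carries an affine function of the whole history, later thresholds "branch on accumulated information" and this "multiplies the number of resolvable grid cells by an extra factor of $L$" — but that gain does not come for free from the mere presence of linear registers, and a naive implementation fails because computing all $\asymp L^2ps$ cell indicators in any single layer would require width $\gg p$. The paper's mechanism (Lemma~\ref{lemma_piece_bin_lin_ver3}) spends depth twice: it splits the cell index into three coordinates $j\in[J]$, $k\in[K]$, $r\in[R]$ with $J\asymp Lp$, $K\asymp s$, $R\asymp L$, computes the $J$ indicators $\mathbb{I}(j(\bx)=j)$ in $R$ batches of $\asymp p$ per layer, accumulates the lookup values $\lambda_{j(\bx),k}=0.\eta_{j(\bx),k,1}\cdots\eta_{j(\bx),k,R}$ into $K$ linear registers across those $R$ layers, and then spends another $R$ layers extracting the $R$ bits of each register (Lemma~\ref{lemma_bit_extract_linear_ver2}) and recombining them with the $\mathbb{I}(k(\bx)=k,r(\bx)=r)$ indicators. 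That accumulate-then-extract design is what turns depth $L$, width $p$, and $s$ registers into $JKR\asymp L^2ps$ addressable cells; your "depth-graded mixed-radix gadget" gestures at it but you would need to make this batching explicit to close the argument. (A smaller discrepancy: the paper forms the Taylor monomials in one wide final hidden layer of $\asymp\log^{\lceil\beta\rceil}(Lp)$ Heaviside units via $\eta\prod_\kappa b_\kappa=\mathbb{I}(\eta+\sum_\kappa b_\kappa-\|\bm\alpha\|_1-1/2)$ rather than with a depth-graded multiplier, which is what keeps the last hidden layer free of linear neurons as Proposition~\ref{thm_lin_include} requires.) Your closing remark on optimality via Proposition~\ref{vc-connect-appro} and Theorem~\ref{VCdim-linear} is correct but, as you note, not needed for the upper bound.
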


The proof is deferred to Appendix \ref{app_proof_skip_smooth}. The imposed constraints on $L,p,s$ are mild and the approximation result applies to a broad class of architectures, accommodating varying depths, widths, and levels of linear neurons augmentation. 

In the regime $s \lesssim p/L$, the approximation rates of lin-DHNs are comparable to those of skip-DHNs. However, when $s \gg p/L$, lin-DHNs achieve faster approximation rates than skip-DHNs, given the same depth and width. In particular, when $s$ is proportional to $p$, the approximation rate improves to $\lesssim (L^2p^2)^{-\beta/d}$, matching that of deep ReLU networks with depth $L$ and width $p$. These observations from the approximation perspective align with the VC dimension analysis in Section~\ref{sec_4.2}.

\section{Application to nonparametric regression} \label{sec_nonpara}
To illustrate how the approximation and VC dimension bounds complement each other, we now discuss the nonparametric regression problem as an application.

The statistical framework is as follows. We observe $n$ independent and identically distributed (i.i.d.) training samples $(\bX_i, Y_i)$ drawn from the joint distribution of $(\bX,Y)$, where $$\bX\sim P_{\bX}, \quad Y = f_0(\bX) + \epsilon,$$ $P_{\bX}$ denotes the marginal distribution of $\bX$ over $\mathcal{X}$, and $\epsilon$ an independent noise term following a standard normal distribution. We refer to $\bX$ as the covariate and $Y\in \mathbb{R}$ as the response. For simplicity, and without loss of generality, we assume $\mathcal{X} = [0,1]^d$. 

The goal is to recover the unknown function $f_0:[0,1]^d \to \mathbb{R}$. A common paradigm in the statistical literature is to minimize the $L_2$-risk, 
$$f_0 =\argmin_{f}L(f)=\argmin_{f}\mathbb{E}_{(\bX,Y)}[(Y-f(\bX))^2].$$ However, since the distribution of $(\bX,Y)$ is unknown and only $n$ i.i.d.\ training samples $\{(\bX_i, Y_i)\}_{i=1}^n$ are available, we instead minimize the empirical risk. Specifically, given a candidate function class $\mathcal{F}$ (the model), the least-squares estimator $\widehat{f}_n$ is defined as 
\begin{align}
    \widehat{f}_n \in \argmin_{f \in \mathcal{F}} L_n(f)=\argmin_{f \in \mathcal{F}} \frac{1}{n} \sum_{i=1}^n \big(Y_i - f(\bX_i)\big)^2. \label{estimator_def_1}
\end{align}

The performance of any estimator $\hat f$ is evaluated via its excess risk, defined as the difference between the $L_2$-risks of $\hat f$ and the true regression function $f_0$, that is,
\begin{align}
R(\hat{f}, f_0) :&= L(\hat{f})-L(f_0)\nonumber\\
&=\mathbb{E}_{(\bX,Y)}[(Y-\hat f(\bX))^2]-\mathbb{E}_{(\bX,Y)}[(Y-f_0(\bX))^2]\nonumber\\
&=\|\hat{f}-f_0\|_{P_{\mathbf{X}}}^2,\label{risk-equ}
\end{align}
where $\|f\|_{P_{\mathbf{X}}}^2 := \int f(\bx)^2 \, d P_{\mathbf{X}}(d \bx)$.
For estimator $\widehat f_n$ defined as in \eqref{estimator_def_1}, a short calculation (e.g., Lemma 3.1 of \cite{jiao2023deep}) based on \eqref{risk-equ}, leads to the following decomposition 
\begin{align}
\mathbb{E}\left[\|\widehat f_n-f_0\|_{P_{\mathbf{X}}}^2\right]\leq\underbrace{2\inf_{f\in\mathcal{F}}\|f - f_0\|_{P_{\mathbf{X}}}^2}_{\text{approximation error}}+\underbrace{2 \sup_{f \in \mathcal{F}} \mathbb{E} \Big[\big|L_n(f) - L(f)\big|\Big]}_{\text{stochastic error}},\label{decomposite}
\end{align}
where the expectation $\mathbb{E}$ is taken over the randomness in the $n$ i.i.d.\ training samples $\{(\mathbf{X}_i, Y_i)\}_{i=1}^n$. The inequality \eqref{decomposite} bounds the risk of $\widehat f_n$ by decomposing it into two components: the approximation error and the stochastic error. The approximation error measures how well $\mathcal{F}$ can approximate the target regression function $f_0$, typically decreasing with model complexity. The stochastic error is linked via empirical process theory to model complexity through measures such as the VC dimension, and typically increases as the complexity of the model grows. A good estimator obtained from \eqref{estimator_def_1} should effectively balance these two terms, meaning that the model $\mathcal{F}$ used for estimation should not be too complex, yet must provide sufficient approximation power.

To further illustrate this, define the truncated version of the least-squares estimator as 
\begin{equation}\label{estimator_def_2}
\widehat{f}_{n,B_n}(\cdot) := \min\Big( \max\big(-B_n,  \widehat{f}_n(\cdot)\big), B_n \Big),
\end{equation}
$B_n=c\log n,$ for some constant $c>0.$ By applying Lemma 18 and following the proof of Lemma 19 in \cite{kohler2021supplementB}, we can derive, based on \eqref{decomposite}, that \begin{align}
\mathbb{E}\left[R\left(\widehat{f}_{n,B_n},f_0\right)\right] \lesssim \inf_{f \in \mathcal{F}} \| f-f_0 \|_{\infty}^2 + \frac{\log^3 n}{n} \operatorname{VC}(\mathcal{F}),\label{nonpara_oracle}
\end{align}
which directly links the stochastic error to the VC dimension of $\mathcal{F}$.
The approximation error depends on the properties of the target function $f_0$, assumed to be $\beta$-H\"older smooth, in line with the discussion in Sections~\ref{sec_skip_smooth} and \ref{sec_4.3}.

We first discuss the model based on skip-DHNs. Let $L_n$, $p_n$, and $s_n$ denote the depth, width, and the number of skip connected neurons in each hidden layer, respectively, all of which vary with the number of observations $n$. The estimator defined in \eqref{estimator_def_2}, based on the model $\mathcal{F} = \DHN_{\skipp}(L_n,d\!:p_n\!:1,s_n)$, is denoted by $\widehat f_{n,B_n}^{\skipp}$. Provided that $f_0 \in \mathcal{H}_{d}^{\beta}(1)$, plugging the results from Theorem \ref{thm_skip_vc_upper} and Theorem \ref{upper_smooth_skip} into \eqref{nonpara_oracle} gives 
$$\mathbb{E}\left[R\left(\widehat f_{n,B_n}^{\skipp},f_0\right)\right]\lesssim \left(\frac{\log^3 (L_n p_n)}{L_n p_n^2}\right)^{2\beta/d} + \frac{\log^3 n}{n} L_n p_n^2 \log(L_n p_n).$$
Choosing $L_n$ and $p_n$ such that $L_n p_n^2$ is of the order $n^{d/(2\beta+d)}$, we obtain, up to logarithmic factors, the convergence rate $ n^{-2\beta/(2\beta+d)}$.

Similarly, denote the estimator $\widehat{f}^{\lin}_{n,B_n}$, defined by \eqref{estimator_def_2}, based on the model $\mathcal{F}=\DHN_{\lin}(L_n,d:\!p_n:\!1,s_n)$. Again, here $L_n$, $p_n$, and $s_n$ indicate that those are sequences in $n$. Provided that $f_0 \in \mathcal{H}_{d}^{\beta}(1)$, plugging the results from Theorem \ref{VCdim-linear} and Theorem \ref{upper_smooth_lin} into \eqref{nonpara_oracle} yields
$$\mathbb{E}\left[R\left(\widehat{f}_{n,B_n}^{\lin},f_0\right)\right]\lesssim \left(\frac{\log^3 (L_n p_n)}{L_n^2 p_n s_n \vee L_n p_n^2}\right)^{2\beta/d} + \frac{\log^3 n}{n} \left(L_n^2 p_n s_n \vee L_n p_n^2\right) \log(L_n p_n).$$ Taking any combination of $L_n$, $p_n$ and $s_n$ such that $L_n^2 p_n s_n \vee L_n p_n^2$ is of order $n^{d/(2\beta+d)}$, we obtain, up to logarithmic factors, the convergence rate $n^{-2\beta/(2\beta+d)}$.

The convergence rate achieved above by the truncated least-squares estimator based on appropriately designed skip-DHN and lin-DHN models is not a coincidence. Indeed, as demonstrated in Theorem 3 of \cite{10.1214/19-AOS1875}, one can show that
$$\inf_{\widehat{f}_n} \sup_{f_0 \in \mathcal{H}_{d}^{\beta}(1)} \mathbb{E}\left[\|\widehat{f}_{n}-f_0\|_{P_{\mathbf{X}}}^2\right]\gtrsim n^{-2\beta/(2\beta+d)},$$ where the $\inf$ is taken over all possible estimators $\widehat{f}_n$. The result tells that, in general, no estimator can improve this rate for all $f_0\in\mathcal{H}_{d}^{\beta}(1)$. Therefore, for $f_0\in\mathcal{H}_{d}^{\beta}(1)$, the rate $n^{-2\beta/(2\beta+d)}$ is the fastest achievable in the worst situation, also known as the minimax rate of convergence. In conclusion, our theory shows that with appropriately chosen architectures ($L_np_n^2\asymp n^{d/(2\beta+d)}$ for skip-DHN and $(L_n^2 p_n s_n \vee L_n p_n^2)\asymp n^{d/(2\beta+d)}$ for lin-DHN) the minimax convergence rate over  $\mathcal{H}_{d}^{\beta}(1)$ can be achieved up to $\log(n)$-factors.

\section{Discussion}
In this paper, we demonstrated that skip-DHNs and lin-DHNs can overcome the limitations of DHNs.
A different extension of DHNs is the SLTNN (Shortcut Linear Threshold Neural Network) \cite{khalife2024neural}. Following the same recursive formula used to define the last hidden layer of DHNs in (\ref{DNN_hidden}), the output of the SLTNN is given by
$$f(\bx) := \left(A \bx + \bm{b}\right)^{\top} \bm{f}^{(L)}(\bx),$$
where $A \in \mathbb{R}^{p_{\ell} \times p_0}$ and $\bm{b} \in \mathbb{R}^{p_{\ell}}$ are additional parameters.
That is, instead of applying an affine transformation of the last hidden layer, the final output is computed as the inner product between the last hidden layer and an affine transformation of $\bx$. 
While all DHN types considered in this work yield piecewise constant functions, SLTNNs are piecewise linear functions with flexible slopes exhibiting entirely different properties.

Regarding the training of (large) DHNs, the Heaviside function has vanishing derivative everywhere except at a single non-differentiable point. Traditional gradient-based algorithms can therefore not be deployed to train the DHN parameters. For shallow Heaviside networks, the Extreme Learning Machine (ELM) \cite{huang2004extreme, huang2006can} suggest to randomly assign all network parameters and only training the network parameters in the output layer. For DHNs, \cite{ergen2023globally, khalife2024neural} proposed learning algorithms based on an ELM variation: Given a dataset, they first determine all possible (vector) values for the last hidden layer and then, for each given value, only train the last layer.
The computational cost for training is polynomial in the sample size and exponential in the number of parameters.

Widely recognized for training is the Straight-Through Estimator (STE) \cite{bengio2013estimating}. It approximates the gradient during the backward pass to enable gradient-based optimization in networks with discrete activations.
If $\sigma(x) := (1+\exp(-x))^{-1}$ is the sigmoid function, \cite{corwin1994iterative} replaced in the backward pass the Heaviside activation with the scaled sigmoid function $\sigma(-\lambda x)$, 
\cite{darabi2018regularized} proposed the SignSwish function $2 \sigma(\lambda x)[1+\lambda x\{1-\sigma(\lambda x)\}]-1$, and \cite{hubara2018quantized} employed the HardTanh function $- \mathbb{I}(x < -1) + x  \mathbb{I}(-1 \leq x < 1) + \mathbb{I}(1 \leq x)$. 
Notably, \cite{hubara2018quantized} showed that their approach also works for large datasets (e.g., ImageNet \cite{deng2009imagenet}) and large-scale models (e.g., GoogleNet \cite{szegedy2015going}), implying that there is still potential for further advancements in training DHNs.
These algorithms can directly be adapted to train the structure-augmented DHNs proposed in this work.

\clearpage

\appendix

\section{Proofs of Section \ref{sec_noskip}}
\label{app_proof_thm_plain_include}

\begin{proof}[Proof of Theorem \ref{thm_plain_include}]
Using the recursive formula to define the neural network output in (\ref{DNN}) and (\ref{DNN_hidden}), $\bm{f}^{(1)}|_{[\bx_1, \bx_2]} : [0,1] \to \{0,1\}^{p_1}$ is defined by 
\begin{align*}
    \bm{f}^{(1)}\big|_{[\bx_1, \bx_2]}(t) :=
\mathbb{I}\Big(W_{0} \big((1-t)\bx_1 + t\bx_2\big) - \bm{b}_{0}\Big)
= \mathbb{I}\Big(W_{0} (\bx_2-\bx_1) t - \big(\bm{b}_{0}-W_0\bx_1\big)\Big).
\end{align*}
If we treat $t$ as the input variable, this defines a network in $\DHN(L,(1,p_1,\ldots,p_L,1))$ with first weight matrix $W_0(\bx_2-\bx_1)$ and first bias/shift vector $\bm{b}_{0}-W_0\bx_1.$

For each $i \in [p_1]$, the $i$-th component of $\bm{f}^{(1)}|_{[\bx_1, \bx_2]}(t)$ can takes values of the form $\mathbb{I}(t \geq c_i)$ or $\mathbb{I}(t \leq c_i)$ for some constant $c_i \in [0,1]$.
Hence, $\bm{f}^{(1)}|_{[\bx_1, \bx_2]}$ is a piecewise constant vector-valued function with at most $p_1$ breakpoints.
In other words, there exists an  interval partition $A_1, \ldots, A_{K}$ of $[0,1]$ with $K \leq p_1+1$ such that $\bm{f}^{(1)}|_{[\bx_1, \bx_2]}(t)$ is a constant (vector-valued) function on each element of the partition.

For any $k \in [K]$ and any $t_1, t_2 \in A_k$, we have  $\bm{f}^{(1)}|_{[\bx_1, \bx_2]}(t_1) = \bm{f}^{(1)}|_{[\bx_1, \bx_2]}(t_2)$. The activation of the first hidden layer will be the input of the second hidden layer. A neural network with the same input will also have the same output. Thus, $\bm{f}^{(1)}|_{[\bx_1, \bx_2]}(t_1) = \bm{f}^{(1)}|_{[\bx_1, \bx_2]}(t_2)$ implies $f|_{[\bx_1, \bx_2]}(t_1) = f|_{[\bx_1, \bx_2]}(t_2)$.

Hence, $f|_{[\bx_1, \bx_2]}$ is a piecewise constant function with $K\leq p_1+1$ pieces.

We obtain the second assertion of the theorem by
    \begin{align*}
        \inf_{f \in \DHN(L,\bm{p})} \|f-f_0\|_{\infty}
        &\geq \sup_{\bx_1, \bx_2 \in [0,1]^d} \,
        \inf_{f \in \DHN(L,\bm{p})} \Big\|f|_{[\bx_1, \bx_2]}-f_0|_{[\bx_1, \bx_2]}\Big\|_{\infty} \\
        &\geq \sup_{\bx_1, \bx_2 \in [0,1]^d} \,
        \inf_{f \in \PC(p_1 + 1)} \Big\|f -f_0|_{[\bx_1, \bx_2]}\Big\|_{\infty}.
    \end{align*}    
\end{proof}

\medskip

\begin{proof}[Proof of (\ref{inequ_delta_supinf})]
    Let $\bx_{\max}, \bx_{\min} \in [0,1]^d$ be the points where 
    $f_0(\bx_{\max}) = \sup_{\bx\in [0,1]^d} f_0(\bx)$ and 
    $f_0(\bx_{\min}) = \inf_{\bx\in [0,1]^d} f_0(\bx)$, respectively.
    
    Every $f \in \PC(m)$ can be represented as
    $f(x) = \sum_{i=1}^m c_i \mathbb{I}(x \in A_i)$ for some  interval partition $\{A_1,A_2,\ldots,A_m\}$ of $[0,1]$ and some real numbers $c_1,c_2,\ldots,c_m$.    
    There exists $y_0 \in [f_0(\bx_{\min}), f_0(\bx_{\max})]$ such that 
    \begin{align*}   
         \min_{i \in [m]} \, \left|c_i - y_0\right| 
         \geq  \frac{ f_0(\bx_{\max}) - f_0(\bx_{\min}) }{2 m}.
    \end{align*}
    Since $f_0|_{[\bx_{\min}, \bx_{\max}]}$ is continuous, 
    there exists $t_0 \in [0,1]$ such that $f_0|_{[\bx_{\min}, \bx_{\max}]}(t_0)= y_0$. Hence, we have 
\begin{align}
    \sup_{t \in [0,1]} \Big|f(t)-f_0|_{[\bx_1, \bx_2]}(t) \Big| 
    &\geq \Big|f(t_0)-f_0|_{[\bx_1, \bx_2]}(t_0) \Big| \nonumber \\
    &\geq \frac{ f_0(\bx_{\max}) - f_0(\bx_{\min}) }{2 m}. \label{tmp_8}
\end{align}
Since (\ref{tmp_8}) holds for any $f \in \PC(m)$, we get
\begin{align*}
    \sup_{\bx_1, \bx_2 \in [0,1]^d} \, \inf_{f \in \PC(m)} \left\|f-f_0 |_{[\bx_1, \bx_2]} \right\|_{\infty} 
    &\geq \inf_{f \in \PC(m)} \, \sup_{t \in [0,1]} \, \Big|f(t)-f_0|_{[\bx_{\min}, \bx_{\max}]}(t) \Big| \\
    & \geq \frac{\sup_{\bx\in [0,1]^d} f_0(\bx) - \inf_{\bx\in [0,1]^d} f_0(\bx)}{2m}.
\end{align*}
\end{proof}

\medskip

\begin{lemma} \label{lemma_onedim_equ}
    For any $p = 1,2,\ldots,$ $$\DHN \big(1,(1,p,1)\big) = \PC(p+1).$$
\end{lemma}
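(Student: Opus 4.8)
The plan is to establish the two inclusions $\DHN(1,(1,p,1)) \subseteq \PC(p+1)$ and $\PC(p+1) \subseteq \DHN(1,(1,p,1))$ separately, where a network in $\DHN(1,(1,p,1))$ is identified with its restriction to $[0,1]$.

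For the forward inclusion, I would simply unfold the definition \eqref{DNN}--\eqref{DNN_hidden}: every $f \in \DHN(1,(1,p,1))$ has the form $f(x) = \sum_{i=1}^p v_i\,\sigma_0(w_i x - b_i) - b$ for scalars $v_i,w_i,b_i,b$. Each summand $\sigma_0(w_i x - b_i)$ equals $\mathbb{I}(x \geq b_i/w_i)$ if $w_i > 0$, equals $\mathbb{I}(x \leq b_i/w_i)$ if $w_i < 0$, and is a constant if $w_i = 0$; in every case it is piecewise constant on $[0,1]$ with at most one breakpoint. Summing $p$ such functions and adding a constant yields a piecewise constant function on $[0,1]$ with at most $p$ breakpoints, hence at most $p+1$ pieces. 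This is precisely the one-dimensional instance of the argument already carried out in the proof of Theorem~\ref{thm_plain_include}.

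For the reverse inclusion, take $g \in \PC(p+1)$ and pass to a minimal representation: list its constant pieces from left to right as intervals $I_1,\ldots,I_m$ with values $d_1,\ldots,d_m$ and consecutive values distinct, so $m \leq p+1$. Let $t_1 \leq \cdots \leq t_{m-1}$ be the successive interface points (two of them coincide exactly when a piece is a single point), and observe that $I_1 \cup \cdots \cup I_j$ is an interval of the form $[0,t_j]$ or $[0,t_j)$. Then $g(x) = d_m + \sum_{j=1}^{m-1} (d_j - d_{j+1})\,\mathbb{I}(x \in I_1 \cup \cdots \cup I_j)$, and each indicator equals either $\mathbb{I}(x \leq t_j) = \sigma_0(t_j - x)$ or $\mathbb{I}(x < t_j) = 1 - \sigma_0(x - t_j)$, depending on which adjacent piece owns $t_j$. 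Substituting and absorbing all arising constants into the output shift gives $g(x) = c_0 + \sum_{j=1}^{m-1} a_j\,\sigma_0(\varepsilon_j x - b_j)$ with $\varepsilon_j \in \{-1,1\}$, which is a network in $\DHN(1,(1,m-1,1))$. Since $m-1 \leq p$, one finishes by padding with $p-(m-1)$ further hidden neurons whose output weight is $0$; these do not change the function, so $g \in \DHN(1,(1,p,1))$.

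The forward inclusion is routine. The point that needs care in the reverse inclusion is the treatment of degenerate (single-point) pieces: such a piece forces two interface points $t_j$ to coincide and makes the ownership of each interface matter, which is why both $\sigma_0(t_j - x)$ and $1 - \sigma_0(x - t_j)$ must be available. The toy case $g = \mathbb{I}(x = 1/2) = \sigma_0(x-1/2) + \sigma_0(1/2 - x) - 1 \in \DHN(1,(1,2,1))$ shows that two neurons genuinely suffice for such a three-piece function, so the bound $m-1 \leq p$ is not lost. Once this bookkeeping is in place, the rest is elementary.
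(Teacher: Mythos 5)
Your proposal is correct and follows essentially the same route as the paper: the forward inclusion is the one-dimensional case of Theorem~\ref{thm_plain_include}, and the reverse inclusion is obtained by telescoping the consecutive piece values and realizing each resulting half-line indicator ($\mathbb{I}(x\le t_j)$ or $\mathbb{I}(x<t_j)$, depending on which adjacent piece owns the interface point) with a single Heaviside unit of appropriate sign. The only cosmetic differences are that you telescope over left prefixes rather than right tails and pad a minimal representation with zero-weight neurons instead of allowing repeated values $c_i=c_{i-1}$; your attention to single-point pieces matches the paper's bookkeeping with the ownership indicators $b_i$.
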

\begin{proof}
    The inclusion $\DHN(1,(1,p,1)) \subseteq \PC(p+1)$ follows from Theorem \ref{thm_plain_include}.
    
    Every $f \in \PC(p+1)$ can be represented as
    $$f(x) = \sum_{i=0}^{p} c_i \mathbb{I}(x \in A_i)$$
    for an interval partition $\{A_0, A_1, \ldots, A_{p}\}$ of $[0,1]$ and constants $c_0,c_1,\ldots,c_{p}$. 
    Without loss of generality, assume that $A_0, A_1, \ldots, A_{p}$ are ordered. 
    There exist breakpoints $0 \leq x_0 \leq x_1 \leq \ldots \leq x_p \leq x_{p+1} =1$, binary values $b_1, b_2, \ldots,b_p \in \{0,1\}$, $b_0 = 1$ and $b_{p+1}=-1$ such that
    $$A_i = \begin{cases}
        [x_{i}, x_{i+1}), \qquad& \text{if } (b_{i}, b_{i+1})= (1,1) \\
        [x_{i}, x_{i+1}], \qquad& \text{if } (b_{i}, b_{i+1})= (1,-1),  \\
        (x_{i}, x_{i+1}), \qquad& \text{if } (b_{i}, b_{i+1})= (-1,1),  \\
        (x_{i}, x_{i+1}], \qquad& \text{if } (b_{i}, b_{i+1})= (-1,-1)
    \end{cases} $$
    for every $i \in \{0,1,,\ldots,p\}$.
    That is $b_i=1$ when $x_i$ is included in $A_i$, while $b_i=-1$ when $x_i$ is included in $A_{i-1}$.
    For every $x \in [0,1]$, we have
    \begin{align*}
        f(x) &= 
        c_0 + \sum_{i=1}^{p} (c_{i} - c_{i-1}) \mathbb{I}\big( x \in \cup_{k=i}^p A_k \big)  \\
        &= 
        c_0 + \sum_{i=1}^{p} (c_{i} - c_{i-1}) \Big( \mathbb{I}(x>x_i) + \mathbb{I}(b_i = 1) \mathbb{I}(x=x_i) \Big).
    \end{align*}
    Since $\mathbb{I}(x>x_i) + \mathbb{I}(b_i = 1) \mathbb{I}(x=x_i) = \mathbb{I}(b_i = 1) \mathbb{I}(x \geq x_i)
        + \mathbb{I}(b_i = -1) (1-\mathbb{I}(x \leq x_i))$, we get
    \begin{align*}
        f(x) &= c_0 + \sum_{i=1}^{p} (c_{i} - c_{i-1}) \Big( \mathbb{I}(b_i = 1) \mathbb{I}(x \geq x_i)
        + \mathbb{I}(b_i = -1) (1-\mathbb{I}(x \leq x_i)) \Big) \\
        &= \left( c_0 + \sum_{i=1}^{p} (c_{i} - c_{i-1}) \mathbb{I}(b_i = -1)  \right) 
        + \sum_{i=1}^{p} b_i (c_{i} - c_{i-1})  \mathbb{I}(b_i x \geq b_i x_i), 
    \end{align*}
    which can be represented by a function in $\DHN(1,(1,p,1))$.
    Hence, we obtain $\PC(p+1) \subseteq \DHN(1,(1,p,1)) $.     
\end{proof}

\medskip

\begin{proof}[Proof of Proposition  \ref{low_1_tight}]

\textbf{(i)}
If $\max_{\bx \in [0,1]^d} A_0(\bx) = \min_{\bx \in [0,1]^d} A_0(\bx)$, then $f_0$ is a constant function and hence $\inf_{f \in \DHN(1, (d,p_1,1))} \|f-f_0\|_{\infty} = 0$.

If $\max_{\bx \in [0,1]^d} A_0(\bx) - \min_{\bx \in [0,1]^d} A_0(\bx) > 0$, consider affine transformation 
$$B_0(\bx) := \frac{A_0(\bx) - \min_{\bx \in [0,1]^d} A_0(\bx)}{\max_{\bx \in [0,1]^d} A_0(\bx) - \min_{\bx \in [0,1]^d} A_0(\bx)}$$
and univariate function
$$h_0(t) := g_0\left( \min_{\bx \in [0,1]^d} A_0(\bx) + t \cdot \left( \max_{\bx \in [0,1]^d} A_0(\bx) - \min_{\bx \in [0,1]^d} A_0(\bx)\right)  \right),$$
which satisfy $h_0 \circ B_0(\bx) = g_0 \circ A_0(\bx) = f_0(\bx)$ for every $\bx \in [0,1]^d$.

Since the composition of two affine transformations is also an affine transformation, 
we have 
$$\Big\{h \circ B_0 : h \in \DHN\big(1, (1,p_1,1)\big) \Big\} \subset \DHN\big(1, (d,p_1,1)\big).$$
Hence, we have
\begin{align*}
    \inf_{f \in \DHN(1, (d,p_1,1))} \|f-f_0\|_{\infty} 
    &=
    \sup_{\bx \in [0,1]^d} \, \inf_{f \in \DHN(1, (d,p_1,1))} \big|f(\bx)-h_0 \circ B_0(\bx)\big|\\
    & \leq \sup_{\bx \in [0,1]^d} \, \inf_{h \in \DHN(1, (1,p_1,1))} \big|h \circ B_0(\bx) - h_0 \circ B_0(\bx)\big| \\
    & = \sup_{t \in [0,1]} \, \inf_{h \in \DHN(1, (1,p_1,1))} \big|h(t) -h_0(t)\big|\\
    & = \sup_{t \in [0,1]} \, \inf_{h \in \PC(p_1+1)} \big|h(t) -h_0(t)\big|,
\end{align*}
where the last equality holds by Lemma \ref{lemma_onedim_equ}.

Since $B_0$ has been normalized we can find vectors $\bx_{\min}, \bx_{\max} \in [0,1]^d$ such that
$B_0(\bx_{\min}) = 0$
and
$B_0(\bx_{\max}) = 1$.
We have
\begin{align*}
    f_0|_{[\bx_{\min}, \bx_{\max}]}(t)
    &= h_0 \circ B_0\big( (1-t)\bx_{\min} + t \bx_{\max}\big)\\
    &= h_0 \circ \big((1-t) B_0( \bx_{\min} ) + t B_0( \bx_{\max} ) \big)\\
    & = h_0(t).
\end{align*}
Hence, we get
\begin{align*}
    \inf_{f \in \DHN(1, (d,p_1,1))} \|f-f_0\|_{\infty} 
    &\leq \sup_{t \in [0,1]} \, \inf_{h \in \PC(p_1+1)} \big|h(t) -h_0(t)\big| \\
    &= \sup_{t \in [0,1]} \, \inf_{h \in \PC(p_1+1)} \big|h(t) -f_0|_{[\bx_{\min}, \bx_{\max}]}(t)\big|\\
    & \leq \sup_{\bx_1, \bx_2 \in [0,1]^d} \, \inf_{h \in \PC(p_1 + 1)} \left\|h-f_0 |_{[\bx_1, \bx_2]} \right\|_{\infty}.
\end{align*}

\medskip

\noindent
\textbf{(ii)}
    Let $M := \lceil p_1/d \rceil$.
    We partition $[0,1]^d$ into $M^d$ hypercubes,     
    each with a side length of $1/M$.
    For $a_0 := -\infty$, $a_n := n/M$ for $n \in [M-1]$ and $a_M := \infty$,
    each hypercube can be represented by
    $$\Big\{\bx = (x_1,\ldots,x_d)^{\top} \in [0,1]^d : \forall i \in [d],\  a_{n_i} \leq x_i < a_{n_i + 1} \Big\}$$     
    for some $n_1, \ldots, n_{d} \in \{0, 1, \ldots, M-1\}$. 
    We have
    \begin{align}
        &\mathbb{I} \big( \forall i \in [d], \ a_{n_i} \leq x_i < a_{n_i + 1} \big) \nonumber \\
        &= \mathbb{I} \left( \sum_{i=1}^d \mathbb{I} \left( a_{n_i} \leq x_i < a_{n_i + 1} \right) - \left(d - \frac{1}{2}\right)  \right) \nonumber \\
        & = \mathbb{I} \left( \sum_{i=1}^d \big( \mathbb{I}\left( x_i - a_{n_i} \right) -  \mathbb{I}\left( x_i - a_{n_i + 1} \right) \big) - \left(d - \frac{1}{2}\right)  \right).
        \label{tmp_7}
    \end{align} 
    We denote this partition by $\mathcal{U}:=\{U_{k}\}_{k \in [M^d]}$, and let $\bm{u}_{k} \in [0,1]^d$ be the center of $U_{k}$. 
    For $\bx \in [0,1]^d$, 
    we denote by $\bm{u}(\bx)$ the center of the hypercube that includes $\bx$.

    We construct $f \in \DHN(2,(d,p_1,M^d, 1))$ as following. 
   In the first hidden layer, we represent 
   $$\big(\mathbb{I}\left(x_i - a_n \right)\big)_{i \in [d], n \in [M - 1]}.$$
    We need at most $d(\lceil p_1/d \rceil - 1) \leq p_1$ neurons in the first hidden layer. 
    We do not need to represent 
    $\mathbb{I}(x_i - a_0)$ and $\mathbb{I}(x_i - a_M)$ because their values are always $1$ and $0$, respectively.
   In the second hidden layer, we represent
   $$\big(\mathbb{I}\left(\bx \in U_k \right)\big)_{k \in [M^d]}$$
   using (\ref{tmp_7}).   
   We need $M^d$ neurons in the second hidden layer. 
Then, we define
   $$f(\bx) := f_0\big(\bm{u}(\bx)\big) = \sum_{k=1}^{M^d} f_0(\bm{u}_k) \mathbb{I}(\bx \in U_k).$$
   From $\|\bx - \bm{u}(\bx) \|_{\infty} \leq 1/(2M)$, we have
   \begin{align*}
       &\sup_{\bx \in [0,1]^d} |f(\bx)- f_0(\bx)| \\
       &=\sup_{\bx \in [0,1]^d} |f_0\big(\bm{u}(\bx)\big)- f_0(\bx)| \\
       &\leq \frac{k}{2 M} \\
       &\leq \frac{k d}{2p_1} \\
       &\leq \frac{k d}{p_1+1} \\     
       &\leq \frac{k d}{\sup_{\bx \in [0,1]^d} f_0(\bx) - \inf_{\bx \in [0,1]^d} f_0(\bx)} \sup_{\bx_1, \bx_2 \in [0,1]^d} \, \inf_{f \in \PC(p_1 + 1)} \left\|f-f_0 |_{[\bx_1, \bx_2]} \right\|_{\infty},
   \end{align*}
   where we use (\ref{inequ_delta_supinf}) for the last inequality.
\end{proof}

\section{Proofs of Section \ref{sec_skip}} \label{app_proof_skip}

\subsection{Proof of Theorem \ref{thm_skip_include}} \label{app_proof_skip_1}

The $j$-th component of a vector $\bx$ is denoted by $(\bx)_{(j)}.$

\medskip

\begin{proof}
Using the recursive formula to define the neural network output in (\ref{DHN_skip_multi}), (\ref{DHN_skip_multi_hidden}) and (\ref{DHN_skip_multi_first}), $\bm{f}^{(1)}|_{[\bx_1, \bx_2]} : [0,1] \to \{0,1\}^{p_1}$ is defined by 
\begin{align*}
    \bm{f}^{(1)}\big|_{[\bx_1, \bx_2]}(t) :=
\sigma_0 \Big(W_{0} \big((1-t)\bx_1 + t\bx_2\big) - \bm{b}_{0}\Big).
\end{align*}
For each $i \in [p_1]$, the $i$-th component of $\bm{f}^{(1)}|_{[\bx_1, \bx_2]}(t)$ can take values of the form $\mathbb{I}(t \geq c_i)$ or $\mathbb{I}(t \leq c_i)$ for some constant $c_i \in [0,1]$.
Hence, $\bm{f}^{(1)}|_{[\bx_1, \bx_2]}$ is a piecewise constant (vector-valued) function with at most $p_1$ breakpoints, and thus consists of at most $p_1+1$ pieces.

For simplicity, 
we write $s_1 := p_1$ and $S_\ell:=\prod_{r=1}^\ell (s_r+1)$.
Assume that for $\ell \in [L-1]$, $\bm{f}^{(\ell)}|_{[\bx_1, \bx_2]}$ is a piecewise constant vector-valued function with at most $S_\ell$ pieces on $[0,1]$. 
In other words, there exists an interval partition $A_1, \ldots, A_{K}$ of $[0,1]$ with $K \leq S_\ell$ such that $\bm{f}^{(\ell)}|_{[\bx_1, \bx_2]}(t)$ is a constant vector-valued function in $t$ on each element of the partition.
Hence, there exists $\by_1, \by_2, \ldots, \by_K \in \mathbb{R}^{p_{\ell}}$ such that 
$t \in A_k$ implies $ \bm{f}^{(\ell)}|_{[\bx_1, \bx_2]}(t) = \by_k$.
Let 
\begin{align*}
    W_{\ell} := 	
\begin{pmatrix}
\bw_{\ell,1}^{\top} \\
\bw_{\ell,2}^{\top} \\
\vdots \\
\bw_{\ell, p_{{\ell}+1}}^{\top}
\end{pmatrix}
, \qquad
    V_{\ell} := 	
\begin{pmatrix}
\bv_{\ell,1}^{\top} \\
\bv_{\ell,2}^{\top} \\
\vdots \\
\bv_{\ell, p_{{\ell}+1}}^{\top}
\end{pmatrix}
, \qquad
\bm{b}_{\ell} := \begin{pmatrix}
b_{\ell,1} \\
b_{\ell,2} \\
\vdots \\
b_{\ell, p_{{\ell}+1}}
\end{pmatrix},
\end{align*}
with $\bw_{\ell,1}, \ldots, \bw_{\ell, p_{{\ell}+1}} \in \mathbb{R}^{p_{{\ell}}}$, $\bv_{\ell,1}, \ldots, \bv_{\ell, p_{{\ell}+1}} \in \mathbb{R}^{d}$ and $b_{\ell,1}, b_{\ell,2}, \ldots, b_{\ell, p_{{\ell}+1}} \in \mathbb{R}$.
For each $k \in [K]$, the $i$-th component of $\bm{f}^{(\ell+1)}\big|_{[\bx_1, \bx_2]}(t)$ for $t \in A_k$ is 
\begin{align}
    \left(\bm{f}^{(\ell+1)}\big|_{[\bx_1, \bx_2]}(t)\right)_{(i)} &=
    \left( \bm{f}^{(\ell+1)}\big((1-t)\bx_1 + t \bx_2\big)\right)_{(i)} \nonumber \\
    &=
    \bigg( \sigma_0\left(W_{\ell} \bm{f}^{(\ell)}\big((1-t)\bx_1 + t\bx_2\big) + V_{\ell} \big((1-t)\bx_1 + t\bx_2\big) - \bm{b}_{\ell}\right) \bigg)_{(i)} \nonumber \\
    &=
    \bigg( \sigma_0\left(W_{\ell} \bm{f}^{(\ell)}\big|_{[\bx_1, \bx_2]}(t) 
    + V_{\ell} \big((1-t)\bx_1 + t\bx_2\big) 
    - \bm{b}_{\ell}\right) \bigg)_{(i)} \nonumber \\    
    &=
    \mathbb{I}\left(\bw_{\ell, i}^{\top} 
 \bm{f}^{(\ell)}\big|_{[\bx_1, \bx_2]}(t)
    + \bv_{\ell, i}^{\top} \big((1-t)\bx_1 + t\bx_2\big)
    - b_{\ell, i} \right) \nonumber \\
    &=
    \mathbb{I}\Big(
      \bv_{\ell, i}^{\top}(\bx_2-\bx_1) t
      + \bw_{\ell, i}^{\top} \by_k
    + \bv_{\ell, i}^{\top} \bx_1 
    - b_{\ell, i} \Big) \label{tmp_9}.
\end{align}
If $\bv_{\ell, i} \neq \bm{0}_d$, (\ref{tmp_9}) takes values of the form 
$\mathbb{I}(t \geq c_i)$ or $\mathbb{I}(t \leq c_i)$ for some constant $c_i \in [0,1]$.
On the other hand, if $\bv_{\ell, i} = \bm{0}_d$, (\ref{tmp_9}) remains constant. 
Since at most $s_{\ell+1}$ vectors among $\bv_{\ell,1}, \ldots, \bv_{\ell, p_{{\ell}+1}}$ are nonzero, 
$\bm{f}^{(\ell+1)}|_{[\bx_1, \bx_2]}(t)$ is a piecewise constant vector-valued function with at most $s_{\ell+1}$ breakpoints on $t \in A_k$, and thus consists of at most $s_{\ell+1}+1$ pieces.
As a result, $\bm{f}^{(\ell)}|_{[\bx_1, \bx_2]}$ is a piecewise constant vector-valued function with at most $K (s_{\ell+1}+1) \leq S_{\ell+1}$ pieces on [0,1]. 
Iterating this process,  
$\bm{f}^{(L)}|_{[\bx_1, \bx_2]}$ is a piecewise constant vector-valued function with at most $S_L$ pieces, and $f|_{[\bx_1, \bx_2]}$ is a piecewise constant function with the same bound on the number of pieces.

We obtain the second assertion of the theorem by
    \begin{align*}
        \inf_{f \in \DHN_{\skipp}(L,\bm{p},\bm{s})} \|f-f_0\|_{\infty}
        &\geq \sup_{\bx_1, \bx_2 \in [0,1]^d} \,
        \inf_{f \in \DHN(L,\bm{p},\bm{s})} \Big\|f|_{[\bx_1, \bx_2]}-f_0|_{[\bx_1, \bx_2]}\Big\|_{\infty} \\
        &\geq \sup_{\bx_1, \bx_2 \in [0,1]^d} \,
        \inf_{f \in \PC\left(S_L \right)} \Big\|f -f_0|_{[\bx_1, \bx_2]}\Big\|_{\infty}.
    \end{align*} 
\end{proof}

\subsection{Proof of Theorem \ref{thm_sq}} \label{app_proof_sq}

We approximate the square function using the \textit{mixed radical numerical representation}, that generalizes the positional numerical system.  
The most popular positional numerical system is the decimal numerical system.
The decimal numerical representation of $x=1/7$ is
$$x = 0.142\ldots_{(10)} := \frac{1}{10} +  \frac{4}{10^2} + \frac{2}{10^3}+ \ldots,$$
with recursively chosen coefficients
\begin{align*}
&\left\lfloor 10 \cdot x \right\rfloor = 1, \quad
\left\lfloor 10^2 \cdot 
\left( x-\frac{1}{10}\right) \right\rfloor = 4,
\quad
\left\lfloor 10^3 \cdot 
\left( x- \frac{1}{10} -  \frac{4}{10^2}\right) \right\rfloor = 2, \ldots 
\end{align*}
Another system is the binary numerical system. 
The binary numeral representation of $x=1/7$ is
\[x = 0.00100\ldots_{(2)} := \frac{0}{2} +  \frac{0}{2^2} + \frac{1}{2^3}+ \frac{0}{2^4}+ \frac{0}{2^5}+ \ldots,\]
with recursively chosen coefficients
\begin{align*}
&\left\lfloor 2 \cdot x \right\rfloor = 0, \quad
\left\lfloor 2^2 \cdot 
\left( x-\frac{0}{2}\right) \right\rfloor = 0,
\quad
\left\lfloor 2^3 \cdot 
\left( x- \frac{0}{2} -  \frac{0}{2^2}\right) \right\rfloor = 1, \ldots
\end{align*}

The mixed radical numerical representation considers a different base for each digit.
That is, 
for any given positive integer sequence $d_1,d_2,d_3,d_4,\ldots$, the mixed radix numerical representation of $x \in [0,1]$ is defined by
$$x = 0.b_1(x) b_2(x) b_3(x)  \ldots_{(d_1,d_2,d_3,\ldots)}:= 
\sum_{\ell=1}^{\infty} \frac{{b_\ell}(x)}{\prod_{i=1}^{\ell} d_i},$$
where the coefficients $b_{\ell}(x) \in \{0,1,\ldots, d_{\ell}-1\}$ are recursively defined by
\begin{align*}    
    b_1(x):=\left\lfloor d_1 x\right\rfloor, \ 
    b_2(x):=\left\lfloor d_1 d_2 
\left( x-\frac{b_1(x)}{d_1}\right) \right\rfloor, \ 
    b_3(x):=\left\lfloor d_1 d_2 d_3 
\left( x-\frac{b_1(x)}{d_1} - \frac{b_2(x)}{d_1 d_2}\right) \right\rfloor,
\end{align*}
and so on, if $x \in [0,1)$, while $b_\ell(x) := d_{\ell}-1$ for all $\ell = 1,2,\ldots,$ if $x=1$.
Both the decimal and binary numerical systems are special cases of mixed radix numerical systems, corresponding to $d_1=d_2=\ldots=10$ and $d_1=d_2=\ldots=2$, respectively.

Truncating the mixed radix numerical representation of $x\in [0,1]$ after $L$ digits
$$\widetilde{x} := 0.{b_1(x) b_2(x) \ldots b_L(x)}_{(d_1, d_2, \ldots, d_L)} := \sum_{\ell=1}^{L} \frac{{b_\ell}(x)}{\prod_{i=1}^{\ell} d_i},$$
satisfies $\widetilde{x} \leq x \leq \widetilde{x} + 1/(\prod_{\ell=1}^L d_\ell)$.
We call these $b_1(x),\ldots,b_L(x)$ the mixed radix coefficients of $x$ for the radix vector $(d_1,\ldots,d_L)$.

\medskip

\begin{lemma} \label{lemma_bit_extract}
     For any integers $L \geq 1, p_1 \geq 1$ and $\bm{s} = (s_2,\ldots,s_L) \in \mathbb{N}^{L-1}$,
     we define $b_1(x), b_2(x), \ldots, b_L(x)$
as the mixed radix coefficients of $x$ for the radix vector $(p_1+1, s_2+1, \ldots, s_{L}+1)$.
There exists a network
\begin{align*}
    f \in \DHN_{\skipp}(L, \bm{p}, \bm{s})
\end{align*}
with width vector
    \begin{align*}
        \bm{p} &:= \left(1,p_1,p_1 + s_2, \ldots, p_1 + \sum_{\ell = 2}^{L} s_{\ell}, 1 \right)
    \end{align*}
such that for any input $x \in [0,1]$, the output of the last hidden layer is
$$\Big(\mathbb{I}(b_{\ell}(x) \geq t )\Big)_{\ell \in [L], t \in [s_\ell]}.$$
\end{lemma}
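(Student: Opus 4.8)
The plan is to build the network layer by layer and prove by induction on the hidden-layer index $\ell \in [L]$ that the $\ell$-th hidden layer already outputs exactly the bit vector $(\mathbb{I}(b_r(x) \ge t))_{r \in [\ell],\, t \in [s_r]}$. Throughout I set $s_1 := p_1$, $d_1 := p_1+1$, $d_i := s_i+1$ for $i \ge 2$, and $P_\ell := \prod_{i=1}^{\ell} d_i$ (so $P_0 = 1$); with these conventions the bit vector at layer $\ell$ has $p_1 + \sum_{r=2}^{\ell} s_r$ entries, which is precisely the width prescribed for that layer. For the base case $\ell = 1$, observe that for $x \in [0,1)$ we have $b_1(x) = \lfloor P_1 x \rfloor \in \{0,\dots,p_1\}$ and, since $t$ is an integer, $\mathbb{I}(b_1(x) \ge t) = \mathbb{I}(P_1 x \ge t) = \sigma_0(x - t/P_1)$, and this also holds at $x=1$ because $t/P_1 < 1$. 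Hence I would take $W_0 = \bm{1}_{p_1}$ and the $t$-th entry of $\bm{b}_0$ equal to $t/P_1$, so that $\bm{f}^{(1)}(x) = (\mathbb{I}(b_1(x) \ge t))_{t \in [p_1]}$.

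For the inductive step, suppose hidden layer $\ell$ outputs $(\mathbb{I}(b_r(x) \ge t))_{r \in [\ell], t \in [s_r]}$. The key point is that since $b_r(x) \in \{0,\dots,d_r-1\}$ we have the bit-encoding identity $b_r(x) = \sum_{t=1}^{s_r} \mathbb{I}(b_r(x) \ge t)$, and therefore the $\ell$-digit truncation is an explicit linear function of the layer-$\ell$ activations:
\[
\widetilde{x}_\ell := \sum_{r=1}^{\ell} \frac{b_r(x)}{P_r} = \sum_{r=1}^{\ell} \sum_{t=1}^{s_r} \frac{\mathbb{I}(b_r(x) \ge t)}{P_r}.
\]
Using $\widetilde{x}_\ell \le x \le \widetilde{x}_\ell + P_\ell^{-1}$ and the integrality of $t$, for every $t \in [s_{\ell+1}]$,
\[
\mathbb{I}(b_{\ell+1}(x) \ge t) = \mathbb{I}\big( P_{\ell+1}(x - \widetilde{x}_\ell) \ge t \big) = \sigma_0\Big( x - \widetilde{x}_\ell - \tfrac{t}{P_{\ell+1}} \Big).
\]
Substituting the formula for $\widetilde{x}_\ell$, each such bit is realized by a single Heaviside neuron whose pre-activation is affine in the scalar input $x$ (coefficient $1$, supplied by the skip matrix $V_\ell$) and in the layer-$\ell$ activations (coefficients $-P_r^{-1}$, supplied by $W_\ell$), with shift $t/P_{\ell+1}$; this uses exactly $s_{\ell+1}$ skip-connected neurons, matching $\bm{s}$. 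The remaining $p_1 + \sum_{r=2}^{\ell} s_r$ neurons of layer $\ell+1$ copy the old bits forward via $b = \sigma_0(b - 1/2)$ for $b \in \{0,1\}$ and need no skip connection, so layer $\ell+1$ outputs $(\mathbb{I}(b_r(x) \ge t))_{r \in [\ell+1], t \in [s_r]}$. Iterating to $\ell = L$ gives the claimed output of the last hidden layer, and $W_L, \bm{b}_L$ may be chosen arbitrarily to produce a concrete $f \in \DHN_{\skipp}(L, \bm{p}, \bm{s})$.

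The step I expect to require the most care is making the floors and (non-)strict inequalities in the second display line up correctly, together with the boundary point $x = 1$: there the ``floor formula'' $\lfloor P_{\ell+1}(x - \widetilde{x}_\ell)\rfloor$ gives $d_{\ell+1}$ whereas the definition sets $b_{\ell+1}(1) = d_{\ell+1}-1$, so one must check separately --- via the telescoping identity $\widetilde{x}_\ell = 1 - P_\ell^{-1}$ at $x = 1$ --- that $\sigma_0(x - \widetilde{x}_\ell - t/P_{\ell+1})$ still returns the correct value $1$ for all $t \in [s_{\ell+1}]$, and similarly for the base case. Everything else --- the bookkeeping of which neuron computes which bit, and verifying that the prescribed widths $p_1 + \sum_{r=2}^{\ell} s_r$ and the skip-connection counts $s_\ell$ are exactly respected --- is routine.
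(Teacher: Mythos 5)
Your construction is correct and follows essentially the same route as the paper's proof: the same base case $\mathbb{I}(b_1(x)\ge t)=\sigma_0(x-t/(p_1+1))$, the same inductive step expressing $\mathbb{I}(b_{\ell+1}(x)\ge t)$ as a single skip-connected Heaviside neuron via the bit-encoding identity $b_r(x)=\sum_{t=1}^{s_r}\mathbb{I}(b_r(x)\ge t)$, and the same forwarding of earlier bits through $b=\mathbb{I}(b-1/2)$. Your extra check of the boundary point $x=1$ is a welcome refinement that the paper's proof leaves implicit.
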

\begin{proof}

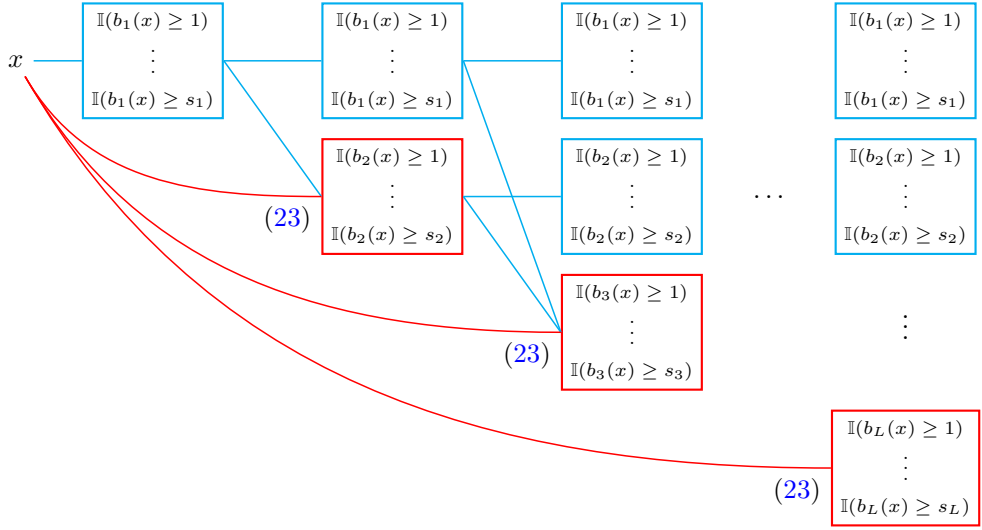
\begin{figure}[t]
    \centering
        \begin{tikzpicture}[node distance=2cm, auto, thick, >=Stealth, scale=0.9]
        \node[draw=white] (x) at (-0.5, 4.5) {$x$};        

        \node[draw=cyan, align=center] (f11) at (1.5, 4.5) {\footnotesize 
        $\mathbb{I}(b_1(x) \geq 1)$\\ 
        \footnotesize$\vdots$ \\
        \footnotesize$\mathbb{I}(b_1(x) \geq s_1)$};
        
        \node[draw=cyan, align=center] (f21) at (5, 4.5) {\footnotesize 
        $\mathbb{I}(b_1(x) \geq 1)$\\ 
        \footnotesize$\vdots$ \\
        \footnotesize$\mathbb{I}(b_1(x) \geq s_1)$};
        \node[draw=red, align=center] (f22) at (5, 2.5) {\footnotesize 
        $\mathbb{I}(b_2(x) \geq 1)$\\ 
        \footnotesize$\vdots$ \\
        \footnotesize$\mathbb{I}(b_2(x) \geq s_2)$};
        
        \node[draw=cyan, align=center] (f31) at (8.5, 4.5) {\footnotesize 
        $\mathbb{I}(b_1(x) \geq 1)$\\ 
        \footnotesize$\vdots$ \\
        \footnotesize$\mathbb{I}(b_1(x) \geq s_1)$};
        \node[draw=cyan, align=center] (f32) at (8.5, 2.5) {\footnotesize 
        $\mathbb{I}(b_2(x) \geq 1)$\\ 
        \footnotesize$\vdots$ \\
        \footnotesize$\mathbb{I}(b_2(x) \geq s_2)$};
        \node[draw=red, align=center] (f33) at (8.5, 0.5) {\footnotesize 
        $\mathbb{I}(b_3(x) \geq 1)$\\ 
        \footnotesize$\vdots$ \\
        \footnotesize$\mathbb{I}(b_3(x) \geq s_3)$};

        \node at (10.5, 2.5) {$\ldots$};
        \node at (12.5, 0.7) {$\vdots$};

        \node[draw=cyan, align=center] (f41) at (12.5, 4.5) {\footnotesize 
        $\mathbb{I}(b_1(x) \geq 1)$\\ 
        \footnotesize$\vdots$ \\
        \footnotesize$\mathbb{I}(b_1(x) \geq s_1)$};
        \node[draw=cyan, align=center] (f42) at (12.5, 2.5) {\footnotesize 
        $\mathbb{I}(b_2(x) \geq 1)$\\ 
        \footnotesize$\vdots$ \\
        \footnotesize$\mathbb{I}(b_2(x) \geq s_2)$};
        \node[draw=red, align=center] (f43) at (12.5, -1.5) {\footnotesize 
        $\mathbb{I}(b_L(x) \geq 1)$\\ 
        \footnotesize$\vdots$ \\
        \footnotesize$\mathbb{I}(b_L(x) \geq s_L)$ };

        \draw[-, cyan, line width=0.2mm] (x.east) -- (f11.west);

        \draw[-, cyan, line width=0.2mm] (f11.east) -- (f21.west) ;
        \draw[-, cyan, line width=0.2mm] (f11.east) -- (f22.west);

        \draw[-, cyan, line width=0.2mm] (f21.east) -- (f31.west);
        \draw[-, cyan, line width=0.2mm] (f22.east) -- (f32.west);
        \draw[-, cyan, line width=0.2mm] (f21.east) -- (f33.west);
        \draw[-, cyan, line width=0.2mm] (f22.east) -- (f33.west);


        \draw[-, thick, red, line width=0.2mm] (x) to[out=300, in=180] (f22);
        \draw[-, thick, red, line width=0.2mm] (x) to[out=300, in=180] (f33);
        \draw[-, thick, red, line width=0.2mm] (x) to[out=300, in=180] (f43);

        \node[below left, align=center] at (f22.west) 
       {(\ref{tmp_21})};

        \node[below left, align=center] at (f33.west) 
       {(\ref{tmp_21})};

        \node[below left, align=center] at (f43.west) 
       {(\ref{tmp_21})};

    \end{tikzpicture} \vspace{15pt}
    \caption{\textbf{Illustration for the proof of Lemma \ref{lemma_bit_extract}.}  Neurons in red squares are skip connected to the input, while those in blue squares are not.}
    \label{fig:binary}
\end{figure}

For simplicity, 
we denote $s_1 := p_1$ and $S_\ell:=\prod_{r=1}^\ell (s_r+1)$.
Since we consider one-dimensional input, the matrix $V_{\ell-1}$ in the layer recursion formula is a $(p_1 + \sum_{r=2}^\ell s_r)$-dimensional vector, and will be defined as
$$V_{\ell-1} := (\underbrace{0,\ldots,0}_{\sum_{r=1}^{\ell-1} s_{r}  }, \underbrace{1,\ldots,1}_{s_\ell})^\top.$$

In the first hidden layer, we assign $s_1$ neurons to the respective values $\mathbb{I}(b_{1}(x) \geq t) = \mathbb{I}(x-t/(s_1+1))$ for each $t \in [s_1]$.
Assume now that the $\ell$-th hidden layer with $1 \leq \ell \leq L-1$ has $s_1 + \ldots + s_\ell$ neurons whose values are 
$(\mathbb{I}(b_{r}(x) \geq t))_{r \in [\ell], t \in [s_r]}$. 
Then, for the $(\ell+1)$-st hidden layer, we can forward $(\mathbb{I}(b_{r}(x) \geq t))_{r \in [\ell], t \in [s_r]}$ from the previous layer using the identity $b = \mathbb{I}(b - 1/2)$ for $b \in \{0,1\}$. 
This will require $\sum_{r=1}^\ell s_r$ neurons. 
In addition, for each $t \in [s_{\ell+1}]$ we can represent
\begin{align}
    \mathbb{I}(b_{\ell+1}(x) \geq t)
    &= \mathbb{I} \left( S_{\ell+1} \left(x - \frac{b_1(x)}{S_1} - \ldots - \frac{b_{\ell}(x)}{S_{\ell}}\right) \geq t\right) \nonumber \\
    &= \mathbb{I} \left(x - \frac{b_1(x)}{S_1} - \ldots - \frac{b_{\ell}(x)}{S_{\ell}} \geq \frac{t}{S_{\ell+1}}\right) \nonumber \\
    &= \mathbb{I} \left(x - \frac{\sum_{j=1}^{s_1} \mathbb{I}(b_{1}(x) \geq j) }{S_1} - \ldots - \frac{\sum_{j=1}^{s_\ell} \mathbb{I}(b_{\ell}(x) \geq j)}{S_{\ell}} - \frac{t}{S_{\ell+1}}\right) \label{tmp_21}
\end{align}
using one skip connection.
This will require $s_{\ell+1}$ skip connected neurons.

Iterating this process, for every $\ell \in [L]$, we have $\sum_{r=1}^{\ell} s_r$ neurons with $s_{\ell}$ skip connected neurons in the $\ell$-th hidden layer, whose outputs are $(\mathbb{I}(b_{r}(x) \geq t))_{r \in [\ell], t \in [s_{r}]}.$ 
For $\ell=L,$ the outputs are $(\mathbb{I}(b_{\ell}(x) \geq t))_{\ell \in [L], t \in [s_{\ell}]}$. 
\end{proof}

\medskip

\begin{proof}[Proof of Theorem \ref{thm_sq}]
For simplicity, 
we set $s_1 := p_1$ and $S_\ell:=\prod_{r=1}^\ell (s_r+1)$.
For $x \in [0,1]$, we define $b_1(x), b_2(x), \ldots, b_{L-1}(x)$
as the mixed radix coefficients of $x$ for the radix vector $(s_1+1, \ldots, s_{L-1}+1)$.
Then, $b_\ell(x) \in \{0,\ldots,s_\ell\}$ for $\ell \in [L-1]$, and  
$$\widetilde{x} := \sum_{\ell=1}^{L-1} \frac{b_\ell(x)}{S_{\ell}}$$
satisfies $\widetilde{x} \leq x \leq \widetilde{x}+ 1/S_{L-1}.$ Thus
$$\sup_{x \in [0,1]} \, \left|\left(\widetilde{x}+\frac{1}{2S_{L-1}}\right)-x\right| \leq \frac{1}{2 S_{L-1}}.$$

First, we construct the the first $(L-1)$ hidden layers using Lemma \ref{lemma_bit_extract}.
For every $\ell \in [L-1]$, we have $\sum_{r=1}^{\ell} s_r$ neurons with $s_{\ell}$ skip connected neurons in the $\ell$-th hidden layer. The output of the $(L-1)$-st hidden layer is 
$(\mathbb{I}(b_{\ell}(x) \geq t))_{\ell \in [L-1], t \in [s_{\ell}]}.$

In the $L$-th layer, we construct 
$$\Big( \mathbb{I}(b_{\ell_1}(x) \geq t_1) \mathbb{I}(b_{\ell_2}(x) \geq t_2)  \Big)_{\ell_1,\ell_2 \in [L-1], t_1 \in [s_{\ell_1}], t_2 \in [s_{\ell_2}]}$$ 
by using the identity $bc = \mathbb{I}(b + c - 3/2)$ for $b,c \in \{0,1\}$.
Thus, the $L$-th hidden layer has $(\sum_{\ell = 1}^{L-1} s_{\ell})(\sum_{\ell = 1}^{L-1} s_{\ell}+1)/2$ neurons. Since
\begin{align*}
    f(x):=&\left(\widetilde{x}+\frac{1}{2S_{L-1}}\right)^2 \\
    = &\left(\sum_{\ell=1}^{L-1} \frac{b_\ell(x)}{S_{\ell}} 
+ \frac{1}{2 S_{L-1}}\right)^2 \\
    = &\left(\sum_{\ell=1}^{L-1} \frac{\sum_{t=1}^{s_\ell} \mathbb{I}(b_{\ell}(x) \geq t)}{S_{\ell}} 
+ \frac{1}{2 S_{L-1}}\right)^2
\end{align*}
is an affine function of $(\mathbb{I}(b_{\ell_1}(x) \geq t_1) \mathbb{I}(b_{\ell_2}(x) \geq t_2) )_{\ell_1,\ell_2 \in [L-1], t_1 \in [s_{\ell_1}], t_2 \in [s_{\ell_2}]}$, we have constructed a network $f$ satisfying 
$| x^2 - f(x)|
    = |(x-\widetilde{x}-1/(2S_{L-1}))(x+\widetilde{x}+1/(2S_{L-1}))|
    \leq 1/S_{L-1} = 1/S_L.$
\end{proof}

\subsection{Proof of Theorem \ref{thm_skip_vc_upper}} \label{app_proof_vc}
\begin{proof}[Proof of the upper bound in Theorem \ref{thm_skip_vc_upper}]
The considered class $\mathbb{I}(\DHN_{\skipp}(L,d\!:p\!:1,s))$ is a special case of the linear threshold networks in Section 6 of \cite{anthony1999neural}.
Let $W$ and $k$ denote the total number of parameters and computational nodes in $\mathbb{I}(\DHN_{\skipp}(L,d\!:p\!:1,s))$, respectively. 
Then, we have
$$W\leq (d+1)p+ (p^2+p+sd)(L-1)+(p+1), \qquad k=Lp+1.$$
By applying Theorem 6.1 of \cite{anthony1999neural}, we obtain that
\begin{align}
\operatorname{VC}\big(\DHN_{\skipp}(L,d:p:1,s)\big)&\leq2W\log\left(\frac{2k}{\ln{2}}\right).\label{a-p-bound}
\end{align}
Given $L\geq1$ and $p \geq d \vee s \vee 2$, it follows that
\begin{align}
    W&\leq(2p^2+p)(L-1)+(p+1)^2\nonumber\\
    &\leq(3p^2+3p+1)\max\{L-1,1\}\nonumber\\
    &\leq4.75p^2\max\{L-1,1\},\label{bound-forW}
\end{align}
and 
\begin{align}
    \log\left(\frac{2k}{\ln{2}}\right)
    &\leq\log\left(\frac{3}{\ln2}\right)+\log\left(Lp\right)
\leq3.12\log\left(Lp\right).\label{bound-fork}
\end{align}
Plugging \eqref{bound-forW} and \eqref{bound-fork} into \eqref{a-p-bound} yields
\begin{align*}
\operatorname{VC}\big(\DHN_{\skipp}(L,d:p:1,s)\big)\leq9.5p^2\max\{L-1,1\}\times3.12\log\left(Lp\right)
\leq30 Lp^2\log\left(Lp\right).
\end{align*}
\end{proof}

\medskip

The following lemma plays a central role in establishing both the lower bound on the VC dimension and the approximation of smooth functions (Theorem \ref{upper_smooth_skip}). We define 
$b_1(x), b_2(x), \ldots, b_\ell(x), \ldots \in \{0,1\}$
as the digits of a binary representation
\begin{align*}
    x = 0.b_1(x) b_2(x) b_3(x)  \ldots_{(2)}:= 
\sum_{\ell=1}^{\infty} 2^{-\ell} b_{\ell}(x).
\end{align*}
Further details can be found in Appendix \ref{app_proof_sq}.
In addition, for $\bx = (x_1,\ldots,x_d) \in [0,1]^d$,
we define 
\[b_\ell(\bx) := (b_\ell(x_i))_{i \in [d]}.\]

For any given positive integer $d$ and non-negative integers $m$ and $n$,
we define 
\begin{align}
    J:=2^{md}, \quad  K:=2^{md}, \quad \text{and} \ \   R:=2^{nd}.
    \label{eq.0fweni}
\end{align}
We partition $[0,1]^d$ into $\{U_j\}_{j \in [J]}$ such that $\bx$ and $\bz$ belong to the same partition element if and only if $b_\ell(\bx) = b_\ell(\bz)$ for all $\ell \in [m]$.       
Similarly, we partition $[0,1]^d$ into $\{V_k\}_{k \in [K]}$ such that $\bx$ and $\bz$ belong to the same partition element if and only if $b_\ell(\bx) = b_\ell(\bz)$ for all $\ell \in \{m+1, m+2, \ldots, 2m\}$. 
Finally, we partition $[0,1]^d$ into $\{W_r\}_{r \in [R]}$ such that $\bx$ and $\bz$ belong to the same partition element if and only if $b_\ell(\bx) = b_\ell(\bz)$ for all $\ell \in \{2m+1, 2m+2, \ldots, 2m+n\}$. 

For each $\bx \in [0,1]^d$, there exist unique $j(\bx) \in [J]$, $k(\bx) \in [K]$ and $r(\bx) \in [R]$ such that 
$$\bx \in U_{j(\bx)} \cap V_{k(\bx)} \cap W_{r(\bx)}.$$

\begin{lemma} \label{lemma_piece_bin_skip_ver3}

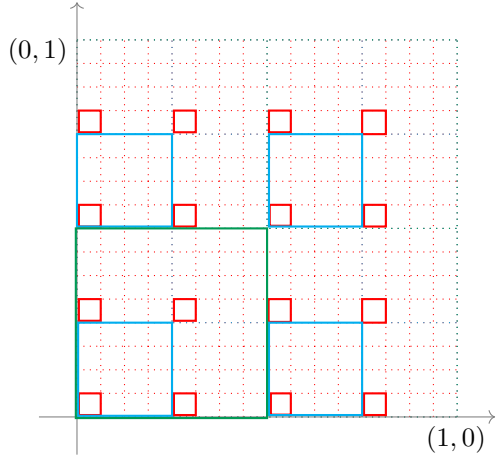
\begin{figure}[t]
    \centering
    \tikzset{every node/.style={scale=1}}
\begin{tikzpicture}[scale=5.0]
    \draw[step=0.0625cm,red,dotted] (0,0) grid (1,1); 
    \draw[step=0.25cm,cyan, dotted] (0,0) grid (1,1); 
    \draw[step=0.5cm,ForestGreen,dotted] (0,0) grid (1,1); 

    \draw[gray,->] (-0.1,0) -- (1.1,0);
    \draw[gray,->] (0,-0.1) -- (0,1.1);
    
    
    \draw[red, thick] (0.005,0.005) rectangle (0.0625,0.0625);
    \draw[red, thick] (0.255,0.005) rectangle (0.3125,0.0625);
    \draw[red, thick] (0.51,0.006) rectangle (0.5625,0.0625);
    \draw[red, thick] (0.755,0.005) rectangle (0.8125,0.0625);
    
    \draw[red, thick] (0.005,0.255) rectangle (0.0625,0.3125);
    \draw[red, thick] (0.255,0.255) rectangle (0.3125,0.3125);
    \draw[red, thick] (0.504,0.252) rectangle (0.5625,0.3125);
    \draw[red, thick] (0.75,0.25) rectangle (0.8125,0.3125);
    
    \draw[red, thick] (0.005,0.505) rectangle (0.0625,0.5625);
    \draw[red, thick] (0.255,0.505) rectangle (0.3125,0.5625);
    \draw[red, thick] (0.507,0.507) rectangle (0.5625,0.5625);
    \draw[red, thick] (0.755,0.505) rectangle (0.8125,0.5625);

    \draw[red, thick] (0.005,0.755) rectangle (0.0625,0.8125);
    \draw[red, thick] (0.255,0.755) rectangle (0.3125,0.8125);
    \draw[red, thick] (0.505,0.755) rectangle (0.5625,0.8125);
    \draw[red, thick] (0.75,0.75) rectangle (0.8125,0.8125);

    \draw[cyan, thick] (0.003,0.003) rectangle (0.25,0.25);
    \draw[cyan, thick] (0.505,0.005) rectangle (0.75,0.25);
    \draw[cyan, thick] (0,0.505) rectangle (0.25,0.75);
    \draw[cyan, thick] (0.505,0.505) rectangle (0.75,0.75);

    \draw[ForestGreen, thick] (-0.003,-0.003) rectangle (0.5,0.5);
    
    
    \node[anchor=north east] at (1.1,0) {$(1,0)$};
    \node[anchor=north east] at (0,1.03) {$(0,1)$};




\end{tikzpicture}
    \vspace{15pt}
    \caption{\textbf{Example of $U_j$, $V_k$ and $W_r$ in Lemma \ref{lemma_piece_bin_skip_ver3}.} In this case, $(d,m,n)=(2,1,2)$, and hence $(J,K,R)=(4,4,16)$. 
    The green, blue, and red solid squares represent $U_1$, $V_1$, and $W_1$, respectively. 
    For each $j \in [J]$, $k \in [K]$ and $r \in [R]$, $U_{j} \cap V_{k} \cap W_{r}$ is a hypercube with the side length $2^{-2m-n}$.
    }
    \label{fig:skip_UVW}
\end{figure}

Given arbitrary binary values $\{\eta_{j,k,r}\}_{j \in [J], k \in [K], r \in [R]} \in \{0,1\}^{JKR},$
\begin{enumerate}
    \item [(i)] for every $r \in [R]$, there exists a network 
    $$g_r \in \DHN\big(3,\ d(2m+n):(2K+1):1\big)$$ such that
    $$\mathbb{I} \Big( g_r\big(b_{1}(\bx), b_{2}(\bx), \ldots, b_{2m+n}(\bx) \big) \Big) = \eta_{j(\bx), k(\bx), r} \mathbb{I}(r(\bx)=r),$$

    \item [(ii)] and there exists a network 
    $$g \in \DHN_{\skipp}\big(R+3,\ d(2m+n):(6K+5):1, 2K+1 \big)$$ such that
    $$\mathbb{I} \Big( g\big(b_{1}(\bx), b_{2}(\bx), \ldots, b_{2m+n}(\bx) \big) \Big) = \eta_{j(\bx), k(\bx), r(\bx)}.$$
\end{enumerate}
\end{lemma}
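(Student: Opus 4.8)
The plan is to prove both parts by explicit constructions in which threshold neurons act as table-lookup gates, as in the proof of Theorem \ref{thm_sq}. Throughout, write $\bm{\beta} := (b_1(\bx),\ldots,b_{2m+n}(\bx)) \in \{0,1\}^{d(2m+n)}$ for the binary input and recall that $j(\bx)$, $k(\bx)$, $r(\bx)$ are bijective functions of the bit blocks $(b_1,\ldots,b_m)$, $(b_{m+1},\ldots,b_{2m})$, $(b_{2m+1},\ldots,b_{2m+n})$, respectively. The elementary fact driving everything is that for any fixed $j$ the map $\bm{\beta}\mapsto \mathbb{I}(j(\bx)=j)$ equals $\mathbb{I}$ of an affine function of $\bm{\beta}$ that depends only on the first block of bits (count the matching bits and threshold at $md$), and likewise $\mathbb{I}(k(\bx)=k)$ and $\mathbb{I}(r(\bx)=r)$ are single threshold units in the corresponding blocks; moreover the product of $\{0,1\}$-valued quantities $b,c$ is available via $bc=\mathbb{I}(b+c-\tfrac32)$.

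For part (i), fix $r$ and build $g_r$ with three hidden layers of width at most $2K+1$. In the first hidden layer place the one-hot codes $(\mathbb{I}(j(\bx)=j))_{j\in[J]}$, $(\mathbb{I}(k(\bx)=k))_{k\in[K]}$ and the single bit $\mathbb{I}(r(\bx)=r)$, using $J+K+1=2K+1$ neurons. In the second layer, for every $k\in[K]$ compute $v_k := \mathbb{I}\big(\mathbb{I}(k(\bx)=k)+\sum_{j\in[J]}\eta_{j,k,r}\,\mathbb{I}(j(\bx)=j)-\tfrac32\big)$; since the one-hot code of $j(\bx)$ makes $\sum_j\eta_{j,k,r}\mathbb{I}(j(\bx)=j)=\eta_{j(\bx),k,r}\in\{0,1\}$, this is the product $\mathbb{I}(k(\bx)=k)\,\eta_{j(\bx),k,r}$, hence $\sum_k v_k=\eta_{j(\bx),k(\bx),r}$; also forward $\mathbb{I}(r(\bx)=r)$, for $K+1$ neurons in total. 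In the third layer compute the single neuron $\mathbb{I}\big(\sum_k v_k+\mathbb{I}(r(\bx)=r)-\tfrac32\big)=\eta_{j(\bx),k(\bx),r}\,\mathbb{I}(r(\bx)=r)$ and set $g_r:=(\text{this neuron})-\tfrac12$, so that $\mathbb{I}(g_r)$ recovers exactly this $\{0,1\}$-value.

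For part (ii) the idea is to iterate the construction of part (i) over $r=1,\ldots,R$ and accumulate $\eta_{j(\bx),k(\bx),r(\bx)}=\sum_{r=1}^R\mathbb{I}(r(\bx)=r)\,\eta_{j(\bx),k(\bx),r}$. The crucial point is that the $d(2m+n)$ binary inputs can be re-injected into each hidden layer through the skip connections at the cost of only $2K+1$ skip-connected neurons: one skip-connected neuron suffices to threshold-detect any fixed pattern in the input bits, so the one-hot codes of $j(\bx)$ and $k(\bx)$ (that is $2K$ neurons) together with the single bit $\mathbb{I}(r(\bx)=r)$ can be recomputed at any layer using $2K+1$ skip-connected neurons. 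Organise the network into $R$ blocks, one per value of $r$, advancing one hidden layer per block via a three-stage pipeline: at the block-$r$ layer recompute through skip connections the one-hot codes of $j(\bx)$, $k(\bx)$ and the bit $\mathbb{I}(r(\bx)=r)$; at the next layer form the $K$ products $v_k^{(r)}$ as in part (i) from the one-hot codes of the previous layer (via ordinary connections) and forward $\mathbb{I}(r(\bx)=r)$; and at the layer after that update a running accumulator by $\mathrm{acc}_{\mathrm{new}}:=\mathbb{I}\big(2\,\mathrm{acc}_{\mathrm{old}}+\mathbb{I}(r(\bx)=r)+\sum_k v_k^{(r)}-\tfrac32\big)$, which leaves $\mathrm{acc}$ unchanged unless $r(\bx)=r$, in which case it is set to $\eta_{j(\bx),k(\bx),r}$. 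Since exactly one $r$ satisfies $r(\bx)=r$, after the $R$ blocks $\mathrm{acc}=\eta_{j(\bx),k(\bx),r(\bx)}$, and the output neuron $g:=\mathrm{acc}-\tfrac12$ satisfies $\mathbb{I}(g)=\eta_{j(\bx),k(\bx),r(\bx)}$. A layer running the three pipeline stages at once needs at most $2K+1$ skip-connected neurons, plus $K$ neurons for the $v_k^{(r)}$, plus $O(1)$ neurons for the forwarded bit and the accumulator, which stays within width $6K+5$; the depth is $R$ (one layer per block) plus a constant for filling and draining the pipeline and emitting the output, i.e.\ at most $R+3$. (One could alternatively invoke Remark \ref{rem.1}, but that would cost $d(2m+n)$ extra neurons per layer, which is why skip connections are used here.)

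The main obstacle is the bookkeeping in part (ii): checking that the three pipeline stages can be packed into a single sweep so that exactly one hidden layer is spent per value of $r$ (yielding depth $R+3$ rather than $3R$), while at every layer simultaneously respecting the width budget $6K+5$ and — more delicately — the skip-connection budget $2K+1$, which is precisely what forces us to carry the index information as one-hot codes recomputed from the raw bits rather than as the $d(2m+n)$ raw bits themselves.
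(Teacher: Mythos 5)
Your construction is correct and follows essentially the same route as the paper's proof: one-hot decoding of $j(\bx)$, $k(\bx)$, $r(\bx)$ by single threshold units acting on the relevant bit blocks, products of binary values via $\mathbb{I}(b+c-\tfrac32)$, and, for (ii), a pipelined stack of the $R$ sub-networks that re-injects the input through $2K+1$ skip-connected neurons per layer followed by a sequential accumulator. Your packing is slightly tighter than the paper's (you merge the two multiplication stages of part (i) into one layer and fold the final product into the accumulator update $\mathbb{I}(2\,\mathrm{acc}+\rho+\eta-\tfrac32)$, whereas the paper keeps them separate and accumulates with $\mathbb{I}(\cdot-\tfrac12)$), but this only relaxes the width and depth budgets further and does not change the argument.
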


\begin{proof}

    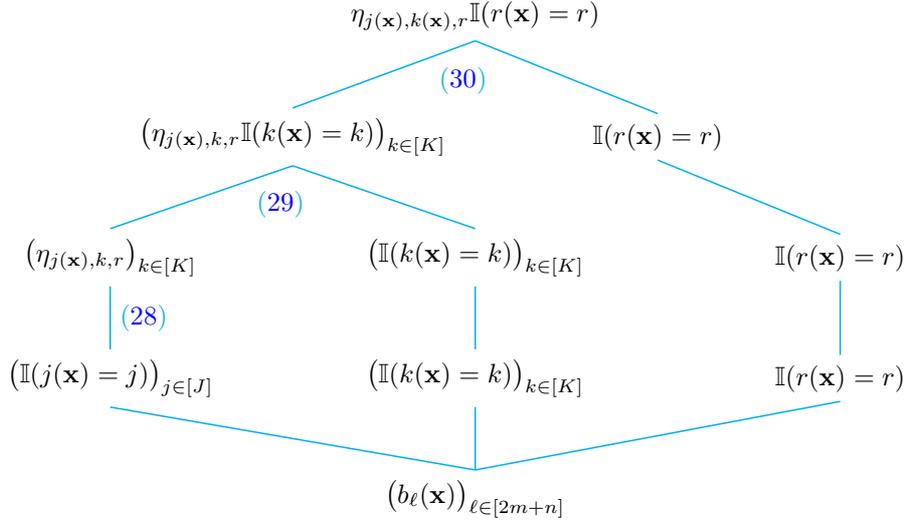
\begin{figure}[t]
    \centering
    \tikzset{every node/.style={scale=1.0}}
\begin{tikzpicture}[node distance=2cm, auto, thick, >=Stealth, scale=0.8]

\node[draw=white] (x) at (6, 0) { $\big(b_{\ell}(\bx)\big)_{\ell \in [2m+n]}$};
;

        \node[draw=white] (f11) at (0, 2) { $\big(\mathbb{I}(j(\bx)=j)\big)_{j \in [J]}$};

        \node[draw=white] (f12) at (6, 2) { $\big(\mathbb{I}(k(\bx)=k)\big)_{k \in [K]}$};

        \node[draw=white] (f13) at (12, 2) { $\mathbb{I}(r(\bx)=r)$};

        \node[draw=white] (f21) at (0, 4) { $\big(\eta_{j(\bx),k,r}\big)_{k \in [K]}$};

        \node[draw=white] (f22) at (6, 4) { $\big(\mathbb{I}(k(\bx)=k)\big)_{k \in [K]}$};

        \node[draw=white] (f23) at (12, 4) { $\mathbb{I}(r(\bx)=r)$};

        \node[draw=white] (f31) at (3, 6) { $\big(\eta_{j(\bx),k,r} \mathbb{I}(k(\bx)=k) \big)_{k \in [K]}$};

        \node[draw=white] (f32) at (9, 6) { $\mathbb{I}(r(\bx)=r)$};

        \node[draw=white] (f41) at (6, 8) { $\eta_{j(\bx),k(\bx),r} \mathbb{I}(r(\bx)=r)$};

        \draw[-, cyan, line width=0.2mm] (x.north) -- (f11.south); 
        \draw[-, cyan, line width=0.2mm] (x.north) -- (f12.south); 
        \draw[-, cyan, line width=0.2mm] (x.north) -- (f13.south); 

        \draw[-, cyan, line width=0.2mm] (f11.north) -- (f21.south) node[midway, right] {(\ref{tmp_2})};
        \draw[-, cyan, line width=0.2mm] (f12.north) -- (f22.south);
        \draw[-, cyan, line width=0.2mm] (f13.north) -- (f23.south);

        \draw[-, cyan, line width=0.2mm] (f21.north) -- (f31.south) node[near end, anchor=north west] {(\ref{tmp_3})};
        \draw[-, cyan, line width=0.2mm] (f22.north) -- (f31.south);
        \draw[-, cyan, line width=0.2mm] (f23.north) -- (f32.south);

        \draw[-, cyan, line width=0.2mm] (f31.north) -- (f41.south) node[near end, anchor=north west] {(\ref{tmp_4})};
        \draw[-, cyan, line width=0.2mm] (f32.north) -- (f41.south);

    \end{tikzpicture}
    \vspace{15pt}
    \caption{\textbf{Illustration of the proof of Lemma \ref{lemma_piece_bin_skip_ver3}-(i)} For each $r \in [R]$, we construct $g_r$ as the figure.
    }
    \label{fig:skip_make_pc_binary}
\end{figure}

\textbf{(i)}
For each $j \in [J]$, there exists $\bm{b}_j = (b_{j, i, \ell})_{i \in [d], \ell \in [m]} \in  \{0,1\}^{md}$
such that $j(\bx)=j$ if and only if  
$(b_\ell(x_i))_{i \in [d], \ell \in [m]} = \bm{b}_j$.
Hence,
\begin{align*}
\mathbb{I}(j(\bx)=j) &= \prod_{i=1}^d \prod_{\ell=1}^m \mathbb{I}\left( b_\ell(x_i) = b_{j, i, \ell} \right)\\
&= \prod_{i=1}^d \prod_{\ell=1}^m \mathbb{I}\bigg( \big(2b_\ell(x_i)-1\big)\big(2b_{j, i, \ell}-1\big) =1 \bigg)\\
&= 
\mathbb{I}\left( \sum_{i=1}^d \sum_{\ell=1}^m \big(2b_\ell(x_i)-1\big)\big(2b_{j, i, \ell}-1\big) - \left(md-\frac{1}{2}\right) \right)
\end{align*}
is a linear threshold function of $(b_\ell(\bx))_{\ell \in [m]}$.
Similarly, $(\mathbb{I}(k(\bx)=k))_{k \in [K]}$ and $\mathbb{I}(r(\bx)=r)$ can be represented as a linear threshold function of $(b_\ell(\bx))_{\ell \in \{m+1,\ldots,2m\}}$ and $(b_\ell(\bx))_{\ell \in \{2m+1,\ldots,2m+n\}}$, respectively.  
Hence, we can represent $(\mathbb{I}(j(\bx)=j))_{j \in [J]}$,
$(\mathbb{I}(k(\bx)=k))_{k \in [K]}$ and
$\mathbb{I}(r(\bx)=r)$ in the first hidden layer, which are linear threshold functions of the input.

In the second hidden layer, for each $k \in [K]$,
we represent 
\begin{align}
    \eta_{j(\bx),k,r} = \sum_{j \in [J]} \eta_{j,k,r} \mathbb{I}(j(\bx)=j)
    = \mathbb{I}\left(\sum_{j \in [J]} \eta_{j,k,r} \mathbb{I}(j(\bx)=j) - \frac{1}{2}\right). \label{tmp_2}
\end{align}
In addition, we forward $(\mathbb{I}(k(\bx)=k))_{k \in [K]}$ and
$\mathbb{I}(r(\bx)=r)$
from the previous layer using the identity $b = \mathbb{I}(b - 1/2)$ for $b \in \{0,1\}$.

In the third hidden layer, for each $k \in [K]$, 
we represent
\begin{align}
    \eta_{j(\bx),k,r} \mathbb{I}(k(\bx)=k)   
    = \mathbb{I}\left(\eta_{j(\bx),k,r} + \mathbb{I}(k(\bx)=k) - \frac{3}{2}\right). \label{tmp_3}
\end{align}
In addition, we forward 
$\mathbb{I}(r(\bx)=r)$
from the previous layer using the identity $b = \mathbb{I}(b - 1/2)$ for $b \in \{0,1\}$.
In the output hidden layer, we represent
\begin{align}
    \eta_{j(\bx),k(\bx),r} \mathbb{I}(r(\bx)=r)  
    = \mathbb{I}\left(\sum_{k \in [K]} \eta_{j(\bx),k,r} \mathbb{I}(k(\bx)=k) + \mathbb{I}(r(\bx)=r) - \frac{3}{2}\right),\label{tmp_4}
\end{align}
where we use the identities $\eta_{j(\bx),k(\bx),r} = \sum_{k \in [K]} \eta_{j(\bx),k,r}$ and $bc = \mathbb{I}(b + c - 3/2)$ for $b,c \in \{0,1\}$.
Since $J=K$,
the width of $g_r$ is bounded above by $\max(J+K+1, 2K+1, K+1) = 2K+1$, and there are no skip connected neurons in any hidden layer.

\medskip \medskip 

\noindent
\textbf{(ii)}
For each $r \in [R]$, we construct the first, second, third, and output layers of $g_r$ in the $r$-th, $(r+1)$-st, $(r+2)$-nd, and $(r+3)$-rd hidden layers of $g$, respectively.
Then, the $(r+3)$-rd hidden layer contains a neuron that outputs $\eta_{j(\bx),k(\bx),r} \mathbb{I}(r(\bx)=r)$. Moreover, the second to $R$-th hidden layer include $2K+1$ skip connected neurons each.

\begin{figure}[t]
\centering
\tikzset{every node/.style={scale=0.8}}
\begin{tikzpicture}[node distance=2cm, auto, thick, >=Stealth, scale=0.7]

\node[draw=white] (x) at (0, -0.5) { $\big(b_{\ell}(\bx)\big)_{\ell \in [2m+n]}$};

\node[draw=black, align=center, minimum height=3.5cm, minimum width=2cm] (g1) at (0, 3) {    \begin{tikzpicture}
        \matrix[column sep=0mm, row sep=3mm] {
            \node[draw=black, minimum height=0.6cm, minimum width=1.9cm] {}; \\ \\  \\
            \node[draw=black, minimum height=0.6cm, minimum width=1.9cm] {}; \\ \\ \\
            \node[draw=black, minimum height=0.6cm, minimum width=1.9cm] {}; \\  
        };
    \end{tikzpicture}};
\node[above right=0.1cm] at (g1.south east) {$(g_1)$};

\node[draw=white] (g1o) at (0, 6.2) { $\eta_{j(\bx),k(\bx),1} \mathbb{I}(r(\bx)=1)$};

\node[draw=black, align=center, minimum height=3.5cm, minimum width=2cm] (g2) at (4, 4.6) {\begin{tikzpicture}
        \matrix[column sep=0mm, row sep=3mm] {
            \node[draw=black, minimum height=0.6cm, minimum width=1.9cm] {}; \\ \\  \\
            \node[draw=black, minimum height=0.6cm, minimum width=1.9cm] {}; \\ \\ \\
            \node[draw=black, minimum height=0.6cm, minimum width=1.9cm] {}; \\  
        };
    \end{tikzpicture} };
\node[above right=0.1cm] at (g2.south east) {$(g_2)$};

\node[draw=white] (g2o) at (4, 7.8) { $\eta_{j(\bx),k(\bx),2} \mathbb{I}(r(\bx)=2)$};

\node[draw=black, align=center, minimum height=3.5cm, minimum width=2cm] (g3) at (8, 6.2) { \begin{tikzpicture}
        \matrix[column sep=0mm, row sep=3mm] {
            \node[draw=black, minimum height=0.6cm, minimum width=1.9cm] {}; \\ \\  \\
            \node[draw=black, minimum height=0.6cm, minimum width=1.9cm] {}; \\ \\ \\
            \node[draw=black, minimum height=0.6cm, minimum width=1.9cm] {}; \\  
        };
    \end{tikzpicture} };
\node[above right=0.1cm] at (g3.south east) {$(g_3)$};

\node[draw=white] (g3o) at (8, 9.4) { $\eta_{j(\bx),k(\bx),3} \mathbb{I}(r(\bx)=3)$};

\node[draw=black, align=center, minimum height=3.5cm, minimum width=2cm] (g4) at (14, 10) {\begin{tikzpicture}
        \matrix[column sep=0mm, row sep=3mm] {
            \node[draw=black, minimum height=0.6cm, minimum width=1.9cm] {}; \\ \\  \\
            \node[draw=black, minimum height=0.6cm, minimum width=1.9cm] {}; \\ \\ \\
            \node[draw=black, minimum height=0.6cm, minimum width=1.9cm] {}; \\  
        };
    \end{tikzpicture}};
\node[above right=0.1cm] at (g4.south east) {$(g_R)$};

\node[draw=white] (g4o) at (14, 13.2) { $\eta_{j(\bx),k(\bx),R} \mathbb{I}(r(\bx)=R)$};

\node[draw=white] (s1) at (-0.7, 7.8) { $\eta_{j(\bx),k(\bx),1} \mathbb{I}(r(\bx)=1)$};

\node[draw=white] (s2) at (-0.7, 9.4) { $\sum_{r=1}^2 \eta_{j(\bx),k(\bx),r} \mathbb{I}(r(\bx)=r)$};

\node[draw=white] (s3) at (-0.7, 11) { $\sum_{r=1}^3 \eta_{j(\bx),k(\bx),r} \mathbb{I}(r(\bx)=r)$};

\node[draw=white] (s4) at (-0.7, 15) { $\sum_{r=1}^R \eta_{j(\bx),k(\bx),r} \mathbb{I}(r(\bx)=r)$};

    \draw[-, cyan, line width=0.2mm] (x.north) -- (g1.south);

    \draw[-, cyan, line width=0.2mm] (g1.north) -- (g1o.south);

    \draw[-, cyan, line width=0.2mm] (g2.north) -- (g2o.south);

    \draw[-, cyan, line width=0.2mm] (g3.north) -- (g3o.south);

    \draw[-, cyan, line width=0.2mm] (g4.north) -- (g4o.south);

    \node at (11, 8) {\large $\cdots$};
    \node at (-0.7, 13) {\large $\vdots$};

    \draw[-, thick, red, line width=0.2mm] (x) to[out=0, in=270] (g2);
    \draw[-, thick, red, line width=0.2mm] (x) to[out=0, in=270] (g3);
    \draw[-, thick, red, line width=0.2mm] (x) to[out=0, in=270] (g4);

    \draw[-, cyan, line width=0.2mm] (g1o.north) -- (s1.south);
    \draw[-, cyan, line width=0.2mm] (s1.north) -- (s2.south) node[midway, right] {(\ref{tmp_5})};
    \draw[-, cyan, line width=0.2mm] (s2.north) -- (s3.south) node[midway, right] {(\ref{tmp_5})};
    \draw[-, cyan, line width=0.2mm] (g2o.north) -- (s2.south);
    \draw[-, cyan, line width=0.2mm] (g3o.north) -- (s3.south);

    \draw[-, cyan, line width=0.2mm] (-0.7,13.8) -- (s4.south) node[midway, right] {(\ref{tmp_5})};
    \draw[-, cyan, line width=0.2mm] (g4o.north) -- (s4.south);

    \end{tikzpicture}
\vspace{15pt}
\caption{\textbf{Illustration of the proof of Lemma \ref{lemma_piece_bin_skip_ver3}-(ii)} We construct $g$ as the figure. $g_1, g_2, \ldots, g_R \in \DHN(3,\ d(2m+n):(2K+1):1)$ are defined on Lemma \ref{lemma_piece_bin_skip_ver3}-(i). Each hidden layer of $g$ has at most $3(2K+1)+2$ neurons.
}
\label{fig:skip_make_pc_binary_part2}
\end{figure}
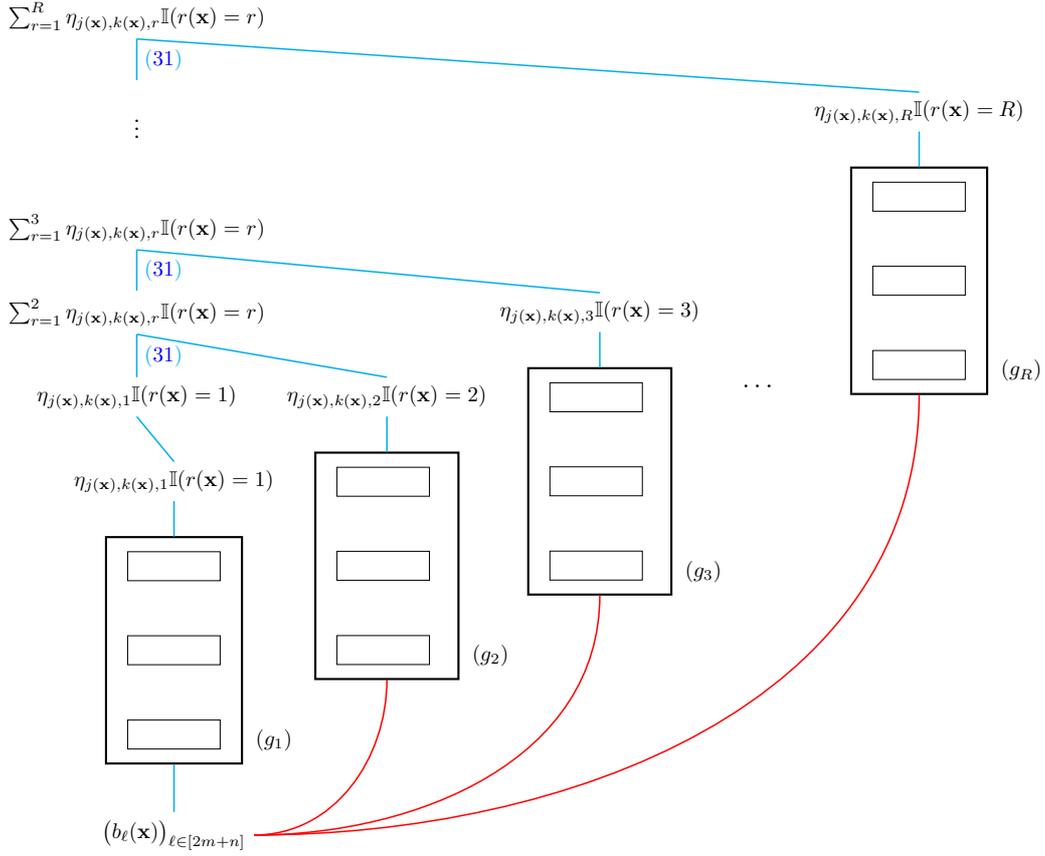

We forward $\eta_{j(\bx),k(\bx),1} \mathbb{I}(r(\bx)=1)$ from the fourth hidden layer to the fifth hidden layer using the identity $b = \mathbb{I}(b-1/2)$ for $b \in \{0,1\}$.
Assume that the $(\ell+4)$-th hidden layer with $1 \leq \ell \leq R-1$ has a neuron whose value is $\sum_{r=1}^{\ell} \eta_{j(\bx),k(\bx),r} \mathbb{I}(r(\bx)=r)$.
Then, for the $(\ell+5)$-th hidden layer, we represent
\begin{equation}
    \begin{aligned}
        &\sum_{r=1}^{\ell+1} \eta_{j(\bx),k(\bx),t} \mathbb{I}(r(\bx)=r)  \\
        &= \mathbb{I}\left( \sum_{r=1}^{\ell} \eta_{j(\bx),k(\bx),r} \mathbb{I}(r(\bx)=r) + \eta_{j(\bx),k(\bx),\ell+1} \mathbb{I}(r(\bx)=\ell+1) - \frac{1}{2} \right). \label{tmp_5}
    \end{aligned}
\end{equation}
Iterating this process, we obtain $\sum_{r=1}^{R} \eta_{j(\bx),k(\bx),r} \mathbb{I}(r(\bx)=r) = \eta_{j(\bx),k(\bx),r(\bx)}$ in the output layer.
Since each hidden layer has at most $3(2K+1) + 2 = 6K+5$ neurons, we obtain the assertion.
\end{proof}

\medskip

\begin{proof}[Proof of the lower bound in Theorem \ref{thm_skip_vc_upper}] 

It is enough to show the result for $d = 1$.
We set integers $m$ and $n$ such that $m \lesssim \log p$ and $n \lesssim \log L$, where their exact values will be specified in (\ref{tmp_41}). 
For $i \in [2^{2m+n}]$, we define 
    $$z_i := \frac{i}{2^{2m+n}} - \frac{1}{2^{2m+n+1}}.$$
    We now show that for an arbitrary function
    $$\lambda : \{z_1, z_2, \ldots, z_{2^{2m+n}} \} \mapsto \{0,1\},$$
    there exists $f \in \DHN_{\skipp}(L,1:p:1,1)$ such that $f(z_i) = \lambda(z_i)$ for every $i \in [2^{2m+n}]$. Thus, $\operatorname{VC}(\DHN_{\skipp}(L,1:p:1,s)) \geq 2^{2m+n}$.
    
    First, we construct a Heaviside network with a skip connection to the input, that extracts the first $2m+n$ bits of $x$, that is, $(b_\ell(x))_{\ell \in [2m+n]}$. 
    This is achieved by applying Lemma \ref{lemma_bit_extract} with $L := 2m+n$, $p_1 := 1$ and $\bm{s} := \bm{1}_{2m+n-1}$.
    For $\ell \in [2m+n],$ the $\ell$-th hidden layer has $\ell$ neurons with one skip connected neuron.
    The $(2m+n)$-th hidden layer outputs $(b_{\ell}(x))_{\ell \in [2m+n]}$. This network gives the construction of the network $f$ up to the $(2m+n)$-th layer.  

    We now describe the construction of the remaining $(R+4)$-hidden layers of the network $f.$ The main step is the construction of a subnetwork $g.$ As in \eqref{eq.0fweni} and the subsequent text (with $d=1$), define $J:=2^{m}$, $K:=2^{m}$, $R:=2^{n}$, along with the partitions
    $\{U_j\}_{j \in [J]}$, $\{V_k\}_{k \in [K]}$, $\{W_r\}_{r \in [R]}$  of $[0,1]$, and the index functions $j(x) \in [J]$, $k(x) \in [K]$, and $r(x) \in [R].$ By definition, there exists a bijective function 
    $\bm{\pi} = (\pi_1,\pi_2,\pi_3) : [2^{2m+n}] \mapsto [J] \times [K] \times [R]$
    such that 
    $\pi_1(i) = j(z_i)$, $\pi_2(i) = k(z_i)$ and $\pi_3(i) = r(z_i)$ for every $i \in [2^{2m+n}]$. 
    In fact, $z_{i}$ is the center of the interval $U_{\pi_1(i)} \cap V_{\pi_2(i)} \cap W_{\pi_3(i)}$.
    
    By applying Lemma \ref{lemma_piece_bin_skip_ver3}-(ii) with $\eta_{j,k,r} := \lambda(z_{\bm{\pi}^{-1}(j,k,r)})$, we obtain a DHN
    $$g \in \DHN_{\skipp}\big(R+3,\ (2m+n):(6K+5):1, 2K+1 \big)$$
    such that 
    $$\mathbb{I} \Big( g \big((b_{\ell}(x))_{\ell \in [2m+n]} \big) \Big) = \eta_{j(x),k(x),r(x)} = \lambda\big(z_{\bm{\pi}^{-1}(j(x),k(x),r(x))}\big).$$
    Replacing $x$ by $z_i$, we obtain
    \begin{align}
        \mathbb{I} \Big( g \big( (b_{\ell}(z_i))_{\ell \in [2m+n]} \big) \Big) = \lambda(z_{\bm{\pi}^{-1}(\pi_1(i),\pi_2(i),\pi_3(i))}) = \lambda(z_i), \quad \text{for any}  \ \ i \in [2^{2m+n}].\label{tmp_33}
    \end{align} 
We can apply Remark \ref{rem.1} to get rid of the skip connections by adding $2m+n$ hidden units to each hidden layer. This shows that there exists a network
    $$\widetilde g \in \DHN\big(R+3,\ (2m+n):(6K+5+2m+n):1 \big)$$
    such that \eqref{tmp_33} holds with $g$ replaced by $\widetilde g.$ By adding one layer, 
    $$\mathbb{I}(\widetilde g) \in \DHN\big(R+4,\ (2m+n):(6K+5+2m+n):1 \big).$$

    The network $f$ has been already constructed up to the $(2m+n)$-th hidden layer outputting the binary digits $(b_{\ell}(x))_{\ell \in [2m+n]}$. Stacking the network $\mathbb{I}(\widetilde g)$ on top results in the network 
    $$f \in \DHN_{\skipp}\big(2m+n+R+4, \ 1:(2m+n + 6K+5):1, \ 1\big)$$ 
    computing $f(z_i) = \lambda(z_i)$ for every $i \in [2^{2m+n}]$.
    Recall that $R=2^{n}$ and $K=2^{m}$.
    We set
    \begin{align}
        n := \bigg\lfloor \log\left(\frac{L}{2}\right)\bigg\rfloor, \qquad m := \bigg\lfloor \log\left(  \frac{p}{10} \right) \bigg\rfloor. \label{tmp_41}
    \end{align}
    Given that 
    $$2m+n \leq 2\log (Lp) \leq (L \land p)/4,$$
    we obtain 
    \begin{align*}
        2m+n+R+4 \leq \frac{L}{4} + \frac{L}{2} + 4 &\leq L,\\
        2m+n+6K+5 \leq \frac{p}{4} + \frac{3p}{5} + 5 &\leq p
    \end{align*}
    for sufficiently large $L$ and $p$ (e.g. if $\tfrac{3}{4}L+4 \leq L$ and $\tfrac{17}{20}p + 5 \leq p$, implying that the constant $c$ in the statement of the theorem can be taken to be $34$). Hence $f \in \DHN_{\skipp}(L,1:p:1,1)$.
    
    Given the inequalities
    $$n \geq \log\left(\frac{L}{2}\right)-1, \qquad m \geq \log\left(  \frac{p}{10} \right)-1,$$ 
    we obtain the lower bound
    $$\operatorname{VC}\big(\DHN_{\skipp}(L,1:p:1,1)\big) \geq 2^{2m+n} \geq \frac{1}{2^3 \cdot 2 \cdot 10^2} Lp^2.$$
\end{proof}

The above proof focuses on providing a clean rate with respect to $L$ and $p$, without attempting to minimize the constant $c$ or maximize the constant $C$.
Imposing additional constraints on $L$ and $p$ can yield significantly better constants.

\subsection{Proofs of Section \ref{sec_skip_smooth}} \label{app_proof_skip_smooth}

\begin{proof}[Proof of Proposition \ref{vc-connect-appro}]

In Definition \ref{hölder}, we introduced $\mathcal{H}_d^{\beta}(M) := \{ f : [0,1]^d \to \mathbb{R}, \| f\|_{\mathcal{C}^{\beta}} \leq M \}$ as the H\"older ball of $\beta$-smooth functions with radius $M.$ In this proof, we also define \[\mathcal{H}_{\mathbb{R}^d}^{\beta}(M)
:= \Big\{f:\mathbb{R}^d \to \mathbb{R}, \,  \ f\big|_{[0,1]^d} \in \mathcal{H}_d^{\beta}(M) \Big\}.\]

In the proof, we show that for any $d\geq1$, $\beta>0$, $\varepsilon>0$ and any function class $\mathcal{F}$ whose elements are defined on $[0,1]^d$,
\begin{equation}\label{app-accuracy}
    \sup_{f_0\in\mathcal{H}_{\mathbb{R}^d}^{\beta}(M)} \,  \inf_{f\in\mathcal{F}} \, \sup_{\bx \in [0,1]^d} |f(\bx)-f_0(\bx)| \leq\varepsilon,
\end{equation}
implies $\operatorname{VC}(\mathcal{F})\geq C_{d,\beta}(\varepsilon/M)^{-d/\beta},$
where $C_{d,\beta}>0$ is a constant only depending on $d$ and $\beta$.
The contrapositive of this statement establishes the assertion of Proposition \ref{vc-connect-appro}. 

The proof consists of two main steps. First, we construct a class of functions in $\mathcal{H}_{\mathbb{R}^d}^{\beta}(M)$ that can shatter $\geq C_{d,\beta}(\varepsilon/M)^{-d/\beta}$ points in $[0,1]^d$. Then, we show that a subset of $\mathcal{F}$ can also shatter the same points.

\medskip
\noindent {\textbf{Step 1}}: 
Let $K\geq 1$, to be specified in Step 2. We partition the interval $[0,1)$ into $K$ equal subintervals. Extending this construction to $d$ dimensions, consider ${\bm\ell}=(\ell_1,\ldots,\ell_{d})\in\{0,1,\ldots,(K-1)\}^d$, which yields a partition of $[0,1)^d$ into $K^d$ hypercubes $\{\mathcal{I}_{\bm\ell}\}_{{\bm\ell}\in\{0,1,\ldots,(K-1)\}^{d}}$, where $$\mathcal{I}_{\bm\ell}=\left[\frac{\ell_1}{K},\frac{\ell_1+1}{K}\right)\times\cdots\times\left[\frac{\ell_{d}}{K},\frac{\ell_{d}+1}{K}\right).$$

Take a smooth function $\widetilde\phi\in C^{\infty}(\mathbb{R}^d)$ such that $\widetilde\phi(\bm{0}_d)=1$ and $\widetilde\phi({\bx})=0$, for all $\|\bx\|_{2}\geq1/3.$ Such a function does exist; for instance, one can take the function $\widetilde g$ from the proof of Theorem 2.4 in \cite{lu2021deep}. 
Define  $\phi:=M\widetilde\phi/\|\widetilde\phi\|_{\mathcal{C}^{\beta}}\in\mathcal{H}_{\mathbb{R}^d}^{\beta}(M)$ and $$\Lambda:=\left\{\lambda,\;\lambda:\{0,\ldots,K-1\}^d\rightarrow\{-1,1\}\right\}.$$ 
For each ${\bm\ell},$ let $I_{{\bm\ell}}$ denote the geometric center of the hypercube $\mathcal{I}_{{\bm\ell}}$ and for any $\lambda\in\Lambda$, consider the function
$$f_{0,\lambda}(\bx):=\sum_{{\bm\ell}\in\{0,\ldots,K-1\}^d}\frac{K^{-\beta}}{2}\lambda({\bm\ell})\phi\big(K(\bx-I_{{\bm\ell}})\big).$$ 
We now verify that for all $\lambda\in\Lambda$, $f_{0,\lambda} \in\mathcal{H}_{\mathbb{R}^d}^{\beta}(M)$. First, since $\phi\in C^{\infty}(\mathbb{R}^d)$, and the support of $f_{0,\lambda}$ within each hypercube $\mathcal{I}_{{\bm\ell}}$ is contained in a ball centered at $I_{\bm\ell}$ with radius $1/(3K)$, it follows that $f_{0,\lambda}(\bx)\in C^{\infty}(\mathbb{R}^d)$, for all $\lambda \in \Lambda.$ 
Next, we check that $\|f_{0,\lambda}\|_{\mathcal{C}^{\beta}}\leq M$. Let $\beta = q+s$ for some $q \in \mathbb{N}$ and $s \in (0,1]$. For any $\bm{\alpha} \in \mathbb{N}^d$ with $\left\|\bm{\alpha}\right\|_1 \leq q$ and any $\bx\in\mathcal{I}_{{\bm\ell}}$, we have 
\begin{equation}\label{int-part-ratio}
|\partial^{\bm{\alpha}}f_{0,\lambda}(\bx)|=\frac{K^{\left\|\bm{\alpha}\right\|_1-\beta}}{2}|\partial^{\bm{\alpha}}\phi(K(\bx-I_{{\bm\ell}}))|\leq \frac{1}{2}|\partial^{\bm{\alpha}}\phi(K(\bx-I
_{{\bm\ell}}))|.
\end{equation}
Also, for any $\bm{\alpha} \in \mathbb{N}^d$ with $\left\|\bm{\alpha}\right\|_1=q$, and any $\bx_1,\bx_2\in\mathcal{I}_{{\bm\ell}}$, with $\bx_1\neq \bx_2,$
\begin{align}
\frac{|\partial^{\bm{\alpha}}f_{0,\lambda}(\bx_1)-\partial^{\bm{\alpha}}f_{0,\lambda}(\bx_2)|}{\|\bx_1-\bx_2\|_{\infty}^{s}}&=\frac{K^{q-\beta}}{2}\cdot\frac{|\partial^{\bm{\alpha}}\phi(K(\bx_1-I_{{\bm\ell}}))-\partial^{\bm{\alpha}}\phi(K(\bx_2-I_{{\bm\ell}}))|}{\|\bx_1-\bx_2\|_{\infty}^{s}}\nonumber\\
&=\frac{K^{s+q-\beta}}{2}\cdot\frac{|\partial^{\bm{\alpha}}\phi(\by_1)-\partial^{\bm{\alpha}}\phi(\by_2)|}{\|\by_1-\by_2\|_{\infty}^{s}}\nonumber\\
&=\frac{|\partial^{\bm{\alpha}}\phi(\by_1)-\partial^{\bm{\alpha}}\phi(\by_2)|}{2 \|\by_1-\by_2\|_{\infty}^{s}}\label{frac-ratio}
\end{align}
for $\by_1:=K(\bx_1-I_{{\bm\ell}}) $ and $\by_2:=K(\bx_2-I_{{\bm\ell}})$.
Thus, combining \eqref{int-part-ratio}, \eqref{frac-ratio} with the fact that $\phi \in\mathcal{H}_{\mathbb{R}^d}^{\beta}(M)$, we conclude that $f_{0,\lambda} \in\mathcal{H}_{\mathbb{R}^d}^{\beta}(M)$, for all $\lambda\in\Lambda$. Following the  construction of the function class $\{f_{0,\lambda},\;\lambda\in\Lambda\}$, we know that $\{f_{0,\lambda},\;\lambda\in\Lambda\}$ can shatter $\{I_{{\bm\ell}}\}_{{\bm\ell}\in\{0,1,\ldots,(K-1)\}^d}$, which are the centers of the $K^d$ hypercubes.

\medskip
\noindent
{\textbf{Step 2}}: Now, we focus on all centers $I_{{\bm\ell}}$, for ${\bm\ell}\in\{0,1,\ldots,(K-1)\}^d$. Given \eqref{app-accuracy}, for each $\lambda$, on the one hand, there exists a function $f_{\lambda}\in\mathcal{F}$ satisfying for all $I_{{\bm\ell}}$,
\begin{equation}\label{diff-dis}
|f_{\lambda}(I_{{\bm\ell}})-f_{0,\lambda}(I_{{\bm\ell}})|\leq\varepsilon.
\end{equation}
On the other hand, by taking $K := \lfloor(4\varepsilon\|\widetilde\phi\|_{\mathcal{C}^{\beta}}/M)^{-1/\beta}\rfloor$, we have
\begin{equation}\label{ab-dis}
    |f_{0,\lambda}(I_{{\bm\ell}})|=\frac{K^{-\beta}}{2}\cdot\frac{M}{\|\widetilde\phi\|_{\mathcal{C}^{\beta}}}\geq 2\varepsilon.
\end{equation}
Combining \eqref{diff-dis} and \eqref{ab-dis}, we see that $f_{0,\lambda}$ and $f_{\lambda}$ share the same sign at all $I_{{\bm\ell}}$. Consequently, the set $\{f_{\lambda},\;\lambda\in\Lambda\}$ in $\mathcal{F}$ also shatters the collection of centers $\{I_{{\bm\ell}}\}_{{\bm\ell}\in\{0,1,\ldots,(K-1)\}^d}$.

We set $C_{d,\beta}:=2^{-d}(4\|\widetilde\phi\|_{\mathcal{C}^{\beta}})^{-d/\beta}$.
If $\varepsilon\leq M(2^d C_{d,\beta})^{\beta/d}$, we have
$$K=\left\lfloor\left(\frac{4\varepsilon\|\widetilde\phi\|_{\mathcal{C}^{\beta}}}{M}\right)^{-1/\beta}\right\rfloor=\left\lfloor\left(\frac{\varepsilon}{M}\right)^{-1/\beta}\big(2^d C_{d,\beta}\big)^{1/d}\right\rfloor\geq1,$$
and hence
\begin{align*}
    \operatorname{VC}(\mathcal{F})\geq \operatorname{VC}\big(\{f_{\lambda},\;\lambda\in\Lambda\}\big)\geq K^d=\Big\lfloor\Big(\frac{4\varepsilon\|\widetilde\phi\|_{\mathcal{C}^{\beta}}}M\Big)^{-1/\beta}\Big\rfloor^d\geq2^{-d}\Big(\frac{4\varepsilon\|\widetilde\phi\|_{\mathcal{C}^{\beta}}}M\Big)^{-d/\beta},
\end{align*}
which implies
$$\operatorname{VC}(\mathcal{F})\geq C_{d,\beta}\left(\frac{\varepsilon}{M}\right)^{-d/\beta}.$$ 
If $\varepsilon>M(2^d C_{d,\beta})^{\beta/d}$, we immediately obtain
$$\operatorname{VC}(\mathcal{F})\geq1>C_{d,\beta}\left(\frac{\varepsilon}{M}\right)^{-d/\beta}.$$
\end{proof}

\medskip
\medskip

\begin{proof}[Proof of Theorem \ref{upper_smooth_skip}]

We introduce the integers $m$ and $n$ satisfying $m \lesssim \log p$ and $n \lesssim \log L$. Their exact values will be specified in (\ref{tmp_12}). 
In addition, we define $q = \lceil \beta \rceil(2m+n)$.

Any real number $x \in [0,1]$ can be expressed in a binary (base-2) representation as
\begin{align*}
    x=0.b_1(x)b_2(x)b_3(x)\dots_{(2)}.
\end{align*}
We approximate $x$ by the finite-bit representation
\begin{align}
\label{form1}
\widetilde{x}=0.b_1(x)b_2(x)b_3(x)\dots b_{2m+n}(x)_{(2)} + 2^{-2m-n-1},
\end{align}
where we introduce the additional bias term $2^{-2m-n-1}$ to reduce the overall approximation error between $x$ and $\widetilde{x}$.

For a vector $\bx = (x_1, \dots, x_d) \in [0,1]^d$, we define $b_{\ell}(\bx) := (b_{\ell}(x_i))_{i \in [d]}$.
In addition, we define the truncated approximation of $\bx$ by $\widetilde{\bx} = (\widetilde{x}_1, \dots, \widetilde{x}_d) \in [0,1]^d,$ where each $\widetilde{x}_i$ is defined as in \eqref{form1}. Therefore,
$$
    \|\bx - \widetilde{\bx}\|_{\infty} \leq 2^{-2m-n-1}.
$$

Our goal is to construct $f \in \DHN_{\skipp}(L,d:p:1,d)$ satisfying
\begin{align} 
    \sup_{\bx \in [0,1]^d} \Bigg|f(\bx) - \underbrace{\sum_{\bm{\alpha} \in \mathcal{A}(d,\beta)} (\partial^{\bm{\alpha}}f_0)(\widetilde{\bx}) \cdot \frac{(\bx-\widetilde{\bx})^{\bm{\alpha}}}{\bm{\alpha}!}}_{:= \widetilde{f}_0(\bx)} \Bigg| \lesssim   \left(\frac{\log^3 (Lp)}{Lp^2 }\right)^{\frac{\beta}{d}},
    \label{eq_skip_taylor}
\end{align}
with
$$\mathcal{A}(d,\beta) :=  \{ \bm{\alpha} \in \mathbb{N}^d  :\|\bm{\alpha}\|_1 < \beta \}.$$
Since Lemma \ref{lemma_taylor} with \eqref{tmp_12} implies
\begin{align*}
    \sup_{\bx \in [0,1]^d} \left| \widetilde{f}_0(\bx) -f_0(\bx) \right| 
    \leq  \left\| f_0 \right\|_{\mathcal{C}^{\beta}} \cdot \|\bx - \widetilde{\bx} \|_{\infty}^\beta \lesssim   \left(\frac{\log^3 (Lp)}{Lp^2 }\right)^{\frac{\beta}{d}},
\end{align*}
the assertion follows.
The proof proceeds via the following three steps:  
\medskip 

\noindent
\textit{Step 1}: Extracting the first $q$ bits of $\bx$ and $(\partial^{\bm{\alpha}}f_0)(\widetilde{\bx})$ for each $\bm{\alpha} \in \mathcal{A}(d,\beta)$.

\noindent
\textit{Step 2}: Approximating $\widetilde{f}_0(\bx)$.

\noindent
\textit{Step 3}: Verifying the size and error of the final network.

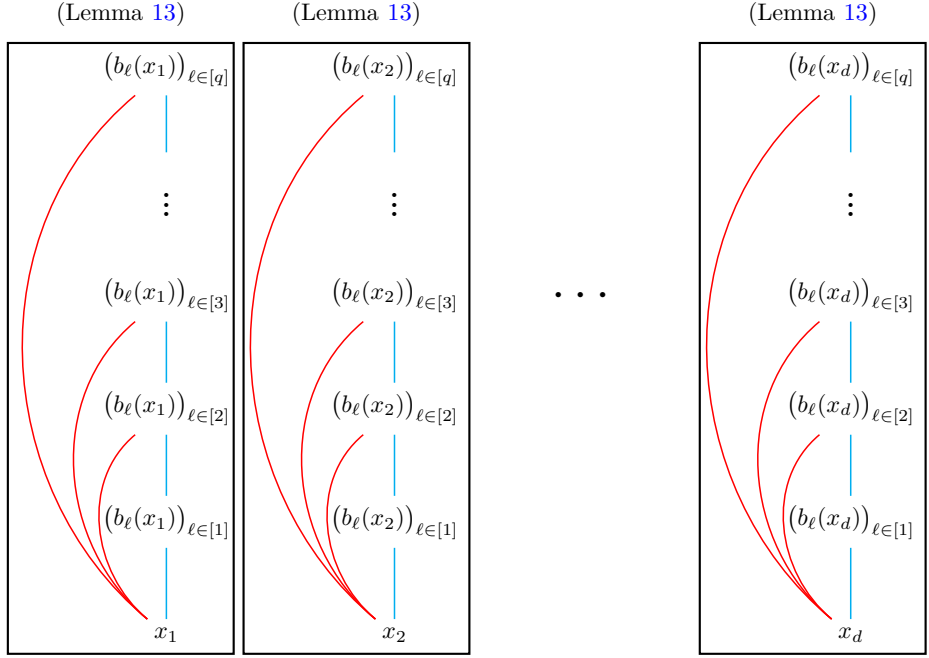
\begin{figure}[t]
    \centering
    \tikzset{every node/.style={scale=0.9}}
    \begin{tikzpicture}[node distance=2cm, auto, thick, >=Stealth]
        \node[draw=white] (x1) at (0, 0) { $x_1$};        

        \node[draw=white] (f11) at (0, 1.5) { $\big(b_{\ell}(x_1)\big)_{\ell \in [1]}$};
        
        \node[draw=white] (f21) at (0, 3) { $\big(b_{\ell}(x_1)\big)_{\ell \in [2]}$};

        \node[draw=white] (f31) at (0,4.5) { $\big(b_{\ell}(x_1)\big)_{\ell \in [3]}$};

        \node at (0, 5.8) {\huge $\vdots$};

        \node[draw=white] (f41) at (0, 7.5) { $\big(b_{\ell}(x_1)\big)_{\ell \in [q]}$};

    \draw[-, cyan, line width=0.2mm] (x1.north) -- (f11.south);
       \draw[-, cyan, line width=0.2mm] (f11.north) -- (f21.south);
       \draw[-, cyan, line width=0.2mm] (f21.north) -- (f31.south);

       \draw[-, cyan, line width=0.2mm] (0,6.4) -- (f41.south);

       \draw[-, thick, red, line width=0.2mm] (x1) to[out=140, in=220] (f21);
       \draw[-, thick, red, line width=0.2mm] (x1) to[out=140, in=220] (f31);
       \draw[-, thick, red, line width=0.2mm] (x1) to[out=140, in=220] (f41);


        \node[draw=white] (x2) at (3, 0) { $x_2$};        

        \node[draw=white] (f12) at (3, 1.5) { $\big(b_{\ell}(x_2)\big)_{\ell \in [1]}$};
        
        \node[draw=white] (f22) at (3, 3) { $\big(b_{\ell}(x_2)\big)_{\ell \in [2]}$};

        \node[draw=white] (f32) at (3,4.5) { $\big(b_{\ell}(x_2)\big)_{\ell \in [3]}$};

        \node at (3, 5.8) {\huge $\vdots$};

        \node[draw=white] (f42) at (3, 7.5) { $\big(b_{\ell}(x_2)\big)_{\ell \in [q]}$};

    \draw[-, cyan, line width=0.2mm] (x2.north) -- (f12.south);
       \draw[-, cyan, line width=0.2mm] (f12.north) -- (f22.south);
       \draw[-, cyan, line width=0.2mm] (f22.north) -- (f32.south);

       \draw[-, cyan, line width=0.2mm] (3,6.4) -- (f42.south);

       \draw[-, thick, red, line width=0.2mm] (x2) to[out=140, in=220] (f22);
       \draw[-, thick, red, line width=0.2mm] (x2) to[out=140, in=220] (f32);
       \draw[-, thick, red, line width=0.2mm] (x2) to[out=140, in=220] (f42);
       
    \node at (5.5, 4.5) {\huge $\cdots$};


        \node[draw=white] (x3) at (9, 0) { $x_d$};        

        \node[draw=white] (f13) at (9, 1.5) { $\big(b_{\ell}(x_d)\big)_{\ell \in [1]}$};
        
        \node[draw=white] (f23) at (9, 3) { $\big(b_{\ell}(x_d)\big)_{\ell \in [2]}$};

        \node[draw=white] (f33) at (9,4.5) { $\big(b_{\ell}(x_d)\big)_{\ell \in [3]}$};

        \node at (9, 5.8) {\huge $\vdots$};

        \node[draw=white] (f43) at (9, 7.5) { $\big(b_{\ell}(x_d)\big)_{\ell \in [q]}$};

    \draw[-, cyan, line width=0.2mm] (x3.north) -- (f13.south);
       \draw[-, cyan, line width=0.2mm] (f13.north) -- (f23.south);
       \draw[-, cyan, line width=0.2mm] (f23.north) -- (f33.south);

       \draw[-, cyan, line width=0.2mm] (9,6.4) -- (f43.south);

       \draw[-, thick, red, line width=0.2mm] (x3) to[out=140, in=220] (f23);
       \draw[-, thick, red, line width=0.2mm] (x3) to[out=140, in=220] (f33);
       \draw[-, thick, red, line width=0.2mm] (x3) to[out=140, in=220] (f43);

\node[draw=black, align=center, minimum height=9cm, minimum width=3.3cm] (box1) at (-0.6, 3.8) {};
\node[above=0.1cm] at (box1.north) {(Lemma \ref{lemma_bit_extract})};

\node[draw=black, align=center, minimum height=9cm, minimum width=3.3cm] (box2) at (2.5, 3.8) {};
\node[above=0.1cm] at (box2.north) {(Lemma \ref{lemma_bit_extract})};

\node[draw=black, align=center, minimum height=9cm, minimum width=3.3cm] (box3) at (8.5, 3.8) {};
\node[above=0.1cm] at (box3.north) {(Lemma \ref{lemma_bit_extract})};

    \end{tikzpicture}
    \vspace{15pt}
    \caption{
\textbf{The first part of Step 1.} Extracting the first $q$ bits of $\bx$.}
    \label{fig:skip_smooth_step1}
\end{figure}

\medskip 
\medskip 

\noindent
\textbf{Step 1}: Extracting the first $q$ bits of $\bx$ and $(\partial^{\bm{\alpha}}f_0)(\widetilde{\bx})$ for each $\bm{\alpha} \in \mathcal{A}(d,\beta)$.

\medskip

\noindent
We start by extracting the $q$ bits of $\bx$ by applying Lemma \ref{lemma_bit_extract} with
$L := q$, $p_1 := 1$ and $\bm{s} := \bm{1}_{q-1}$ componentwise. 
For any  $2 \leq \ell \leq q$, the $\ell$-th hidden layer contains $d$ skip connected neurons with access to the $d$-dimensional input. 
The $q$-th hidden layer of $f$ outputs $(b_{\ell}(\bx))_{\ell \in [q]}$.
See Fig. \ref{fig:skip_smooth_step1} for an illustration.

We now extract the $q$ bits of $(\partial^{\bm{\alpha}}f_0)(\widetilde{\bx})$ for each $\bm{\alpha} \in \{\bm{\alpha}_1, \bm{\alpha}_2, \ldots, \bm{\alpha}_{|\mathcal{A}(d,\beta)|} \} := \mathcal{A}(d,\beta)$.
As in \eqref{eq.0fweni} and the subsequent text,
define $J:=2^{md}$, $K:=2^{md}$, $R:=2^{nd}$, along with the partitions
$\{U_j\}_{j \in [J]}$, $\{V_k\}_{k \in [K]}$, and $\{W_r\}_{r \in [R]}$  of $[0,1]^d$, and the index functions $j(\bx) \in [J]$, $k(\bx) \in [K]$, and $r(\bx) \in [R]$.
We write $\bm{c}_{j,k,r}$ the center of the hypercube $U_{j} \cap V_{k} \cap W_{r}$. Accordingly, the center of the hypercube including $\bx$ is
\begin{align*}
    \widetilde{\bx} = \bm{c}_{j(\bx),k(\bx),r(\bx)}.
\end{align*}

\begin{figure}[t]
    \centering
    \tikzset{every node/.style={scale=0.9}}
    \begin{tikzpicture}[node distance=2cm, auto, thick, >=Stealth]
        \node[draw=white] (x1) at (0, 0) { $\big(b_{\ell}(\bx)\big)_{\ell \in [q]}$};        

        \node[draw=white] (x2) at (0, 1.5) { $\big(b_{\ell}(\bx)\big)_{\ell \in [q]}$};
        
        \node[draw=white] (x3) at (0, 3) { $\big(b_{\ell}(\bx)\big)_{\ell \in [q]}$};

        \node[draw=white] (x4) at (0,4.5) { $\big(b_{\ell}(\bx)\big)_{\ell \in [q]}$};

        \node[draw=white] (x5) at (0, 7.5) { $\big(b_{\ell}(\bx)\big)_{\ell \in [q]}$};
        \node[draw=white] (x6) at (0, 9) { $\big(b_{\ell}(\bx)\big)_{\ell \in [q]}$};
        \node[draw=white] (x7) at (0, 10.5) { $\big(b_{\ell}(\bx)\big)_{\ell \in [q]}$};

    \draw[-, cyan, line width=0.2mm] (x1.north) -- (x2.south);
       \draw[-, cyan, line width=0.2mm] (x2.north) -- (x3.south);
       \draw[-, cyan, line width=0.2mm] (x3.north) -- (x4.south);
       \draw[-, cyan, line width=0.2mm] (x5.north) -- (x6.south);
       \draw[-, cyan, line width=0.2mm] (x6.north) -- (x7.south);



        \node at (0, 6.2) {\huge $\vdots$};
        \node at (2.5, 6.2) {\huge $\vdots$};
        \node at (5.5, 6.2) {\huge $\vdots$};
        \node at (10.5, 6.2) {\huge $\vdots$};

       \node[draw=black, line width=0.1mm, minimum height=9.5cm, minimum width=2cm] (f1) at (2.5, 5.3) { };
       \node[below, align=center] at (f1.south) {$g^{\bm{\alpha}_1, 1}$ \\
       (Lemma \ref{lemma_piece_bin_skip_ver3})};

       \node[draw=black, line width=0.1mm, draw=black, minimum height=0.6cm, minimum width=1.5cm] (f11) at (2.5, 1.5) { };
       \node[draw=black, line width=0.1mm, draw=black, minimum height=0.6cm, minimum width=1.5cm] (f12) at (2.5, 3) { };
       \node[draw=black, line width=0.1mm, draw=black, minimum height=0.6cm, minimum width=1.5cm] (f13) at (2.5, 4.5) { };
       \node[draw=black, line width=0.1mm, draw=black, minimum height=0.6cm, minimum width=1.5cm] (f14) at (2.5, 7.5) { };
       \node[draw=black, line width=0.1mm, draw=black, minimum height=0.6cm, minimum width=1.5cm] (f15) at (2.5, 9) { };
        \node[draw=white] (f16) at (2.5, 10.5) {$\eta^{(1)}_{\bm{\alpha}_1,j(\bx),k(\bx),r(\bx)}$ };

        \draw[-, cyan, line width=0.2mm] (f11.north) -- (f12.south);
       \draw[-, cyan, line width=0.2mm] (f12.north) -- (f13.south);
       \draw[-, cyan, line width=0.2mm] (f14.north) -- (f15.south);
       \draw[-, cyan, line width=0.2mm] (f15.north) -- (f16.south);

       \node[draw=black, line width=0.1mm, minimum height=9.5cm, minimum width=2cm] (f2) at (5.5, 5.3) { };
        \node[below, align=center] at (f2.south) {$g^{\bm{\alpha}_1, 2}$ \\
       (Lemma \ref{lemma_piece_bin_skip_ver3})};

        \node[draw=black, line width=0.1mm, draw=black, minimum height=0.6cm, minimum width=1.5cm] (f21) at (5.5, 1.5) { };
       \node[draw=black, line width=0.1mm, draw=black, minimum height=0.6cm, minimum width=1.5cm] (f22) at (5.5, 3) { };
       \node[draw=black, line width=0.1mm, draw=black, minimum height=0.6cm, minimum width=1.5cm] (f23) at (5.5, 4.5) { };
       \node[draw=black, line width=0.1mm, draw=black, minimum height=0.6cm, minimum width=1.5cm] (f24) at (5.5, 7.5) { };
       \node[draw=black, line width=0.1mm, draw=black, minimum height=0.6cm, minimum width=1.5cm] (f25) at (5.5, 9) { };
        \node[draw=white] (f26) at (5.5, 10.5) {$\eta^{(2)}_{\bm{\alpha}_1,j(\bx),k(\bx),r(\bx)}$ };

        \draw[-, cyan, line width=0.2mm] (f21.north) -- (f22.south);
       \draw[-, cyan, line width=0.2mm] (f22.north) -- (f23.south);
       \draw[-, cyan, line width=0.2mm] (f24.north) -- (f25.south);
       \draw[-, cyan, line width=0.2mm] (f25.north) -- (f26.south);

       \node[draw=black, line width=0.1mm, minimum height=9.5cm, minimum width=2cm] (f3) at (10.5, 5.3) { };
       \node[below, align=center] at (f3.south) 
       {$g^{\bm{\alpha}_{|\mathcal{A}(d,\beta)|}, q}$ \\
       (Lemma \ref{lemma_piece_bin_skip_ver3})};

        \node[draw=black, line width=0.1mm, draw=black, minimum height=0.6cm, minimum width=1.5cm] (f31) at (10.5, 1.5) { };
       \node[draw=black, line width=0.1mm, draw=black, minimum height=0.6cm, minimum width=1.5cm] (f32) at (10.5, 3) { };
       \node[draw=black, line width=0.1mm, draw=black, minimum height=0.6cm, minimum width=1.5cm] (f33) at (10.5, 4.5) { };
       \node[draw=black, line width=0.1mm, draw=black, minimum height=0.6cm, minimum width=1.5cm] (f34) at (10.5, 7.5) { };
       \node[draw=black, line width=0.1mm, draw=black, minimum height=0.6cm, minimum width=1.5cm] (f35) at (10.5, 9) { };
        \node[draw=white] (f36) at (10.5, 10.5) {$\eta^{(q)}_{\bm{\alpha}_{|\mathcal{A}(d,\beta)|},j(\bx),k(\bx),r(\bx)}$ };

        \draw[-, cyan, line width=0.2mm] (f31.north) -- (f32.south);
       \draw[-, cyan, line width=0.2mm] (f32.north) -- (f33.south);
       \draw[-, cyan, line width=0.2mm] (f34.north) -- (f35.south);
       \draw[-, cyan, line width=0.2mm] (f35.north) -- (f36.south);

    \node at (8, 5.5) {\huge $\cdots$};


\draw[-, cyan, line width=0.2mm] (x1.north) -- (f11.south);
\draw[-, cyan, line width=0.2mm] (x1.north) -- (f21.south);
\draw[-, cyan, line width=0.2mm] (x1.north) -- (f31.south);

\draw[-, cyan, line width=0.2mm] (x2.north) -- (f12.south);
\draw[-, cyan, line width=0.2mm] (x2.north) -- (f22.south);
\draw[-, cyan, line width=0.2mm] (x2.north) -- (f32.south);

\draw[-, cyan, line width=0.2mm] (x3.north) -- (f13.south);
\draw[-, cyan, line width=0.2mm] (x3.north) -- (f23.south);
\draw[-, cyan, line width=0.2mm] (x3.north) -- (f33.south);


\draw[-, cyan, line width=0.2mm] (x5.north) -- (f15.south);
\draw[-, cyan, line width=0.2mm] (x5.north) -- (f25.south);
\draw[-, cyan, line width=0.2mm] (x5.north) -- (f35.south);


    \end{tikzpicture}
    \vspace{15pt}
    \caption{\textbf{The second part of Step 1} Extracting the first $q$ bits of  $(\partial^{\bm{\alpha}}f_0)(\widetilde{\bx})$ for each $\bm{\alpha} \in \mathcal{A}(d,\beta) = \{\bm{\alpha}_1, \bm{\alpha}_2, \ldots, \bm{\alpha}_{|\mathcal{A}(d,\beta)|} \}$. 
    } \label{fig_skip_smooth_step2}
\end{figure}
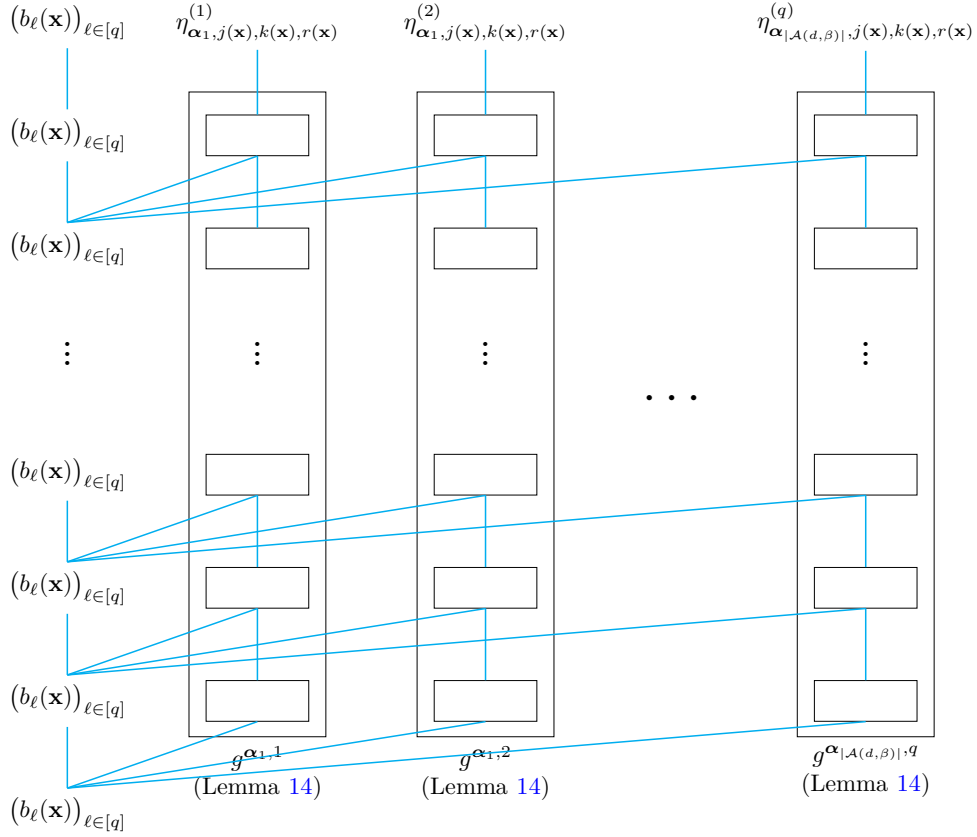

For each $\bm{\alpha} \in \mathcal{A}(d,\beta)$, 
$j \in [J]$, $k \in [K]$ and $r \in [R]$, 
we have
$$\frac{(\partial^{\bm{\alpha}}f_0)(\bm{c}_{j,k,r}) + \|\partial^{\bm{\alpha}}f_0\|_{\infty}}{2\|\partial^{\bm{\alpha}}f_0\|_{\infty}} \in [0,1].$$
Consequently, for suitable binary sequences $\{\eta^{(\ell)}_{\bm{\alpha},j,k,r}\}_{\ell \in \{1,2,\ldots\}} \in \{0,1\}^{\mathbb{N}},$
$$\frac{(\partial^{\bm{\alpha}}f_0)(\bm{c}_{j,k,r}) + \|\partial^{\bm{\alpha}}f_0\|_{\infty}}{2\|\partial^{\bm{\alpha}}f_0\|_{\infty}}
        = 0.\eta^{(1)}_{\bm{\alpha},j,k,r}\eta^{(2)}_{\bm{\alpha},j,k,r}\ldots \eta^{(q)}_{\bm{\alpha},j,k,r} \ldots_{(2)}.$$

For each $\bm{\alpha} \in \mathcal{A}(d,\beta)$ and $\ell \in [q]$,
we apply Lemma \ref{lemma_piece_bin_skip_ver3}-(ii) with $\eta_{j,k,r} := \eta^{(\ell)}_{\bm{\alpha},j,k,r}$ and obtain a DHN
$$g^{(\bm{\alpha}, \ell)} \in \DHN_{\skipp}\big(R+3,\ d(2m+n):(6K+5):1, 2K+1 \big)$$
such that
$$\mathbb{I} \Big( g^{(\bm{\alpha}, \ell)} \big((b_{\ell}(\bx))_{\ell \in [2m+n]} \big) \Big) = \eta^{(\ell)}_{\bm{\alpha},j(\bx),k(\bx),r(\bx)}.$$
Recall that the $q$-th hidden layer of $f$ outputs $(b_{\ell}(\bx))_{\ell \in [q]}$ with $q \geq 2m+n$. 
To save the $q$ bits of $\bx$ up to the $(q + R + 4)$-th hidden layer of $f$, we use the identity $b = \mathbb{I}(b - 1/2)$ for $b \in \{0,1\}$, resulting in $qd$ neurons per layer.
In addition, we construct $g^{(\bm{\alpha}, \ell)}((b_{\ell}(\bx))_{\ell \in [2m+n]})$ for each $\bm{\alpha} \in \mathcal{A}(d,\beta)$ and $\ell \in [q]$,  where the first hidden layer of $g^{(\bm{\alpha}, \ell)}$ is placed at the $(q+1)$-st hidden layer of $f$, with the output of $g^{(\bm{\alpha}, \ell)}$ at the $(q + R + 4)$-th hidden layer of $f$.
Then, the overall output of the $(q + R + 4)$-th hidden layer of $f$ is $(b_{\ell}(\bx))_{\ell \in [q]}$ and $(\eta^{(\ell)}_{\bm{\alpha},j(\bx),k(\bx),r(\bx)})_{\bm{\alpha} \in \mathcal{A}(d,\beta), \ell \in [q]}$.
Given that our network saves the values of $(b_{\ell}(\bx))_{\ell \in [2m+n]}$ until the $(q + R + 2)$-nd hidden layer, no skip connections in the construction of the networks $g^{(\bm{\alpha}, \ell)}$ are needed (see Remark \ref{rem.1}). 
Hence, each of the $(q+1)$-st to the $(q + R + 4)$-th hidden layers has
\begin{align}
    \leq dq + (6K+5)|\mathcal{A}(d,\beta)|q \label{tmp_26}
\end{align}
neurons without any skip connections. See Fig. \ref{fig_skip_smooth_step2} for an illustration. 
\medskip
\medskip

\noindent
\textbf{Step 2}: Approximating $\widetilde{f}_0(\bx)$.

\medskip

\noindent
Given the outputs $(b_{\ell}(\bx))_{\ell \in [q]}$ and
$(\eta^{(\ell)}_{\bm{\alpha},j(\bx),k(\bx),r(\bx)})_{\bm{\alpha} \in \mathcal{A}(d,\beta), \ell \in [q]}$ of the previous step in the $(q+R+4)$-th hidden layer, we now construct the network for the approximation of (\ref{eq_skip_taylor}).

For each $i \in [d]$, recall that 
\begin{align*}
    x_i - \widetilde{x}_i = \sum_{\ell=2m+n+1}^{\infty} \frac{b_{\ell}(x_i)}{2^{\ell}} - \frac{1}{2^{2m+n+1}}.  
\end{align*}
We define 
$$\operatorname{app}(x_i - \widetilde{x}_i):= 
\sum_{\ell=2m+n+1}^{q} \frac{b_{\ell}(x_i)}{2^{\ell}} + \frac{1}{2^{q+1}} - \frac{1}{2^{2m+n+1}},$$
satisfying
\begin{equation} \label{tmp_22} 
    \begin{aligned}
         \Big| \operatorname{app}(x_i - \widetilde{x}_i) - (x_i - \widetilde{x}_i) \Big| &\leq \frac{1}{2^{q+1}}, \\
         \Big| \operatorname{app}(x_i - \widetilde{x}_i) \Big| &\leq \frac{1}{2^{2m+n+1}}.
    \end{aligned}
\end{equation}
For each $\bm{\alpha} \in \mathcal{A}(d,\beta)$, recall that
$$(\partial^{\bm{\alpha}}f_0)(\widetilde{\bx}) := 2\|\partial^{\bm{\alpha}}f_0\|_{\infty} \Bigg(\sum_{\ell=1}^{\infty} \frac{\eta^{(\ell)}_{\bm{\alpha},j(\bx),k(\bx), r(\bx)}}{2^{\ell}} - \frac{1}{2} \Bigg).$$
We define
$$\operatorname{app}\big((\partial^{\bm{\alpha}}f_0)(\widetilde{\bx})\big) := 2\|\partial^{\bm{\alpha}}f_0\|_{\infty} \Bigg(\sum_{\ell=1}^{q} \frac{\eta^{(\ell)}_{\bm{\alpha},j(\bx),k(\bx), r(\bx)}}{2^{\ell}} + \frac{1}{2^{q+1}} - \frac{1}{2} \Bigg),$$
satisfying
\begin{equation} \label{tmp_23}
    \begin{aligned}
            \Big| \operatorname{app}\big((\partial^{\bm{\alpha}}f_0)(\widetilde{\bx})\big) - \big((\partial^{\bm{\alpha}}f_0)(\widetilde{\bx})\big) \Big| &\leq \frac{\|\partial^{\bm{\alpha}}f_0\|_{\infty}}{ 2^{q+1}},\\
            \Big| \operatorname{app}\big((\partial^{\bm{\alpha}}f_0)(\widetilde{\bx})\big) \Big| &\leq \|\partial^{\bm{\alpha}}f_0\|_{\infty}.
    \end{aligned}
\end{equation}

\begin{figure}[t]
    \centering
    \fbox{
        \begin{minipage}{0.9\linewidth}
            \centering
            \tikzset{every node/.style={scale=1.0}}
\begin{tikzpicture}[node distance=2cm, auto, thick, >=Stealth, scale=0.8]

        \node[draw=white] (x1) at (3, 0) { $\big(b_{\ell}(\bx)\big)_{\ell \in [q]}$};

        \node[draw=white] (x2) at (11, 0) { $\big( \eta^{(\ell)}_{\bm{\alpha},j(\bx),k(\bx),r(\bx)}\big)_{\bm{\alpha} \in \mathcal{A}(d,\beta), \ell \in [q]}$};
        
        \node[draw=white] (f11) at (7, 2) { $\left( \eta^{(\ell_0)}_{\bm{\alpha},j(\bx),k(\bx), r(\bx)} \cdot \prod_{\kappa=1}^{\|\bm{\alpha}\|_1} 
        b_{\ell_\kappa}(x_{\pi_{\bm{\alpha}}(\kappa)}) \right)_{\bm{\alpha} \in \mathcal{A}(d,\beta),\ 
        \ell_0, \ell_1, \ldots, \ell_{\|\bm{\alpha}\|_1} \in \{0,1,\ldots,q\}}$};

        \node[draw=white] (f21) at (7, 4) { $f(\bx) := \sum_{\bm{\alpha} \in \mathcal{A}(d,\beta)} 
        \frac{1}{\bm{\alpha}!} \cdot
        \operatorname{app}((\partial^{\bm{\alpha}}f_0)(\widetilde{\bx})) \cdot \prod_{\kappa=1}^{\|\bm{\alpha}\|_1} \operatorname{app}\left(
        x_{\pi_{\bm{\alpha}}(\kappa)} - \widetilde{x}_{\pi_{\bm{\alpha}}(\kappa)}\right)$};

        \draw[-, cyan, line width=0.2mm] (x1.north) -- (f11.south) node[below] {(\ref{tmp_29})};
        \draw[-, cyan, line width=0.2mm] (x2.north) -- (f11.south);
        \draw[-, cyan, line width=0.2mm] (f11.north) -- (f21.south);

    \end{tikzpicture}
        \end{minipage}
    }
    \vspace{15pt}
    \caption{\textbf{Step 2 of the proof of Theorem \ref{upper_smooth_skip}.}}
    \label{fig:skip_Taylor}
\end{figure}

For each $\bm{\alpha}:=(\alpha_1, \dots, \alpha_d) \in (\mathcal{A}(d,\beta) \setminus \{\bm{0}_d\})$, we have
\begin{align*}
    \bx^{\bm{\alpha}} = x_1^{\alpha_1} \cdots x_d^{\alpha_d} = \underbrace{x_1 \cdots x_1}_{\alpha_1}
    \underbrace{x_2 \cdots x_2}_{\alpha_2}
    \cdots \underbrace{x_d \cdots x_d}_{\alpha_d}.
\end{align*}
We choose $\pi_{\bm{\alpha}}: \{1,\ldots,\|\bm{\alpha}\|_1\} \to \{1, \dots, d\}$, such that $\pi_{\bm{\alpha}}(\kappa)$ extracts the index of $\bx^{\bm{\alpha}}$ at position $\kappa$, that is,
$$\bx^{\bm{\alpha}} = x_{\pi_{\bm{\alpha}}(1)} x_{\pi_{\bm{\alpha}}(2)} \ldots x_{\pi_{\bm{\alpha}}(\|\alpha\|_1)}.$$

We define $b_{0}(x_{i}) := 1$ for every $i \in [d]$ and $\eta^{(0)}_{\bm{\alpha},j(\bx),k(\bx), r(\bx)} := 1$ for every $\bm{\alpha} \in \mathcal{A}(d,\beta)$.
In the $(q + R + 5)$-th hidden layer (the last hidden layer), we represent
\begin{align}
    \Bigg( \eta^{(\ell_0)}_{\bm{\alpha},j(\bx),k(\bx), r(\bx)} \cdot \prod_{\kappa=1}^{\|\bm{\alpha}\|_1} 
    b_{\ell_\kappa}\left(x_{\pi_{\bm{\alpha}}(\kappa)}\right) \Bigg)_{\bm{\alpha} \in \mathcal{A}(d,\beta), \ \ell_0, \ell_1, \ldots, \ell_{\|\bm{\alpha}\|_1} \in \{0,1,\ldots,q\}}, \label{tmp_29}
\end{align}
using the identity
$\eta \prod_{\kappa=1}^{\|\bm{\alpha}\|_1} b_\kappa = \mathbb{I}(\eta + \sum_{\kappa=1}^{\|\bm{\alpha}\|_1} b_\kappa - \|\bm{\alpha}\|_1 - 1/2)$ for $\eta, b_1, \ldots,b_{\|\bm{\alpha}\|_1} \in \{0,1\}$. 
Then, the number of neurons needed for the last hidden layer is bounded above by
\begin{align}
    \sum_{\bm{\alpha} \in \mathcal{A}(d,\beta)} (q+1)^{\|\bm{\alpha}\|_1 + 1} \leq  |\mathcal{A}(d,\beta)| \cdot (q+1)^{\lceil \beta \rceil}. \label{tmp_27}
\end{align}

The output of the network is then given by
\begin{align}
    f(\bx) &:= \sum_{\bm{\alpha} \in \mathcal{A}(d,\beta)} \operatorname{app}((\partial^{\bm{\alpha}}f_0)(\widetilde{\bx})) \cdot \frac{\prod_{\kappa=1}^{\|\bm{\alpha}\|_1} \operatorname{app}\left(
        x_{\pi_{\bm{\alpha}}(\kappa)} - \widetilde{x}_{\pi_{\bm{\alpha}}(\kappa)}\right)}{\bm{\alpha}!} \label{tmp_99} \\
        &= \sum_{\bm{\alpha} \in \mathcal{A}(d,\beta)} \Bigg[ 2\|\partial^{\bm{\alpha}}f_0\|_{\infty} \Bigg(\sum_{\ell=1}^{q} \frac{\eta^{(\ell)}_{\bm{\alpha},j(\bx),k(\bx), r(\bx)}}{2^{\ell}} + \frac{1}{2^{q+1}} - \frac{1}{2} \Bigg) \nonumber\\
        & \qquad \qquad \qquad \cdot \frac{1}{\bm{\alpha}!} \prod_{\kappa=1}^{\|\bm{\alpha}\|_1} \Bigg( \sum_{\ell=2m+n+1}^{q} \frac{b_{\ell}(x_{\pi_{\bm{\alpha}}(\kappa)})}{2^{\ell}} + \frac{1}{2^{q+1}} - \frac{1}{2^{2m+n+1}} \Bigg) \Bigg],\nonumber
\end{align}
which is an affine function of the last hidden layer.   
See Fig. \ref{fig:skip_Taylor} for an illustration. 
\medskip \medskip

\noindent
\textbf{Step 3}: Verifying the size and error of the final network.

\medskip

\noindent
As the last step of our proof, we assign specific values to $n$ and $m$ and evaluate the size and precision of the network $f$.

We set
\begin{align}
    n := \left\lfloor \frac{1}{d} \log\left(\frac{L}{2}\right)\right\rfloor, \qquad m := \left\lfloor \frac{1}{d} \log\left(  \frac{p}{\log^{\frac{3}{2}}(Lp)} \right) \right\rfloor. \label{tmp_12}
\end{align}
Consequently, $R=2^{nd}\leq L/2$, $K=2^{md}\leq p/\log^{\frac{3}{2}}(Lp),$ and $q = \lceil \beta \rceil(2m+n) \leq (2\beta+2)\log(Lp)/d$. Given that by assumption $L \geq \log^2 (Lp)\geq \log(Lp)\log(L)$ and $L\geq c$, for a sufficiently large constant $c$ depending on $\beta$ and $d,$ the number of hidden layers of the network $f$ is bounded above by
\begin{align*}
    q+R+5 
    \leq \frac{(2\beta+2) \log(Lp)}{d} + \frac{L}{2} + 5 
    \leq L \, \frac{2\beta+2}{d \log(L)} + \frac{L}{2} + 5
    \leq L.
\end{align*}
According to (\ref{tmp_26}) and (\ref{tmp_27}), the maximal number of neurons in each hidden layer of $f$ is bounded by 
\begin{align}
     &\Big(dq + (6K+5)|\mathcal{A}(d,\beta)|q\Big) \vee \Big( |\mathcal{A}(d,\beta)| (q+1)^{\lceil \beta \rceil} \Big) \nonumber \\
     &\leq \Big(|\mathcal{A}(d,\beta)| (q + (6K+5)q) \Big) \vee \Big( |\mathcal{A}(d,\beta)| (q+1)^{\lceil \beta \rceil} \Big) \nonumber \\
     & \leq |\mathcal{A}(d,\beta)| \Big( 6q(K + 1) \vee (q+1)^{\lceil \beta \rceil}  \Big). \label{tmp_6}
\end{align}
Given that by assumption $p \geq \log^{\beta+2} (Lp)$ and $p\geq c$, for a sufficiently large constant $c$ depending on $\beta$ and $d,$ we have
\begin{align*}
    |\mathcal{A}(d,\beta)| \cdot 6q(K + 1) 
    &\leq \frac{12 |\mathcal{A}(d,\beta)|  (\beta+1) \log(Lp)}{d} \left( \frac{p}{\log^{\frac{3}{2}}(Lp) } + 1 \right) \\
    & \leq \frac{12 |\mathcal{A}(d,\beta)| (\beta+1)}{d} \left( \frac{p}{\log^{\frac{1}{2}}(p) } + \frac{p}{\log^{\beta+1} (p)} \right) \\
     &\leq p
\end{align*}
and
\begin{align*}
    |\mathcal{A}(d,\beta)| \cdot \big(q+1\big)^{\lceil \beta \rceil} 
    &\leq |\mathcal{A}(d,\beta)| \left( \frac{(2\beta+2)\log(Lp)}{d} + 1 \right)^{\beta+1}\\
    &\leq |\mathcal{A}(d,\beta)| \left(\frac{2\beta+3}{d}\right)^{\beta+1}\log^{\beta+1}(Lp) \\
    &\leq |\mathcal{A}(d,\beta)| \left(\frac{2\beta+3}{d}\right)^{\beta+1} \frac{p}{\log(p)} \\
    &\leq p.
\end{align*}
Hence, (\ref{tmp_6}) is bounded above by $p$.
Given that each hidden layer has at most $d$ skip connected neurons, we get
$f \in \DHN_{\skipp}(L,d:p:1,d)$.

\medskip \medskip

Next we bound the approximation error of the network $f$ defined in \eqref{tmp_99} for approximating $f_0$.
For each $\bm{\alpha} \in \mathcal{A}(d,\beta)$, recall that $(\bx-\widetilde{\bx})^{\bm{\alpha}} = \prod_{\kappa=1}^{\|\bm{\alpha}\|_1} 
        (x_{\pi_{\bm{\alpha}}(\kappa)} - \widetilde{x}_{\pi_{\bm{\alpha}}(\kappa)})$.
By (\ref{tmp_22}), (\ref{tmp_23}) and Lemma \ref{lemma_prod_diff}, we have 
\begin{align*}
        &\left|\operatorname{app}\big((\partial^{\bm{\alpha}}f_0)(\widetilde{\bx})\big) \cdot \prod_{\kappa=1}^{\|\bm{\alpha}\|_1} 
        \operatorname{app}\left(x_{\pi_{\bm{\alpha}}(\kappa)} - \widetilde{x}_{\pi_{\bm{\alpha}}(\kappa)}\right)
        - (\partial^{\bm{\alpha}}f_0)(\widetilde{\bx}) \cdot (\bx-\widetilde{\bx})^{\bm{\alpha}}\right| \\
        &= \left|\operatorname{app}\big((\partial^{\bm{\alpha}}f_0)(\widetilde{\bx})\big) \cdot \prod_{\kappa=1}^{\|\bm{\alpha}\|_1} 
        \operatorname{app}\left(x_{\pi_{\bm{\alpha}}(\kappa)} - \widetilde{x}_{\pi_{\bm{\alpha}}(\kappa)}\right)
        - (\partial^{\bm{\alpha}}f_0)(\widetilde{\bx}) \cdot \prod_{\kappa=1}^{\|\bm{\alpha}\|_1} 
        \left(x_{\pi_{\bm{\alpha}}(\kappa)} - \widetilde{x}_{\pi_{\bm{\alpha}}(\kappa)}\right) \right| \\
        &\leq \|\partial^{\bm{\alpha}} f_0\|_{\infty}
        \left( \frac{1}{2^{2m+n+1}} \right)^{\|\bm{\alpha}\|_1} 
        \left( \frac{1}{2^{q+1}} + \frac{\|\bm{\alpha}\|_1}{2^{q-2m-n}} \right)\\
        &\leq \frac{\|\partial^{\bm{\alpha}} f_0\|_{\infty}}{2^q}. 
\end{align*}
Hence, we get
\begin{align*}
    &\left|f(\bx) - \sum_{\bm{\alpha} \in \mathcal{A}(d,\beta)} (\partial^{\bm{\alpha}}f_0)(\widetilde{\bx}) \cdot \frac{(\bx-\widetilde{\bx})^{\bm{\alpha}}}{\bm{\alpha}!} \right| \\
    &\leq \sum_{\bm{\alpha} \in \mathcal{A}(d,\beta)}  \left| \operatorname{app}\big((\partial^{\bm{\alpha}}f_0)(\widetilde{\bx})\big) \cdot \frac{\prod_{\kappa=1}^{\|\bm{\alpha}\|_1} \operatorname{app}\left(
        x_{\pi_{\bm{\alpha}}(\kappa)} - \widetilde{x}_{\pi_{\bm{\alpha}}(\kappa)}\right)}{\bm{\alpha}!} - (\partial^{\bm{\alpha}}f_0)(\widetilde{\bx}) \cdot \frac{(\bx-\widetilde{\bx})^{\bm{\alpha}}}{\bm{\alpha}!} \right| \\    
    &\leq \sum_{\bm{\alpha} \in \mathcal{A}(d,\beta)} \frac{\|\partial^{\bm{\alpha}} f_0\|_{\infty}}{2^q} \\ 
    &\leq  \frac{\left\| f_0 \right\|_{\mathcal{C}^{\beta}}}{2^{ \beta (2m+n)}}.
\end{align*}
By Lemma \ref{lemma_taylor}, we further have
\begin{align*}
    \left| \sum_{\bm{\alpha} \in \mathcal{A}(d,\beta)} (\partial^{\bm{\alpha}}f_0)(\widetilde{\bx}) \cdot \frac{(\bx-\widetilde{\bx})^{\bm{\alpha}}}{\bm{\alpha}!} -f_0(\bx) \right| 
    \leq&  \left\| f_0 \right\|_{\mathcal{C}^{\beta}} \cdot \|\bx - \widetilde{\bx} \|_{\infty}^\beta \\
    \leq &   \frac{\left\| f_0 \right\|_{\mathcal{C}^{\beta}}}{2^{\beta(2m+n+1)}}. 
\end{align*}
In summary, for every $\bx \in [0,1]^d$,
\begin{align*}
    \big| f(\bx) - f_0(\bx) \big| 
    &\leq \frac{2 \left\| f_0 \right\|_{\mathcal{C}^{\beta}}}{2^{\beta(2m+n)}} \\
    &\leq 
    2 \cdot 8^{\beta} \cdot \left\| f_0 \right\|_{\mathcal{C}^{\beta}} \left( \frac{L}{2} \cdot \frac{p^2}{\log^3(Lp) } \right)^{-\frac{\beta}{d}} \\
    &\leq 2 \cdot 8^{\beta} \cdot 2^{\beta}\cdot \left\| f_0 \right\|_{\mathcal{C}^{\beta}} \left(\frac{\log^3 (Lp)}{Lp^2}\right)^{\frac{\beta}{d}},
\end{align*}
using for the second inequality that by \eqref{tmp_12}, 
$$n \geq \frac{1}{d} \log\left(\frac{L}{2}\right)-1, \qquad m \geq \frac{1}{d} \log\left(  \frac{p}{\log^{\frac{3}{2}}(Lp)} \right)-1.$$  
\end{proof}

\section{Proofs of Section \ref{sec_linear} }
\subsection{Proof of Proposition \ref{thm_lin_include}} \label{app_proof_lin_include}
\begin{proof}
To define the neural network output we use the layerwise definition in (\ref{DHN_linear}), (\ref{DHN_linear_L}) and (\ref{DHN_linear_others}).
For $\ell \in [L-1]$, we denote by $\bm{g}^{(\ell)}: \mathbb{R}^d \to \mathbb{R}^{p_{\ell}}$ and $\bm{h}^{(\ell)}: \mathbb{R}^d \to \mathbb{R}^{s}$ the vector-valued functions consisting of the first $p_{\ell}$ elements and the remaining $s$ elements of $\bm{f}^{(\ell)}$, respectively. Moreover,
$\bm{f}^{(1)}|_{[\bx_1, \bx_2]} : [0,1] \to \{0,1\}^{p_1+s}$ is defined by 
\begin{align*}
    \bm{f}^{(1)}\big|_{[\bx_1, \bx_2]}(t) :=
\sigma_s \Big(W_{0} \big((1-t)\bx_1 + t\bx_2\big) - \bm{b}_{0}\Big).
\end{align*}
For each $i \in [p_1]$, the $i$-th component of $\bm{f}^{(1)}|_{[\bx_1, \bx_2]}(t)$ can take values of the form $\mathbb{I}(t \geq c_i)$ or $\mathbb{I}(t \leq c_i)$ for real numbers $c_i$.
Hence, $\bm{g}^{(1)}|_{[\bx_1, \bx_2]}$ is a piecewise constant vector-valued function with at most $p_1$ breakpoints on $[0,1]$, and thus consists of at most $p_1+1$ pieces.
By definition, $\bm{h}^{(1)}|_{[\bx_1, \bx_2]}(t)$ is an affine vector-valued function of $t$. 

Assume that for $\ell \in [L-2]$,
there exists an interval partition $A_1, \ldots, A_{K}$ of $[0,1]$ with $K \leq \prod_{i=1}^{\ell} (p_i + 1)$ such that
$\bm{g}^{(\ell)}|_{[\bx_1, \bx_2]}(t)$ and $\bm{h}^{(\ell)}|_{[\bx_1, \bx_2]}(t)$ are constant and affine vector-valued functions of $t$, respectively, on each element of the partition.
For each $k \in [K]$,
$$W_{\ell} \begin{pmatrix}
\bm{g}^{(\ell)}|_{[\bx_1, \bx_2]}(t)  \\
\bm{h}^{(\ell)}|_{[\bx_1, \bx_2]}(t)
\end{pmatrix} - \bm{b}_{\ell} $$
is an affine vector-valued function of $t \in A_k$. 
For $t \in A_k$, 
each element of  
$\bm{g}^{(\ell+1)}|_{[\bx_1, \bx_2]}(t)$
takes values of the form 
$\mathbb{I}(t \geq c_i)$ or $\mathbb{I}(t \leq c_i)$ for real numbers $c_i.$ Hence, $\bm{g}^{(\ell+1)}|_{[\bx_1, \bx_2]}(t)$ is a piecewise constant vector-valued function with at most $p_{\ell+1}$ breakpoints on $A_k$, and thus consists of at most $p_{\ell+1}+1$ pieces.
In addition, 
$\bm{h}^{(\ell+1)}|_{[\bx_1, \bx_2]}(t)$
is an affine vector-valued function on $t \in A_k$.

As a result, there exists an interval partition $A'_1, \ldots, A'_{K'}$ of $[0,1]$ with $K' \leq (p_{\ell+1}+1)K \leq \prod_{i=1}^{\ell+1} (p_i + 1)$ such that $\bm{g}^{(\ell+1)}|_{[\bx_1, \bx_2]}(t)$ and $\bm{h}^{(\ell+1)}|_{[\bx_1, \bx_2]}(t)$ are constant and affine vector-valued functions of $t$, respectively, on each element of the partition.
Iterating this process,  
we can show that $\bm{f}^{(L)}|_{[\bx_1, \bx_2]}$ is a piecewise constant vector-valued function with at most $\prod_{\ell=1}^L (p_{\ell} + 1)$ pieces, and $f|_{[\bx_1, \bx_2]}$ is a piecewise constant function with the same bound on the number of pieces.    
\end{proof}

\subsection{Bit extraction using lin-DHNs}
In this section, 
$b_1(x), b_2(x), \ldots, b_\ell(x), \ldots \in \{0,1\}$
as the binary digits of $x \in [0,1]$, that is,
\begin{align}
    x = 0.b_1(x) b_2(x) b_3(x)  \ldots_{(2)}:= 
\sum_{\ell=1}^{\infty} 2^{-\ell} b_{\ell}(x).
\label{tmp_150}
\end{align}
Further details can be found in Appendix \ref{app_proof_sq}.
For $\bx = (x_1,\ldots,x_d) \in [0,1]^d$,
we set 
$b_\ell(\bx) := (b_\ell(x_i))_{i \in [d]}$. 

By  Lemma \ref{lemma_bit_extract}, there exists $f \in \DHN_{\skipp}(L, 1:L:1, 1)$ computing  in the last hidden layer $[0,1] \ni x \mapsto (b_{\ell}(x))_{\ell \in [L]}.$ By \eqref{skip_lin_include}, $\DHN_{\skipp}(L, 1:L:1, 1) \subseteq \DHN_{\lin}(L, 1:L:1, 1),$ and this proves

\medskip

\begin{lemma} \label{lemma_bit_extract_linear_ver1}
     For any integer $L \geq 1,$ 
    there exists a neural network $f \in \DHN_{\lin}(L, 1:L:1, 1)$ computing in the last hidden layer $[0,1] \ni x \mapsto (b_{\ell}(x))_{\ell \in [L]}.$
\end{lemma}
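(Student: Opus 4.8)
The plan is to obtain this lemma as an immediate corollary of Lemma \ref{lemma_bit_extract} combined with the class inclusion \eqref{skip_lin_include}.

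First I would apply Lemma \ref{lemma_bit_extract} with the choices $p_1 := 1$ and $\bm{s} := \bm{1}_{L-1}$, so that the radix vector $(p_1+1, s_2+1, \ldots, s_L+1)$ becomes $(2,2,\ldots,2)$. For this radix vector the mixed radix coefficients of $x$ are exactly the binary digits $b_1(x), \ldots, b_L(x)$ of \eqref{tmp_150}, and since $b_\ell(x)\in\{0,1\}$ we have $\mathbb{I}(b_\ell(x)\geq 1)=b_\ell(x)$. Hence Lemma \ref{lemma_bit_extract} yields a network $g\in\DHN_{\skipp}(L,(1,1,2,\ldots,L,1),\bm{1}_{L-1})$ whose last hidden layer outputs $(b_\ell(x))_{\ell\in[L]}$. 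Since the width vector $(1,1,2,\ldots,L,1)$ is dominated entrywise by the rectangular vector $(1,\underbrace{L,\ldots,L}_{L},1)$ and each hidden layer uses exactly one skip-connected neuron, padding the hidden layers with dummy Heaviside neurons (zero incoming weights and a strictly negative shift, so they output $0$ and are ignored downstream) embeds $g$ into $\DHN_{\skipp}(L,1:L:1,1)$ without changing the function it computes. Finally, \eqref{skip_lin_include} with $p_0=1$ gives $\DHN_{\skipp}(L,1:L:1,1)\subseteq\DHN_{\lin}(L,1:L:1,1)$, so the same network — now viewed as a lin-DHN in which the single linear neuron per layer merely transmits $x$ forward — lies in $\DHN_{\lin}(L,1:L:1,1)$ and still computes $(b_\ell(x))_{\ell\in[L]}$ in its last hidden layer.

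This argument is essentially bookkeeping, and I do not expect a genuine obstacle; the only points needing care are checking that specializing all radices to $2$ reduces the mixed radix expansion of Lemma \ref{lemma_bit_extract} to the binary expansion \eqref{tmp_150}, and tracking that the non-rectangular width vector produced there fits inside the rectangular $1:L:1$ architecture. As an alternative, one can give a self-contained construction directly inside $\DHN_{\lin}$: in hidden layer $\ell$, let the linear neuron forward $x$, forward the previously extracted bits $b_1(x),\ldots,b_{\ell-1}(x)$ via $b=\mathbb{I}(b-1/2)$, and compute the new bit $b_\ell(x)=\mathbb{I}\big(x-\sum_{j=1}^{\ell-1}b_j(x)2^{-j}-2^{-\ell}\big)$ with one fresh Heaviside neuron; layer $\ell$ then uses at most $\ell\le L$ Heaviside neurons and one linear neuron, and after $L$ layers the Heaviside neurons carry $(b_\ell(x))_{\ell\in[L]}$, with the last layer using the value of $x$ supplied by the linear neuron of layer $L-1$.
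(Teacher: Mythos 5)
Your proposal is correct and follows exactly the paper's own (very terse) proof: specialize Lemma \ref{lemma_bit_extract} to $p_1=1$, $\bm{s}=\bm{1}_{L-1}$ so that the mixed radix expansion becomes the binary expansion and the last hidden layer outputs $(b_\ell(x))_{\ell\in[L]}$, embed the resulting width vector $(1,1,2,\ldots,L,1)$ into the rectangular $1\!:\!L\!:\!1$ architecture, and then pass to $\DHN_{\lin}$ via \eqref{skip_lin_include}. Your care in checking that the linear neurons realizing the skip connections preserve the hidden-layer values (not just the final output) is a point the paper leaves implicit, and your alternative self-contained construction is essentially the paper's Lemma \ref{lemma_bit_extract_linear_ver2}.
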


\medskip

\begin{lemma} \label{lemma_bit_extract_linear_ver2}
     For any integer $L \geq 1$, 
    there exists a network
\begin{align*}
    f \in \DHN_{\lin}(L, 1:1:1, 1)
\end{align*}
such that for input $x \in [0,1]$ and any $\ell \in [L]$, the $\ell$-th hidden layer contains a neuron with activation $b_{\ell}(x)$.
\end{lemma}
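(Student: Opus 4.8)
The plan is to realise the classical binary-digit recursion directly as a width-one lin-DHN, using the single linear neuron of each hidden layer to carry the running remainder and the single Heaviside neuron to read off the current bit.

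First I would recall the recursion underlying the binary expansion \eqref{tmp_150}: set $r_0:=x$ and, for $\ell\geq 1$,
$$b_\ell:=\mathbb{I}\big(2r_{\ell-1}-1\big),\qquad r_\ell:=2r_{\ell-1}-b_\ell.$$
A short induction (using the definition of the mixed radix coefficients in Appendix \ref{app_proof_sq} with all radices equal to $2$) shows that $r_\ell\in[0,1]$ and $b_\ell=b_\ell(x)$ for every $\ell$: on $[0,1)$ one has $r_\ell\in[0,1)$, so $b_\ell=\lfloor 2r_{\ell-1}\rfloor$ is exactly the next-digit extraction, while for $x=1$ the recursion stabilises at $r_\ell\equiv 1$, $b_\ell\equiv 1$, matching the convention $b_\ell(1)=1$. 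The key point is that, writing $b_{\ell+1}=\mathbb{I}(2r_\ell-1)=\mathbb{I}(4r_{\ell-1}-2b_\ell-1)$ and $r_\ell=2r_{\ell-1}-b_\ell$, both the remainder $r_\ell$ and the pre-activation $4r_{\ell-1}-2b_\ell-1$ of the next bit are affine functions of the pair $(b_\ell,r_{\ell-1})$ — precisely the kind of map a single lin-DHN layer can compute.

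Next I would assemble the network. For $\ell\in[L-1]$, let the Heaviside neuron of the $\ell$-th hidden layer output $b_\ell$ and the linear neuron output $r_{\ell-1}$. In the first hidden layer these are $\mathbb{I}(2x-1)$ and $x$, which are affine in the input $\bm{f}^{(0)}(x)=x$; in layer $\ell\geq 2$ they are $\mathbb{I}(4r_{\ell-2}-2b_{\ell-1}-1)$ and $2r_{\ell-2}-b_{\ell-1}$, which the weight matrix $W_{\ell-1}$ and shift $\bm{b}_{\ell-1}$ obtain from the output of layer $\ell-1$. Since the last hidden layer of a lin-DHN carries no linear neuron, layer $L$ computes only the Heaviside neuron $b_L=\mathbb{I}(4r_{L-2}-2b_{L-1}-1)$, and any affine read-out (e.g.\ $f(x):=b_L(x)$) serves as the output. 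An induction on $\ell$ then confirms that the linear neuron of layer $\ell$ holds $r_{\ell-1}$ and hence the Heaviside neuron of layer $\ell$ holds $b_\ell(x)$, which is the assertion; by construction $f\in\DHN_{\lin}(L,1:1:1,1)$. For $L=1$ the construction degenerates to the single layer computing $\mathbb{I}(2x-1)=b_1(x)$.

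There is no essential obstacle. The only points needing care are (i) checking that the recursion reproduces the paper's convention for $b_\ell(x)$, including at dyadic rationals and at $x=1$, which follows from $r_\ell\in[0,1)$ on $[0,1)$ together with the stable fixed point at $x=1$; and (ii) verifying that every pre-activation invoked is genuinely affine in the output of the preceding hidden layer — for the first layer, in the scalar input — so that the whole construction stays inside the class $\DHN_{\lin}(L,1:1:1,1)$ with $p=s=1$.
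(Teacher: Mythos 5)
Your construction is correct and is essentially the paper's own proof: in both, the single linear neuron of each hidden layer carries the running remainder of the binary expansion and the single Heaviside neuron reads off the next bit, with the last hidden layer keeping only the Heaviside neuron. The only difference is cosmetic — you store the rescaled remainder $r_\ell = 2^{\ell}\bigl(x - \sum_{i\le\ell} 2^{-i} b_i(x)\bigr)$ with fixed layer weights, while the paper stores $x - \sum_{i<\ell} 2^{-i} b_i(x)$ and puts the powers of $2$ into layer-dependent weights and thresholds.
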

\begin{proof}
    \begin{figure}[t]
    \centering
        \begin{tikzpicture}[node distance=2cm, auto, thick, >=Stealth, scale=0.9]
        \node[draw=white] (x) at (-0.5, 3.75) {$x$};        

        \node[draw=white, align=center] (f11) at (1.5, 4.5) {$b_1(x)$};

        \node[draw=white, align=center, minimum width=1cm] (f12) at (1.5, 3) {\color{red} $x$};
        
        \node[draw=white, align=center, minimum width=1.8cm] (f21) at (5, 4.5) {$b_2(x)$};

        \node[draw=white, align=center] (f22) at (5, 3) {\color{red} $x-\frac{1}{2}b_1(x)$};
        
        \node[draw=white, align=center, minimum width=3.4cm] (f31) at (9.5, 4.5) {$b_3(x)$};

        \node[draw=white, align=center] (f32) at (9.5, 3) {\color{red} $x-\frac{1}{2}b_1(x)-\frac{1}{4}b_2(x)$};

        \node at (12.5, 3.75) {$\ldots$};

        \draw[-, cyan, line width=0.2mm] (x.east) -- (f11.west);
        \draw[-, cyan, line width=0.2mm] (x.east) -- (f12.west);

        \draw[-, cyan, line width=0.2mm] (f11.east) -- (f21.west) node[near end, above] {(\ref{tmp_28})};
        \draw[-, cyan, line width=0.2mm] (f11.east) -- (f22.west);
        \draw[-, cyan, line width=0.2mm] (f12.east) -- (f21.west) ;        
        \draw[-, cyan, line width=0.2mm] (f12.east) -- (f22.west);

        \draw[-, cyan, line width=0.2mm] (f21.east) -- (f31.west) node[near end, above] {(\ref{tmp_28})};
        \draw[-, cyan, line width=0.2mm] (f22.east) -- (f31.west);
        \draw[-, cyan, line width=0.2mm] (f21.east) -- (f32.west);
        \draw[-, cyan, line width=0.2mm] (f22.east) -- (f32.west);



    \end{tikzpicture} \vspace{15pt}
    \caption{\textbf{Illustration for the proof of Lemma \ref{lemma_bit_extract_linear_ver2}.} Black neurons use the Heaviside activation function, while red neurons use the linear activation function.}
\end{figure}
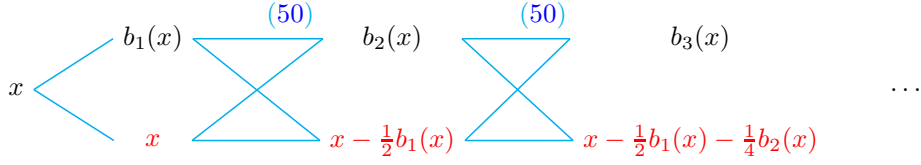

In the first hidden layer, we assign $2$ neurons to the respective values $b_{1} = \mathbb{I}(x-1/2)$ and $x$.
Assume that the $\ell$-th hidden layer with $1 \leq \ell \leq L-2$ has $2$ neurons whose respective values are 
$b_{\ell}(x)$ and $x - \sum_{i=1}^{\ell-1} 2^{-i} b_{i}(x)$. 
Then, for the $(\ell+1)$-st hidden layer, we can represent
\begin{align}
    b_{\ell+1}(x) 
    = \mathbb{I} \left( \left( x - \sum_{i=1}^{\ell-1} \frac{b_{i}(x)}{2^i}\right) - \frac{b_{\ell}(x)}{2^{\ell}} - \frac{1}{2^{\ell+1}}\right) \label{tmp_28}
\end{align}
and $x - \sum_{i=1}^{\ell} 2^{-i} b_{i}(x)$. 

Iterating this process, for every $\ell \in [L-1]$, we have one Heaviside neuron and one linear neuron in the $\ell$-th hidden layer, whose respective outputs are $b_{\ell}(x)$ and $x - \sum_{i=1}^{\ell-1} 2^{-i} b_{i}(x)$.
Finally, we construct one neuron in the last hidden layer that outputs
\begin{align*}
    b_{L}(x) 
    = \mathbb{I} \left(\left( x - \sum_{\ell=1}^{L-2} \frac{b_{\ell}(x)}{2^\ell}\right) - \frac{b_{L-1}(x)}{2^{L-1}} - \frac{1}{2^{L}}\right).
\end{align*}

\end{proof}

\subsection{Proof of Theorem \ref{VCdim-linear}} \label{app_proof_lin_VC}

\begin{proof}[Proof of the upper bound in Theorem \ref{VCdim-linear}]
The proof is inspired by the proof of Theorem 7 in \cite{peter2}. Let $W$ be the total number of varying parameters in $\DHN_{\lin}(L,d\!:p:\!1,s).$
We denote $f_{\bw}$ as the function in $\DHN_{\lin}(L,d\!:p\!:1,s)$ induced by the parameter vector ${\bw}\in\mathbb{R}^{W}$. To study variations in ${\bw},$ we introduce the notation $h_{\bx}(\bw):=f_{\bw}({\bx}).$

Consider fixed $m\geq W,$ and arbitrary points ${\bx}_{1},\ldots,{\bx}_{m}\in\mathcal{X}$. In the following steps, we will first derive an upper bound for the growth function
$$\Pi_{L,d:p:1,s}(m)=\Big|\Big\{\Big(\mathbb{I}\big(h_{{\bx}_{1}}({\bw})\big),\ldots,\mathbb{I}\big(h_{{\bx}_{m}}({\bw})\big)\Big),\;{{\bw}}\in\mathbb{R}^{W}\Big\}\Big|.$$ This quantity represents the number of distinct sign patterns that the neural network can produce for the inputs ${\bx}_{1},\ldots,{\bx}_{m}$ and varying parameters.

The key to bound the growth function is to construct a partition $\mathcal{S}$ of the parameter space $\mathbb{R}^{W}$ such that, within each region $S\in\mathcal{S}$, the sign patterns of the functions $h_{{\bx}_{j}}({\bw})$, for all $j\in[m]$, remain unchanged as ${\bw}$ varies in $S\subseteq\mathbb{R}^{W}$. Let $\mathcal{S}_0 = {\mathbb{R}^W}$. Set $p_0:=d$, $p_{\ell}:=p$ for $\ell\in[L]$, $s_0:=0$, $s_{\ell}:=s$ for $\ell\in[L-1]$, and $s_{L}:=0$. Starting with $\mathcal{S}_0$, more refined partitions $\mathcal{S}_{1},\ldots,\mathcal{S}_{L}$ are built layer by layer such that the following two properties hold:
\begin{enumerate}
\item[(i)] For any $\ell\in[L]$, $S\in\mathcal{S}_{\ell-1}$, ${\bx}_{j}$ with $j\in[m]$, and $k\in[p_{\ell}+s_{\ell}]$,
$$S \ni \bw \mapsto h^{\ell}_{\bx_{j},k}({\bw})$$
returns the input of the $k$-th node in the $\ell$-th hidden layer in response to ${\bx}_{j}$ and parameter $\bw.$ This function is a polynomial of total degree $\leq \ell$.
\item[(ii)] For any $\ell\in[L]$, 
\begin{equation}\label{partition-number}
|\mathcal{S}_{\ell}|\leq2\left(\frac{2em\ell p_{\ell}}{z_{\ell}}\right)^{z_{\ell}}|\mathcal{S}_{\ell-1}|,
\end{equation}
where $$z_{\ell}:=p_{\ell}(p_{\ell-1}+s_{\ell-1}+1)+\sum_{i=1}^{\ell-1}s_{i}(p_{i-1}+s_{i-1}+1),$$ and the convention that the empty sum is 0.
\end{enumerate}

We now detail the inductive construction of $\mathcal{S}_{1},\ldots,\mathcal{S}_{L}$. As before, we can assume that in each hidden layer $\ell\in[L-1]$, the last $s_{\ell}$ nodes use a linear activation function. For each network input ${\bx}_{j}$, $j=1,\ldots,m,$ the input to any node in the first hidden layer is linear in the parameters and thus a polynomial of degree at most 1 in $\bw$. This verifies (i) when $\ell=1$. To see that (ii) also holds when $\ell=1$, observe that the last $s_1$ nodes in the first hidden layer use a linear activation function, so the output polynomials remain unchanged as ${\bw}$ varies. The input to any node with the Heaviside activation function in the first hidden layer (the first $p_1$ nodes) is a polynomial of degree at most 1 in at most $z_1$ variables of ${\bw}$. Applying Lemma~\ref{poly-vc} to the collection of polynomials $\mathcal{S}_0=\mathbb{R}^W \ni \bw \mapsto h^1_{{\bx}_{j},k}({\bw})$ with $k\in[p_1]$ and $j\in[m]$, we conclude that there are at most $$N_{1}=2\left(\frac{2emp_1}{z_{1}}\right)^{z_{1}}$$ distinct signs patterns when ${\bw}$ varies. We obtain $\mathcal{S}_1$ by refining $\mathcal{S}_0$ so that within each region $S \in \mathcal{S}_1$, the aforementioned polynomials have fixed signs as ${\bw}$ varies. This gives $$|\mathcal{S}_1|\leq N_{1}\cdot|\mathcal{S}_0|,$$ implying (ii) for $
\ell=1$.

For the induction step, suppose we perform successive partitions up to $\ell-1$ for $\ell \in\{2,\ldots,L\}$, obtaining refinements $\mathcal{S}_1, \ldots, \mathcal{S}_{\ell-1}$. We now define $\mathcal{S}_\ell$. By the induction hypothesis, for any $S\in\mathcal{S}_{\ell-1}$, $j\in[m]$ and $k\in[p_{\ell-1}+s_{\ell-1}]$, $S\ni\bw \mapsto h^{\ell-1}_{{\bx}_{j},k}({\bw})$ is a polynomial of total degree $\leq\ell-1$. Passing through the $p_{\ell-1}+s_{\ell-1}$ nodes in the $(\ell-1)$-st hidden layer, the polynomials remain unchanged by the definition of $S$. Consequently, $S\ni\bw \mapsto h^{\ell}_{{\bx}_{j},k}({\bw})$ is a polynomial of total degree $\leq \ell$, for all $k\in[p_{\ell}+s_{\ell}]$ and $j\in[m]$. This verifies (i). Moreover, for all $k\in[p_{\ell}]$, $S\ni\bw \mapsto h^{\ell}_{{\bx}_{j},k}({\bw})$ is a polynomial in at most $z_{\ell}$ variables of ${\bw}$. Applying Lemma~\ref{poly-vc} to the collection of polynomials $S\ni\bw \mapsto h^{\ell}_{{\bx}_{j},k}({\bw})$ with $k\in[p_{\ell}]$ and $j\in[m]$, we know that there are $$\leq N_{\ell}=2\left(\frac{2em\ell p_{\ell}}{z_{\ell}}\right)^{z_\ell}$$ distinct signs patterns when ${\bw}$ varies in any $S\in\mathcal{S}_{\ell-1}$. The successive partition is then based on a refinement of $\mathcal{S}_{\ell-1}$ such that within each region $S'\in\mathcal{S}_{\ell}$, all the above mentioned polynomials have fixed signs when ${\bw}$ varies. Thus, for each region $S\in\mathcal{S}_{\ell-1}$, we partition it into at most $N_{\ell}$ subregions, resulting in a refined partition $\mathcal{S}_{\ell}$ that validates (ii). 

Finally, we consider the output layer. By construction, for any $S\in\mathcal{S}_{L}$, the output of each node in the $L$-th hidden layer is fixed (either 0 or 1). Therefore, $h_{\bx_j}$ is a linear combination of at most $p+1$ variables. Applying Lemma~\ref{poly-vc}, 
\begin{equation}\label{output-patterns}
\leq N_{L+1}=2\left(\frac{2em}{p+1}\right)^{p+1}   
\end{equation}
distinct sign patterns can occur. 

For any $m$, arbitrarily chosen ${\bx}_{1},\ldots, {\bx}_{m}\in\mathcal{X}$, the growth function can be bounded by \eqref{partition-number} and \eqref{output-patterns} as
\begin{align}
\Pi_{L,d:p:1,s}(m)&\leq N_{L+1}\cdot|\mathcal{S}_{L}|\leq\prod_{\ell=1}^{L+1}N_{\ell}=2^{L+1}\left[\prod_{\ell=1}^{L+1}\left(\frac{2em\ell p_{\ell}}{z_{\ell}}\right)^{z_{\ell}}\right],\label{b-cardinality-neural}
\end{align}
where we set $z_{L+1}:=p+1$ and, for this formula, $p_{L+1}:=1/(L+1)$ such that $(L+1)p_{L+1}=1.$ For $\ell\in[L+1]$, define $$\kappa_{\ell}:=\frac{2em\ell p_{\ell}}{z_{\ell}},\quad\lambda_{\ell}:=\frac{z_{\ell}}{\sum_{\ell=1}^{L+1}z_{\ell}}.$$
Applying the weighted AM-GM Inequality (see, e.g., \cite{hardy1934inequalities}) to \eqref{b-cardinality-neural} yields
\begin{align*}
\Pi_{L,d:p:1,s}(m)&\leq2^{L+1}\left(\prod_{\ell=1}^{L+1}\kappa_{\ell}^{\lambda_{\ell}}\right)^{\sum_{\ell=1}^{L+1}z_{\ell}}\\
&\leq2^{L+1}\left(\sum_{\ell=1}^{L+1}\lambda_{\ell}\kappa_{\ell}\right)^{\sum_{\ell=1}^{L+1}z_{\ell}}\\
&\leq2^{L+1}\left(\frac{2em\sum_{\ell=1}^{L+1}\ell p_{\ell}}{\sum_{\ell=1}^{L+1}z_{\ell}}\right)^{\sum_{\ell=1}^{L+1}z_{\ell}}.
\end{align*}
According to the definition of the VC-dimension (Definition~\ref{vc-def}), it is necessary to have
$$2^{\operatorname{VC}\left(\DHN_{\lin}(L,d:p:1,s)\right)}\leq2^{L+1}\left[\frac{2e(\sum_{\ell=1}^{L+1}\ell p_{\ell})\operatorname{VC}\left(\DHN_{\lin}(L,d:p:1,s)\right)}{\sum_{\ell=1}^{L+1}z_{\ell}}\right]^{\sum_{\ell=1}^{L+1}z_{\ell}}.$$ Provided $L\geq1$ and $p\geq2$, we have $\sum_{\ell=1}^{L+1}\ell p_{\ell}\geq3$ so that $2e(\sum_{\ell=1}^{L+1}\ell p_{\ell})\geq6e\geq16$. Lemma~18 of \cite{peter2} shows that if $2^{a}\leq2^{b}(ar/w)^{w}$ for some $r\geq16$ and $a\geq w\geq b\geq0$, then, $a\leq b+w\log (2r\log r)$. Applying this inequality with $a=\operatorname{VC}\left(\DHN_{\lin}(L,d:p:1,s)\right)$, $b=L+1$, $r=2e(\sum_{\ell=1}^{L+1}\ell p_{\ell})$ and $w=\sum_{\ell=1}^{L+1}z_{\ell}$,we obtain
\begin{align}
\operatorname{VC}\big(\DHN_{\lin}(L,d:p:1,s)\big)\leq L+1+\Big(\sum_{\ell=1}^{L+1}z_{\ell}\Big)\log \big(2r\log(r)\big).\label{vc-initial}
\end{align}
Since $r\geq 6e$, $L\geq1$ and $p\geq2$, we have
\begin{align}
\log \big(2r\log(r)\big) \leq\log (2r^{\frac{3}{2}})=1+\frac{3}{2}\log r \leq 1.75 \log r \leq7.06\log(pL),\label{bound-for-log}
\end{align}
using for the last inequality $(L+1)p_{L+1}=1$ and
$$r = 2e\Big( 1 + p\sum_{\ell=1}^L \ell \Big) =  2e + eL(L+1)p \leq 2e + e (Lp)^2 \leq (Lp)^{7.06}.$$
Additionally, with $L\geq1$, and $p\geq s\vee d\vee 2$, 
\begin{align}
\sum_{\ell=1}^{L+1}z_{\ell}&=(d+1)[p+(L-1)s]+(L-1)p(p+s+1)+s(p+s+1)\Big(\sum_{\ell=0}^{L-2}\ell\Big)+(p+1)\nonumber\\
    &\leq(p+1)^2+(L-1)(2ps+s+p^2+p)+\frac{(L-1)(L-2)}{2}s(p+s+1)\nonumber\\
    &\leq\frac{5}{4}L^2ps+\frac{11}{4}Lp^2.\label{bound-for-sumz}
\end{align}
Plugging \eqref{bound-for-log} and \eqref{bound-for-sumz} into \eqref{vc-initial}, and using $2L \leq Lp$, we finally obtain
\begin{align*}
\operatorname{VC}\left(\DHN_{\lin}(L,d:p:1,s)\right)&\leq 2L+7.06\left(\frac{5}{4}L^2ps+\frac{11}{4}Lp^2\right)\log(pL)\\
&\leq 30 \left(L^2ps \vee Lp^2\right)\log(pL).
\end{align*}
\end{proof}

We now establish a version of Lemma \ref{lemma_piece_bin_skip_ver3} for linear neurons augmented DHNs. The notation is mostly the same. In particular, 
\begin{align*}
    x = 0.b_1(x) b_2(x) b_3(x)  \ldots_{(2)}:= 
\sum_{\ell=1}^{\infty} 2^{-\ell} b_{\ell}(x).
\end{align*}
defines a binary representation of $x$ and 
for $\bx = (x_1,\ldots,x_d) \in [0,1]^d$,
\[b_\ell(\bx) := (b_\ell(x_i))_{i \in [d]}.\]

For any given positive integer $d$ and non-negative integers $m, n$ and $t$,
we define 
\begin{align}
    J:=2^{(m+t)d}, \quad K:=2^{nd}, \quad \text{and} \ \ R:=2^{td}.
    \label{eq.0fweni_2}
\end{align}
We partition $[0,1]^d$ into $\{U_j\}_{j \in [J]}$ such that $\bx$ and $\bz$ belong to the same partition element if and only if $b_\ell(\bx) = b_\ell(\bz)$ for all $\ell \in [m+t]$.  
Similarly, we partition $[0,1]^d$ into $\{V_k\}_{k \in [K]}$ such that $\bx$ and $\bz$ belong to the same partition element if and only if $b_\ell(\bx) = b_\ell(\bz)$ for all $\ell \in \{m+t+1, m+t+2, \ldots, m+n+t\}$. 
Finally, we partition $[0,1]^d$ into $\{W_r\}_{r \in [R]}$ such that $\bx$ and $\bz$ belong to the same partition element if and only if $b_\ell(\bx) = b_\ell(\bz)$ for all $\ell \in \{m+n+t+1, m+n+t+2, \ldots, m+n+2t\}$. 

For each $\bx \in [0,1]^d$, there exist unique $j(\bx) \in [J]$, $k(\bx) \in [K]$ and $r(\bx) \in [R]$ such that 
$$\bx \in U_{j(\bx)} \cap V_{k(\bx)} \cap W_{r(\bx)}.$$ 
\begin{lemma} \label{lemma_piece_bin_lin_ver3}
For any positive integer $d$, any non-negative integers $t,m,n,$ and any binary values $\{\eta_{j,k,r}\}_{j \in [J], k \in [K], r \in [R]} \in \{0,1\}^{JKR}$,
there exists a network 
    $$h \in \DHN_{\lin}\Big(2R+2,\ d(m+n+2t):d(m+n+2t) + \big(2^{md} \vee (3K+1)\big):1, K \Big)$$ such that 
    $$h\big((b_{\ell}(\bx))_{\ell \in [m+n+2t]}\big) = \eta_{j(\bx), k(\bx), r(\bx)}.$$
\end{lemma}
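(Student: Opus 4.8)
The plan is to exhibit the network $h$ explicitly, following the template of Lemma~\ref{lemma_piece_bin_skip_ver3} but adapted to lin-DHNs. First observe that $JKR = 2^{(m+n+2t)d}$ equals the number of cells $U_j\cap V_k\cap W_r$, each of which is determined by the first $(m+n+2t)d$ binary digits of $\bx$; hence $\bx\mapsto\eta_{j(\bx),k(\bx),r(\bx)}$ is an arbitrary Boolean function of exactly the digits that form the input of $h$. Since these inputs are binary, I would forward all $d(m+n+2t)$ of them through every hidden layer at the cost of $d(m+n+2t)$ Heaviside neurons per layer, using $b=\mathbb{I}(b-1/2)$ as in Remark~\ref{rem.1}; this keeps every digit — and hence every indicator such as $\mathbb{I}(j(\bx)=j)$, which is a linear threshold function of them — available throughout, so that no skip connections are required. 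It then suffices to build the remaining computation with $\le 2^{md}\vee(3K+1)$ Heaviside neurons and $K$ linear neurons per layer.

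Next I would decompose the index $j\in[J]=[2^{(m+t)d}]$ as $j=(a,\rho)$, where $a\in[2^{md}]$ is read off the first $md$ digits and $\rho\in[R]=[2^{td}]$ is read off digits $md+1,\dots,(m+t)d$, so that $\eta_{j(\bx),k(\bx),r(\bx)}=\eta_{(a(\bx),\rho(\bx)),k(\bx),r(\bx)}$. Here $a(\bx)$ ranges over the ``small'' set $[2^{md}]$, $k(\bx)$ over $[K]$, and both $\rho(\bx)$ and $r(\bx)$ over the ``large'' set $[R]$. The idea is to enumerate $a$ directly (one layer with $2^{md}$ Heaviside neurons producing $(\mathbb{I}(a(\bx)=a))_a$), dispose of the two large indices $\rho$ and $r$ by a loop structure of total length $2R$ in which one scans first over the $R$ values of $\rho$ and then over the $R$ values of $r$ — this doubling relative to the single scan in Lemma~\ref{lemma_piece_bin_skip_ver3} reflects that the present $j$-index is a factor $R=2^{td}$ larger than the $[2^{md}]$ there — and finish with $k$ via a short look-up. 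Concretely, each layer of the first scan tests whether $\rho(\bx)$ equals the running counter (a linear threshold of the forwarded digits) and, if so, conditionally updates the $K$ linear neurons, which accumulate a partial sum over the scanned values; the second scan does the analogue for $r$; and a final block of $\le 3K+1$ Heaviside neurons computes $\sum_k \mathbb{I}(k(\bx)=k)\cdot(\text{content of linear neuron }k)=\eta_{(a(\bx),\rho(\bx)),k(\bx),r(\bx)}$ using the pairwise products $\mathbb{I}(\cdot)\wedge\mathbb{I}(\cdot)$ together with one output neuron. Throughout, the Heaviside width stays within $d(m+n+2t)+(2^{md}\vee(3K+1))$, the number of linear neurons is $K$, and a careful count pins the depth at $2R+2$.

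The main obstacle is the orchestration of the two scans with the $a$-enumeration and the final look-up so that every quantity demanded at a given layer is computable from what is actually present there — the forwarded digits, the $2^{md}$ enumeration bits, and the current contents of the $K$ linear neurons — without exceeding the width budget. The delicate point is that evaluating $\eta_{(a(\bx),\rho),k,r}$ at an intermediate layer requires the other large index to have been ``committed'' already; since both $\rho(\bx)$ and $r(\bx)$ are large, no single scan can localise both, which is what forces the second scan (hence the factor $2$ in the depth) and dictates the order in which the linear neurons are loaded. Turning the conditional updates of the linear neurons — ``add the relevant value when the counter matches, otherwise add zero'' — into legitimate Heaviside/linear layers, and checking that the accumulated partial sums remain in the range where the final look-up neuron outputs the correct bit, is where the bulk of the bookkeeping lies.
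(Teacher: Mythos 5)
There is a genuine gap, and it sits exactly at the point you flag as ``the bulk of the bookkeeping.'' You correctly identify the core difficulty: both $\rho(\bx)$ (the large part of $j$) and $r(\bx)$ range over sets of size $R$, and no single scan can localise both. But your proposed resolution --- a first scan over $\rho$ that ``conditionally updates the $K$ linear neurons'' followed by a second scan that ``does the analogue for $r$'' --- does not actually resolve it. At layer $\rho_0$ of the first scan, the quantity you would want to add to linear neuron $k$ when $\rho(\bx)=\rho_0$ is $\eta_{(a(\bx),\rho_0),k,r}$, which still depends on the uncommitted index $r$; and at layer $r_0$ of the second scan the relevant value depends on $\rho(\bx)$, whose localisation has by then been compressed into a single real number per $k$. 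A second conditional accumulation ``analogous'' to the first has nothing well-defined to accumulate, and your final look-up $\sum_k \mathbb{I}(k(\bx)=k)\cdot(\text{content of linear neuron }k)$ via $bc=\mathbb{I}(b+c-3/2)$ only works if that content is already a single bit, which nothing in your construction produces.

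The missing idea --- and the actual content of the paper's proof --- is to pack the entire $R$-bit string $\{\eta_{j,k,r}\}_{r\in[R]}$ into one real number $\lambda_{j,k}:=0.\eta_{j,k,1}\eta_{j,k,2}\ldots\eta_{j,k,R}{}_{(2)}$ and load $\lambda_{j(\bx),k}$ into linear neuron $k$ during the first pass (the paper scans $j$ directly in $R$ blocks of $2^{md}$ indicators, accumulating $\sum_j\lambda_{j,k}\mathbb{I}(j(\bx)=j)$; your $j=(a,\rho)$ decomposition could be made to work the same way). The second pass is then not another accumulation but a \emph{bit extraction}: Lemma \ref{lemma_bit_extract_linear_ver2} peels off $\eta_{j(\bx),k,\ell}$ at layer $R+\ell+1$, precisely synchronised with the layer at which $\mathbb{I}(k(\bx)=k,r(\bx)=\ell)$ is tested, so that the products $\eta_{j(\bx),k,\ell}\,\mathbb{I}(k(\bx)=k,r(\bx)=\ell)$ are genuine Boolean products computable by single Heaviside neurons, and their running sum over $\ell$ and $k$ yields $\eta_{j(\bx),k(\bx),r(\bx)}$. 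Without the binary-fraction encoding and its layerwise decoding, the width budget of $K$ linear neurons cannot carry the $KR$ candidate bits across the two passes, and the construction does not close.
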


\begin{proof}
For each $j \in [J]$ and $k \in [K]$, we define
$$\lambda_{j,k} := {0.\eta_{j,k,1} \eta_{j,k,2} \ldots \eta_{j,k,R}}_{(2)}.$$
Let $\bz(\bx) := (b_{\ell}(\bx))_{\ell \in [m+n+2t]}$. The proof proceeds in the following three steps.  

\clearpage

\noindent
\textbf{Step 1}: Representing $\bz(\bx)$
and $(\lambda_{j(\bx),k})_{k \in [K]}$, in the $(R+1)$-st hidden layer of $h(\bz(\bx))$.

\medskip

        \begin{figure}[t]
    \centering
    \tikzset{every node/.style={scale=0.8}}
\begin{tikzpicture}[node distance=2cm, auto, thick, >=Stealth, scale=0.8]

\node[draw=white] (x) at (0.5, 0) { $\bz(\bx) := \big(b_{\ell}(\bx)\big)_{\ell \in [m+n+2t]}$};
;

        \node[draw=white] (f11) at (0.5, 2.5) { $\bz(\bx)$};

        \node[draw=white] (f12) at (5.5, 2.5) { $\big(\mathbb{I}(j(\bx)=j)\big)_{j \in \{1,\ldots,2^{md}\}}$};

        \node[draw=white] (f21) at (0.5, 5) { $\bz(\bx)$};

        \node[draw=white] (f22) at (5.5, 5) { $\big(\mathbb{I}(j(\bx)=j)\big)_{j \in \{2^{md} + 1,\ldots, 2\cdot 2^{md}\}}$};

        \node[draw=white] (f23) at (12, 5) { \color{red} $\left(\sum_{j=1}^{2^{md}} \lambda_{j,k} \mathbb{I}(j(\bx)=j)\right)_{k \in [K]} $};

        \node[draw=white] (f31) at (0.5, 7.5) { $\bz(\bx)$};

        \node[draw=white] (f32) at (5.5, 7.5) { $\big(\mathbb{I}(j(\bx)=j)\big)_{j \in \{2\cdot2^{md} + 1,\ldots, 3\cdot 2^{md}\}}$};

        \node[draw=white] (f33) at (12, 7.5) { \color{red} $\left(\sum_{j=1}^{2\cdot 2^{md}} \lambda_{j,k} \mathbb{I}(j(\bx)=j)\right)_{k \in [K]} $};

        \node at (5.5, 9.5) {\Huge $\vdots$};

        \node[draw=white] (f41) at (0.5, 11.5) { $\bz(\bx)$};

        \node[draw=white] (f42) at (5.5, 11.5) { $\big(\mathbb{I}(j(\bx)=j)\big)_{j \in \{(R-1)\cdot2^{md} + 1,\ldots, R \cdot 2^{md}\}}$};

        \node[draw=white] (f43) at (12, 11.5) { \color{red} $\left(\sum_{j=1}^{(R-1)\cdot 2^{md}} \lambda_{j,k} \mathbb{I}(j(\bx)=j)\right)_{k \in [K]} $};

        \node[draw=white] (f51) at (0.5, 14) { $\bz(\bx)$};

        \node[draw=white] (f53) at (12, 14) { \color{red} $\left(\lambda_{j(\bx),k}\right)_{k \in [K]} $};

        \draw[-, cyan, line width=0.2mm] (x.north) -- (f11.south); 
        \draw[-, cyan, line width=0.2mm] (x.north) -- (f12.south) node[below] {(\ref{tmp_40})};

        \draw[-, cyan, line width=0.2mm] (f11.north) -- (f21.south);
        \draw[-, cyan, line width=0.2mm] (f11.north) -- (f22.south) node[below] {(\ref{tmp_40})};
        \draw[-, cyan, line width=0.2mm] (f12.north) -- (f23.south);

        \draw[-, cyan, line width=0.2mm] (f21.north) -- (f31.south);
        \draw[-, cyan, line width=0.2mm] (f21.north) -- (f32.south) node[below] {(\ref{tmp_40})};
        \draw[-, cyan, line width=0.2mm] (f22.north) -- (f33.south);
        \draw[-, cyan, line width=0.2mm] (f23.north) -- (f33.south) node[midway, left] {(\ref{tmp_24})};

        \draw[-, cyan, line width=0.2mm] (f41.north) -- (f51.south);
        \draw[-, cyan, line width=0.2mm] (f42.north) -- (f53.south);
        \draw[-, cyan, line width=0.2mm] (f43.north) -- (f53.south) node[midway, left] {(\ref{tmp_24})};



    \end{tikzpicture}
    \vspace{15pt}
    \caption{\textbf{Illustration of Step 1 of the proof of Lemma \ref{lemma_piece_bin_lin_ver3}.} Black neurons use the Heaviside activation function, while red neurons use linear activation functions.
    } \label{fig_lemma_bin_step1}
\end{figure}

\noindent
For each $j \in [J]$, there exists $\bm{b}_j = (b_{j, i, \ell})_{i \in [d], \ell \in [m+t]} \in  \{0,1\}^{(m+t)d}$
such that $j(\bx)=j$ if and only if  
$(b_\ell(x_i))_{i \in [d], \ell \in [m+t]} = \bm{b}_j$.
Hence,
\begin{align}
\mathbb{I}(j(\bx)=j) &= \prod_{i=1}^d \prod_{\ell=1}^{m+t} \mathbb{I}\left( b_\ell(x_i) = b_{j, i, \ell} \right) \nonumber \\
&= \prod_{i=1}^d \prod_{\ell=1}^{m+t} \mathbb{I}\bigg( \big(2b_\ell(x_i)-1\big)\big(2b_{j, i, \ell}-1\big) =1 \bigg) \nonumber \\
&= 
\mathbb{I}\left( \sum_{i=1}^d \sum_{\ell=1}^{m+t} \big(2b_\ell(x_i)-1\big)\big(2b_{j, i, \ell}-1\big) - \left(d(m+t)-\frac{1}{2}\right) \right) \label{tmp_40}
\end{align}
is a linear threshold function of $(b_\ell(\bx))_{\ell \in [m+t]}$.

To make the input $\bz(\bx)$ available up to the $(R+1)$-st hidden layer of $h$, we use the identity $b = \mathbb{I}(b - 1/2)$ for $b \in \{0,1\}$, resulting in $d(m+n+2t)$ Heaviside neurons per layer.
In addition, for each $\ell \in [R] = [2^{td}]$, given that the $(\ell-1)$-st hidden layer of $h$ saves the values of $(b_{\ell}(\bx))_{\ell \in [m+t]}$, we construct 
$(\mathbb{I}(j(\bx)=j))_{j \in \{(\ell-1)\cdot2^{md}+1,\ldots,\ell \cdot 2^{md}\}}$ in the $\ell$-th hidden layer using (\ref{tmp_40}), resulting in $2^{md}$ Heaviside neurons per layer.

Given that the first hidden layer of $h$ saves the values of $(\mathbb{I}(j(\bx)=j))_{j \in \{1,\ldots,2^{md}\}}$, we construct
$(\sum_{j\in [2^{md}]} \lambda_{j,k} \mathbb{I}(j(\bx)=j))_{k \in [K]}$ in the second hidden layer using $K$ linear neurons.
Assume that the $\ell$-th hidden layer with $2 \leq \ell \leq R$ contains a neuron with the value of $\sum_{j\in [(\ell-1) \cdot 2^{md}]} \lambda_{j,k} \mathbb{I}(j(\bx)=j)$, for every $k \in [K]$.
Given that the $\ell$-th hidden layer also contains the values of $(\mathbb{I}(j(\bx)=j))_{j \in \{(\ell-1)\cdot2^{md}+1,\ldots,\ell \cdot 2^{md}\}}$,
we construct
\begin{equation}
    \begin{aligned}
    \sum_{j=1}^{\ell \cdot 2^{md}} \lambda_{j,k} \mathbb{I}(j(\bx)=j)
    = \sum_{j=1}^{(\ell-1)\cdot 2^{md}} \lambda_{j,k} \mathbb{I}(j(\bx)=j)
    + \sum_{j=(\ell-1)\cdot 2^{md} + 1}^{\ell \cdot 2^{md}} \lambda_{j,k} \mathbb{I}(j(\bx)=j).
    \label{tmp_24}
    \end{aligned}
\end{equation}
in the $(\ell+1)$-st hidden layer, for every $k \in [K]$.
This results in $K$ linear neurons per layer.
Iterating this process, the $\ell$-th hidden layer for every $2 \leq \ell \leq R+1$ contains the values of $\sum_{j\in [(\ell-1) \cdot 2^{md}]} \lambda_{j,k} \mathbb{I}(j(\bx)=j)$. 
For $\ell=R+1,$ the overall output is 
$\bz(\bx)$
and
$$\left(\sum_{j=1}^{R \cdot 2^{md}} \lambda_{j,k} \mathbb{I}(j(\bx)=j)\right)_{k \in [K]} = \left(\lambda_{j(\bx),k}\right)_{k \in [K]}.$$
By construction, each of the first to the $(R+1)$-st hidden layers of $h$ has at most $d(m+n+2t) + 2^{md} $
Heaviside neurons and $K$ linear neurons.
See Fig. \ref{fig_lemma_bin_step1} for an illustration of Step 1. 

\medskip \medskip 

\noindent
\textbf{Step 2}: 
Representing $(\mathbb{I}(k(\bx)=k, r(\bx)=\ell))_{k \in [K]}$ and $(\eta_{j(\bx),k,\ell})_{k \in [K]}$ for each $\ell \in [R]$, in the $(R+\ell+1)$-st hidden layer of $h(\bz(\bx))$.

\medskip

\begin{figure}[t]
    \centering
    \tikzset{every node/.style={scale=0.9}}
\begin{tikzpicture}[node distance=2cm, auto, thick, >=Stealth, scale=0.8]

\node[draw=white] (x) at (8, -4) { $\bz(\bx) := \big(b_{\ell}(\bx)\big)_{\ell \in [m+n+2t]}$};
;

       \node[draw=black, line width=0.1mm, draw=black, minimum height=0.6cm, minimum width=6cm] (g1) at (8, -2.5) { (First hidden layer) };

       \node at (8, -0.8) {\Huge $\vdots$};

       \node[draw=black, line width=0.1mm, draw=black, minimum height=0.6cm, minimum width=6cm] (g2) at (8, 0.5) { ($R$-th hidden layer) };

       \node[draw=black, line width=0.1mm, minimum height=5cm, minimum width=7.5cm] (g) at (8, -0.25) { };
        \node[left] at (g.west) 
        {\textbf{(Step 1)}};

        \node[draw=white] (f11) at (6, 2) { $\bz(\bx)$};

        \node[draw=white] (f13) at (10.5, 2) { \color{red} $\big(\lambda_{j(\bx),k} \big)_{k \in [K]} $};

        \node[draw=white] (f21) at (1, 4) { $\bz(\bx)$};

        \node[draw=white] (f22) at (5.5, 4) { $\big(\mathbb{I}(k(\bx)=k, r(\bx)=1)\big)_{k \in [K]}$};

        \node[draw=white] (f23) at (10.5, 4) {  $\big(\eta_{j(\bx),k,1} \big)_{k \in [K]} $};

        \node[draw=red, minimum height=0.6cm] (f24) at (13.5, 4) { \color{red}  K neurons};

        \node[draw=white] (f31) at (1, 6) { $\bz(\bx)$};

        \node[draw=white] (f32) at (5.5, 6) { $\big(\mathbb{I}(k(\bx)=k, r(\bx)=2)\big)_{k \in [K]}$};

        \node[draw=white] (f33) at (10.5, 6) {  $\big(\eta_{j(\bx),k,2} \big)_{k \in [K]} $};

        \node[draw=red, minimum height=0.6cm] (f34) at (13.5, 6) { \color{red}  K neurons};

        \node[draw=white] (f41) at (1, 8) { $\bz(\bx)$};

        \node[draw=white] (f42) at (5.5, 8) { $\big(\mathbb{I}(k(\bx)=k, r(\bx)=3)\big)_{k \in [K]}$};

        \node[draw=white] (f43) at (10.5, 8) {  $\big(\eta_{j(\bx),k,3} \big)_{k \in [K]} $};

        \node[draw=red, minimum height=0.6cm] (f44) at (13.5, 8) { \color{red}  K neurons};

        \node at (4.5, 10) {\Huge $\vdots$};

        \node at (12, 10) {\Huge $\vdots$};

        \node[draw=white] (f51) at (1, 11.5) { $\bz(\bx)$ };

        \node[draw=white] (f52) at (5.5, 11.5) { $\big(\mathbb{I}(k(\bx)=k, r(\bx)=R)\big)_{k \in [K]}$};

        \node[draw=white] (f53) at (10.5, 11.5) {  $\big(\eta_{j(\bx),k,R} \big)_{k \in [K]} $};

        \draw[-, cyan, line width=0.2mm] (x.north) -- (g.south); 
        \draw[-, cyan, line width=0.2mm] (g2.north) -- (f11.south); 
        \draw[-, cyan, line width=0.2mm] (g2.north) -- (f13.south); 

        \draw[-, cyan, line width=0.2mm] (f11.north) -- (f21.south); 
        \draw[-, cyan, line width=0.2mm] (f11.north) -- (f22.south) node[right] {(\ref{tmp_140})};
        \draw[-, cyan, line width=0.2mm] (f13.north) -- (f23.south);
        \draw[-, cyan, line width=0.2mm] (f13.north) -- (f24.south);

        \draw[-, cyan, line width=0.2mm] (f21.north) -- (f31.south); 
        \draw[-, cyan, line width=0.2mm] (f21.north) -- (f32.south) node[below] {(\ref{tmp_140})};
        \draw[-, cyan, line width=0.2mm] (f23.north) -- (f33.south);
        \draw[-, cyan, line width=0.2mm] (f23.north) -- (f34.south);
        \draw[-, cyan, line width=0.2mm] (f24.north) -- (f33.south);
        \draw[-, cyan, line width=0.2mm] (f24.north) -- (f34.south);
        \draw[-, cyan, line width=0.2mm];

        \draw[-, cyan, line width=0.2mm] (f31.north) -- (f41.south); 
        \draw[-, cyan, line width=0.2mm] (f31.north) -- (f42.south) node[below] {(\ref{tmp_140})};
        \draw[-, cyan, line width=0.2mm] (f33.north) -- (f43.south);
        \draw[-, cyan, line width=0.2mm] (f33.north) -- (f44.south);
        \draw[-, cyan, line width=0.2mm] (f34.north) -- (f43.south);
        \draw[-, cyan, line width=0.2mm] (f34.north) -- (f44.south);
        \draw[-, cyan, line width=0.2mm]; 

       \node[draw=black, line width=0.1mm, minimum height=8cm, minimum width=5cm] (g) at (12, 7.75) { };
        \node[above] at (g.north) {(Lemma \ref{lemma_bit_extract_linear_ver2})$\times K$ times  };

    \end{tikzpicture}
    \vspace{15pt}
    \caption{\textbf{Illustration of Step 2 of the proof of Lemma \ref{lemma_piece_bin_lin_ver3}.} Black neurons use the Heaviside activation function, while red neurons use linear activation functions.
    } \label{fig_lemma_bin_step2}
\end{figure}
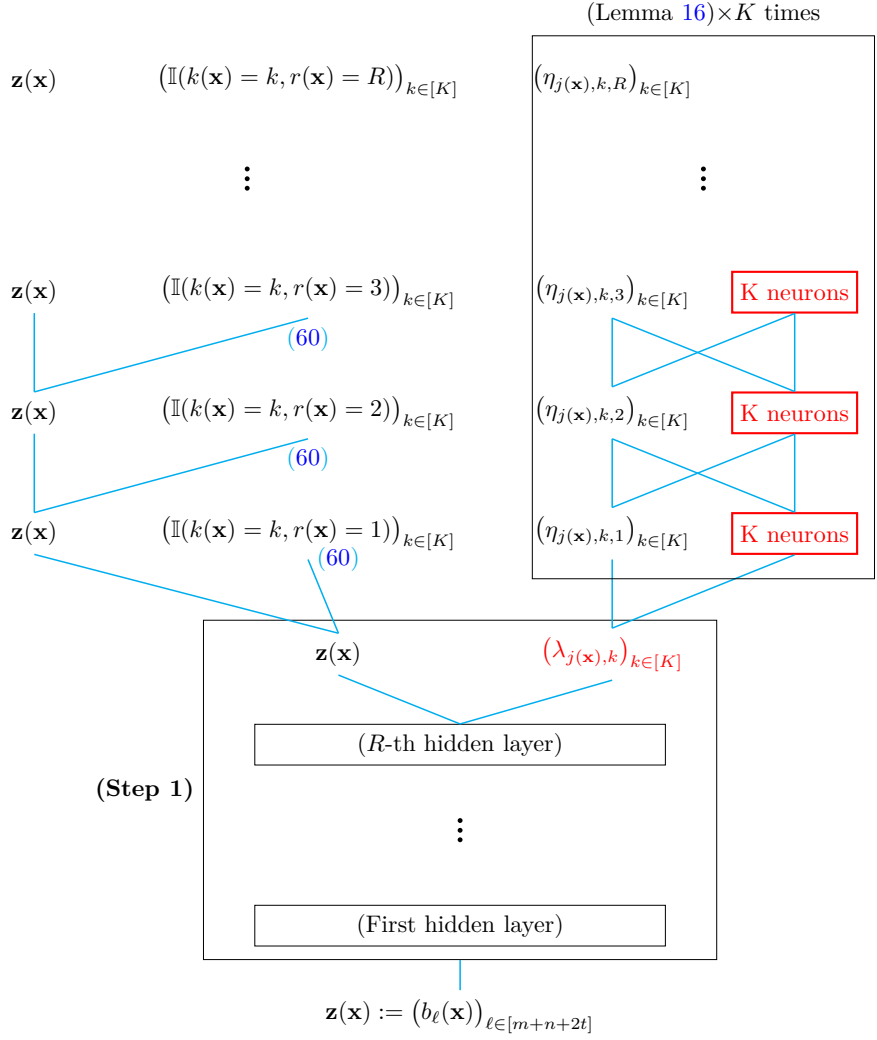

\noindent
For each $k \in [K]$ and $r \in [R]$, there exists $\bm{b}_j = (b_{j, i, \ell})_{i \in [d], \ell \in [n+t]} \in  \{0,1\}^{(n+t)d}$
such that $k(\bx)=k$ and $r(\bx)=r$ if and only if  
$(b_\ell(x_i))_{i \in [d], \ell \in \{m+t+1,\ldots, m+n+2t\}} = \bm{b}_j$.
Hence, similarly as in \eqref{tmp_40},
\begin{align}
\mathbb{I}\big(k(\bx)=k, r(\bx)=r\big) &= \prod_{i=1}^d \prod_{\ell=1}^{n+t} \mathbb{I}\left( b_{m+t+\ell}(x_i) = b_{j, i, \ell} \right) \nonumber \\
&= \prod_{i=1}^d \prod_{\ell=1}^{n+t} \mathbb{I}\bigg( \big(2b_{m+t+\ell} (x_i)-1\big)\big(2b_{j, i, \ell}-1\big) =1 \bigg) \nonumber \\
&= 
\mathbb{I}\left( \sum_{i=1}^d \sum_{\ell=1}^{n+t} \big(2 b_{m+t+\ell}(x_i)-1\big)\big(2b_{j, i, \ell}-1\big) - \left(d(n+t)-\frac{1}{2}\right) \right) \label{tmp_140}
\end{align}
is a linear threshold function of $(b_\ell(\bx))_{\ell \in \{m+t+1,\ldots, m+n+2t\}}$.

The network $h$ has been already constructed up to the $(2m+n)$-th hidden layer, such that the output of the $(R+1)$-st hidden layer of $h(\bz(\bx))$ is 
$\bz(\bx)$
and
$\left(\lambda_{j(\bx),k}\right)_{k \in [K]}.$
To save $\bz(\bx)$ up to the $(2R+1)$-st hidden layer of $h$, we use the identity $b = \mathbb{I}(b - 1/2)$ for $b \in \{0,1\}$, resulting in $d(m+n+2t)$ Heaviside neurons per layer.
In addition, for each $\ell \in [R]$, given that the $(R+\ell)$-th hidden layer of $h$ saves the values of $(b_\ell(\bx))_{\ell \in \{m+t+1,\ldots, m+n+2t\}}$, we construct 
$(\mathbb{I}(k(\bx)=k, r(\bx)=\ell))_{k \in [K]}$ in the $(R+\ell+1)$-st hidden layer using (\ref{tmp_140}), resulting in $K$ Heaviside neurons in each layer.

Recall that $\lambda_{j(\bx),k} := {0.\eta_{j(\bx),k,1} \eta_{j(\bx),k,2} \ldots \eta_{j(\bx),k,R}}_{(2)}$ for each $k \in [K]$.
We apply Lemma \ref{lemma_bit_extract_linear_ver2} with $L:=R$ and obtain a DHN $f \in \DHN_{\lin}(R, 1:1:1, 1)$ such that the $\ell$-th hidden layer (with $\ell \in [R]$) of $f(\lambda_{j(\bx),k})$ contains a neuron with the value of $\eta_{j(\bx),k,\ell}$.
For each $k \in [K]$, we construct $f(\lambda_{j(\bx),k})$ 
using $\lambda_{j(\bx),k}$ in the $(R+1)$-st hidden layer of $h$.
Then, for each $\ell \in [R]$, the $(R + \ell + 1)$-st hidden layer of $h$ contains the values of $(\eta_{j(\bx),k,\ell})_{k \in [K]}$.  
This requires $K$ Heaviside neurons and $K$ linear neurons in each layer. 
See Fig. \ref{fig_lemma_bin_step2} for an illustration of Step 2. 

\medskip \medskip 

\noindent
\textbf{Step 3}: Representing $\eta_{j(\bx), k(\bx), r(\bx)}$ in the output layer of $h(\bz(\bx))$.

\medskip 

\noindent
For each $k \in [K]$ and $r \in [R]$, we define
$$\rho_{k,r}(\bx) := \eta_{j(\bx),k,r} \mathbb{I}\big(k(\bx)=k, r(\bx)=r\big).$$
For each $r \in [R]$, given that $(R + r + 1)$-st hidden layer of $h$ contains the values of $(\mathbb{I}(k(\bx)=k, r(\bx)=r))_{k \in [K]}$ and $(\eta_{j(\bx),k,r})_{k \in [K]}$, we construct
\begin{align}
    \big( \rho_{k,r}(\bx) \big)_{k \in [K]} = \left(\mathbb{I}\left( \eta_{j(\bx),k,r} + \mathbb{I}\big(k(\bx)=k, r(\bx)=r \big) - \frac{1}{2}\right)\right)_{k \in [K]} \label{tmp_30}
\end{align}
in the $(R + r + 2)$-nd hidden layer.

\clearpage

\begin{figure}[t]
    \centering
    \tikzset{every node/.style={scale=0.9}}
\begin{tikzpicture}[node distance=2cm, auto, thick, >=Stealth, scale=0.8]

       \node[draw=black, line width=0.1mm, minimum height=14cm, minimum width=8.8cm] (g) at (7, 4) { };
        \node[above] at (g.north) 
        {\textbf{(Step 2)}};

\node[draw=white] (x) at (7, -4.7) { $\bz(\bx) := \big(b_{\ell}(\bx)\big)_{\ell \in [m+n+2t]}$};
;

       \node[draw=black, line width=0.1mm, draw=black, minimum height=0.8cm, minimum width=8.5cm] (g1) at (7, -3) {(First hidden layer) };

       \node at (7, -1.5) {\Huge $\vdots$};

       \node[draw=black, line width=0.1mm, draw=black, minimum height=0.8cm, minimum width=8.5cm] (g2) at (7, 0) {$((R+1)$-st hidden layer) };


       \node[draw=black, line width=0.1mm, draw=black, minimum height=0.8cm, minimum width=8.5cm] (g3) at (7, 2) {\textbf{...} \hspace{6.5cm}  };

        \node[draw=white] (f31) at (6.5, 2) { $\big(\mathbb{I}(k(\bx)=k, r(\bx)=1)\big)_{k \in [K]}$};

        \node[draw=white] (f32) at (10.4, 2) { $\big(\eta_{j(\bx),k,1} \big)_{k \in [K]} $};


       \node[draw=black, line width=0.1mm, draw=black, minimum height=0.8cm, minimum width=8.5cm] (g4) at (7, 4) {\textbf{...} \hspace{6.5cm}  };

        \node[draw=white] (f41) at (6.5, 4) { $\big(\mathbb{I}(k(\bx)=k, r(\bx)=2)\big)_{k \in [K]}$};

        \node[draw=white] (f42) at (10.4, 4) { $\big(\eta_{j(\bx),k,2} \big)_{k \in [K]} $};

        \node[draw=white] (f43) at (13.5, 4) { $\big(\rho_{k,1} (\bx) \big)_{k \in [K]}$};


       \node[draw=black, line width=0.1mm, draw=black, minimum height=0.8cm, minimum width=8.5cm] (g5) at (7, 6) {\textbf{...} \hspace{6.5cm}  };

        \node[draw=white] (f51) at (6.5, 6) { $\big(\mathbb{I}(k(\bx)=k, r(\bx)=3)\big)_{k \in [K]}$};

        \node[draw=white] (f52) at (10.4, 6) { $\big(\eta_{j(\bx),k,3} \big)_{k \in [K]} $};

        \node[draw=white] (f53) at (13.5, 6) { $\big(\rho_{k,2} (\bx) \big)_{k \in [K]}$};        

        \node[draw=white] (f54) at (16.8, 6) { $\sum_{k=1}^K \rho_{k,1} (\bx)$};


       \node[draw=black, line width=0.1mm, draw=black, minimum height=0.8cm, minimum width=8.5cm] (g6) at (7, 8) {\textbf{...} \hspace{6.5cm}  };

        \node[draw=white] (f61) at (6.5, 8) { $\big(\mathbb{I}(k(\bx)=k, r(\bx)=4)\big)_{k \in [K]}$};

        \node[draw=white] (f62) at (10.4, 8) { $\big(\eta_{j(\bx),k,4} \big)_{k \in [K]} $};

        \node[draw=white] (f63) at (13.5, 8) { $\big(\rho_{k,3} (\bx) \big)_{k \in [K]}$};        

        \node[draw=white] (f64) at (16.8, 8) { $\sum_{r=1}^2\sum_{k=1}^K \rho_{k,r} (\bx)$};


        \node at (7, 9.5) {\Huge $\vdots$};

        \node at (16, 9.5) {\Huge $\vdots$};


       \node[draw=black, line width=0.1mm, draw=black, minimum height=0.8cm, minimum width=8.5cm] (g7) at (7, 11) {\textbf{...} \hspace{6.5cm}  };

        \node[draw=white] (f71) at (6.5, 11) { $\big(\mathbb{I}(k(\bx)=k, r(\bx)=R)\big)_{k \in [K]}$};

        \node[draw=white] (f72) at (10.4, 11) { $\big(\eta_{j(\bx),k,R} \big)_{k \in [K]} $};

        \node[draw=white] (f73) at (13.5, 11) { $\big(\rho_{k,R-1}(\bx) \big)_{k \in [K]}$};        

        \node[draw=white] (f74) at (16.8, 11) { $\sum_{r=1}^{R-2}\sum_{k=1}^K \rho_{k,r}(\bx)$};


        \node[draw=white] (f83) at (13.5, 13) { $\big(\rho_{k,R} (\bx) \big)_{k \in [K]}$};        

        \node[draw=white] (f84) at (16.8, 13) { $\sum_{r=1}^{R-1}\sum_{k=1}^K \rho_{k,r} (\bx)$};
        

        \node[draw=white] (f94) at (16.8, 15) { $h(\bz(\bx)) = \eta_{j(\bx),k(\bx),r(\bx)}$};


        \draw[-, cyan, line width=0.2mm] (x.north) -- (g.south) ;

        %

        \draw[-, cyan, line width=0.2mm] (f31.north) -- (f43.south) ;
        \draw[-, cyan, line width=0.2mm] (f32.north) -- (f43.south)node[below] {(\ref{tmp_30})};

        %
        
        \draw[-, cyan, line width=0.2mm] (f41.north) -- (f53.south) ;
        \draw[-, cyan, line width=0.2mm] (f42.north) -- (f53.south)node[below] {(\ref{tmp_30})};
        \draw[-, cyan, line width=0.2mm] (f43.north) -- (f54.south)node[below] {(\ref{tmp_31})};

        %
        
        \draw[-, cyan, line width=0.2mm] (f51.north) -- (f63.south) ;
        \draw[-, cyan, line width=0.2mm] (f52.north) -- (f63.south)node[below] {(\ref{tmp_30})};
        \draw[-, cyan, line width=0.2mm] (f53.north) -- (f64.south);
        \draw[-, cyan, line width=0.2mm] (f54.north) -- (f64.south)node[near end, right] {(\ref{tmp_25})};

        \draw[-, cyan, line width=0.2mm] (f71.north) -- (f83.south) ;
        \draw[-, cyan, line width=0.2mm] (f72.north) -- (f83.south)node[below] {(\ref{tmp_30})};
        \draw[-, cyan, line width=0.2mm] (f73.north) -- (f84.south);
        \draw[-, cyan, line width=0.2mm] (f74.north) -- (f84.south)node[near end, right] {(\ref{tmp_25})};

        \draw[-, cyan, line width=0.2mm] (f83.north) -- (f94.south);
        \draw[-, cyan, line width=0.2mm] (f84.north) -- (f94.south)node[near end, right] {(\ref{tmp_32})};
    \end{tikzpicture}
    \vspace{15pt}
    \caption{\textbf{Illustration of Step 3 of the proof of Lemma \ref{lemma_piece_bin_lin_ver3}.}
    } \label{fig_lemma_bin_step3}
\end{figure}
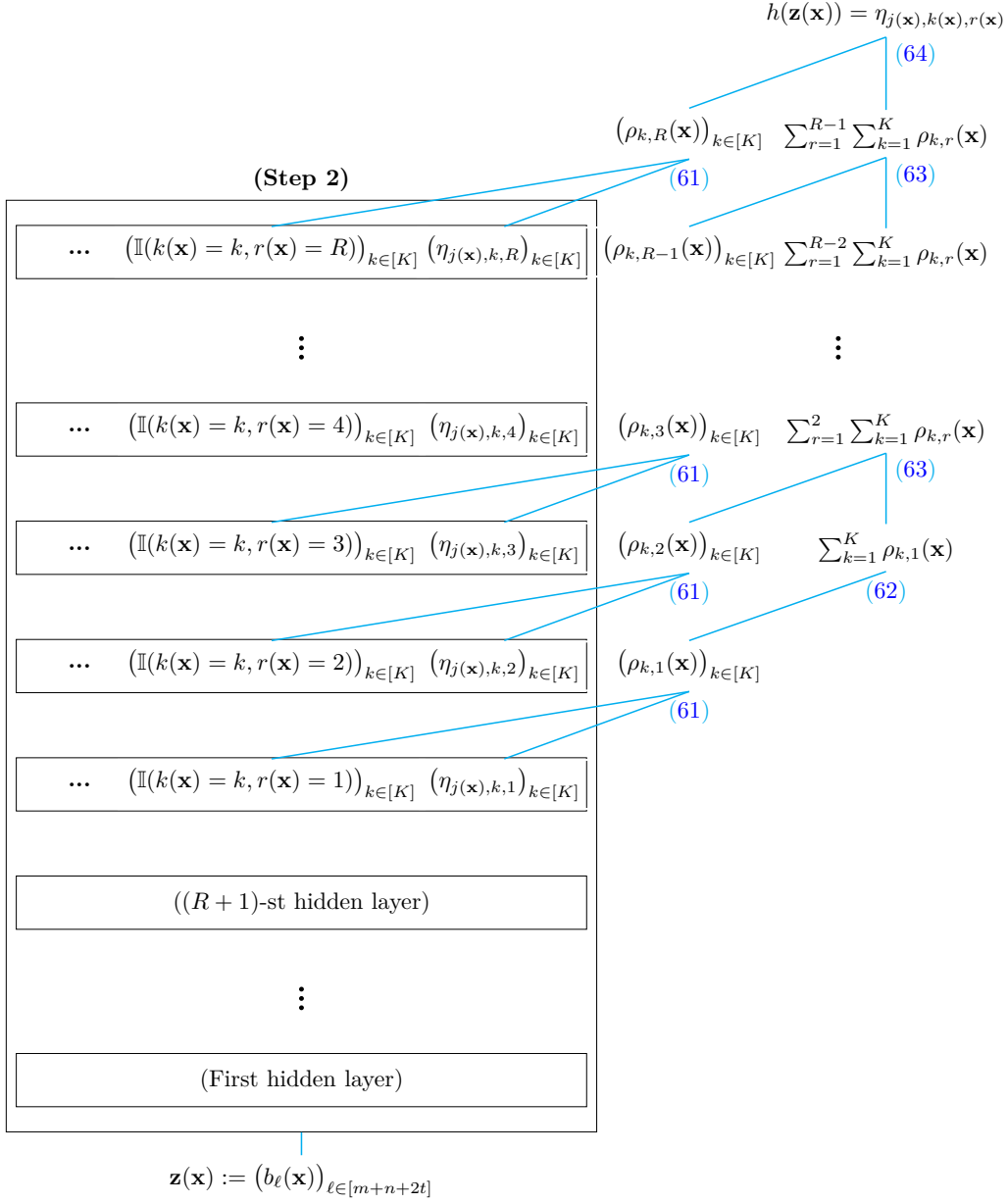

Given that the $(R+3)$-rd hidden layer saves the values of $(\rho_{k,1}(\bx))_{k \in [K]}$, we construct 
\begin{align}
    \sum_{k=1}^K \rho_{k,1} = \mathbb{I}\left(\sum_{k=1}^K \rho_{k,1} - \frac{1}{2}\right) \label{tmp_31}
\end{align}
in the $(R+4)$-th hidden layer.
Assume that the $(R+\ell+3)$-rd hidden layer with $1 \leq \ell \leq R-2$ contains the value of $\sum_{r=1}^\ell \sum_{k=1}^K \rho_{k,r}$.
Given that the $(R+\ell+3)$-rd hidden layer also contains the values of $\big( \rho_{k,\ell+1}(\bx) \big)_{k \in [K]}$, we construct
\begin{align}
    \sum_{r=1}^{\ell+1} \sum_{k=1}^K \rho_{k,r} = \mathbb{I}\left(\sum_{r=1}^{\ell} \sum_{k=1}^K \rho_{k,r} + \sum_{k=1}^K \rho_{k,\ell+1} - \frac{1}{2}\right) \label{tmp_25}
\end{align}
in the $(R+\ell+4)$-th hidden layer.
Iterating this process, the $(R+\ell+3)$-rd hidden layer for every $1 \leq \ell \leq R-1$ contains the values of $\sum_{r=1}^\ell \sum_{k=1}^K \rho_{k,r}$. 

Given that the $(2R+2)$-nd hidden layer contains the values of $\big( \rho_{k,R}(\bx) \big)_{k \in [K]}$ and $\sum_{r=1}^{R-1} \sum_{k=1}^K \rho_{k,r}$, we construct the final output as
\begin{align}
    h\big((b_{\ell}(\bx))_{\ell \in [m+n+2t]}\big) 
    &= \sum_{r=1}^{R-1} \sum_{k=1}^K \rho_{k,r} + \sum_{k=1}^K \rho_{k,R} \nonumber \\
    & = \sum_{r=1}^R \sum_{k=1}^K \eta_{j(\bx),k,r} \mathbb{I}\big(k(\bx)=k, r(\bx)=r\big) \nonumber  \\
    & = \eta_{j(\bx),k(\bx),r(\bx)}. \label{tmp_32} 
\end{align}
See Fig. \ref{fig_lemma_bin_step3} for an illustration of Step 3.
Given that the network $h$ has $2R+2$ hidden layers and each hidden layer has 
$\leq d(m+n+2t) + (2^{md} \vee (3K+1))$ Heaviside activated neurons and $\leq K$ linearly activated neurons, we obtain the assertion.

\end{proof}

\begin{proof}[Proof of the lower bound in Theorem \ref{VCdim-linear}] 
It is enough to show the result for $d = 1$.
From (\ref{skip_lin_include}), we have
$$\DHN_{\lin}(L,1:p:1, s) \supseteq \DHN_{\skipp}(L,1:p:1, 1).$$
By Theorem \ref{thm_skip_vc_upper}, we obtain
\begin{align}
    \operatorname{VC}\big(\DHN_{\lin}(L,1:p:1, s)\big) \geq 
\operatorname{VC}\big(\DHN_{\skipp}(L,1:p:1, 1)\big) \geq
c' \cdot Lp^2, \label{tmp_34}
\end{align}
where $c'$ is the universal constant defined on Theorem \ref{thm_skip_vc_upper}.  

Now we consider the case $L^2 ps > Lp^2$.
We introduce integers $m$, $n$ and $t$ such that $m \lesssim \log p$, $n \lesssim \log s$ and $t \lesssim \log L$. Their exact values will be specified in (\ref{tmp_42}).
For $i \in [2^{m+n+2t}]$, we define 
$$z_i := \frac{i}{2^{m+n+2t}} - \frac{1}{2^{m+n+2t+1}}.$$
    We now show that for arbitrary function
    $$\lambda : \{z_1, z_2, \ldots, z_{2^{m+n+2t}} \} \mapsto \{0,1\},$$
    there exists $f \in \DHN_{\lin}(L,1:p:1,s)$ such that $f(z_i) = \lambda(z_i)$ for every $i \in [2^{m+n+2t}]$. Thus, $\operatorname{VC}(\DHN_{\lin}(L,1:p:1,s)) \geq 2^{m+n+2t}$.

    First, we construct a Heaviside network with one linear neuron, that extracts the first $m+n+2t$ bits of $x$, that is, $(b_\ell(x))_{\ell \in [m+n+2t]}$.   
    This is achieved by applying Lemma \ref{lemma_bit_extract_linear_ver1} with $L := m+n+2t$.
    For $\ell \in [m+n+2t],$ the $\ell$-th hidden layer of $f$ has $\ell$ Heaviside neurons and one linear neuron.
    The $(m+n+2t)$-th hidden layer outputs $(b_{\ell}(x))_{\ell \in [m+n+2t]}$. This network is used to construct the network $f$ up to the $(m+n+2t)$-th layer. 

    We now describe the construction of the remaining $(2R+2)$-hidden layers of the network $f.$
    The main step is the construction of a subnetwork $h.$
    As in \eqref{eq.0fweni_2} and the subsequent text (with $d=1$), define $J:=2^{m+t}$, $K:=2^{n}$, $R:=2^{t}$, along with the partitions
    $\{U_j\}_{j \in [J]}$, $\{V_k\}_{k \in [K]}$, and $\{W_r\}_{r \in [R]}$  of $[0,1]$, and the index functions $j(x) \in [J]$, $k(x) \in [K]$, and $r(x) \in [R]$.
    By definition, there exists a bijective function 
    $\bm{\pi} = (\pi_1,\pi_2,\pi_3) : [2^{m+n+2t}] \mapsto [J] \times [K] \times [R]$
    such that 
    $\pi_1(i) = j(z_i)$, $\pi_2(i) = k(z_i)$ and $\pi_3(i) = r(z_i)$ for every $i \in [2^{m+n+2t}]$. 
    In fact, $z_{i}$ is the center of the interval $U_{\pi_1(i)} \cap V_{\pi_2(i)} \cap W_{\pi_3(i)}$.

    By applying Lemma \ref{lemma_piece_bin_lin_ver3} with $\eta_{j,k,r} := \lambda(z_{\bm{\pi}^{-1}(j,k,r)})$, we obtain a DHN with $K$ linear neurons in each layer
    $$h \in \DHN_{\lin}\Big(2R+2,\ m+n+2t:m+n+2t + \big(2^{m} \vee (3K+1) \big):1, \ K \Big)$$
    such that 
    $$\mathbb{I} \Big( h \big((b_{\ell}(x))_{\ell \in [m+n+2t]} \big) \Big) = \eta_{j(x),k(x),r(x)} = \lambda(z_{\bm{\pi}^{-1}(j(x),k(x),r(x))}).$$
    Replacing $x$ by $z_i$, we obtain
    \begin{align*}
        \mathbb{I} \Big( h \big( (b_{\ell}(z_i))_{\ell \in [m+n+2t]} \big) \Big) = \lambda(z_{\bm{\pi}^{-1}(\pi_1(i),\pi_2(i),\pi_3(i))}) = \lambda(z_i). 
    \end{align*} 

    The network $f$ has been already constructed up to the $(m+n+2t)$-th hidden layer outputting the binary digits $(b_{\ell}(x))_{\ell \in [m+n+2t]}$.
    Stacking the network $h$ on top results in the network 
$$f \in \DHN_{\lin}\Big(m+n+2t+2R+2, \ 1:(m+n+2t + \big(2^{m} \vee (3K+1) \big):1, \ K\Big)$$
        computing $f(z_i) = \lambda(z_i)$ for every $i \in [2^{m+n+2t}]$.
    Recall that $R:=2^{t}$ and $K:=2^{n}$. 
    We set
    \begin{align}
        t := \bigg\lfloor \log\left(\frac{L}{3}\right)\bigg\rfloor, 
    \quad m := \bigg\lfloor \log\left(  \frac{p}{2} \right) \bigg\rfloor, 
    \quad \text{and} \ \ n := \left\lfloor \log \left( s \land \frac{p}{5} \right) \right\rfloor. \label{tmp_42}
    \end{align}
    Given that 
    $$m+n+2t \leq 2 \log (Lp) \leq \frac{L \land p}4,$$
    we obtain
    \begin{align*}
        m+n+2t+2R+2 \leq \frac{L}{4} + \frac{2L}{3}+2 &\leq L,\\
        m+n+2t + \big(2^{m} \vee (3K+1) \big) \leq \frac{p}{4} + \left( \frac{p}{2} \vee \left(\frac{3p}{5}+1\right) \right) &\leq p
    \end{align*}
    for sufficiently large $L$ and $p$ (e.g., if $\tfrac{11}{12}L+2 \leq L$ and $\tfrac{17}{20}p + 1 \leq p$, implying that the constant $c$ in the statement of the theorem can be taken to be $24$).
    We also have $K \leq s$, and hence    
    $f \in \DHN_{\lin}(L,1:p:1,s)$.

    Given the inequalities
    $$t \geq \log\left(\frac{L}{3}\right)-1, \quad m \geq \log\left(  \frac{p}{2} \right)-1,
    \quad \text{and} \ \ n \geq \log\left(  \frac{s}{5} \right)-1,$$ 
    we obtain the lower bound
    \begin{align}
        \operatorname{VC}\big(\DHN_{\lin}(L,1:p:1,1)\big) \geq 2^{m+n+2t} \geq \frac{1}{2^4 \cdot 3^2 \cdot 2 \cdot 5} L^2 ps. \label{tmp_35}
    \end{align}
    By (\ref{tmp_34}) and (\ref{tmp_35}), we obtain the assertion.
    
\end{proof}

\subsection{Proof of Theorem \ref{upper_smooth_lin}}
\begin{proof} 
By (\ref{skip_lin_include}), we have
$$\DHN_{\lin}(L,1:p:1, s) \supseteq \DHN_{\skipp}(L,1:p:1, d).$$
Hence, by Theorem \ref{upper_smooth_skip}, we obtain
\begin{align}
    \sup_{f_0 \in \mathcal{H}_{d}^{\beta}(M)} \, \inf_{f \in \DHN_{\lin}(L,d:p:1,s)} \left\| f - f_0 \right\|_{\infty}
    &\leq 
    \sup_{f_0 \in \mathcal{H}_{d}^{\beta}(M)} \, \inf_{f \in \DHN_{\skipp}(L,d:p:1,d)} \left\| f - f_0 \right\|_{\infty} \nonumber \\ 
    & \leq c' M \cdot  \left(\frac{\log^3 (Lp)}{Lp^2 }\right)^{\frac{\beta}{d}}, \label{tmp_200}
\end{align}
with $c'$ the constant defined in Theorem \ref{upper_smooth_skip}, which depends only on $\beta$.  

Now we consider the case $L^2 ps > Lp^2$.
The proof strategy is similar to that of Theorem \ref{upper_smooth_skip}, replacing Lemma \ref{lemma_piece_bin_skip_ver3} with Lemma \ref{lemma_piece_bin_lin_ver3} and thereby changing the rate.
We introduce the integers $m$, $n$ and $t$ satisfying
$m \lesssim \log p$, $n \lesssim \log s$ and $t \lesssim \log L$. 
Their exact values will be specified in (\ref{tmp_112}).  
In addition, we define $q = \lceil \beta \rceil(m+n+2t)$.

Any real number $x \in [0,1]$ can be expressed in a binary representation as
\begin{align*}
    x=0.b_1(x)b_2(x)b_3(x)\dots_{(2)}.
\end{align*}
As in \eqref{form1}, we introduce the finite-bit representation \[\widetilde{x}=0.b_1(x)b_2(x)b_3(x)\dots b_{m+n+2t}(x)_{(2)} + 2^{-m-n-2t-1}.\]

For a vector $\bx = (x_1, \dots, x_d) \in [0,1]^d$, we define $b_{\ell}(\bx) := (b_{\ell}(x_i))_{i \in [d]}$.
In addition, we define the truncated approximation of $\bx$ as $\widetilde{\bx} = (\widetilde{x}_1, \dots, \widetilde{x}_d) \in [0,1]^d$ where each $\widetilde{x}_i$ is defined as in \eqref{form1}. Therefore,
$$
    \|\bx - \widetilde{\bx}\|_{\infty} \leq 2^{-m-n-2t-1}.
$$

Our goal is to construct $f \in \DHN_{\lin}(L,d:p:1,s)$ satisfying
\begin{align} 
    \sup_{\bx \in [0,1]^d} \Bigg|f(\bx) - \underbrace{\sum_{\bm{\alpha} \in \mathcal{A}(d,\beta)} (\partial^{\bm{\alpha}}f_0)(\widetilde{\bx}) \cdot \frac{(\bx-\widetilde{\bx})^{\bm{\alpha}}}{\bm{\alpha}!}}_{:= \widetilde{f}_0(\bx)} \Bigg| \lesssim \left(\frac{\log^3 (Lp)}{L^2 ps}\right)^{\frac{\beta}{d}},
    \label{eq_skip_taylor_2}
\end{align}
with
\[\mathcal{A}(d,\beta) :=  \{ \bm{\alpha} \in \mathbb{N}^d  :\|\bm{\alpha}\|_1 < \beta \}.\]
Combining Lemma \ref{lemma_taylor} and (\ref{tmp_112}) implies
\[
    \sup_{\bx \in [0,1]^d} \left| \widetilde{f}_0(\bx) -f_0(\bx) \right| 
    \leq  \left\| f_0 \right\|_{\mathcal{C}^{\beta}} \cdot \|\bx - \widetilde{\bx} \|_{\infty}^\beta \lesssim \left(\frac{\log^3 (Lp)}{L^2 ps }\right)^{\frac{\beta}{d}},
\]
and the assertion follows.
The proof is split into the following three steps:
\medskip 

\noindent
\textit{Step 1}: Extracting the first $q$ bits of $\bx$ and $(\partial^{\bm{\alpha}}f_0)(\widetilde{\bx})$ for each $\bm{\alpha} \in \mathcal{A}(d,\beta)$.

\noindent
\textit{Step 2}: Approximating $\widetilde{f}_0(\bx)$.

\noindent
\textit{Step 3}: Verifying the size and error of the final network.

\medskip 
\medskip 

\noindent
\textbf{Step 1}: Extracting the first $q$ bits of $\bx$ and $(\partial^{\bm{\alpha}}f_0)(\widetilde{\bx})$ for each $\bm{\alpha} \in \mathcal{A}(d,\beta)$.

\begin{figure}[t]
    \centering
    \scalebox{0.8}{
    \tikzset{every node/.style={scale=0.9}}
    \begin{tikzpicture}[node distance=2cm, auto, thick, >=Stealth]
        \node[draw=white] (x1) at (0, 0) { $x_1$};        

        \node[line width=0.1mm, draw=black, minimum height=0.6cm, minimum width=2cm] (f11) at (0, 1.5) {};
        
        \node[line width=0.1mm, draw=black, minimum height=0.6cm, minimum width=2cm] (f21) at (0, 3) { };

        \node[line width=0.1mm, draw=black, minimum height=0.6cm, minimum width=2cm] (f31) at (0,4.5) { };

        \node at (0, 5.8) {\huge $\vdots$};

        \node[draw=white] (f41) at (0, 7.5) { $\big(b_{\ell}(x_1)\big)_{\ell \in [q]}$};

    \draw[-, cyan, line width=0.2mm] (x1.north) -- (f11.south);
       \draw[-, cyan, line width=0.2mm] (f11.north) -- (f21.south);
       \draw[-, cyan, line width=0.2mm] (f21.north) -- (f31.south);

       \draw[-, cyan, line width=0.2mm] (0,6.4) -- (f41.south);


        \node[draw=white] (x2) at (3, 0) { $x_2$};        

        \node[line width=0.1mm, draw=black, minimum height=0.6cm, minimum width=2cm] (f12) at (3, 1.5) { };
        
        \node[line width=0.1mm, draw=black, minimum height=0.6cm, minimum width=2cm] (f22) at (3, 3) { };

        \node[line width=0.1mm, draw=black, minimum height=0.6cm, minimum width=2cm] (f32) at (3,4.5) { };

        \node at (3, 5.8) {\huge $\vdots$};

        \node[draw=white] (f42) at (3, 7.5) { $\big(b_{\ell}(x_2)\big)_{\ell \in [q]}$};

    \draw[-, cyan, line width=0.2mm] (x2.north) -- (f12.south);
       \draw[-, cyan, line width=0.2mm] (f12.north) -- (f22.south);
       \draw[-, cyan, line width=0.2mm] (f22.north) -- (f32.south);

       \draw[-, cyan, line width=0.2mm] (3,6.4) -- (f42.south);

    \node at (6, 4.5) {\huge $\cdots$};


        \node[draw=white] (x3) at (9, 0) { $x_d$};        

        \node[line width=0.1mm, draw=black, minimum height=0.6cm, minimum width=2cm] (f13) at (9, 1.5) { };
        
        \node[line width=0.1mm, draw=black, minimum height=0.6cm, minimum width=2cm] (f23) at (9, 3) { };

        \node[line width=0.1mm, draw=black, minimum height=0.6cm, minimum width=2cm] (f33) at (9,4.5) { };

        \node at (9, 5.8) {\huge $\vdots$};

        \node[draw=white] (f43) at (9, 7.5) { $\big(b_{\ell}(x_d)\big)_{\ell \in [q]}$};

    \draw[-, cyan, line width=0.2mm] (x3.north) -- (f13.south);
       \draw[-, cyan, line width=0.2mm] (f13.north) -- (f23.south);
       \draw[-, cyan, line width=0.2mm] (f23.north) -- (f33.south);

       \draw[-, cyan, line width=0.2mm] (9,6.4) -- (f43.south);

\node[draw=black, align=center, minimum height=9cm, minimum width=3cm] (box1) at (0, 3.8) {};
\node[above=0.1cm] at (box1.north) {(Lemma \ref{lemma_bit_extract_linear_ver1})};

\node[draw=black, align=center, minimum height=9cm, minimum width=3cm] (box2) at (3, 3.8) {};
\node[above=0.1cm] at (box2.north) {(Lemma \ref{lemma_bit_extract_linear_ver1})};

\node[draw=black, align=center, minimum height=9cm, minimum width=3cm] (box3) at (9, 3.8) {};
\node[above=0.1cm] at (box3.north) {(Lemma \ref{lemma_bit_extract_linear_ver1})};

    \end{tikzpicture}
    }
    \vspace{15pt}
    \caption{\textbf{The first part of Step 1.} We represent the first $q$ bits of $\bx$.
    }
    \label{fig:lin_smooth_step1}
\end{figure}
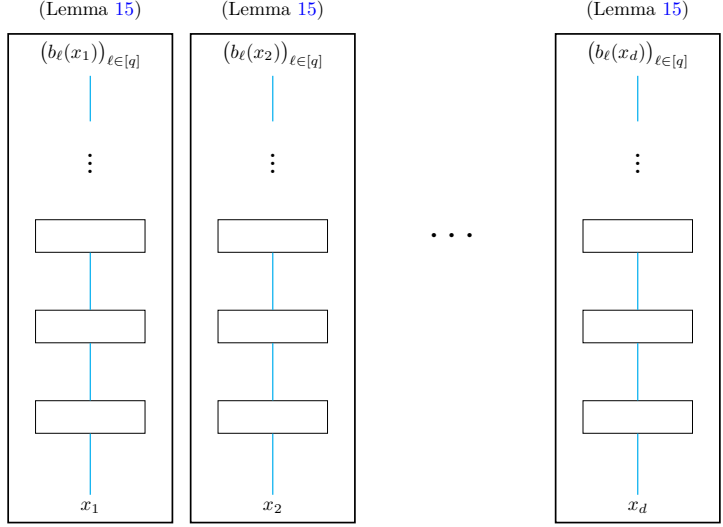

\medskip

\noindent
We now construct a linear neurons augmented DHN $f$, which can approximate $f_0$ well. 
We start by extracting the $q$ bits of $\bx$  by applying Lemma \ref{lemma_bit_extract_linear_ver1} with
$L := q$ componentwise. 
For any  $\ell \in [q]$, the $\ell$-th hidden layer contains at most $dq$ Heaviside neurons and $d$ linear neurons.  
The $q$-th hidden layer of $f$ outputs $(b_{\ell}(\bx))_{\ell \in [q]}$.
See Fig. \ref{fig:lin_smooth_step1} for an illustration. 

We now extract the $q$ bits of $(\partial^{\bm{\alpha}}f_0)(\widetilde{\bx})$ for each $\bm{\alpha} \in \{\bm{\alpha}_1, \bm{\alpha}_2, \ldots, \bm{\alpha}_{|\mathcal{A}(d,\beta)|} \} := \mathcal{A}(d,\beta)$.
As in \eqref{eq.0fweni_2} and the subsequent text,
define $J:=2^{(m+t)d}$, $K:=2^{nd}$, $R:=2^{td}$, along with the partitions
$\{U_j\}_{j \in [J]}$, $\{V_k\}_{k \in [K]}$, and $\{W_r\}_{r \in [R]}$  of $[0,1]^d$, and the index functions $j(\bx) \in [J]$, $k(\bx) \in [K]$, and $r(\bx) \in [R]$.
We write $\bm{c}_{j,k,r}$ as the center of the hypercube $U_{j} \cap V_{k} \cap W_{r}$. Accordingly, the center of the hypercube including $\bx$ is
\begin{align*}
    \widetilde{\bx} = \bm{c}_{j(\bx),k(\bx),r(\bx)}.
\end{align*}
For each $\bm{\alpha} \in \mathcal{A}(d,\beta)$, 
$j \in [J]$, $k \in [K],$ and $r \in [R]$, 
we have
$$\frac{(\partial^{\bm{\alpha}}f_0)(\bm{c}_{j,k,r}) + \|\partial^{\bm{\alpha}}f_0\|_{\infty}}{2\|\partial^{\bm{\alpha}}f_0\|_{\infty}} \in [0,1].$$
Consequently, for suitable binary sequences $\{\eta^{(\ell)}_{\bm{\alpha},j,k,r}\}_{\ell \in \{1,2,\ldots\}} \in \{0,1\}^{\mathbb{N}}$,
$$\frac{(\partial^{\bm{\alpha}}f_0)(\bm{c}_{j,k,r}) + \|\partial^{\bm{\alpha}}f_0\|_{\infty}}{2\|\partial^{\bm{\alpha}}f_0\|_{\infty}}
= 0.\eta^{(1)}_{\bm{\alpha},j,k,r}\eta^{(2)}_{\bm{\alpha},j,k,r}\ldots \eta^{(q)}_{\bm{\alpha},j,k,r} \ldots_{(2)}.$$

For each $\bm{\alpha} \in \mathcal{A}(d,\beta)$ and $\ell \in [q]$,
we apply Lemma \ref{lemma_piece_bin_lin_ver3} with $\eta_{j,k,r} := \eta^{(\ell)}_{\bm{\alpha},j,k,r}$ and obtain a DHN
$$h^{(\bm{\alpha}, \ell)} \in \DHN_{\lin}\big(2R+2,\ d(m+n+2t):d(m+n+2t) + \big(2^{md} \vee (3K+1) \big):1, K \big)$$
such that 
$$\mathbb{I} \Big( h^{(\bm{\alpha}, \ell)} \big((b_{\ell}(\bx))_{\ell \in [m+n+2t]} \big) \Big) = \eta^{(\ell)}_{\bm{\alpha},j(\bx),k(\bx),r(\bx)}.$$
We now continue to construct $f$ starting from the $(q+1)$-st hidden layer.
Recall that the $q$-th hidden layer of $f$ outputs $\bm{z}(\bx) := (b_{\ell}(\bx))_{\ell \in [q]}$ with $q \geq m+n+2t$. 
To save the $q$ bits of $\bx$ up to the $(q + (2R + 2)|\mathcal{A}(d,\beta)|q+1)$-st hidden layer of $f$, we use the identity $b = \mathbb{I}(b - 1/2)$ for $b \in \{0,1\}$, resulting in $dq$ Heaviside neurons per layer.
In addition, starting from the $q$-th hidden layer, we construct sub-networks $\mathbb{I}(h^{(\bm{\alpha}, \ell)}(\bm{z}(\bx)))$ for $\bm{\alpha} \in \mathcal{A}(d,\beta)$ and $\ell \in [q]$ in such a way that each new sub-network starts at the layer where the corresponding previous sub-network outputs its result. 
In addition, we forward all outputs of these sub-networks to the $(q + (2R + 2)|\mathcal{A}(d,\beta)|q+1)$-st hidden layer.     
See Fig. \ref{fig_lin_smooth_step2} for an illustration.

Then, the overall output of the $(q + (2R + 2)|\mathcal{A}(d,\beta)|q+1)$-st hidden layer of $f$ is $(b_{\ell}(\bx))_{\ell \in [q]}$ and $(\eta^{(\ell)}_{\bm{\alpha},j(\bx),k(\bx),r(\bx)})_{\bm{\alpha} \in \mathcal{A}(d,\beta), \ell \in [q]}$. Hence, each of the $(q+1)$-st to the $(q + (2R + 2)|\mathcal{A}(d,\beta)|q+1)$-st hidden layers of $f$ has
\begin{align}
    \leq dq + |\mathcal{A}(d,\beta)|q
    + d(m+n+2t) + \big(2^{md} \vee (3K+1) \big) \label{tmp_126}
\end{align}
Heaviside neurons and $K$ linear neurons.

\begin{figure}[t] 
    \centering
    \scalebox{0.95}{
    \tikzset{every node/.style={scale=0.9}}
    \begin{tikzpicture}[node distance=2cm, auto, thick, >=Stealth]
        \node[draw=white] (x1) at (0, 0) { $\big(b_{\ell}(\bx)\big)_{\ell \in [q]}$};        

        \node[draw=white] (x2) at (0, 1) { $\big(b_{\ell}(\bx)\big)_{\ell \in [q]}$};

       \node[draw=white, line width=0.1mm, minimum height=0.6cm, minimum width=1.5cm] (x3) at (0, 2) { };

       \node at (0, 2.1) {\huge $\vdots$};
        

        \node[draw=white] (x4) at (0,3) { $\big(b_{\ell}(\bx)\big)_{\ell \in [q]}$};

        \node[draw=white] (x5) at (0, 4) { $\big(b_{\ell}(\bx)\big)_{\ell \in [q]}$};

       \node[draw=white, line width=0.1mm, minimum height=0.6cm, minimum width=1.5cm] (x6) at (0, 5) { };
       \node at (0, 5.1) {\huge $\vdots$};
       
        \node[draw=white] (x7) at (0, 6) { $\big(b_{\ell}(\bx)\big)_{\ell \in [q]}$};
        \node[draw=white] (x8) at (0, 7) { $\big(b_{\ell}(\bx)\big)_{\ell \in [q]}$};

    \draw[-, cyan, line width=0.2mm] (x1.north) -- (x2.south);
       \draw[-, cyan, line width=0.2mm] (x2.north) -- (x3.south);
       \draw[-, cyan, line width=0.2mm] (x3.north) -- (x4.south);
       \draw[-, cyan, line width=0.2mm] (x4.north) -- (x5.south);
       \draw[-, cyan, line width=0.2mm] (x5.north) -- (x6.south);
       \draw[-, cyan, line width=0.2mm] (x6.north) -- (x7.south);
       \draw[-, cyan, line width=0.2mm] (x7.north) -- (x8.south);

       \node at (0, 10.5) {\Huge $\vdots$};
       \node[draw=white, line width=0.1mm, minimum height=0.6cm, minimum width=1.5cm] (x8) at (0, 13) { };
        \node[draw=white] (x9) at (0, 14) { $\big(b_{\ell}(\bx)\big)_{\ell \in [q]}$};
        \draw[-, cyan, line width=0.2mm] (x8.north) -- (x9.south);



       \node[draw=black, line width=0.1mm, minimum height=3cm, minimum width=2cm] (f1) at (2.5, 2) { };
       \node[right, align=center] at (f1.east) {$h^{\bm{\alpha}_1, 1}$ \\ (Lemma \ref{lemma_piece_bin_lin_ver3})};

       \node[draw=black, line width=0.1mm, draw=black, minimum height=0.6cm, minimum width=1.5cm] (f11) at (2.5, 1) { };
       \node[draw=white, line width=0.1mm, minimum height=0.6cm, minimum width=1.5cm] (f12) at (2.5, 2) { };

       \node at (2.5, 2.1) {\huge $\vdots$};
       
       \node[draw=black, line width=0.1mm, draw=black, minimum height=0.6cm, minimum width=1.5cm] (f13) at (2.5, 3) { };

       \node[draw=white] (f14) at (2.5, 4) {$\eta^{(1)}_{\bm{\alpha}_1,j(\bx),k(\bx),r(\bx)}$ };

       \node[draw=white, line width=0.1mm, minimum height=0.6cm, minimum width=1.5cm] (f15) at (2.5, 5) { };
       \node at (2.5, 5.1) {\huge $\vdots$};
       
        \node[draw=white] (f16) at (2.5, 6) { $\eta^{(1)}_{\bm{\alpha}_1,j(\bx),k(\bx),r(\bx)}$};
        \node[draw=white] (f17) at (2.5, 7) { $\eta^{(1)}_{\bm{\alpha}_1,j(\bx),k(\bx),r(\bx)}$};

       \draw[-, cyan, line width=0.2mm] (f11.north) -- (f12.south);
       \draw[-, cyan, line width=0.2mm] (f12.north) -- (f13.south);
       \draw[-, cyan, line width=0.2mm] (f13.north) -- (f14.south);
       \draw[-, cyan, line width=0.2mm] (f14.north) -- (f15.south);
       \draw[-, cyan, line width=0.2mm] (f15.north) -- (f16.south);
       \draw[-, cyan, line width=0.2mm] (f16.north) -- (f17.south);

       \node at (2.5, 10.5) {\Huge $\vdots$};
       \node[draw=white, line width=0.1mm, minimum height=0.6cm, minimum width=1.5cm] (f18) at (2.5, 13) { };
       \node[draw=white] (f19) at (2.5, 14) {$\eta^{(1)}_{\bm{\alpha}_1,j(\bx),k(\bx),r(\bx)}$ };
        \draw[-, cyan, line width=0.2mm] (f18.north) -- (f19.south);


       \node[draw=black, line width=0.1mm, minimum height=3cm, minimum width=2cm] (f2) at (5, 5) { };
       \node[right, align=center] at (f2.east) {$h^{\bm{\alpha}_1, 2}$ \\ (Lemma \ref{lemma_piece_bin_lin_ver3})};

       \node[draw=black, line width=0.1mm, draw=black, minimum height=0.6cm, minimum width=1.5cm] (f21) at (5, 4) { };
       \node[draw=white, line width=0.1mm, minimum height=1cm, minimum width=1.5cm] (f22) at (5, 5) { };

       \node at (5, 5.1) {\huge $\vdots$};
       
       \node[draw=black, line width=0.1mm, draw=black, minimum height=0.6cm, minimum width=1.5cm] (f23) at (5, 6) { };

       \node[draw=white] (f24) at (5, 7) {$\eta^{(2)}_{\bm{\alpha}_1,j(\bx),k(\bx),r(\bx)}$ };


       \draw[-, cyan, line width=0.2mm] (f21.north) -- (f22.south);
       \draw[-, cyan, line width=0.2mm] (f22.north) -- (f23.south);
       \draw[-, cyan, line width=0.2mm] (f23.north) -- (f24.south);

       \node at (5, 10.5) {\Huge $\vdots$};
       \node[draw=white, line width=0.1mm, minimum height=0.6cm, minimum width=1.5cm] (f25) at (5, 13) { };
       \node[draw=white] (f26) at (5, 14) {$\eta^{(2)}_{\bm{\alpha}_1,j(\bx),k(\bx),r(\bx)}$ };
        \draw[-, cyan, line width=0.2mm] (f25.north) -- (f26.south);



       \node[draw=black, line width=0.1mm, minimum height=3cm, minimum width=2cm] (f3) at (9.5, 12) { };
       \node[right, align=center] at (f3.east) {$h^{\bm{\alpha}_{|\mathcal{A}(d,\beta)|}, q}$ \\ (Lemma \ref{lemma_piece_bin_lin_ver3})};

       \node[draw=black, line width=0.1mm, draw=black, minimum height=0.6cm, minimum width=1.5cm] (f31) at (9.5, 11) { };
       \node[draw=white, line width=0.1mm, minimum height=1cm, minimum width=1.5cm] (f32) at (9.5, 12) { };

       \node at (9.5, 12.1) {\huge $\vdots$};
       
       \node[draw=black, line width=0.1mm, draw=black, minimum height=0.6cm, minimum width=1.5cm] (f33) at (9.5, 13) { };

       \node[draw=white] (f34) at (9.5, 14) {$\eta^{(q)}_{\bm{\alpha}_{|\mathcal{A}(d,\beta)|},j(\bx),k(\bx),r(\bx)}$ };

       \draw[-, cyan, line width=0.2mm] (f31.north) -- (f32.south);
       \draw[-, cyan, line width=0.2mm] (f32.north) -- (f33.south);
       \draw[-, cyan, line width=0.2mm] (f33.north) -- (f34.south);




    \node at (7.25, 8.5) {\huge $\cdots$};
       \node[draw=white] at (7.25, 13.9) {\large $\ldots$};


\draw[-, cyan, line width=0.2mm] (x1.north) -- (f11.south);
\draw[-, cyan, line width=0.2mm] (x4.north) -- (f21.south);

    \end{tikzpicture}
    }
    \vspace{15pt}
    \caption{\textbf{The second part of Step 1.} We represent the first $q$ bits of  $(\partial^{\bm{\alpha}}f_0)(\widetilde{\bx})$ for each $\bm{\alpha} \in \mathcal{A}(d,\beta) = \{\bm{\alpha}_1, \bm{\alpha}_2, \ldots, \bm{\alpha}_{|\mathcal{A}(d,\beta)|} \}$. 
    }  \label{fig_lin_smooth_step2}
\end{figure}

\clearpage

\noindent
\textbf{Step 2}: Approximating $\widetilde{f}_0(\bx)$.

\medskip

\noindent
Given the outputs $(b_{\ell}(\bx))_{\ell \in [q]}$ and
$(\eta^{(\ell)}_{\bm{\alpha},j(\bx),k(\bx),r(\bx)})_{\bm{\alpha} \in \mathcal{A}(d,\beta), \ell \in [q]}$ of the previous step in the $(q + (2R + 2)|\mathcal{A}(d,\beta)|q+1)$-st hidden layer, we now construct the network for the approximation of (\ref{eq_skip_taylor_2}).

For each $i \in [d]$, recall that 
\begin{align*}
    x_i - \widetilde{x}_i = \sum_{\ell=m+n+2t+1}^{\infty} \frac{b_{\ell}(x_i)}{2^{\ell}} - \frac{1}{2^{m+n+2t+1}}.  
\end{align*}
We define 
$$\operatorname{app}(x_i - \widetilde{x}_i):= 
\sum_{\ell=m+n+2t+1}^{q} \frac{b_{\ell}(x_i)}{2^{\ell}} + \frac{1}{2^{q+1}} - \frac{1}{2^{m+n+2t+1}},$$
satisfying
\begin{equation} \label{tmp_122} 
    \begin{aligned}
         \Big| \operatorname{app}(x_i - \widetilde{x}_i) - (x_i - \widetilde{x}_i) \Big| &\leq \frac{1}{2^{q+1}}, \\
         \Big| \operatorname{app}(x_i - \widetilde{x}_i) \Big| &\leq \frac{1}{2^{m+n+2t+1}}.
    \end{aligned}
\end{equation}
For each $\bm{\alpha} \in \mathcal{A}(d,\beta)$, recall that
$$(\partial^{\bm{\alpha}}f_0)(\widetilde{\bx}) := 2\|\partial^{\bm{\alpha}}f_0\|_{\infty} \Bigg(\sum_{\ell=1}^{\infty} \frac{\eta^{(\ell)}_{\bm{\alpha},j(\bx),k(\bx), r(\bx)}}{2^{\ell}} - \frac{1}{2} \Bigg).$$
We define
$$\operatorname{app}\big((\partial^{\bm{\alpha}}f_0)(\widetilde{\bx})\big) := 2\|\partial^{\bm{\alpha}}f_0\|_{\infty} \Bigg(\sum_{\ell=1}^{q} \frac{\eta^{(\ell)}_{\bm{\alpha},j(\bx),k(\bx), r(\bx)}}{2^{\ell}} + \frac{1}{2^{q+1}} - \frac{1}{2} \Bigg),$$
satisfying
\begin{equation} \label{tmp_123}
    \begin{aligned}
            \Big| \operatorname{app}\big((\partial^{\bm{\alpha}}f_0)(\widetilde{\bx})\big) - \big((\partial^{\bm{\alpha}}f_0)(\widetilde{\bx})\big) \Big| &\leq \frac{\|\partial^{\bm{\alpha}}f_0\|_{\infty}}{ 2^{q+1}},\\
            \Big| \operatorname{app}\big((\partial^{\bm{\alpha}}f_0)(\widetilde{\bx})\big) \Big| &\leq \|\partial^{\bm{\alpha}}f_0\|_{\infty}.
    \end{aligned}
\end{equation}

For each $\bm{\alpha}:=(\alpha_1, \dots, \alpha_d) \in (\mathcal{A}(d,\beta) \setminus \{\bm{0}_d\})$, we have
\begin{align*}
    \bx^{\bm{\alpha}} = x_1^{\alpha_1} \cdots x_d^{\alpha_d} = \underbrace{x_1 \cdots x_1}_{\alpha_1}
    \underbrace{x_2 \cdots x_2}_{\alpha_2}
    \cdots \underbrace{x_d \cdots x_d}_{\alpha_d}.
\end{align*}
We choose $\pi_{\bm{\alpha}}: \{1,\ldots,\|\bm{\alpha}\|_1\} \to \{1, \dots, d\}$, such that $\pi_{\bm{\alpha}}(\kappa)$ extracts the index of $\bx^{\bm{\alpha}}$ at position $\kappa$, that is,
$$\bx^{\bm{\alpha}} = x_{\pi_{\bm{\alpha}}(1)} x_{\pi_{\bm{\alpha}}(2)} \ldots x_{\pi_{\bm{\alpha}}(\|\alpha\|_1)}.$$

We define $b_{0}(x_{i}) := 1$ for every $i \in [d]$ and $\eta^{(0)}_{\bm{\alpha},j(\bx),k(\bx), r(\bx)} := 1$ for every $\bm{\alpha} \in \mathcal{A}(d,\beta)$.
In the $(q + (2R + 2)|\mathcal{A}(d,\beta)|q+2)$-nd  hidden layer (the last hidden layer), we represent
\begin{align*}
    \Bigg( \eta^{(\ell_0)}_{\bm{\alpha},j(\bx),k(\bx), r(\bx)} \cdot \prod_{\kappa=1}^{\|\bm{\alpha}\|_1} 
    b_{\ell_\kappa}\left(x_{\pi_{\bm{\alpha}}(\kappa)}\right) \Bigg)_{\bm{\alpha} \in \mathcal{A}(d,\beta), \ \ell_0, \ell_1, \ldots, \ell_{\|\bm{\alpha}\|_1} \in \{0,1,\ldots,q\}},
\end{align*}
using the identity
$\eta \prod_{\kappa=1}^{\|\bm{\alpha}\|_1} b_\kappa = \mathbb{I}(\eta + \sum_{\kappa=1}^{\|\bm{\alpha}\|_1} b_\kappa - \|\bm{\alpha}\|_1 - 1/2)$ for $\eta, b_1, \ldots,b_{\|\bm{\alpha}\|_1} \in \{0,1\}$. 
Then, the number of neurons needed for the last hidden layer is bounded above by
\begin{align}
    \sum_{\bm{\alpha} \in \mathcal{A}(d,\beta)} (q+1)^{\|\bm{\alpha}\|_1 + 1} \leq  |\mathcal{A}(d,\beta)| \cdot (q+1)^{\lceil \beta \rceil}. \label{tmp_127}
\end{align}

The output of the network is then given by
\begin{align}
    f(\bx) &:= \sum_{\bm{\alpha} \in \mathcal{A}(d,\beta)} \operatorname{app}((\partial^{\bm{\alpha}}f_0)(\widetilde{\bx})) \cdot \frac{\prod_{\kappa=1}^{\|\bm{\alpha}\|_1} \operatorname{app}\left(
        x_{\pi_{\bm{\alpha}}(\kappa)} - \widetilde{x}_{\pi_{\bm{\alpha}}(\kappa)}\right)}{\bm{\alpha}!} \label{tmp_199} \\
        &= \sum_{\bm{\alpha} \in \mathcal{A}(d,\beta)} \Bigg[ 2\|\partial^{\bm{\alpha}}f_0\|_{\infty} \left(\sum_{\ell=1}^{q} \frac{\eta^{(\ell)}_{\bm{\alpha},j(\bx),k(\bx), r(\bx)}}{2^{\ell}} + \frac{1}{2^{q+1}} - \frac{1}{2} \right) \nonumber\\
        & \qquad \qquad \qquad \cdot \frac{1}{\bm{\alpha}!} \prod_{\kappa=1}^{\|\bm{\alpha}\|_1} \left( \sum_{\ell=m+n+2t+1}^{q} \frac{b_{\ell}(x_{\pi_{\bm{\alpha}}(\kappa)})}{2^{\ell}} + \frac{1}{2^{q+1}} - \frac{1}{2^{m+n+2t+1}} \right) \Bigg],\nonumber
\end{align}
which is an affine function of the last hidden layer.   

\medskip
\medskip

\noindent
\textbf{Step 3}: Verifying the size and error of the final network.

\medskip

\noindent
As the last step of our proof, we assign specific values to $t$, $m$ and $n$ and evaluate the size and precision of our network $f$.

We set
\begin{equation}
    \begin{aligned}
    t := \left\lfloor \frac{1}{d} \log\left(  \frac{L}{
    \log^{\frac{3}{2}}(Lp)} \right) \right\rfloor, \quad  
    m := \left\lfloor \frac{1}{d} \log\left(  \frac{p}{2} \right) \right\rfloor, \quad \text{and} \ \ 
    n := \left\lfloor \frac{1}{d} \log \left(s \land \frac{p}{7} \right) \right\rfloor. \label{tmp_112}
    \end{aligned}
\end{equation}
Consequently, $R=2^{td} \leq L/\log^{\frac{3}{2}}(Lp)$, $K=2^{nd} \leq s \land p/7$ and $q = \lceil \beta \rceil(m+n+2t) \leq (2\beta+2)\log(Lp)/d$.
Given that by assumption $L \geq \log^2 (Lp) \geq \log(Lp) \log(L)$
and $L\geq c$, for a sufficiently large constant $c$ depending on $\beta$ and $d,$
the number of hidden layers of the network $f$ is bounded above by
\begin{align*}
    &q + (2R + 2)|\mathcal{A}(d,\beta)|q+2 \\
    &\leq \frac{(2\beta+2)\log(Lp)}{d} + 
    \bigg(\frac{ 2L}{\log^{\frac{3}{2}}(Lp)}+2\bigg) |\mathcal{A}(d,\beta)|\frac{(2\beta+2)\log(Lp)}{d}+2
    \\
    &\leq \frac{(2\beta+2)L}{d \log(L)} + \frac{L}{2} + 2 \\
    &\leq L.
\end{align*}
According to (\ref{tmp_126}) and (\ref{tmp_127}), the maximal number of neurons in each hidden layer of $f$ is bounded by 
\begin{align}
     &\Big( dq + |\mathcal{A}(d,\beta)|q
    + d(m+n+2t) + \big(2^{md} \vee (3K+1) \big) \Big) \vee \Big( |\mathcal{A}(d,\beta)| (q+1)^{\lceil \beta \rceil} \Big) \nonumber \\
     &\leq \Big(q(2d+|\mathcal{A}(d,\beta)|) + \frac{p}{2} \Big) \vee \Big( |\mathcal{A}(d,\beta)| (q+1)^{\lceil \beta \rceil} \Big). \label{tmp_106}
\end{align}
Given that by assumption $p \geq \log^{\beta+2} (Lp)$ and $L\geq c$, for a sufficiently large constant $c$ depending on $\beta$ and $d,$ 
we have moreover
\begin{align*}
    q\big(2d+|\mathcal{A}(d,\beta)|\big) + \frac{p}{2}
    &\leq (2\beta+2) \big(2d+|\mathcal{A}(d,\beta)|\big)\log(Lp)  + \frac{p}{2} \\
    &\leq (2\beta+2) \big(2d+|\mathcal{A}(d,\beta)|\big)\frac{p}{\log^{\beta+1}(p)}  + \frac{p}{2} \\
    &\leq p
\end{align*}
and 
\begin{align*}
    |\mathcal{A}(d,\beta)| (q+1)^{\lceil \beta \rceil}
    &\leq |\mathcal{A}(d,\beta)| \big( (2\beta+2) \log(Lp) + 1  \big)^{\beta+1}\\
    &\leq |\mathcal{A}(d,\beta)| (2\beta+3)^{\beta+1}  \log^{\beta+1}(Lp) \\
    &\leq |\mathcal{A}(d,\beta)| (2\beta+3)^{\beta+1}  \frac{p}{\log(p)} \\
    &\leq p.
\end{align*}
Hence, (\ref{tmp_106}) is bounded above by $p$.
Given that each hidden layer has at most $d \vee K \leq s$ linear neurons, we conclude
$f \in \DHN_{\lin}(L,d:p:1,s)$.

\medskip \medskip

Next we bound the approximation error of the network $f$ defined in \eqref{tmp_199} for approximating $f_0$.
For each $\bm{\alpha} \in \mathcal{A}(d,\beta)$, recall that $(\bx-\widetilde{\bx})^{\bm{\alpha}} = \prod_{\kappa=1}^{\|\bm{\alpha}\|_1} 
        (x_{\pi_{\bm{\alpha}}(\kappa)} - \widetilde{x}_{\pi_{\bm{\alpha}}(\kappa)})$.
By (\ref{tmp_122}), (\ref{tmp_123}) and Lemma \ref{lemma_prod_diff}, we have 
\begin{align*}
        &\left|\operatorname{app}\big((\partial^{\bm{\alpha}}f_0)(\widetilde{\bx})\big) \cdot \prod_{\kappa=1}^{\|\bm{\alpha}\|_1} 
        \operatorname{app}\left(x_{\pi_{\bm{\alpha}}(\kappa)} - \widetilde{x}_{\pi_{\bm{\alpha}}(\kappa)}\right)
        - (\partial^{\bm{\alpha}}f_0)(\widetilde{\bx}) \cdot (\bx-\widetilde{\bx})^{\bm{\alpha}}\right| \\
        &= \left|\operatorname{app}\big((\partial^{\bm{\alpha}}f_0)(\widetilde{\bx})\big) \cdot \prod_{\kappa=1}^{\|\bm{\alpha}\|_1} 
        \operatorname{app}\left(x_{\pi_{\bm{\alpha}}(\kappa)} - \widetilde{x}_{\pi_{\bm{\alpha}}(\kappa)}\right)
        - (\partial^{\bm{\alpha}}f_0)(\widetilde{\bx}) \cdot \prod_{\kappa=1}^{\|\bm{\alpha}\|_1} 
        \left(x_{\pi_{\bm{\alpha}}(\kappa)} - \widetilde{x}_{\pi_{\bm{\alpha}}(\kappa)}\right) \right| \\
        &\leq \|\partial^{\bm{\alpha}} f_0\|_{\infty}
        \left( \frac{1}{2^{m+n+2t+1}} \right)^{\|\bm{\alpha}\|_1} 
        \left( \frac{1}{2^{q+1}} + \frac{\|\bm{\alpha}\|_1}{2^{q-m-n-2t}} \right)\\
        &\leq \frac{\|\partial^{\bm{\alpha}} f_0\|_{\infty}}{2^q}. 
\end{align*}
Hence, we get
\begin{align*}
    &\left|f(\bx) - \sum_{\bm{\alpha} \in \mathcal{A}(d,\beta)} (\partial^{\bm{\alpha}}f_0)(\widetilde{\bx}) \cdot \frac{(\bx-\widetilde{\bx})^{\bm{\alpha}}}{\bm{\alpha}!} \right| \\
    &\leq \sum_{\bm{\alpha} \in \mathcal{A}(d,\beta)}  \left| \operatorname{app}\big((\partial^{\bm{\alpha}}f_0)(\widetilde{\bx})\big) \cdot \frac{\prod_{\kappa=1}^{\|\bm{\alpha}\|_1} \operatorname{app}\left(
        x_{\pi_{\bm{\alpha}}(\kappa)} - \widetilde{x}_{\pi_{\bm{\alpha}}(\kappa)}\right)}{\bm{\alpha}!} - (\partial^{\bm{\alpha}}f_0)(\widetilde{\bx}) \cdot \frac{(\bx-\widetilde{\bx})^{\bm{\alpha}}}{\bm{\alpha}!} \right| \\    
    &\leq \sum_{\bm{\alpha} \in \mathcal{A}(d,\beta)} \frac{\|\partial^{\bm{\alpha}} f_0\|_{\infty}}{2^q} \\ 
    &\leq  \frac{\left\| f_0 \right\|_{\mathcal{C}^{\beta}}}{2^{ \beta (m+n+2t)}}.
\end{align*}
By Lemma \ref{lemma_taylor}, we further have
\begin{align*}
    \left| \sum_{\bm{\alpha} \in \mathcal{A}(d,\beta)} (\partial^{\bm{\alpha}}f_0)(\widetilde{\bx}) \cdot \frac{(\bx-\widetilde{\bx})^{\bm{\alpha}}}{\bm{\alpha}!} -f_0(\bx) \right| 
    \leq&  \left\| f_0 \right\|_{\mathcal{C}^{\beta}} \cdot \|\bx - \widetilde{\bx} \|_{\infty}^\beta \\
    \leq &   \frac{\left\| f_0 \right\|_{\mathcal{C}^{\beta}}}{2^{\beta(m+n+2t+1)}}. 
\end{align*}
In summary, for every $\bx \in [0,1]^d$,
\begin{align*}
    \big| f(\bx) - f_0(\bx) \big| 
    &\leq \frac{2 \left\| f_0 \right\|_{\mathcal{C}^{\beta}}}{2^{\beta(m+n+2t)}} \\
    &\leq 
    2 \cdot 16^{\beta} \cdot \left\| f_0 \right\|_{\mathcal{C}^{\beta}} \left( \frac{L^2}{\log^{3}(Lp)} \cdot \frac{p}{2} \cdot \frac{s}{7} \right)^{-\frac{\beta}{d}} \\
    &\leq 2 \cdot 16^{\beta} \cdot 14^{\beta} \cdot \left\| f_0 \right\|_{\mathcal{C}^{\beta}} \left(\frac{\log^3 (Lp)}{L^2 ps}\right)^{\frac{\beta}{d}},
\end{align*}
using for the second inequality that by \eqref{tmp_112}, 
\begin{align*}
    t \geq \frac{1}{d} \log\left(  \frac{L}{
    \log^{\frac{3}{2}}(Lp)} \right) - 1, \quad 
    m \geq \frac{1}{d} \log\left(  \frac{p}{2} \right)-1
    , \quad
    n \geq \frac{1}{d} \log \left(\frac{s}{7} \right)-1. 
\end{align*} 

Hence, we obtain 
\begin{align}
    \sup_{f_0 \in \mathcal{H}_{d}^{\beta}(M)} \, \inf_{f \in \DHN_{\lin}(L,d:p:1,s)} \left\| f - f_0 \right\|_{\infty}
    & \leq c'' M \cdot  \left(\frac{\log^3 (Lp)}{L^2 ps }\right)^{\frac{\beta}{d}} \label{tmp_201}
\end{align}
with $c''>0$ a constant only depending on $\beta$. 
By (\ref{tmp_200}) and (\ref{tmp_201}), we obtain the assertion of Theorem \ref{upper_smooth_lin}.

\end{proof}

\section{Auxiliary lemmas}
\label{app_proofs_lemmas}

\begin{lemma}  \label{lemma_taylor}
For any $\beta$-Hölder smooth function 
    $f : [0,1]^d \to \mathbb{R}$ and any $\bx, \bx_0 \in [0,1]^d$,
    \begin{align*}
        \left| f(\bx) - \sum_{\substack{\bm{\alpha} \in \mathbb{N}^d \\ \left\|\bm{\alpha}\right\|_1 < \beta}} (\partial^{\bm{\alpha}}f)(\bx_0) \frac{(\bx-\bx_0)^{\bm{\alpha}}}{\bm{\alpha}!} \right| \leq \left\| f \right\|_{\mathcal{C}^{\beta}} \cdot \left\|\bx - \bx_0 \right\|_{\infty}^\beta.
    \end{align*}
\end{lemma}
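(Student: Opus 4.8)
The plan is to reduce the multivariate claim to the one–dimensional Taylor theorem with integral remainder along the segment from $\bx_0$ to $\bx$, and then control the remainder by the H\"older seminorm of the top–order derivatives of $f$. Write $\beta = q+s$ with $q \in \mathbb{N}$ and $s \in (0,1]$ as in Definition~\ref{hölder}. Since $\|\bm{\alpha}\|_1$ is an integer and $s>0$, the set $\{\bm{\alpha} : \|\bm{\alpha}\|_1 < \beta\}$ equals $\{\bm{\alpha} : \|\bm{\alpha}\|_1 \le q\}$, so the Taylor polynomial in the statement runs over all multi–indices of $\ell_1$–order at most $q$. Finiteness of $\|f\|_{\mathcal{C}^{\beta}}$ ensures that every $\partial^{\bm{\alpha}}f$ with $\|\bm{\alpha}\|_1 \le q$ exists and is bounded by $\|f\|_{\mathcal{C}^{\beta}}$, and that those with $\|\bm{\alpha}\|_1 = q$ are $s$–H\"older; since a function with bounded first–order partials on a convex set is Lipschitz, hence continuous, all derivatives $\partial^{\bm{\alpha}}f$ with $\|\bm{\alpha}\|_1 \le q$ are continuous, which supplies the regularity needed below.

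First I would set $g(t) := f\big(\bx_0 + t(\bx - \bx_0)\big)$ for $t\in[0,1]$. By the chain rule $g \in C^{q}([0,1])$ with $g^{(k)}(t) = \sum_{\|\bm{\alpha}\|_1 = k} \frac{k!}{\bm{\alpha}!}\,(\partial^{\bm{\alpha}}f)\big(\bx_0 + t(\bx - \bx_0)\big)(\bx-\bx_0)^{\bm{\alpha}}$ for $0 \le k \le q$. Evaluating at $t=0$ gives $\sum_{k=0}^{q} \frac{g^{(k)}(0)}{k!} = \sum_{\|\bm{\alpha}\|_1 \le q}(\partial^{\bm{\alpha}}f)(\bx_0)\frac{(\bx-\bx_0)^{\bm{\alpha}}}{\bm{\alpha}!}$, i.e.\ exactly the polynomial appearing in the lemma. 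Applying the $q$–th order Taylor expansion of $g$ with integral remainder and reinserting the $k=q$ term via $\frac{1}{(q-1)!}\int_0^1 (1-t)^{q-1}\,dt = \frac{1}{q!}$, I obtain $f(\bx) - \sum_{\|\bm{\alpha}\|_1 \le q}(\partial^{\bm{\alpha}}f)(\bx_0)\frac{(\bx-\bx_0)^{\bm{\alpha}}}{\bm{\alpha}!} = \frac{1}{(q-1)!}\int_0^1 (1-t)^{q-1}\big(g^{(q)}(t) - g^{(q)}(0)\big)\,dt$.

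Then I would bound the integrand termwise. For each $\bm{\alpha}$ with $\|\bm{\alpha}\|_1 = q$ one has $|(\bx-\bx_0)^{\bm{\alpha}}| \le \|\bx-\bx_0\|_{\infty}^{q}$ and, by $s$–H\"older continuity of $\partial^{\bm{\alpha}}f$, $|(\partial^{\bm{\alpha}}f)(\bx_0+t(\bx-\bx_0)) - (\partial^{\bm{\alpha}}f)(\bx_0)| \le [\partial^{\bm{\alpha}}f]_{s}\, t^{s}\|\bx-\bx_0\|_{\infty}^{s}$, where $[\,\cdot\,]_{s}$ denotes the $s$–H\"older seminorm with respect to $\|\cdot\|_{\infty}$ that appears in Definition~\ref{hölder}. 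Hence the right–hand side is at most $\frac{q!}{(q-1)!}\|\bx-\bx_0\|_{\infty}^{q+s}\big(\sum_{\|\bm{\alpha}\|_1=q}\frac{[\partial^{\bm{\alpha}}f]_{s}}{\bm{\alpha}!}\big)\int_0^1(1-t)^{q-1}t^{s}\,dt$. The decisive numerical observation is $\frac{q!}{(q-1)!}\int_0^1(1-t)^{q-1}t^{s}\,dt = q\int_0^1(1-t)^{q-1}t^{s}\,dt \le q\int_0^1(1-t)^{q-1}\,dt = 1$, so the multinomial factors cancel without leaving any dimension–dependent constant; together with $\frac{1}{\bm{\alpha}!}\le 1$, $\sum_{\|\bm{\alpha}\|_1=q}[\partial^{\bm{\alpha}}f]_{s} \le \|f\|_{\mathcal{C}^{\beta}}$, and $q+s=\beta$ this gives the claimed estimate $\|f\|_{\mathcal{C}^{\beta}}\|\bx-\bx_0\|_{\infty}^{\beta}$.

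The argument is entirely routine; the only points requiring a little care are the uniform treatment of the boundary case $s=1$ (there the H\"older condition becomes a Lipschitz condition but every step above is literally unchanged, since $q\int_0^1(1-t)^{q-1}t\,dt = \frac{1}{q+1}\le 1$) and the bookkeeping of the multinomial coefficients $\frac{q!}{\bm{\alpha}!}$ so that the final constant is exactly $\|f\|_{\mathcal{C}^{\beta}}$ rather than $d^{q}\|f\|_{\mathcal{C}^{\beta}}$. Alternatively, one may bypass the one–dimensional reduction and invoke the multivariate Taylor formula with integral remainder written directly in multi–index notation; the same estimates then apply with only slightly heavier notation.
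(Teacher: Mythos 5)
Your proof is correct and follows essentially the same route as the paper's: expand $f$ to order $q$ along the segment from $\bx_0$ to $\bx$ and control the remainder through the $s$-H\"older continuity of the order-$q$ partial derivatives, with the multinomial factors cancelling so that the constant is exactly $\|f\|_{\mathcal{C}^{\beta}}$; the only difference is that you use the integral form of the remainder where the paper invokes the mean-value (Lagrange) form, which is immaterial. The one point you leave implicit is the degenerate case $q=0$ (i.e.\ $\beta\le 1$), where the factor $1/(q-1)!$ in your remainder identity is undefined --- there the claim is just the H\"older condition on $f$ itself, so nothing is lost.
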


\begin{proof}
    Let $\beta = q+s$ for some $q \in \mathbb{N}$ and $s \in (0,1]$.
    By Taylor's theorem, there exists $\xi \in [0,1]$ such that
    $$f(\bx)=\sum_{\substack{\bm{\alpha} \in \mathbb{N}^d \\ \left\|\bm{\alpha}\right\|_1 < q-1}}
    (\partial^{\bm{\alpha}}f)(\bx_0) \frac{(\bx-\bx_0)^{\bm{\alpha}}}{\bm{\alpha}!}
    +\sum_{\substack{\bm{\alpha} \in \mathbb{N}^d \\ \left\|\bm{\alpha}\right\|_1 = q}} \left(\partial^{\bm{\alpha}} f\right) \big(\bx_0+\xi(\bx-\bx_0)\big) \frac{(\bx-\bx_0)^{\bm{\alpha}}}{\bm{\alpha}!}.$$
    Hence,
    \begin{align*}
        &\left| f(\bx) - \sum_{\substack{\bm{\alpha} \in \mathbb{N}^d \\ \left\|\bm{\alpha}\right\|_1 < \beta}} (\partial^{\bm{\alpha}}f)(\bx_0) \frac{(\bx-\bx_0)^{\bm{\alpha}}}{\bm{\alpha}!} \right| \\
        &= \left| f(\bx) - \sum_{\substack{\bm{\alpha} \in \mathbb{N}^d \\ \left\|\bm{\alpha}\right\|_1 \leq q}} (\partial^{\bm{\alpha}}f)(\bx_0) \frac{(\bx-\bx_0)^{\bm{\alpha}}}{\bm{\alpha}!} \right| \\
        &= \left|  \sum_{\substack{\bm{\alpha} \in \mathbb{N}^d \\ \left\|\bm{\alpha}\right\|_1 = q}} \Big[  \left(\partial^{\bm{\alpha}} f\right)(\bx_0+\xi(\bx-\bx_0)) - \left(\partial^{\bm{\alpha}} f\right)(\bx_0)  \Big] \cdot \frac{(\bx-\bx_0)^{\bm{\alpha}}}{\bm{\alpha}!} \right| \\
        & \leq \left\| f \right\|_{\mathcal{C}^{\beta}} \cdot \| \bx - \bx_0 \|_{\infty}^{s}  \cdot \| \bx - \bx_0 \|_{\infty}^{q}\\
        & =\left\| f \right\|_{\mathcal{C}^{\beta}} \cdot \| \bx - \bx_0 \|_{\infty}^{\beta}.
    \end{align*}
\end{proof}

\medskip

\begin{lemma} \label{lemma_prod_diff}
    For any $x_1,\ldots, x_d, x_1',\ldots, x_d' \in \mathbb{R}$ such that $|x_i|, |x'_i| \leq a_i$ for each $i \in [d]$, 
    $$\left| \prod_{i=1}^d x_i - \prod_{i=1}^d x'_i \right| \leq \bigg(\prod_{i=1}^d a_i \bigg) \cdot  \sum_{i=1}^d \frac{\left|x_i - x'_i\right|}{a_i}.$$    
\end{lemma}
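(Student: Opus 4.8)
The plan is to prove the bound by a standard telescoping argument, replacing the $x_i$ by the $x_i'$ one coordinate at a time. First I would write the difference as a telescoping sum: define, for $k \in \{0,1,\ldots,d\}$, the hybrid product $P_k := \big(\prod_{i=1}^k x_i'\big)\big(\prod_{i=k+1}^d x_i\big)$, so that $P_0 = \prod_{i=1}^d x_i$ and $P_d = \prod_{i=1}^d x_i'$ (using the convention $\prod_\emptyset = 1$ already adopted in the paper). Then
\begin{align*}
    \prod_{i=1}^d x_i - \prod_{i=1}^d x_i' = P_0 - P_d = \sum_{k=1}^d (P_{k-1} - P_k),
\end{align*}
and each consecutive difference factors as
\begin{align*}
    P_{k-1} - P_k = \bigg(\prod_{i=1}^{k-1} x_i'\bigg)(x_k - x_k')\bigg(\prod_{i=k+1}^d x_i\bigg).
\end{align*}

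Next I would estimate each term. Using $|x_i'| \leq a_i$ for $i < k$ and $|x_i| \leq a_i$ for $i > k$, the absolute value of the $k$-th summand is at most $\big(\prod_{i=1}^{k-1} a_i\big)\,|x_k - x_k'|\,\big(\prod_{i=k+1}^d a_i\big) = \big(\prod_{i=1}^d a_i\big)\, |x_k - x_k'|/a_k$. Summing over $k$ and applying the triangle inequality gives
\begin{align*}
    \bigg| \prod_{i=1}^d x_i - \prod_{i=1}^d x_i' \bigg| \leq \sum_{k=1}^d \bigg(\prod_{i=1}^d a_i\bigg) \frac{|x_k - x_k'|}{a_k} = \bigg(\prod_{i=1}^d a_i\bigg)\sum_{k=1}^d \frac{|x_k - x_k'|}{a_k},
\end{align*}
which is the claimed inequality.

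The only mild subtlety is the possibility that some $a_k = 0$, which would make the written right-hand side ill-defined; in that case $|x_k|, |x_k'| \leq a_k = 0$ forces $x_k = x_k' = 0$, so both full products vanish and the inequality holds trivially (and one may either exclude this degenerate case or interpret $0/0$ as $0$). Otherwise all $a_k > 0$ and the division is harmless. There is no real obstacle here — the argument is a routine telescoping estimate — so the "hard part" is merely bookkeeping the index ranges in the hybrid products correctly.
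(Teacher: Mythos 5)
Your proof is correct and uses exactly the same telescoping decomposition (replacing $x_i$ by $x_i'$ one coordinate at a time and bounding each consecutive difference by $\prod_{i=1}^d a_i \cdot |x_k - x_k'|/a_k$) as the paper's proof. The extra remark about the degenerate case $a_k = 0$ is a reasonable addition but does not change the substance.
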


\begin{proof}
\begin{align*}
    \left| x_1 x_2 \ldots x_d - x'_1 x'_2 \ldots x'_d  \right| 
    &\leq  \left| x_1 x_2 \ldots x_d - x'_1 x_2 \ldots x_d  \right|
    + \left| x'_1 x_2 \ldots x_d - x'_1 x'_2 \ldots x_d  \right| \\
    & \hspace{3.5cm} + \ldots   + \left| x'_1 x'_2 \ldots x'_{d-1} x_d - x'_1 x'_2 \ldots x'_{d-1} x'_d  \right| \\
    &\leq \sum_{j=1}^d \bigg( \left|x_j - x'_j\right| \cdot \prod_{k \neq j} a_k \bigg)\\
    &\leq \bigg(\prod_{j=1}^d a_j \bigg) \cdot \sum_{j=1}^d \frac{\left|x_j - x'_j\right|}{a_j}.
\end{align*}
\end{proof}

\medskip

\begin{lemma}[Lemma~1 of \cite{peter1}]\label{poly-vc}
Suppose $f_{1}(\cdot),f_{2}(\cdot),\ldots,f_{T}(\cdot)$ are polynomials of degree at most $d$ in $u\leq T$ variables. The number of distinct sign vectors generated by varying ${\bm u}\in\mathbb{R}^{u}$ is bounded by
$$N:=\Big|\Big\{\Big(\sgnn(f_{1}({\bm u})),\ldots,\sgnn(f_{T}({\bm u}))\Big),\;{\bm u}\in\mathbb{R}^{u}\Big\}\Big|\leq 2\left(\frac{2edT} u\right)^{u}.$$
\end{lemma}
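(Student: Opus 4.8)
The statement is a verbatim restatement of the classical Warren-type bound on the number of sign patterns of a finite family of real polynomials, recorded as Lemma~1 of \cite{peter1} (which in turn goes back to Warren's 1968 theorem and appears, e.g., as Theorem~8.3 in the Anthony--Bartlett monograph cited as \cite{anthony1999neural}). The shortest proof is simply to invoke that reference. If one wants a self-contained argument, the plan is to translate the combinatorial quantity $N$ into the number of connected components of the complement of a real algebraic set and then apply a Milnor--Thom / Warren estimate on the topology of such sets.

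Concretely, I would proceed as follows. First, on each connected component of the open set $\Omega := \{\bm u \in \mathbb{R}^{u} : f_i(\bm u) \neq 0 \text{ for all } i\}$, every $\sgnn(f_i)$ is constant and equal to $\pm 1$, so the number of realized sign vectors with \emph{no} zero coordinate is at most the number of connected components $\mathrm{cc}(\Omega)$. A routine perturbation argument (perturb a witness point for a sign vector having some zero coordinates in a generic direction, which produces a strict sign vector agreeing with it off its zero set) shows that the remaining sign vectors contribute only a bounded overhead that is absorbed into the leading factor $2$, giving $N \le 2\,\mathrm{cc}(\Omega)$. It then remains to bound $\mathrm{cc}(\Omega)$. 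Writing $g := \prod_{i=1}^{T} f_i$, we have $\Omega = \{g \neq 0\}$ with $\deg g \le dT$, and the Milnor--Thom bound gives $\mathrm{cc}(\Omega) \lesssim (dT)^{u}$; however this is weaker than the claimed bound by a factor of order $(e/u)^{u}$, so a black-box application is not enough.

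The refinement, and the main obstacle, is to exploit that $\Omega$ is cut out by $T$ polynomials each of degree at most $d$, rather than by one polynomial of degree $dT$. The standard device is to bound $\mathrm{cc}(\Omega)$ by counting critical points of a generic bounded Morse function (the squared distance to a generic point) on $\Omega$ and on the strata obtained by intersecting the zero sets of subsets of $\{f_i\}$; since each relevant stratum involves at most $u$ of the $f_i$, the count organizes into $\sum_{k=0}^{u} \binom{T}{k}$ times factors polynomial in $d$, and the estimate $\sum_{k=0}^{u} \binom{T}{k} \le (eT/u)^{u}$ for $T \ge u$ supplies exactly the missing $u^{-u}$. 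Equivalently, one runs Warren's original induction on the number of variables $u$: the base case $u=1$ is elementary (the $T$ polynomials have at most $dT$ real roots, partitioning $\mathbb{R}$ into at most $dT+1$ sign-constant open intervals, so $N \le 2(dT+1) \le 4edT$), while the inductive step, slicing by a generic hyperplane and controlling how sign patterns can split, is the technical heart. Tracking the constants through either route yields $N \le 2(2edT/u)^{u}$. Since \cite{peter1} already records the statement with precisely these constants, in the present paper I would simply cite it.
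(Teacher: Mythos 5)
Your proposal matches the paper exactly: the lemma is stated there as an imported result (Lemma~1 of \cite{peter1}) with no proof given, and simply citing that reference is what the paper does. Your supplementary sketch of the Warren/Milnor--Thom argument is a correct account of how the cited bound is actually established, but it is not needed here.
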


\begin{lemma}[Lemma~18 of \cite{peter2}]\label{tech-inequality}
Suppose that $2^{m}\leq2^{t}(mr/w)^{w}$ for some $r\geq16$ and $m\geq w\geq t\geq0$. Then, $m\leq t+w\log (2r\log r)$.
\end{lemma}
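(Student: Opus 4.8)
The plan is to reduce this two-sided statement to a single scalar inequality by taking the binary logarithm of the hypothesis and then normalizing. First I would dispose of the degenerate case $w=0$: then $m\ge w\ge t\ge 0$ forces $t=0$, the convention $x^{0}=1$ turns the hypothesis into $2^{m}\le 1$, hence $m=0$, and the conclusion $m\le t+w\log(2r\log r)=0$ holds trivially. So I may assume $w\ge 1$.

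Taking $\log:=\log_2$ of $2^{m}\le 2^{t}(mr/w)^{w}$ gives $m\le t+w\log(mr/w)$. Subtracting $t$ and dividing by $w>0$, and writing $a:=(m-t)/w\ge 0$ and $c:=\log r\ge \log 16=4$, this reads $a\le \log(m/w)+c$. Because $t\le w$ we have $m/w=a+t/w\le a+1$, so the hypothesis collapses to the scalar inequality
$$a\le \log(a+1)+c.$$
The target $m\le t+w\log(2r\log r)$ is, after the same normalization, exactly $a\le \log(2r\log r)=1+c+\log c$. Thus it suffices to prove the scalar implication: for every $c\ge 4$, $a\le\log(a+1)+c$ forces $a\le 1+c+\log c$.

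I would establish this implication using the auxiliary function $g(a):=a-\log(a+1)$, for which the premise is $g(a)\le c$ and the desired conclusion is $a\le B$ with $B:=1+c+\log c$. A direct computation gives $g(B)=1+c+\log c-\log(2+c+\log c)$, so $g(B)\ge c$ is equivalent to $1+\log c\ge\log(2+c+\log c)$, i.e.\ (since $1+\log c=\log(2c)$) to $c\ge 2+\log c$. As $c\mapsto c-\log c$ has derivative $1-1/(c\ln2)>0$ for $c>1/\ln2$ and equals $2$ at $c=4$, we get $c-\log c\ge 2$ for all $c\ge 4$, hence $g(B)\ge c$. Since $g'(a)=1-1/((a+1)\ln2)>0$ for $a>1/\ln2-1$, the function $g$ is strictly increasing on $[B,\infty)$ (note $B\ge 7$). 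Consequently, if $a>B$ then $g(a)>g(B)\ge c$, contradicting $g(a)\le c$; therefore $a\le B$, and multiplying back by $w$ yields $m\le t+w\log(2r\log r)$.

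The logarithm-and-normalize bookkeeping and the handling of the degenerate endpoints are routine; the only genuine content is the scalar fact $c\ge 2+\log c$ for $c\ge 4$, which is exactly where the hypothesis $r\ge 16$ enters and which makes the candidate bound $B=1+c+\log c$ self-consistent under $g$. I expect this monotonicity/self-consistency step to be the main obstacle, since one must verify both that $g(B)\ge c$ and that $a$ (if it exceeded $B$) would lie in the region where $g$ is strictly increasing before the comparison argument closes.
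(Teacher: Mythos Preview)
Your proof is correct. The paper does not actually prove this lemma; it merely cites it as Lemma~18 of \cite{peter2} and uses it as a black box in the VC-dimension upper-bound arguments, so there is no ``paper's own proof'' to compare against. Your normalization $a=(m-t)/w$, reduction to the scalar inequality $a\le\log(a+1)+c$, and the self-consistency check $g(B)\ge c$ via $c\ge 2+\log c$ for $c\ge 4$ are all sound, and the monotonicity of $g$ on $[B,\infty)$ closes the argument cleanly.
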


\section{Acknowledgments}

This work is supported by ERC grant A2B (grant agreement No. 101124751).

\bibliography{ref}


\begin{thebibliography}{52}
\ifx \bisbn   \undefined \def \bisbn  #1{ISBN #1}\fi
\ifx \binits  \undefined \def \binits#1{#1}\fi
\ifx \bauthor  \undefined \def \bauthor#1{#1}\fi
\ifx \batitle  \undefined \def \batitle#1{#1}\fi
\ifx \bjtitle  \undefined \def \bjtitle#1{#1}\fi
\ifx \bvolume  \undefined \def \bvolume#1{\textbf{#1}}\fi
\ifx \byear  \undefined \def \byear#1{#1}\fi
\ifx \bissue  \undefined \def \bissue#1{#1}\fi
\ifx \bfpage  \undefined \def \bfpage#1{#1}\fi
\ifx \blpage  \undefined \def \blpage #1{#1}\fi
\ifx \burl  \undefined \def \burl#1{\textsf{#1}}\fi
\ifx \doiurl  \undefined \def \doiurl#1{\url{https://doi.org/#1}}\fi
\ifx \betal  \undefined \def \betal{\textit{et al.}}\fi
\ifx \binstitute  \undefined \def \binstitute#1{#1}\fi
\ifx \binstitutionaled  \undefined \def \binstitutionaled#1{#1}\fi
\ifx \bctitle  \undefined \def \bctitle#1{#1}\fi
\ifx \beditor  \undefined \def \beditor#1{#1}\fi
\ifx \bpublisher  \undefined \def \bpublisher#1{#1}\fi
\ifx \bbtitle  \undefined \def \bbtitle#1{#1}\fi
\ifx \bedition  \undefined \def \bedition#1{#1}\fi
\ifx \bseriesno  \undefined \def \bseriesno#1{#1}\fi
\ifx \blocation  \undefined \def \blocation#1{#1}\fi
\ifx \bsertitle  \undefined \def \bsertitle#1{#1}\fi
\ifx \bsnm \undefined \def \bsnm#1{#1}\fi
\ifx \bsuffix \undefined \def \bsuffix#1{#1}\fi
\ifx \bparticle \undefined \def \bparticle#1{#1}\fi
\ifx \barticle \undefined \def \barticle#1{#1}\fi
\bibcommenthead
\ifx \bconfdate \undefined \def \bconfdate #1{#1}\fi
\ifx \botherref \undefined \def \botherref #1{#1}\fi
\ifx \url \undefined \def \url#1{\textsf{#1}}\fi
\ifx \bchapter \undefined \def \bchapter#1{#1}\fi
\ifx \bbook \undefined \def \bbook#1{#1}\fi
\ifx \bcomment \undefined \def \bcomment#1{#1}\fi
\ifx \oauthor \undefined \def \oauthor#1{#1}\fi
\ifx \citeauthoryear \undefined \def \citeauthoryear#1{#1}\fi
\ifx \endbibitem  \undefined \def \endbibitem {}\fi
\ifx \bconflocation  \undefined \def \bconflocation#1{#1}\fi
\ifx \arxivurl  \undefined \def \arxivurl#1{\textsf{#1}}\fi
\csname PreBibitemsHook\endcsname

\bibitem[\protect\citeauthoryear{Hopfield}{1982}]{hopfield1982neural}
\begin{barticle}
\bauthor{\bsnm{Hopfield}, \binits{J.J.}}:
\batitle{Neural networks and physical systems with emergent collective computational abilities.}
\bjtitle{Proceedings of the national academy of sciences}
\bvolume{79}(\bissue{8}),
\bfpage{2554}--\blpage{2558}
(\byear{1982})
\end{barticle}
\endbibitem

\bibitem[\protect\citeauthoryear{Vaswani et~al.}{2017}]{vaswani2017attention}
\begin{botherref}
\oauthor{\bsnm{Vaswani}, \binits{A.}},
\oauthor{\bsnm{Shazeer}, \binits{N.}},
\oauthor{\bsnm{Parmar}, \binits{N.}},
\oauthor{\bsnm{Uszkoreit}, \binits{J.}},
\oauthor{\bsnm{Jones}, \binits{L.}},
\oauthor{\bsnm{Gomez}, \binits{A.N.}},
\oauthor{\bsnm{Kaiser}, \binits{{\L}.}},
\oauthor{\bsnm{Polosukhin}, \binits{I.}}:
Attention is all you need.
Advances in neural information processing systems
\textbf{30}
(2017)
\end{botherref}
\endbibitem

\bibitem[\protect\citeauthoryear{Ramsauer et~al.}{2020}]{ramsauer2020hopfield}
\begin{bchapter}
\bauthor{\bsnm{Ramsauer}, \binits{H.}},
\bauthor{\bsnm{Sch{\"a}fl}, \binits{B.}},
\bauthor{\bsnm{Lehner}, \binits{J.}},
\bauthor{\bsnm{Seidl}, \binits{P.}},
\bauthor{\bsnm{Widrich}, \binits{M.}},
\bauthor{\bsnm{Gruber}, \binits{L.}},
\bauthor{\bsnm{Holzleitner}, \binits{M.}},
\bauthor{\bsnm{Adler}, \binits{T.}},
\bauthor{\bsnm{Kreil}, \binits{D.}},
\bauthor{\bsnm{Kopp}, \binits{M.K.}}, \betal:
\bctitle{Hopfield networks is all you need}.
In: \bbtitle{International Conference on Learning Representations}
(\byear{2020})
\end{bchapter}
\endbibitem

\bibitem[\protect\citeauthoryear{Hubara et~al.}{2018}]{hubara2018quantized}
\begin{barticle}
\bauthor{\bsnm{Hubara}, \binits{I.}},
\bauthor{\bsnm{Courbariaux}, \binits{M.}},
\bauthor{\bsnm{Soudry}, \binits{D.}},
\bauthor{\bsnm{El-Yaniv}, \binits{R.}},
\bauthor{\bsnm{Bengio}, \binits{Y.}}:
\batitle{Quantized neural networks: Training neural networks with low precision weights and activations}.
\bjtitle{Journal of Machine Learning Research}
\bvolume{18}(\bissue{187}),
\bfpage{1}--\blpage{30}
(\byear{2018})
\end{barticle}
\endbibitem

\bibitem[\protect\citeauthoryear{Qin et~al.}{2020}]{qin2020binary}
\begin{barticle}
\bauthor{\bsnm{Qin}, \binits{H.}},
\bauthor{\bsnm{Gong}, \binits{R.}},
\bauthor{\bsnm{Liu}, \binits{X.}},
\bauthor{\bsnm{Bai}, \binits{X.}},
\bauthor{\bsnm{Song}, \binits{J.}},
\bauthor{\bsnm{Sebe}, \binits{N.}}:
\batitle{Binary neural networks: A survey}.
\bjtitle{Pattern Recognition}
\bvolume{105},
\bfpage{107281}
(\byear{2020})
\end{barticle}
\endbibitem

\bibitem[\protect\citeauthoryear{Wang et~al.}{2023}]{wang2023bitnet}
\begin{botherref}
\oauthor{\bsnm{Wang}, \binits{H.}},
\oauthor{\bsnm{Ma}, \binits{S.}},
\oauthor{\bsnm{Dong}, \binits{L.}},
\oauthor{\bsnm{Huang}, \binits{S.}},
\oauthor{\bsnm{Wang}, \binits{H.}},
\oauthor{\bsnm{Ma}, \binits{L.}},
\oauthor{\bsnm{Yang}, \binits{F.}},
\oauthor{\bsnm{Wang}, \binits{R.}},
\oauthor{\bsnm{Wu}, \binits{Y.}},
\oauthor{\bsnm{Wei}, \binits{F.}}:
Bitnet: Scaling 1-bit transformers for large language models.
arXiv preprint arXiv:2310.11453
(2023)
\end{botherref}
\endbibitem

\bibitem[\protect\citeauthoryear{Ma et~al.}{2024}]{ma2024era}
\begin{botherref}
\oauthor{\bsnm{Ma}, \binits{S.}},
\oauthor{\bsnm{Wang}, \binits{H.}},
\oauthor{\bsnm{Ma}, \binits{L.}},
\oauthor{\bsnm{Wang}, \binits{L.}},
\oauthor{\bsnm{Wang}, \binits{W.}},
\oauthor{\bsnm{Huang}, \binits{S.}},
\oauthor{\bsnm{Dong}, \binits{L.}},
\oauthor{\bsnm{Wang}, \binits{R.}},
\oauthor{\bsnm{Xue}, \binits{J.}},
\oauthor{\bsnm{Wei}, \binits{F.}}:
The era of 1-bit llms: All large language models are in 1.58 bits.
CoRR
(2024)
\end{botherref}
\endbibitem

\bibitem[\protect\citeauthoryear{McCulloch and Pitts}{1943}]{McCulloch1943}
\begin{barticle}
\bauthor{\bsnm{McCulloch}, \binits{W.S.}},
\bauthor{\bsnm{Pitts}, \binits{W.}}:
\batitle{A logical calculus of the ideas immanent in nervous activity}.
\bjtitle{The Bulletin of Mathematical Biophysics}
\bvolume{5}(\bissue{4}),
\bfpage{115}--\blpage{133}
(\byear{1943})
\end{barticle}
\endbibitem

\bibitem[\protect\citeauthoryear{Rosenblatt}{1958}]{rosenblatt58}
\begin{barticle}
\bauthor{\bsnm{Rosenblatt}, \binits{F.}}:
\batitle{The perceptron: A probabilistic model for information storage and organization in the brain}.
\bjtitle{Psychological Review}
\bvolume{65}(\bissue{6}),
\bfpage{386}--\blpage{408}
(\byear{1958})
\end{barticle}
\endbibitem

\bibitem[\protect\citeauthoryear{Noviko}{1963}]{noviko1963convergence}
\begin{bchapter}
\bauthor{\bsnm{Noviko}, \binits{A.}}:
\bctitle{On convergence proofs for perceptrons}.
In: \bbtitle{Report at the Symposium on Mathematical Theory of Automata},
pp. \bfpage{24}--\blpage{26}
(\byear{1963})
\end{bchapter}
\endbibitem

\bibitem[\protect\citeauthoryear{Dayan and Abbott}{2001}]{MR1985615}
\begin{bbook}
\bauthor{\bsnm{Dayan}, \binits{P.}},
\bauthor{\bsnm{Abbott}, \binits{L.F.}}:
\bbtitle{Theoretical Neuroscience}.
\bpublisher{MIT Press},
\blocation{Cambridge, MA}
(\byear{2001})
\end{bbook}
\endbibitem

\bibitem[\protect\citeauthoryear{Glorot et~al.}{2011}]{pmlr-v15-glorot11a}
\begin{bchapter}
\bauthor{\bsnm{Glorot}, \binits{X.}},
\bauthor{\bsnm{Bordes}, \binits{A.}},
\bauthor{\bsnm{Bengio}, \binits{Y.}}:
\bctitle{Deep sparse rectifier neural networks}.
In: \bbtitle{Proceedings of the Fourteenth International Conference on Artificial Intelligence and Statistics},
pp. \bfpage{315}--\blpage{323}
(\byear{2011}).
\bcomment{JMLR Workshop and Conference Proceedings}
\end{bchapter}
\endbibitem

\bibitem[\protect\citeauthoryear{Pinkus}{1999}]{pinkus1999approximation}
\begin{barticle}
\bauthor{\bsnm{Pinkus}, \binits{A.}}:
\batitle{Approximation theory of the {MLP} model in neural networks}.
\bjtitle{Acta numerica}
\bvolume{8},
\bfpage{143}--\blpage{195}
(\byear{1999})
\end{barticle}
\endbibitem

\bibitem[\protect\citeauthoryear{Yarotsky}{2017}]{YAROTSKY2017103}
\begin{barticle}
\bauthor{\bsnm{Yarotsky}, \binits{D.}}:
\batitle{Error bounds for approximations with deep {ReLU} networks}.
\bjtitle{Neural Networks}
\bvolume{94},
\bfpage{103}--\blpage{114}
(\byear{2017})
\end{barticle}
\endbibitem

\bibitem[\protect\citeauthoryear{Petersen and Voigtlaender}{2018}]{petersen2018optimal}
\begin{barticle}
\bauthor{\bsnm{Petersen}, \binits{P.}},
\bauthor{\bsnm{Voigtlaender}, \binits{F.}}:
\batitle{Optimal approximation of piecewise smooth functions using deep {ReLU} neural networks}.
\bjtitle{Neural Networks}
\bvolume{108},
\bfpage{296}--\blpage{330}
(\byear{2018})
\end{barticle}
\endbibitem

\bibitem[\protect\citeauthoryear{Ohn and Kim}{2019}]{ohn2019smooth}
\begin{barticle}
\bauthor{\bsnm{Ohn}, \binits{I.}},
\bauthor{\bsnm{Kim}, \binits{Y.}}:
\batitle{Smooth function approximation by deep neural networks with general activation functions}.
\bjtitle{Entropy}
\bvolume{21}(\bissue{7}),
\bfpage{627}
(\byear{2019})
\end{barticle}
\endbibitem

\bibitem[\protect\citeauthoryear{Schmidt-Hieber}{2019}]{schmidt2019deep}
\begin{botherref}
\oauthor{\bsnm{Schmidt-Hieber}, \binits{J.}}:
Deep relu network approximation of functions on a manifold.
arXiv preprint arXiv:1908.00695
(2019)
\end{botherref}
\endbibitem

\bibitem[\protect\citeauthoryear{Schmidt-Hieber}{2020}]{10.1214/19-AOS1875}
\begin{barticle}
\bauthor{\bsnm{Schmidt-Hieber}, \binits{J.}}:
\batitle{{Nonparametric regression using deep neural networks with {ReLU} activation function}}.
\bjtitle{The Annals of Statistics}
\bvolume{48}(\bissue{4}),
\bfpage{1875}--\blpage{1897}
(\byear{2020})
\end{barticle}
\endbibitem

\bibitem[\protect\citeauthoryear{Kohler and Langer}{2021}]{10.1214/20-AOS2034}
\begin{barticle}
\bauthor{\bsnm{Kohler}, \binits{M.}},
\bauthor{\bsnm{Langer}, \binits{S.}}:
\batitle{{On the rate of convergence of fully connected deep neural network regression estimates}}.
\bjtitle{The Annals of Statistics}
\bvolume{49}(\bissue{4}),
\bfpage{2231}--\blpage{2249}
(\byear{2021})
\end{barticle}
\endbibitem

\bibitem[\protect\citeauthoryear{Lu et~al.}{2021}]{lu2021deep}
\begin{barticle}
\bauthor{\bsnm{Lu}, \binits{J.}},
\bauthor{\bsnm{Shen}, \binits{Z.}},
\bauthor{\bsnm{Yang}, \binits{H.}},
\bauthor{\bsnm{Zhang}, \binits{S.}}:
\batitle{Deep network approximation for smooth functions}.
\bjtitle{SIAM Journal on Mathematical Analysis}
\bvolume{53}(\bissue{5}),
\bfpage{5465}--\blpage{5506}
(\byear{2021})
\end{barticle}
\endbibitem

\bibitem[\protect\citeauthoryear{Langer}{2021}]{langer2021approximating}
\begin{barticle}
\bauthor{\bsnm{Langer}, \binits{S.}}:
\batitle{Approximating smooth functions by deep neural networks with sigmoid activation function}.
\bjtitle{Journal of Multivariate Analysis}
\bvolume{182},
\bfpage{104696}
(\byear{2021})
\end{barticle}
\endbibitem

\bibitem[\protect\citeauthoryear{Jiao et~al.}{2023}]{jiao2023deep}
\begin{barticle}
\bauthor{\bsnm{Jiao}, \binits{Y.}},
\bauthor{\bsnm{Shen}, \binits{G.}},
\bauthor{\bsnm{Lin}, \binits{Y.}},
\bauthor{\bsnm{Huang}, \binits{J.}}:
\batitle{Deep nonparametric regression on approximate manifolds: Nonasymptotic error bounds with polynomial prefactors}.
\bjtitle{The Annals of Statistics}
\bvolume{51}(\bissue{2}),
\bfpage{691}--\blpage{716}
(\byear{2023})
\end{barticle}
\endbibitem

\bibitem[\protect\citeauthoryear{Yang and Zhou}{2024}]{yang2024optimal}
\begin{botherref}
\oauthor{\bsnm{Yang}, \binits{Y.}},
\oauthor{\bsnm{Zhou}, \binits{D.-X.}}:
{Optimal rates of approximation by shallow ReLU$^k$ neural networks and applications to nonparametric regression}.
Constructive Approximation,
1--32
(2024)
\end{botherref}
\endbibitem

\bibitem[\protect\citeauthoryear{Ergen et~al.}{2023}]{ergen2023globally}
\begin{bchapter}
\bauthor{\bsnm{Ergen}, \binits{T.}},
\bauthor{\bsnm{Gulluk}, \binits{H.I.}},
\bauthor{\bsnm{Lacotte}, \binits{J.}},
\bauthor{\bsnm{Pilanci}, \binits{M.}}:
\bctitle{Globally optimal training of neural networks with threshold activation functions}.
In: \bbtitle{The Eleventh International Conference on Learning Representations}
(\byear{2023})
\end{bchapter}
\endbibitem

\bibitem[\protect\citeauthoryear{Wang and Scott}{2022}]{wang2022vc}
\begin{bchapter}
\bauthor{\bsnm{Wang}, \binits{Y.}},
\bauthor{\bsnm{Scott}, \binits{C.}}:
\bctitle{{VC} dimension of partially quantized neural networks in the overparametrized regime}.
In: \bbtitle{International Conference on Learning Representations 2022}
(\byear{2022})
\end{bchapter}
\endbibitem

\bibitem[\protect\citeauthoryear{Khalife et~al.}{2024}]{khalife2024neural}
\begin{barticle}
\bauthor{\bsnm{Khalife}, \binits{S.}},
\bauthor{\bsnm{Cheng}, \binits{H.}},
\bauthor{\bsnm{Basu}, \binits{A.}}:
\batitle{Neural networks with linear threshold activations: structure and algorithms}.
\bjtitle{Mathematical Programming}
\bvolume{206}(\bissue{1}),
\bfpage{333}--\blpage{356}
(\byear{2024})
\end{barticle}
\endbibitem

\bibitem[\protect\citeauthoryear{Anthony and Bartlett}{1999}]{anthony1999neural}
\begin{bbook}
\bauthor{\bsnm{Anthony}, \binits{M.}},
\bauthor{\bsnm{Bartlett}, \binits{P.L.}}:
\bbtitle{Neural Network Learning: Theoretical Foundations}
vol. \bseriesno{9}.
\bpublisher{Cambridge University Press},
\blocation{Cambridge}
(\byear{1999})
\end{bbook}
\endbibitem

\bibitem[\protect\citeauthoryear{Leshno et~al.}{1993}]{leshno1993multilayer}
\begin{barticle}
\bauthor{\bsnm{Leshno}, \binits{M.}},
\bauthor{\bsnm{Lin}, \binits{V.Y.}},
\bauthor{\bsnm{Pinkus}, \binits{A.}},
\bauthor{\bsnm{Schocken}, \binits{S.}}:
\batitle{Multilayer feedforward networks with a nonpolynomial activation function can approximate any function}.
\bjtitle{Neural networks}
\bvolume{6}(\bissue{6}),
\bfpage{861}--\blpage{867}
(\byear{1993})
\end{barticle}
\endbibitem

\bibitem[\protect\citeauthoryear{Barron}{1993}]{barron1993universal}
\begin{barticle}
\bauthor{\bsnm{Barron}, \binits{A.R.}}:
\batitle{Universal approximation bounds for superpositions of a sigmoidal function}.
\bjtitle{IEEE Transactions on Information theory}
\bvolume{39}(\bissue{3}),
\bfpage{930}--\blpage{945}
(\byear{1993})
\end{barticle}
\endbibitem

\bibitem[\protect\citeauthoryear{Minsky and Papert}{1969}]{minsky69perceptrons}
\begin{bbook}
\bauthor{\bsnm{Minsky}, \binits{M.}},
\bauthor{\bsnm{Papert}, \binits{S.}}:
\bbtitle{Perceptrons: An Introduction to Computational Geometry}.
\bpublisher{MIT Press},
\blocation{Cambridge, MA, USA}
(\byear{1969})
\end{bbook}
\endbibitem

\bibitem[\protect\citeauthoryear{Eldan and Shamir}{2016}]{eldan2016power}
\begin{bchapter}
\bauthor{\bsnm{Eldan}, \binits{R.}},
\bauthor{\bsnm{Shamir}, \binits{O.}}:
\bctitle{The power of depth for feedforward neural networks}.
In: \bbtitle{Conference on Learning Theory},
pp. \bfpage{907}--\blpage{940}
(\byear{2016}).
\bcomment{PMLR}
\end{bchapter}
\endbibitem

\bibitem[\protect\citeauthoryear{Huang et~al.}{2017}]{huang2017densely}
\begin{bchapter}
\bauthor{\bsnm{Huang}, \binits{G.}},
\bauthor{\bsnm{Liu}, \binits{Z.}},
\bauthor{\bsnm{Van Der~Maaten}, \binits{L.}},
\bauthor{\bsnm{Weinberger}, \binits{K.Q.}}:
\bctitle{Densely connected convolutional networks}.
In: \bbtitle{Proceedings of the IEEE Conference on Computer Vision and Pattern Recognition},
pp. \bfpage{4700}--\blpage{4708}
(\byear{2017})
\end{bchapter}
\endbibitem

\bibitem[\protect\citeauthoryear{He et~al.}{2016}]{he2016deep}
\begin{bchapter}
\bauthor{\bsnm{He}, \binits{K.}},
\bauthor{\bsnm{Zhang}, \binits{X.}},
\bauthor{\bsnm{Ren}, \binits{S.}},
\bauthor{\bsnm{Sun}, \binits{J.}}:
\bctitle{Deep residual learning for image recognition}.
In: \bbtitle{Proceedings of the IEEE Conference on Computer Vision and Pattern Recognition},
pp. \bfpage{770}--\blpage{778}
(\byear{2016})
\end{bchapter}
\endbibitem

\bibitem[\protect\citeauthoryear{Ronneberger et~al.}{2015}]{ronneberger2015u}
\begin{bchapter}
\bauthor{\bsnm{Ronneberger}, \binits{O.}},
\bauthor{\bsnm{Fischer}, \binits{P.}},
\bauthor{\bsnm{Brox}, \binits{T.}}:
\bctitle{U-net: Convolutional networks for biomedical image segmentation}.
In: \bbtitle{Medical Image Computing and Computer-assisted intervention--MICCAI 2015: 18th International Conference, Munich, Germany, October 5-9, 2015, Proceedings, Part III 18},
pp. \bfpage{234}--\blpage{241}
(\byear{2015}).
\bcomment{Springer}
\end{bchapter}
\endbibitem

\bibitem[\protect\citeauthoryear{Telgarsky}{2015}]{telgarsky2015representation}
\begin{botherref}
\oauthor{\bsnm{Telgarsky}, \binits{M.}}:
Representation benefits of deep feedforward networks.
arXiv preprint arXiv:1509.08101
(2015)
\end{botherref}
\endbibitem

\bibitem[\protect\citeauthoryear{Kohler and Langer}{2021}]{kohler2021supplementB}
\begin{botherref}
\oauthor{\bsnm{Kohler}, \binits{M.}},
\oauthor{\bsnm{Langer}, \binits{S.}}:
Supplement {B} to ``{O}n the rate of convergence of fully connected deep neural network regression estimates.”
(2021)
\end{botherref}
\endbibitem

\bibitem[\protect\citeauthoryear{Bartlett et~al.}{2019}]{peter2}
\begin{barticle}
\bauthor{\bsnm{Bartlett}, \binits{P.}},
\bauthor{\bsnm{Harvey}, \binits{N.}},
\bauthor{\bsnm{Liaw}, \binits{C.}},
\bauthor{\bsnm{Mehrabian}, \binits{A.}}:
\batitle{Nearly-tight {VC}-dimension and pseudodimension bounds for piecewise linear neural networks}.
\bjtitle{Journal of Machine Learning Research}
\bvolume{20},
\bfpage{1}--\blpage{17}
(\byear{2019})
\end{barticle}
\endbibitem

\bibitem[\protect\citeauthoryear{Baum and Haussler}{1988}]{baum1988size}
\begin{botherref}
\oauthor{\bsnm{Baum}, \binits{E.}},
\oauthor{\bsnm{Haussler}, \binits{D.}}:
What size net gives valid generalization?
Advances in neural information processing systems
(1988)
\end{botherref}
\endbibitem

\bibitem[\protect\citeauthoryear{Bartlett}{1993}]{bartlett1993lower}
\begin{bchapter}
\bauthor{\bsnm{Bartlett}, \binits{P.L.}}:
\bctitle{Lower bounds on the {V}apnik-{C}hervonenkis dimension of multi-layer threshold networks}.
In: \bbtitle{Proceedings of the Sixth Annual Conference on Computational Learning Theory},
pp. \bfpage{144}--\blpage{150}
(\byear{1993})
\end{bchapter}
\endbibitem

\bibitem[\protect\citeauthoryear{Maass}{1994}]{maass1994neural}
\begin{barticle}
\bauthor{\bsnm{Maass}, \binits{W.}}:
\batitle{Neural nets with superlinear {VC}-dimension}.
\bjtitle{Neural Computation}
\bvolume{6}(\bissue{5}),
\bfpage{877}--\blpage{884}
(\byear{1994})
\end{barticle}
\endbibitem

\bibitem[\protect\citeauthoryear{Schmidt-Hieber}{2021}]{SCHMIDTHIEBER2021119}
\begin{barticle}
\bauthor{\bsnm{Schmidt-Hieber}, \binits{J.}}:
\batitle{The {K}olmogorov–{A}rnold representation theorem revisited}.
\bjtitle{Neural Networks}
\bvolume{137},
\bfpage{119}--\blpage{126}
(\byear{2021})
\end{barticle}
\endbibitem

\bibitem[\protect\citeauthoryear{Sontag et~al.}{1998}]{sontag1998vc}
\begin{barticle}
\bauthor{\bsnm{Sontag}, \binits{E.D.}}, \betal:
\batitle{{VC} dimension of neural networks}.
\bjtitle{NATO ASI Series F Computer and Systems Sciences}
\bvolume{168},
\bfpage{69}--\blpage{96}
(\byear{1998})
\end{barticle}
\endbibitem

\bibitem[\protect\citeauthoryear{Koiran and Sontag}{1995}]{koiran1995neural}
\begin{botherref}
\oauthor{\bsnm{Koiran}, \binits{P.}},
\oauthor{\bsnm{Sontag}, \binits{E.}}:
Neural networks with quadratic {VC} dimension.
Advances in neural information processing systems
\textbf{8}
(1995)
\end{botherref}
\endbibitem

\bibitem[\protect\citeauthoryear{Bartlett et~al.}{1998}]{peter1}
\begin{barticle}
\bauthor{\bsnm{Bartlett}, \binits{P.}},
\bauthor{\bsnm{Maiorov}, \binits{V.}},
\bauthor{\bsnm{Meir}, \binits{R.}}:
\batitle{Almost linear {VC} dimension bounds for piecewise polynomial networks}.
\bjtitle{Neural Computation}
\bvolume{10}(\bissue{8}),
\bfpage{2159}--\blpage{2173}
(\byear{1998})
\end{barticle}
\endbibitem

\bibitem[\protect\citeauthoryear{Huang et~al.}{2004}]{huang2004extreme}
\begin{bchapter}
\bauthor{\bsnm{Huang}, \binits{G.-B.}},
\bauthor{\bsnm{Zhu}, \binits{Q.-Y.}},
\bauthor{\bsnm{Siew}, \binits{C.-K.}}:
\bctitle{Extreme learning machine: a new learning scheme of feedforward neural networks}.
In: \bbtitle{2004 IEEE International Joint Conference on Neural Networks},
vol. \bseriesno{2},
pp. \bfpage{985}--\blpage{990}
(\byear{2004})
\end{bchapter}
\endbibitem

\bibitem[\protect\citeauthoryear{Huang et~al.}{2006}]{huang2006can}
\begin{barticle}
\bauthor{\bsnm{Huang}, \binits{G.-B.}},
\bauthor{\bsnm{Zhu}, \binits{Q.-Y.}},
\bauthor{\bsnm{Mao}, \binits{K.}},
\bauthor{\bsnm{Siew}, \binits{C.-K.}},
\bauthor{\bsnm{Saratchandran}, \binits{P.}},
\bauthor{\bsnm{Sundararajan}, \binits{N.}}:
\batitle{Can threshold networks be trained directly?}
\bjtitle{IEEE Transactions on Circuits and Systems II: Express Briefs}
\bvolume{53}(\bissue{3}),
\bfpage{187}--\blpage{191}
(\byear{2006})
\end{barticle}
\endbibitem

\bibitem[\protect\citeauthoryear{Bengio et~al.}{2013}]{bengio2013estimating}
\begin{botherref}
\oauthor{\bsnm{Bengio}, \binits{Y.}},
\oauthor{\bsnm{L{\'e}onard}, \binits{N.}},
\oauthor{\bsnm{Courville}, \binits{A.}}:
Estimating or propagating gradients through stochastic neurons for conditional computation.
arXiv preprint arXiv:1308.3432
(2013)
\end{botherref}
\endbibitem

\bibitem[\protect\citeauthoryear{Corwin et~al.}{1994}]{corwin1994iterative}
\begin{barticle}
\bauthor{\bsnm{Corwin}, \binits{E.M.}},
\bauthor{\bsnm{Logar}, \binits{A.M.}},
\bauthor{\bsnm{Oldham}, \binits{W.J.}}:
\batitle{An iterative method for training multilayer networks with threshold functions}.
\bjtitle{IEEE Transactions on Neural Networks}
\bvolume{5}(\bissue{3}),
\bfpage{507}--\blpage{508}
(\byear{1994})
\end{barticle}
\endbibitem

\bibitem[\protect\citeauthoryear{Darabi et~al.}{2018}]{darabi2018regularized}
\begin{botherref}
\oauthor{\bsnm{Darabi}, \binits{S.}},
\oauthor{\bsnm{Belbahri}, \binits{M.}},
\oauthor{\bsnm{Courbariaux}, \binits{M.}},
\oauthor{\bsnm{Nia}, \binits{V.P.}}:
Regularized binary network training.
arXiv preprint arXiv:1812.11800
(2018)
\end{botherref}
\endbibitem

\bibitem[\protect\citeauthoryear{Deng et~al.}{2009}]{deng2009imagenet}
\begin{bchapter}
\bauthor{\bsnm{Deng}, \binits{J.}},
\bauthor{\bsnm{Dong}, \binits{W.}},
\bauthor{\bsnm{Socher}, \binits{R.}},
\bauthor{\bsnm{Li}, \binits{L.-J.}},
\bauthor{\bsnm{Li}, \binits{K.}},
\bauthor{\bsnm{Fei-Fei}, \binits{L.}}:
\bctitle{Imagenet: A large-scale hierarchical image database}.
In: \bbtitle{2009 IEEE Conference on Computer Vision and Pattern Recognition},
pp. \bfpage{248}--\blpage{255}
(\byear{2009})
\end{bchapter}
\endbibitem

\bibitem[\protect\citeauthoryear{Szegedy et~al.}{2015}]{szegedy2015going}
\begin{bchapter}
\bauthor{\bsnm{Szegedy}, \binits{C.}},
\bauthor{\bsnm{Liu}, \binits{W.}},
\bauthor{\bsnm{Jia}, \binits{Y.}},
\bauthor{\bsnm{Sermanet}, \binits{P.}},
\bauthor{\bsnm{Reed}, \binits{S.}},
\bauthor{\bsnm{Anguelov}, \binits{D.}},
\bauthor{\bsnm{Erhan}, \binits{D.}},
\bauthor{\bsnm{Vanhoucke}, \binits{V.}},
\bauthor{\bsnm{Rabinovich}, \binits{A.}}:
\bctitle{Going deeper with convolutions}.
In: \bbtitle{Proceedings of the IEEE Conference on Computer Vision and Pattern Recognition},
pp. \bfpage{1}--\blpage{9}
(\byear{2015})
\end{bchapter}
\endbibitem

\bibitem[\protect\citeauthoryear{Hardy et~al.}{1934}]{hardy1934inequalities}
\begin{bbook}
\bauthor{\bsnm{Hardy}, \binits{G.H.}},
\bauthor{\bsnm{Littlewood}, \binits{J.E.}},
\bauthor{\bsnm{P{\'o}lya}, \binits{G.}}:
\bbtitle{Inequalities}.
\bpublisher{Cambridge University Press},
\blocation{Cambridge}
(\byear{1934})
\end{bbook}
\endbibitem

\end{thebibliography}

\end{document}